\newlist{todolist}{itemize}{2}
\setlist[todolist]{label=$\square$}
\newcommand{\cmark}{\textcolor{Green}{\ding{51}}}%
\newcommand{\xmark}{\textcolor{red}{\ding{55}}}%
\newcommand{\xmarktimes}{\textcolor{red}{\ding{53}}}%
\DeclareMathOperator*{\E}{\mathbb{E}}
\newtheorem{theorem}{Theorem}[section]
\newtheorem{lemma}[theorem]{Lemma}
\theoremstyle{definition}
\newcommand\numberthis{\addtocounter{equation}{1}\tag{\theequation}}
\title{NatADiff: Adversarial Boundary Guidance for Natural Adversarial Diffusion}
\author{Max Collins\\
    School of Physics, Maths and Computing\\
    The University of Western Australia\\
    Perth, WA 6009\\
    \texttt{max.collins@research.uwa.edu.au}\\
    \And
    Jordan Vice\\
    School of Physics, Maths and Computing\\
    The University of Western Australia\\
    Perth, WA 6009\\
    \texttt{jordan.vice@uwa.edu.au}\\
    \And
    Tim French\\
    School of Physics, Maths and Computing\\
    The University of Western Australia\\
    Perth, WA 6009\\
    \texttt{tim.french@uwa.edu.au}\\
    \And
    Ajmal Mian\\
    School of Physics, Maths and Computing\\
    The University of Western Australia\\
    Perth, WA 6009\\
    \texttt{ajmal.mian@uwa.edu.au}\\
}
\begin{document}
\doparttoc 
\faketableofcontents 

\part{} 

\maketitle

\begin{abstract}
    Adversarial samples exploit irregularities in the manifold ``learned'' by deep learning models to cause misclassifications. The study of these adversarial samples provides insight into the features a model uses to classify inputs, which can be leveraged to improve robustness against future attacks. However, much of the existing literature focuses on constrained adversarial samples, which do not accurately reflect test-time errors encountered in real-world settings. To address this, we propose `NatADiff', an adversarial sampling scheme that leverages denoising diffusion to generate natural adversarial samples. Our approach is based on the observation that natural adversarial samples frequently contain structural elements from the adversarial class. Deep learning models can exploit these structural elements to shortcut the classification process, rather than learning to genuinely distinguish between classes. To leverage this behavior, we guide the diffusion trajectory towards the intersection of the true and adversarial classes, combining time-travel sampling with augmented classifier guidance to enhance attack transferability while preserving image quality. Our method achieves comparable white-box attack success rates to current state-of-the-art techniques, while exhibiting significantly higher transferability across model architectures and improved alignment with natural test-time errors as measured by FID. These results demonstrate that NatADiff produces adversarial samples that not only transfer more effectively across models, but more faithfully resemble naturally occurring test-time errors when compared with other generative adversarial sampling schemes.
\end{abstract}

\section{Introduction}
Deep learning models can react unpredictably when there is domain difference between training and test data \citep{Szegedy2014,Goodfellow2015}. \textit{Constrained} adversarial attacks exploit this vulnerability, adding visually imperceptible pixel-level perturbations to deliberately \textit{fool} models into misclassification \citep{Szegedy2014, Goodfellow2015, Madry2019, Croce2020}. More recently, \textit{unconstrained} adversarial attacks have been proposed which allow for unrestricted perturbation magnitudes, provided the resulting adversarial image lies sufficiently close to the natural image manifold \citep{Song2018, AdvDiffuser_Chen2023, ContentDiffusionAttack_Chen2023}. 

Defences to these attacks have been proposed \citep{Szegedy2014, Madry2019, Gu2015, Xu2018, Samangouei2018, Nie2022}; however, they largely target attacks formed by adding perturbations to natural images--overlooking the existence of \textit{natural} adversarial samples. Natural adversarial samples are more commonly known as test-time errors, and they represent the strongest class of unconstrained adversarial attack, as they are valid (perturbation-free and naturally occurring) model inputs that are erroneously classified \citep{Hendrycks2021}. The absence of an adversarial perturbation renders many defensive measures ineffective \citep{DetectingNaturalvsArtificialAdvSamples_Agarwal2022}. Furthermore, natural adversarial samples have been widely studied in the literature, and they have been found to exhibit high transferability--where multiple classifiers incorrectly classify the same sample \citep{Hendrycks2021}. It is hypothesized that this is caused by classifiers independently learning to rely on the same erroneous contextual cues to shortcut classification and reduce training losses without generalising to the underlying task \citep{Hendrycks2021, ShortcutLearning_Geirhos2020, RiskMinimisation_Arjovsky2020}.

Generating natural adversarial samples offers an opportunity to better understand the mechanisms underpinning test-time errors. Prior work has sought to achieve this by using classifier gradients to perturb the sampling process of generative adversarial networks (GANs) and denoising diffusion models \citep{Song2018, ContentDiffusionAttack_Chen2023, Dai2024, DiffAttack_Chen2025}. However, GAN-based approaches lack theoretical justification for perturbing the sample path, and doing so often degrades image quality \citep{StyleGAN_Karras2019, Abdal2019}. Alternatively, directly injecting classifier gradients into the diffusion sampling trajectory can result in generating constrained adversarial samples \citep{Vaeth2024, Shen2024} (see Figure~\ref{fig:CurrentAdversarial}~(c)). Moreover, existing methods do not account for the link between learned erroneous contextual cues and test-time errors.

We propose \textit{NatADiff}, a highly transferable, diffusion-based \citep{DenoisingDiffusion_Ho2020, Song2022} adversarial sample generation method. NatADiff leverages the link between contextual cues and test-time errors by guiding the diffusion sampling trajectory towards the intersection of the adversarial and true classes, a technique we define as ``adversarial boundary guidance''. Additionally, we incorporate classifier augmentations to reduce the strength of the constrained adversarial perturbation and to further guide the sampling trajectory towards regions of the image manifold that incorporate features from the adversarial class. We find that NatADiff-generated samples achieve comparable white-box (same target and victim classifier) attack success rates to current state-of-the-art adversarial attacks, while exhibiting significantly higher transferability (different target and victim classifier) across models. Furthermore, samples generated using NatADiff align more closely with known test-time errors (with respect to their Fr\'echet inception distance (FID) \citep{Frechet1957}) than those generated through adversarial classifier guidance alone \citep{Dai2024}. These results demonstrate that NatADiff produces adversarial samples that not only transfer more effectively across models, but more faithfully resemble naturally occurring test-time errors. To summarize our contributions: 
\begin{enumerate*}[label=(\roman*)]
    \item We propose NatADiff, incorporating classifier transformations, gradient normalization, and time-travel sampling \citep{Shen2024, Lugmayr2022, FreeDoM_Yu2023} to improve adversarial classifier guidance and image quality;
    \item We design an adversarial boundary guidance algorithm to reliably navigate the complex, learned manifold, allowing us to generate natural adversarial samples with significantly higher transferability than existing approaches.
    \item We explore how convolution and transformer based classifiers perceive natural adversarial samples, exposing interesting properties of the feature representations learned by deep learning models.
\end{enumerate*}

\section{Definitions and Preliminaries} \label{sec:Definitions and Preliminaries}
\noindent\textbf{Constrained, unconstrained, and natural adversarial samples.} Broadly speaking there are three categories of adversarial sample: unconstrained, constrained, and natural. Let $f: \mathcal{I}_\mathcal{U} \rightarrow \mathcal{Y}$ be a trained image classifier, $\mathcal{I}_{\mathcal{U}}$ be the set of allowable image inputs, $\mathcal{I}_{\mathcal{N}} \subseteq \mathcal{I}_{\mathcal{U}}$ be the set of natural images, $\mathcal{Y}$ be the set of image classification labels, and $\mathcal{O}: \mathcal{I}_\mathcal{U} \rightarrow \mathcal{Y}$ be an oracle (``perfect'' human) classifier. Unconstrained adversarial samples require only that the image is misclassified: $\mathcal{A}_U \triangleq \{x \in \mathcal{I}_\mathcal{U} : f(x) \neq \mathcal{O}(x) \}$ \citep{Song2018}. Constrained adversarial samples are restricted to an $\epsilon$-neighbourhood about some natural image: $\mathcal{A}_C \triangleq \{x + \delta \in \mathcal{I}_\mathcal{U} : x \in \mathcal{I}_\mathcal{N}, \lVert \delta \rVert_p \leq \epsilon, f(x + \delta) \neq \mathcal{O}(x + \delta) \}$ \citep{Szegedy2014}. Natural adversarial samples are natural images that are misclassified: $\mathcal{A}_N \triangleq \{x \in \mathcal{I}_\mathcal{N} : f(x) \neq \mathcal{O}(x) \}$ \citep{Hendrycks2021}. Finally, it follows from the above definitions that $\mathcal{A}_{\mathcal{N}} \subseteq \mathcal{A}_{\mathcal{C}} \subseteq \mathcal{A}_{\mathcal{U}}$. 

Natural adversarial samples are a well-documented phenomenon in deep learning \citep{Hendrycks2021}. Literature suggests that they typically occur when deep learning models learn to rely on erroneous contextual cues to shortcut classification, as opposed to truly learning to distinguish between classes \citep{Hendrycks2021, ShortcutLearning_Geirhos2020, RiskMinimisation_Arjovsky2020} (see Appendix~\ref{apdx:NaturalAdversarial} for examples). These cues are typically features within an image that are highly correlated with a target class but not indicative of the class. For instance, a model that uses oceanic environments as a cue for predicting ``shark'' may misclassify an image of a shark lying on sand. By exploiting these easy-to-learn cues, models can reduce training loss without correctly learning to generalize to the underlying classification task. Additionally, it has been observed that natural adversarial samples are able to bypass common adversarial defences \citep{DetectingNaturalvsArtificialAdvSamples_Agarwal2022}, and they exhibit high transferability i.e., the same image is misclassified by multiple classifiers \citep{Hendrycks2021}. The significant transferability of natural adversarial samples can be attributed to classifiers independently learning to rely on the same contextual cues, which is likely a consequence of shared correlations between cues and class labels across independent datasets \citep{Hendrycks2021}.

\arrayrulecolor{black!80}
\begin{figure}
    \centering
    \resizebox{1\textwidth}{!}{
    {\Huge
    \begin{tabular}{@{}|@{}c@{}c@{}|c|@{}c@{}c@{}|c|@{}c@{}c@{}|c|@{}c@{}c@{}|@{}}
        \cline{1-2}\cline{4-5}\cline{7-8}\cline{10-11}
        \includegraphics[width=0.31\linewidth, trim={-14 0 -7 -30}]{./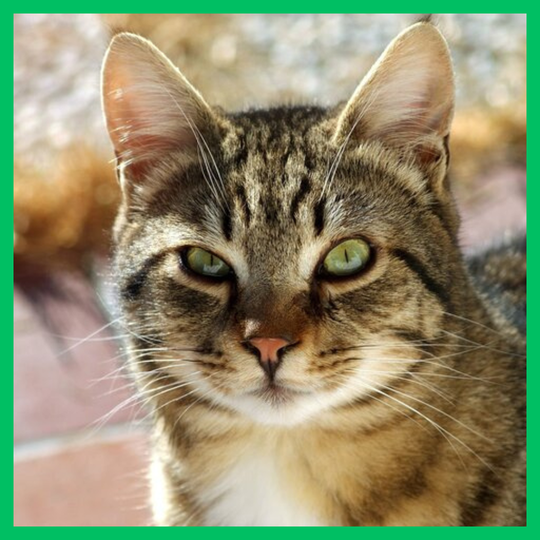} &
        \includegraphics[width=0.31\linewidth, trim={-7 0 -14 -30}]{./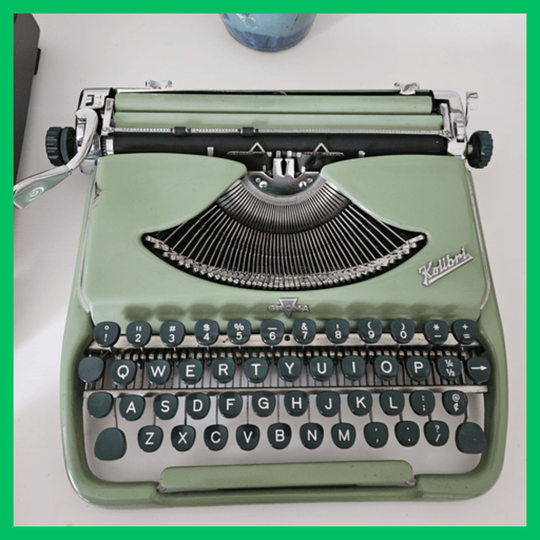} &&
        \includegraphics[width=0.31\linewidth, trim={-14 0 -7 -30}]{./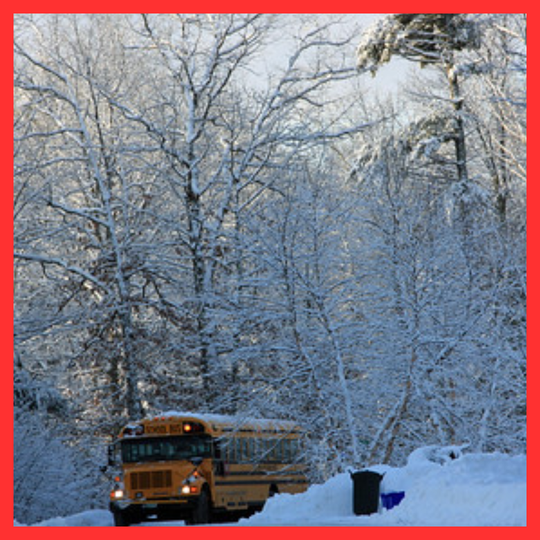} &
        \includegraphics[width=0.31\linewidth, trim={-7 0 -14 -30}]{./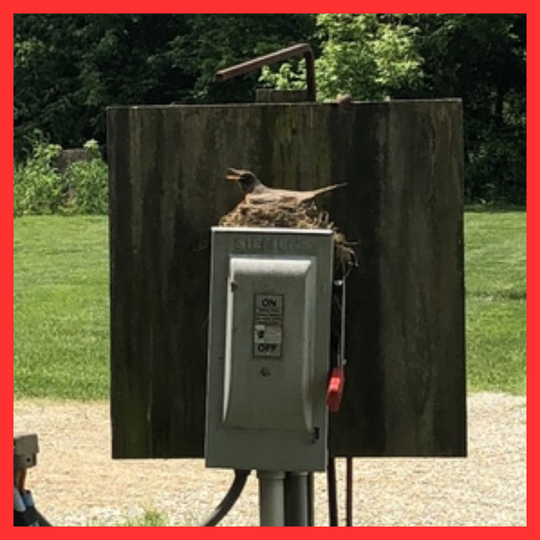} &&
        \includegraphics[width=0.31\linewidth, trim={-14 0 -7 -30}]{./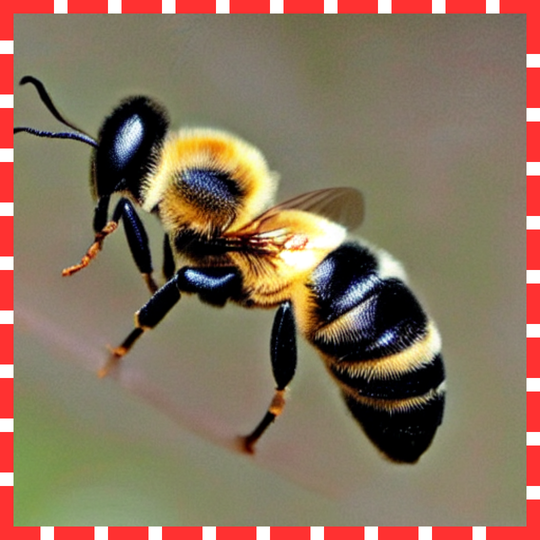} &
        \includegraphics[width=0.31\linewidth, trim={-7 0 -14 -30}]{./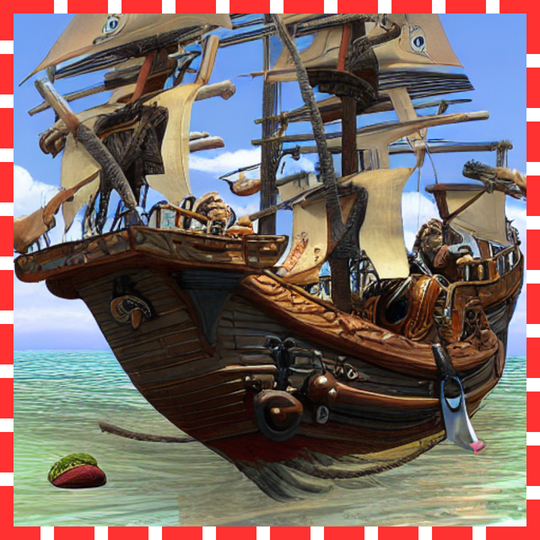} &&
        \includegraphics[width=0.31\linewidth, trim={-14 0 -7 -30}]{./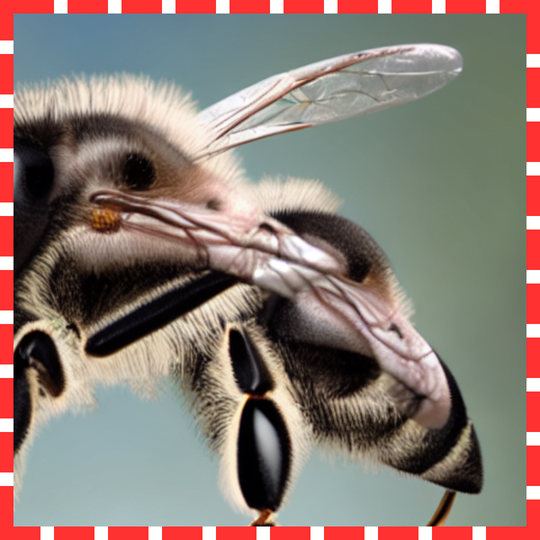} &
        \includegraphics[width=0.31\linewidth, trim={-7 0 -14 -30}]{./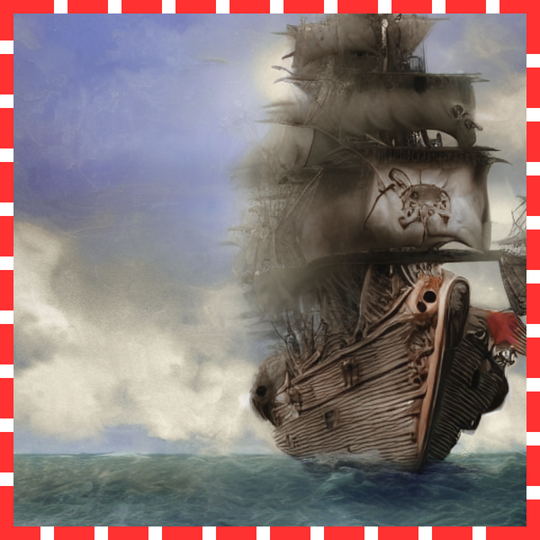} \\

        \multicolumn{2}{|c|}{\textbf{\Huge +}} && & && \multicolumn{2}{c|}{\textbf{\textbf{\Huge --}}} && & \\

        \includegraphics[width=0.31\linewidth, trim={-14 0 -7 0}]{./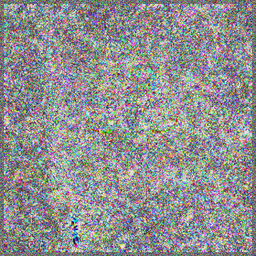} &
        \includegraphics[width=0.31\linewidth, trim={-7 0 -14 0}]{./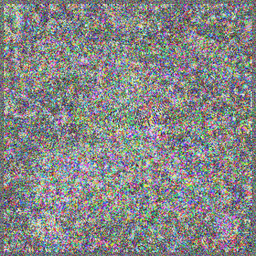} &&
        \multicolumn{2}{c|}{\raisebox{4em}{\multirow{4}{*}{\scalebox{12}{\xmarktimes}}}} &&
        \includegraphics[width=0.31\linewidth, trim={-14 0 -7 0}]{./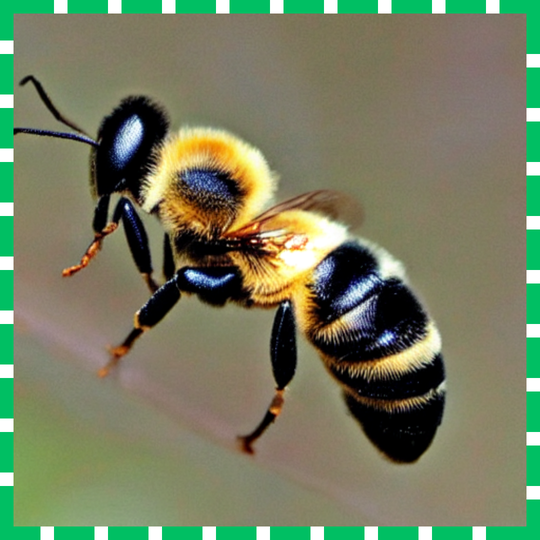} &
        \includegraphics[width=0.31\linewidth, trim={-7 0 -14 0}]{./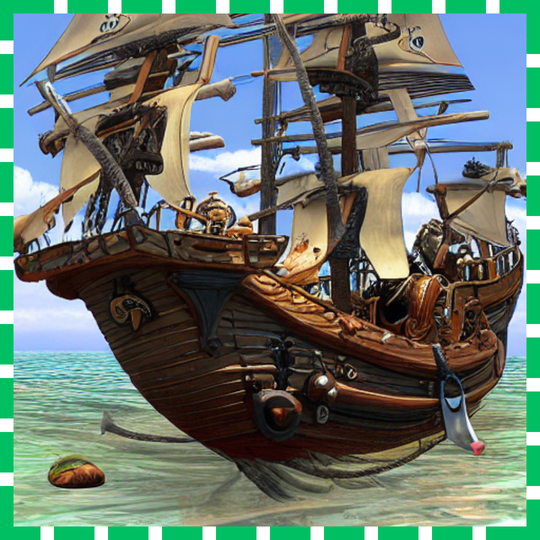} &&
        \multicolumn{2}{c|}{\includegraphics[width=0.58\linewidth, trim={0 0 0 0}]{./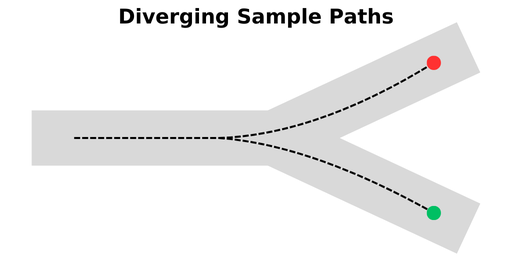}} \\

        \multicolumn{2}{|c|}{\textbf{\Huge =}} && & && \multicolumn{2}{c|}{\textbf{\textbf{\Huge =}}} && & \\

        \includegraphics[width=0.31\linewidth, trim={-14 0 -7 0}]{./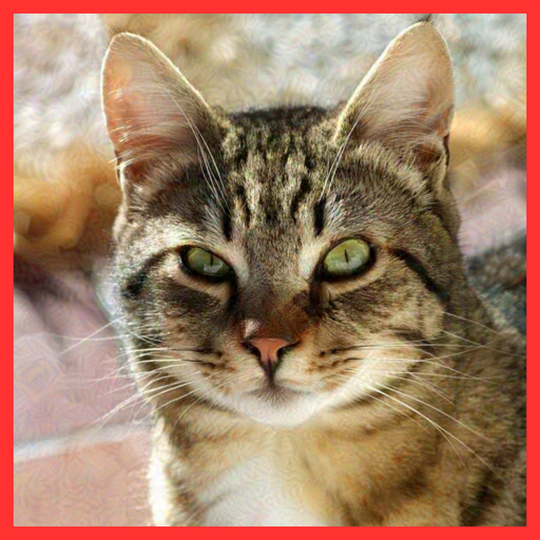} &
        \includegraphics[width=0.31\linewidth, trim={-7 0 -14 0}]{./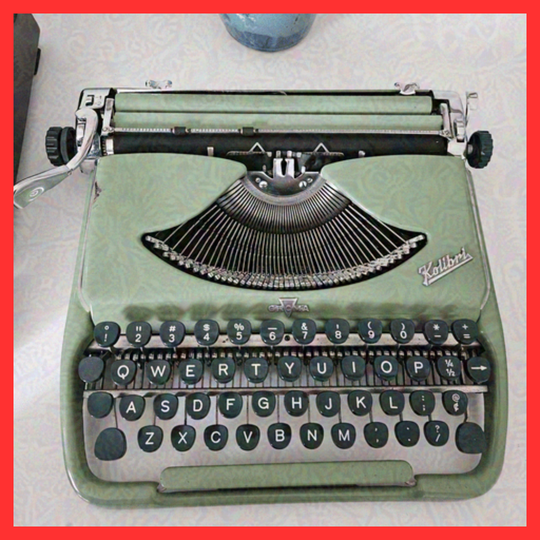} &&
        & &&
        \includegraphics[width=0.31\linewidth, trim={-14 0 -7 0}]{./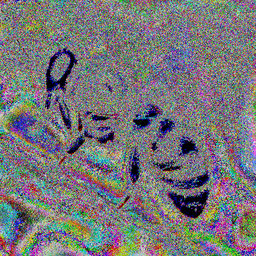} &
        \includegraphics[width=0.31\linewidth, trim={-7 0 -14 0}]{./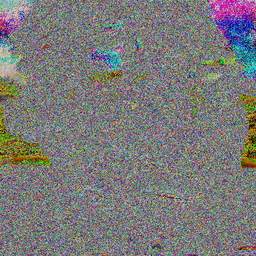} &&
        \includegraphics[width=0.31\linewidth, trim={-14 0 -7 0}]{./images/MethodComparison/Bee.png} &
        \includegraphics[width=0.31\linewidth, trim={-7 0 -14 0}]{./images/MethodComparison/PirateShip.png} \\
        \cline{1-2}\cline{4-5}\cline{7-8}\cline{10-11}
        \noalign{\vskip 0.5ex}
        \multicolumn{2}{c}{\textit{(a) PGD}} & \multicolumn{1}{c}{} & \multicolumn{2}{c}{\textit{(b) Nat. Adv. Samples}} & \multicolumn{1}{c}{} & \multicolumn{2}{c}{\textit{(c) Adv. Class. Guid.}} & \multicolumn{1}{c}{} & \multicolumn{2}{c}{\textit{(d) NatADiff}} \\
    \end{tabular}
    }
    }
    \caption{A comparison of different types of adversarial samples. \textcolor{Green}{Green} and \textcolor{red}{red} borders indicate non-adversarial and adversarial samples, respectively. A dotted border denotes artificially generated images, while a solid border indicates real-world photographs. (a) Constrained adversarial attacks (PGD \citep{Madry2019} used here) add perturbations to clean images. (b) Natural adversarial samples are test-time errors that do not contain perturbations. (c) Adversarial classifier guidance \citep{Dai2024} produces constrained adversarial samples, as the difference between images generated with and without the guidance is minimal--their difference amounts to a constrained perturbation. (d) Adversarial samples generated with NatADiff diverge from those generated without NatADiff.}
    \label{fig:CurrentAdversarial}
\end{figure}
\arrayrulecolor{black}

\noindent\textbf{Denoising diffusion generative models} \citep{DenoisingDiffusion_Ho2020} leverage a stochastic differential equation (SDE) to ``learn'' the space of natural images, allowing for the generation of natural, within-distribution images. The SDE is characterized by the forward process:
\begin{equation}
    \label{eq:diffusion forward}
    d \boldsymbol{x}_t = f(t) \boldsymbol{x}_t dt + g(t) \cdot d\boldsymbol{B}_t \quad \forall \ t \in [0, T].
\end{equation} 
where $\boldsymbol{x}_t \in \mathbb{R}^m$, $f(t) : \mathbb{R} \rightarrow \mathbb{R}$ and $g(t) : \mathbb{R} \rightarrow \mathbb{R}$ are continuous functions of $t$, and $\mathop{\cdot} d\boldsymbol{B}_t$ denotes an It\^{o} integral with respect to the standard multi-dimensional Brownian motion process $\boldsymbol{B}_t \in \mathbb{R}^m$ \citep{Pavliotis2014_DiffusionConditions}. Functions $f$ and $g$ are chosen such that the forward process progressively ``destroys'' structure in the image, $\boldsymbol{x}_0$, adding noise until it is approximately marginally Gaussian at termination time, T, i.e., $p(\boldsymbol{x}_T) \approx \mathcal{N}(0, \sigma^2_T I)$. To generate a natural image, the forward process can be reversed, and structure recovered from Gaussian noise using either \citeauthor{Anderson1982}'s reverse-time diffusion \citep{Anderson1982} or the flow ODE \citep{Song2021} derived from (\ref{eq:diffusion forward}). Both formulations require an estimate of the score function, $\nabla_{\boldsymbol{x}_t} \log(p(\boldsymbol{x}_t))$, which is approximated by a neural network, $\boldsymbol{\epsilon}_{\theta}(\boldsymbol{x}_t, t)$. This network is trained to predict the original image from a noisy version using the objective: 
\begin{equation}
    \label{eq:Diffusion Objective}
    \underset{\theta}{\min} \ {\E}_{\boldsymbol{x}_0, \boldsymbol{x}_t, t \sim p(\boldsymbol{x}_0, \boldsymbol{x}_t, t)} \left[ \left \lVert \boldsymbol{x}_0 - (\boldsymbol{x}_t - \beta(t) \boldsymbol{\epsilon}_{\theta}(\boldsymbol{x}_t, t)) / \alpha(t) \right \rVert_2^2 \right],
\end{equation} 
where $\alpha(t) = e^{\int_{0}^t f(u) du}$ and $\beta(t)^2 = \alpha(t)^2 \int_{0}^{t}\frac{g(u)^2}{\alpha(u)^2}du$. Given an optimal solution, $\boldsymbol{\epsilon}_{\theta^\star}(\boldsymbol{x}_t, t)$, to the above, the score function is given by (see Theorem~\ref{Thm:Score-Model Link} in Appendix~\ref{apdx:DiffusionMath}): 
\begin{equation}
    \label{eq:Score-model link}
    \nabla_{\boldsymbol{x}_t} \log(p(\boldsymbol{x}_t)) = - \boldsymbol{\epsilon}_{\theta^\star}(\boldsymbol{x}_t, t) / \beta(t) \quad \forall \ t \in (0, T], \ \boldsymbol{x}_t \in \mathbb{R}^m.
\end{equation}
Additionally, while $\boldsymbol{\epsilon}_{\theta^\star}(\boldsymbol{x}_t, t)$ can be used to directly estimate $\boldsymbol{x}_0$, it is typically of a lower quality than samples generated iteratively from the reverse-time diffusion or flow ODE \citep{DenoisingDiffusion_Ho2020, Song2021, Nichol2021}.

\textbf{Denoising diffusion class guidance} provides finer control over the diffusion process by sampling from $\boldsymbol{x}_0 \sim p(\boldsymbol{x}_0 | y)$ instead of $\boldsymbol{x}_0 \sim p(\boldsymbol{x}_0)$, where $y$ represents some conditioning information. To control the strength of class-guided diffusion, the marginal distribution is treated as $\bar{p}(\boldsymbol{x}_t | y) \propto p(y | \boldsymbol{x}_t)^{\omega} p(\boldsymbol{x}_t) / p(y)$, where $\omega \in \mathbb{R}_{>0}$ governs how strictly the diffusion adheres to the class constraint. For $\omega > 1$, the probability mass of $p(y | \boldsymbol{x}_t)^{\omega}$ is ``tightened'' around the regions of most probable $y$, while for $\omega < 1$, the probability mass is more diffuse. This results in stronger and weaker class adherence, respectively.

Class conditioning is incorporated directly into the diffusion score function in (\ref{eq:Score-model link}) by replacing $p(\boldsymbol{x}_t | y)$ with $\bar{p}(\boldsymbol{x}_t | y)$ as follows:  
\begin{align}  
    \nabla_{\boldsymbol{x}_t} \log(\bar{p}(\boldsymbol{x}_t | y)) &= \omega \nabla_{\boldsymbol{x}_t} \log (p(y | \boldsymbol{x}_t)) - \frac{1}{\beta(t)} \boldsymbol{\epsilon}_{\theta^\star}(\boldsymbol{x}_t, t) \label{eq:classifier guidance} \\
    &= -\frac{1}{\beta(t)} \bigl[ \omega \boldsymbol{\epsilon}_{\theta^\star}(\boldsymbol{x}_t, t, y) + (1 - \omega) \boldsymbol{\epsilon}_{\theta^\star}(\boldsymbol{x}_t, t) \bigr], \label{eq:classifier-free guidance}  
\end{align}  
where $\boldsymbol{\epsilon}_{\theta^\star}(\boldsymbol{x}_t, t)$ and $\boldsymbol{\epsilon}_{\theta^\star}(\boldsymbol{x}_t, t, y)$ are neural networks trained to estimate the noise added in the forward diffusion process when $\boldsymbol{x}_0 \sim p(\boldsymbol{x}_0)$ and $\boldsymbol{x}_0 \sim p(\boldsymbol{x}_0 | y)$, respectively. The distinction between (\ref{eq:classifier guidance}) and (\ref{eq:classifier-free guidance}) illustrates the difference between the two forms of class-guided diffusion: \textit{classifier} and \textit {classifier-free guidance}. In classifier guidance, a separate model, $p_{\theta}(y | \boldsymbol{x}_t)$, is trained to predict the probability of $y$ given $\boldsymbol{x}_t$ \citep{ClassifierGuidance_Dhariwal2021}. In contrast, classifier-free guidance requires training a diffusion model to directly estimate $\nabla_{\boldsymbol{x}_t} \log (p(\boldsymbol{x}_t | y))$ \citep{ClassifierFreeGuidance_Ho2022}.

It is important to note that $\bar{p}(\boldsymbol{x}_t | y)$ does not represent the marginal distribution that arises from applying the diffusion in (\ref{eq:diffusion forward}) to $\boldsymbol{x}_0 \sim p(\boldsymbol{x}_0 | y)$ \citep{Karras2024}. Instead, it is a mechanism that forces the sampling trajectory of $\boldsymbol{x}_t$ into regions with a higher probability of $p(y | \boldsymbol{x}_t)$, and in doing so, deviates from reverse-time diffusion and flow ODE dynamics. However, despite foregoing theoretical guarantees of sampling convergence, class-guided diffusion often exhibits superior sampling quality \citep{ClassifierGuidance_Dhariwal2021, ClassifierFreeGuidance_Ho2022}.

\section{Related Work}
\noindent\textbf{Generating unconstrained adversarial samples.}
Previous work has shown that modern generative models are capable of creating artificial unconstrained adversarial samples \citep{Song2018, GeneratingNaturalAdversaries_Zhao2018, ContentDiffusionAttack_Chen2023, Dai2024}. Initial approaches used GANs as the generative backbone for these attack; however, GANs are sensitive to perturbations to their sampling path, and they lack theoretical justification for such perturbations \citep{StyleGAN_Karras2019, Abdal2019}. Recent methods have leveraged denoising diffusion models \citep{DenoisingDiffusion_Ho2020}. Diffusion models possess superior generation quality to GANs, and provide theoretical justification for perturbing the sampling path \citep{ClassifierGuidance_Dhariwal2021}. \cite{Dai2024} leveraged these properties to develop \textit{AdvDiff}, which treats the true image class, $y$, and adversarial target, $\tilde{y}$, as random variables. The joint distribution can be decomposed as $p(\boldsymbol{x}_t, y, \tilde{y}) = p(y | \boldsymbol{x}_t) p(\tilde{y} | \boldsymbol{x}_t) p(\boldsymbol{x}_t)$, where it is assumed that $y$ and $\tilde{y}$ are conditionally independent given the noisy image, $\boldsymbol{x}_t$. Thus, given the forward diffusion in (\ref{eq:diffusion forward}), the corresponding diffusion score function (see (\ref{eq:classifier guidance}) and (\ref{eq:classifier-free guidance})) becomes 
\begin{equation} 
    \nabla_{\boldsymbol{x}_t} \log(\bar{p}(\boldsymbol{x}_t | y , \tilde{y})) = -\frac{1}{\beta(t)} \bigl[ \omega \boldsymbol{\epsilon}_{\theta^\star}(\boldsymbol{x}_t, t, y) + (1 - \omega) \boldsymbol{\epsilon}_{\theta^\star}(\boldsymbol{x}_t, t) \bigr] + s \nabla_{\boldsymbol{x}_t} \log (p(\tilde{y} | \boldsymbol{x}_t)), \label{eq:AdvDiff Guidance}
\end{equation} 
where $\omega$ and $s$ control the strength of the guidance, $\boldsymbol{\epsilon}_{\theta^\star}$ is a network trained to remove noise from $\boldsymbol{x}_t$, and the adversarial gradient, $\nabla_{\boldsymbol{x}_t} \log (p(\tilde{y} | \boldsymbol{x}_t))$, is derived from a victim classifier that provides class probabilities. Since AdvDiff directly uses the victim classifier gradient, it can be considered a form of classifier guidance, and is therefore susceptible to the same issues as classifier-guided diffusion.  

Classifier-guided diffusion requires training a model, $p_{\theta}(y | \boldsymbol{x}_t)$, to predict the class of an image that has been corrupted with Gaussian noise \citep{ClassifierGuidance_Dhariwal2021}, which is a form of adversarial training \citep{Rakin2018, Li2019}. When a non-adversarially robust classifier is used instead, the diffusion process typically generates visually coherent samples that do not adhere to the desired class conditioning, but are erroneously classified as the desired class \citep{Vaeth2024, Shen2024}. We hypothesize that this phenomenon arises due to constrained adversarial samples frequently lying within an $\epsilon$-neighborhood of natural samples \citep{Goodfellow2015, Madry2019}. Under this hypothesis, the diffusion model acts as a constraint that pushes the sample towards the natural image manifold, while the non-adversarially robust classifier introduces a perturbation that directs the sample towards the nearest region containing samples of the desired class. The resulting trade-off between the diffusion model and classifier guidance incentivizes the diffusion trajectory to converge towards the adversarial regions that frequently lie imperceptibly close to the natural image manifold--that is, pockets of constrained adversarial samples \citep{Shen2024}.

\section{Methodology} \label{sec:NatADiff}
Natural adversarial samples frequently occur when classifiers over-rely on contextual cues to shortcut classification \citep{Hendrycks2021}. We incorporate this key observation into our proposed, diffusion-based natural adversarial sampling scheme--NatADiff (see Algorithm~\ref{alg:NatADiff}). NatADiff leverages adversarial boundary guidance to incorporate features from the adversarial class. In addition, we use augmented classifier guidance and time-travel sampling to enhance attack transferability while preserving image quality.

\noindent\textbf{Accounting for sample noise.} Classifier-guided diffusion specifically trains a classifier to predict the class label of a noisy sample $\boldsymbol{x}_t$. However, in \textit{adversarial} diffusion guidance, the victim model is typically an ``off-the-shelf'' classifier that was never trained on noisy samples. Directly passing $\boldsymbol{x}_t$ to this classifier will likely degrade classification accuracy, leading to inferior diffusion guidance. To address this, we take the same approach as \citep{FreeDoM_Yu2023, UniversalGuidance_Bansal2023, Shen2024} and use Tweedie's formula \citep{TweediesFormula_Efron2011} to pass the classifier the current estimate of $\boldsymbol{x}_0$ at time $t$: 
\begin{equation} 
    \hat{\boldsymbol{x}}_0(\boldsymbol{x}_t) = (\boldsymbol{x}_t - \beta(t) \boldsymbol{\epsilon}_{\theta^*}(\boldsymbol{x}_t, t, y)) / \alpha(t). \label{eq:Pred x0}
\end{equation}

\noindent\textbf{Reducing adversarial gradient.} Constrained adversarial attacks are sensitive to image transformations, with rotations, crops, and translations reducing the success rates of common attack algorithms \citep{Guo2018}. We leverage this by applying differentiable image transforms to reduce the effect of the adversarial gradient that points in the direction of constrained adversarial perturbations. We find that this increases the prevalence of visible adversarial features (see Appendix~\ref{apdx:Effect of classifier augmentations} for ablation study). These transformations are similar to the ones used by \cite{Shen2024} to perform training-free classifier-guided diffusion. The local adversarial signal is ``averaged out'', reducing the likelihood of generating constrained adversarial samples, and forcing the manifestation of features from the--in our case--adversarial class conditioning \citep{Shen2024}.

Given a collection of differentiable image transforms: $\boldsymbol{\mathcal{T}} = \{ \mathcal{T}_1, \mathcal{T}_2, \dots \}$, we compute the adversarial classifier gradient as
\begin{equation}
    \label{eq:Transform Adv Gradient}
    \nabla_{\boldsymbol{x}_t} \log(p(\tilde{y} | \boldsymbol{x}_t)) = \boldsymbol{g}(\boldsymbol{x}_t) / \lVert \boldsymbol{g}(\boldsymbol{x}_t) \rVert_2,
\end{equation} 
where $\boldsymbol{g}(\boldsymbol{x}_t) = \nabla_{\boldsymbol{x}_t} \log \left( \sigma_{\tilde{y}} \left( \frac{1}{\lvert \boldsymbol{\mathcal{T}} \rvert} \sum_{i=1}^{\lvert \boldsymbol{\mathcal{T}} \rvert} h(\mathcal{T}_i(\hat{\boldsymbol{x}}_0(\boldsymbol{x}_t))) \right) \right)$, $h : \mathbb{R}^m \rightarrow \mathbb{R}^{|\mathcal{Y}|}$ is a function that returns the victim classifier's logit predictions, and $\sigma_{\tilde{y}} : \mathbb{R}^{|\mathcal{Y}|} \rightarrow \mathbb{R}$ is a sigmoid function that returns the probability of the target adversarial class.

\noindent\textbf{Adversarial boundary guidance.} Initial experiments showed that substituting the improved adversarial gradient from (\ref{eq:Transform Adv Gradient}) into (\ref{eq:AdvDiff Guidance}) did not steer the diffusion trajectory towards natural adversarial samples (see Appendix~\ref{apdx:Selection of mu} for ablation study). This may occur because classifier augmentations eliminate many of the constrained adversarial samples that lie close to the image manifold, but not those further away. Consequently, if the initial sampling point of the diffusion trajectory is too distant from a region of natural adversarial samples, adversarial guidance will push the sample off the image manifold.

To address this, we leverage the connection between natural adversarial samples and the use of contextual cues as a classification shortcut \citep{Hendrycks2021, ShortcutLearning_Geirhos2020, RiskMinimisation_Arjovsky2020}. We propose adversarial boundary guidance as a method of directing the diffusion trajectory towards samples that incorporate erroneous contextual cues, i.e., features from the adversarial class. We define adversarial boundary guidance as
\begin{equation} 
    \nabla_{\boldsymbol{x}_t} \log(\bar{p}(\boldsymbol{x}_t | y, \tilde{y})) = -\frac{1}{\beta(t)} \bigl[ \boldsymbol{\epsilon}_{\theta^\star}(\boldsymbol{x}_t, t) + (\omega - \mu\omega) \boldsymbol{v}_y + \mu \rho \boldsymbol{v}_{y \cap \tilde{y}} \bigr] + s \nabla_{\boldsymbol{x}_t} \log (p(\tilde{y} | \boldsymbol{x}_t)), \label{eq:Adversarial Boundary Guidance} \\
\end{equation}
where $\omega, \rho, s \in \mathbb{R}_{\geq 0}$, $\mu \in [0, 1]$, $\boldsymbol{v}_y = \boldsymbol{\epsilon}_{\theta^\star}(\boldsymbol{x}_t, t, y) - \boldsymbol{\epsilon}_{\theta^\star}(\boldsymbol{x}_t, t)$, and $\boldsymbol{v}_{y \cap \tilde{y}} = \boldsymbol{\epsilon}_{\theta^\star}(\boldsymbol{x}_t, t, y \cap \tilde{y}) - \boldsymbol{\epsilon}_{\theta^\star}(\boldsymbol{x}_t, t)$. $\omega$ and $\rho$ govern the strength of classifier-free guidance, $s$ controls adversarial classifier guidance strength, and $\mu$ regulates how strongly the sample tends towards the intersection of the true and adversarial classes. For sufficiently large $\mu$, the sampling trajectory should approach the class intersection, incorporating enough elements from the adversarial class to cause a misclassification, while remaining within the bounds of the true class from a human's perspective. Note when $\mu = 0$, adversarial boundary guidance is equivalent to adversarial classifier guidance \citep{Dai2024}.

To justify (\ref{eq:Adversarial Boundary Guidance}), we note that the classifier-free score function can be rewritten as $\nabla_{\boldsymbol{x}_t} \log(\bar{p}(\boldsymbol{x}_t | y)) = -\frac{1}{\beta(t)} \left[ \boldsymbol{\epsilon}_{\theta^\star}(\boldsymbol{x}_t, t) + \omega \boldsymbol{v}_y \right]$ where $\boldsymbol{v}_y = \boldsymbol{\epsilon}_{\theta^\star}(\boldsymbol{x}_t, t, y) - \boldsymbol{\epsilon}_{\theta^\star}(\boldsymbol{x}_t, t)$ is a vector that points towards regions of the manifold containing images of class $y$. Additionally, recall that $\bar{p}(\boldsymbol{x}_t | y)$ is not the marginal density arising from a valid diffusion, rather it is a magnification of the guidance provided by a network that has ``learned'' the image manifold. To further exploit the information contained in this network, we introduce $\boldsymbol{v}_{y \cap \tilde{y}}$, which directs the sampling trajectory towards the class intersection.

\noindent\textbf{Time-travel sampling.} Significant disruption to the diffusion sampling path risks degradation in sample quality, or falling off the image manifold \citep{Lugmayr2022,FreeDoM_Yu2023}. To mitigate these issues, we incorporate time-travel sampling into our diffusion scheme, which has been shown to increase image quality in cases where standard diffusion sampling would otherwise fail \citep{Lugmayr2022, FreeDoM_Yu2023, Shen2024}. 

By injecting additional sampling steps, time-travel sampling allows the diffusion model to explore a wider region of the sample space and recover from suboptimal trajectories. This helps maintain sample quality and prevents the generation process from diverging away from the image manifold. More concretely, given a sequence of sampling times $\{ t_{i} \}_{i=1}^N$ with $t_{i+1} > t_i$ for all $i$, time-travel sampling resets the diffusion state at time $t_i$ by running the forward process, $\boldsymbol{x}_{t_{i+k}} \sim p(x_{t_{i+k}} | x_{t_i})$, and then resampling $\boldsymbol{x}_{t_i}$ using the reverse process \citep{Anderson1982, Song2021}. This procedure is repeated $R$ times before $\boldsymbol{x}_{t_i}$ is accepted, after which sampling proceeds to $\boldsymbol{x}_{t_{i-1}}$. To improve efficiency, time-travel sampling can be applied to a subset of diffusion steps \citep{FreeDoM_Yu2023}.

\begin{wrapfigure}{R}{0.5\textwidth}
    \vspace{-8mm}
    \begin{minipage}{0.5\textwidth}
    \begin{algorithm}[H]
        \scriptsize
        \caption{NatADiff} \label{alg:NatADiff}
        \begin{algorithmic}
            \Require adversarial guidance parameters: $\omega$, $\rho$, $\mu$, $s$; true and adversarial classes: $y$, $\tilde{y}$; victim classifier: $h$; forward diffusion functions: $\alpha(t)$, $\beta(t)$; stable diffusion model: $\boldsymbol{\epsilon}_{\theta^\star}$; VAE decoder: $V_{\text{dec}}$; collection of differentiable image transforms: $\{ \mathcal{T}_1, \mathcal{T}_2, \dots \}$; sequence of sampling steps with $t_1 = 0$, $t_N = T$, and $t_{i+1} > t_i$: $\{ t_{i} \}_{i=1}^N$; time-travel parameters: $R$, $r_l$, $r_u$; adversarial classifier bounds: $c_l$, $c_u$; number sampling attempts: $S$; guidance scalers: $\delta_\mu$, $\delta_s$\\
        
            \State $\boldsymbol{z}_T \sim \mathcal{N}(0, I)$
            \For{$s = 1, \dots, S$}
                \For{$i = N, \dots, 1$}
                    \If{$r_l \leq t_i \leq r_u$}
                        \State $\tilde{R} = R$
                    \Else
                        \State $\tilde{R} = 1$
                    \EndIf
                    \For{$r = \tilde{R}, \dots, 1$} \Comment{Time-travel loop}
                        \State $\boldsymbol{v}_y = \boldsymbol{\epsilon}_{\theta^\star}(\boldsymbol{z}_{t_i}, t_i, y) - \boldsymbol{\epsilon}_{\theta^\star}(\boldsymbol{z}_{t_i}, t_i)$
                        \State $\boldsymbol{v}_{y \cap \tilde{y}} = \boldsymbol{\epsilon}_{\theta^\star}(\boldsymbol{z}_{t_i}, t_i, y \cap \tilde{y}) - \boldsymbol{\epsilon}_{\theta^\star}(\boldsymbol{z}_{t_i}, t_i)$
                        \State $\hat{\boldsymbol{\epsilon}} = \boldsymbol{\epsilon}_{\theta^\star}(\boldsymbol{x}_{t_i}, t_i) + (\omega - \mu\omega) \boldsymbol{v}_y + \mu \rho \boldsymbol{v}_{y \cap \tilde{y}}$
                        \If{$c_l \leq t \leq c_u$}
                            \State $\hat{\boldsymbol{x}}_0 = V_{\text{dec}} \left( \frac{\boldsymbol{z}_{t_i} - \beta(t_i) \hat{\boldsymbol{\epsilon}}}{\alpha(t_i)} \right)$
                            \State $\boldsymbol{g} = \nabla_{\boldsymbol{z}_{t_i}} \log \left( \sigma_{\tilde{y}} \left( \frac{1}{\lvert \boldsymbol{\mathcal{T}} \rvert} \sum_{j=1}^{\lvert \boldsymbol{\mathcal{T}} \rvert} h(\mathcal{T}_j(\hat{\boldsymbol{x}}_0)) \right) \right)$
                            \State $\boldsymbol{g} = \frac{\boldsymbol{g}}{\lVert \boldsymbol{g} \rVert_2}$
                            \State $\hat{\boldsymbol{\epsilon}} = \hat{\boldsymbol{\epsilon}} - s \beta(t) \boldsymbol{g}$
                        \EndIf
                        \State $\boldsymbol{z}_{t_{i-1}}$ $\gets$ \text{reverse diffusion step using $\hat{\boldsymbol{\epsilon}}$}
                        \If{$r > 1$} \Comment{Sampling $\boldsymbol{z}_{t_i} \sim p(\boldsymbol{z}_{t_i} | \boldsymbol{z}_{t_{i-1}})$}
                            \State $a = \frac{\alpha(t_i)}{\alpha(t_{i-1})}$
                            \State $b^2 = \beta(t_i)^2 - \left( a\beta(t_{i-1}) \right)^2$
                            \State $\boldsymbol{z}_{t_i} \sim \mathcal{N} \left(a \boldsymbol{z}_{t_{i-1}}, b^2 \cdot I \right)$
                        \EndIf
                    \EndFor
                \EndFor
                \If{$\text{argmax}(h(V_{\text{dec}}(\boldsymbol{z}_{0}))) \neq \tilde{y}$}
                    \State $\mu = \mu + \delta_{\mu}$
                    \State $s = s + \delta_s$
                \Else
                    \State \textbf{break} \Comment{End the search early if sample is found}
                \EndIf
            \EndFor
            \State \textbf{return} $V_{\text{dec}}(\boldsymbol{z}_{0})$
        \end{algorithmic}
    \end{algorithm}
    \end{minipage}
    \vspace{-3mm}
\end{wrapfigure}

\noindent\textbf{Similarity targeting.}
Many popular adversarial attacks operate in an untargeted setting \citep{Szegedy2014, Goodfellow2015, Madry2019, Croce2020}, where the only requirement is that the predicted class differs from the true class, i.e., $\tilde{y} \neq y$. These attacks often update the adversarial target dynamically during optimization, selecting the most probable incorrect class at each step, and they frequently outperform targeted variants \citep{Scaling_Croce2020}. To extend NatADiff to untargeted settings, we propose \textit{similarity targeting} (see Algorithm~\ref{alg:NatADiff Similarity} in Appendix~\ref{apdx:NatADiff Algorithm}).

Similarity targeting is based on the assumption that it is easier to incorporate adversarial features from classes that are semantically similar to the true class. To heuristically measure this similarity, we leverage the CLIP \citep{CLIP_Radford2021} text encoder, which maps class labels into a shared image-text embedding space. We then select the adversarial target as the class most similar to the true class in this embedding space, as measured by cosine similarity. Concretely, given the CLIP text encoder $C_{\text{enc}} : \mathcal{Y} \rightarrow \mathbb{R}^m$, the true class label, $y_i$, and the set of candidate adversarial labels $\mathcal{Y}_{\text{cand}} = \{ y_1, \dots, y_n \} \setminus y_i$, we define the adversarial target as
\begin{equation}
    \label{eq:Similarity Targetting}
    \tilde{y} = \underset{y \in \mathcal{Y}{\text{cand}}}{\textup{arg max}} \frac{C_{\text{enc}}(y_i) \cdot C_{\text{enc}}(y)}{\lVert C_{\text{enc}}(y_i) \rVert_2 \lVert C_{\text{enc}}(y) \rVert_2}.
\end{equation}

\section{Experiments} \label{sec:Experiments}
\subsection{Experiment details} \label{sec:Experiment details}
We evaluate the effectiveness of NatADiff on the ImageNet \citep{Deng2009} classification task, which requires a model to classify an image into one of 1,000 distinct object categories. We target a range of off-the-shelf ImageNet classifiers and assess the attack success rates and visual quality of the generated samples. All experiments are conducted on an NVIDIA RTX 4090 GPU, and each sample takes approximately 103 seconds to generate (see Appendix~\ref{apdx:Runtime comparison} for runtime comparisons).

\noindent\textbf{Surrogate and victim models.} NatADiff and other comparable attack methods require access to classifier gradients when generating adversarial samples. The model whose gradients are used in this way is referred to as the \textit{surrogate model}, and we test ResNet-50 (RN-50) \citep{He2015}, Inception-v3 (Inc-v3) \citep{Inceptionv3_Szegedy2016}, and Vision Transformer (ViT-H) \citep{Dosovitskiy2021} surrogates. We examine the performance of these adversarial samples across RN-50, Inc-v3, ViT-H, adversarially trained ResNet (AdvRes) and Inception (AdvInc) \citep{Kurakin2018}, ResNet-152 (RN-152) \citep{He2015}, Max-ViT \citep{Max-ViT_Tu2022}, Swin-B \citep{Swin-B_Liu2021}, and DeIT \citep{DeIT_Touvron2021} \textit{victim} models.

\noindent\textbf{Diffusion model.} We use Stable Diffusion 1.5 \citep{StableDiffusion_Rombach2022} (SD1.5) as our base diffusion model for NatADiff and adversarial classifier guidance \citep{Dai2024}. SD1.5 is a pretrained latent text-to-image diffusion model. The diffusion process is performed in a latent space, and a variational autoencoder (VAE), \citep{Kingma2014}, $V_{\text{dec}}$, is used to decode latent samples into the image space. To facilitate the use of adversarial classifier guidance the VAE must be incorporated into the gradient calculation. Specifically, given a sample, $\boldsymbol{z}_t$, from the latent diffusion process, we introduce the VAE, $V_{\text{dec}}$, into (\ref{eq:Pred x0}) as $\hat{\boldsymbol{x}}_0(\boldsymbol{z}_t) = V_{\text{dec}} \left( (\boldsymbol{z}_t - \beta(t) \boldsymbol{\epsilon}_{\theta^*}(\boldsymbol{z}_t, t, y)) / \alpha(t) \right)$, and take the gradient with respect to $\boldsymbol{z}_t$ instead of $\boldsymbol{x}_t$ in (\ref{eq:AdvDiff Guidance}) and (\ref{eq:Transform Adv Gradient}). Finally, we use 200 sampling steps under the DDIM \citep{Song2022} parameterization, which defines the drift and diffusion coefficients in (\ref{eq:diffusion forward}) as $f(t) = \frac{1}{2} \frac{d}{dt} \log( \hat{\alpha}_t)$ and $g(t)^2 = - \frac{d}{dt} \log( \hat{\alpha}_t)$, respectively \citep{Han2024}.

\noindent\textbf{NatADiff settings.}
We run NatADiff under both \textit{targeted} and \textit{untargeted} attack settings. For targeted attacks, we assign a random adversarial target to each sample. For untargeted attacks, we use similarity targeting from Section~\ref{sec:NatADiff}. During adversarial boundary guidance we use the text prompt \textit{``$<$class name of $y$$>$''} as the conditioning for the true class guidance, $y$. For intersection guidance, $y \cap \tilde{y}$, we use the prompt \textit{``$<$class name of $\tilde{y}$$>$ and $<$class name of $y$$>$''}. We delay adversarial classifier guidance until timestep $t \leq 700$, i.e., we set $s = 0$ for all $t > 700$. Finally, we choose a conservative value of $\mu = 0.2$ for all experiments, and select $s$ based on the target classifier (see Appendix~\ref{apdx:Selection of mu} for ablation study and additional experiment details). We generate 2,000 adversarial samples in each experiment run.

\noindent\textbf{Comparison methods.}
We compare NatADiff to state-of-the-art constrained and unconstrained adversarial attacks: PGD \cite{Madry2019}, AutoAttack (AA) \cite{Croce2020}, NCF \citep{NCF_Yuan2022}, DiffAttack \citep{DiffAttack_Chen2025}, ACA \citep{ContentDiffusionAttack_Chen2023}, and adversarial classifier guidance (AdvClass) \citep{Dai2024}. All methods use their default parameter settings and for comparison methods that alter a pre-existing ``clean'' image, we use their suggested ImageNet-compatible dataset as our clean baseline \citep{AdvCompDataset_Kurakin2017}. We apply AdvClass under both \textit{targeted} and \textit{untargeted} attack settings (using the same similarity targeting as NatADiff). Finally, we acknowledge that although NCF, ACA, and AdvClass are classified as unconstrained attacks, only NatADiff and AdvClass have a fully unrestricted attack domain, as they are not bound to any initial clean image and are capable of synthesising entirely new images.

\noindent\textbf{Metrics.}
To assess attack performance, we follow \cite{ContentDiffusionAttack_Chen2023}, and report attack success rate (ASR) as the percentage of misclassified samples. To evaluate image quality, we use the Inception Score (IS) \citep{Salimans2016} and Fréchet Inception Distance (FID) \citep{Frechet1957}. IS provides a direct measure of image quality, while FID estimates the similarity between the distributions of generated and real images. We compute FID with respect to both the ImageNet-Val \citep{Deng2009} and ImageNet-A \citep{Hendrycks2021} datasets to assess how closely NatADiff samples resemble natural images and known natural adversarial examples, respectively.

\subsection{Results} \label{sec:Results}
\begin{figure*}
    \centering
    {
    \resizebox{\linewidth}{!}{%
    \scriptsize
    \begin{tabular}{@{}l@{}l@{\hspace{0.5em}}l@{}l@{\hspace{0.5em}}l@{}l@{}}
        \multicolumn{2}{c}{\normalsize\textbf{ResNet-50}} & \multicolumn{2}{c}{\normalsize\textbf{Inception-v3}} & \multicolumn{2}{c}{\normalsize\textbf{ViT-H}} \\
        \includegraphics[width=0.2\textwidth]{./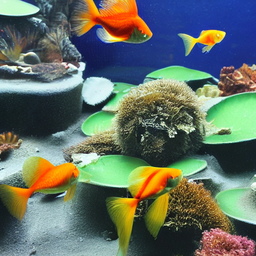} & \includegraphics[width=0.2\textwidth]{./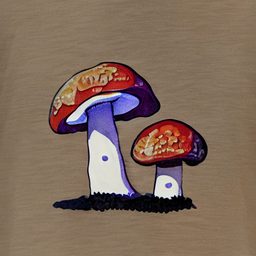} & \includegraphics[width=0.2\textwidth]{./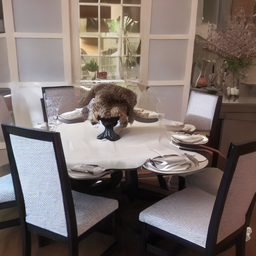} & \includegraphics[width=0.2\textwidth]{./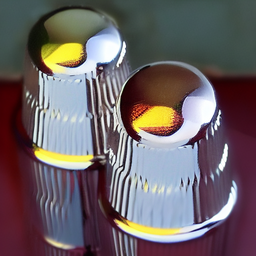} & \includegraphics[width=0.2\textwidth]{./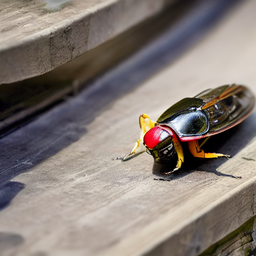} & \includegraphics[width=0.2\textwidth]{./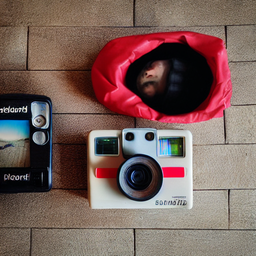} \\

        \textcolor{Green}{\textbf{True:}} Goldfish & \textcolor{Green}{\textbf{True:}} Mushroom & \textcolor{Green}{\textbf{True:}} Dining Table & \textcolor{Green}{\textbf{True:}} Thimble & \textcolor{Green}{\textbf{True:}} Cicada & \textcolor{Green}{\textbf{True:}} Polaroid Camera \\
        \textcolor{red}{\textbf{Adv\textsuperscript{T}:}} Titi Monkey & \textcolor{red}{\textbf{Adv\textsuperscript{T}:}} Packet & \textcolor{red}{\textbf{Adv\textsuperscript{T}:}} Platypus & \textcolor{red}{\textbf{Adv\textsuperscript{T}:}} Traffic Light & \textcolor{red}{\textbf{Adv\textsuperscript{T}:}} Bobsled & \textcolor{red}{\textbf{Adv\textsuperscript{T}:}} Sleeping Bag \\
        
        \includegraphics[width=0.2\textwidth]{./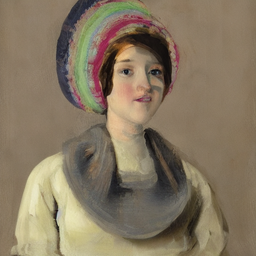} & \includegraphics[width=0.2\textwidth]{./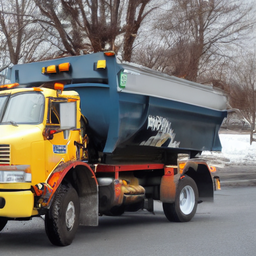} & \includegraphics[width=0.2\textwidth]{./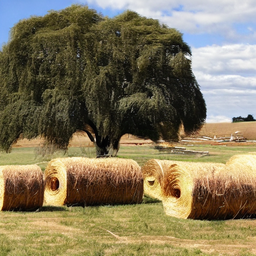} & \includegraphics[width=0.2\textwidth]{./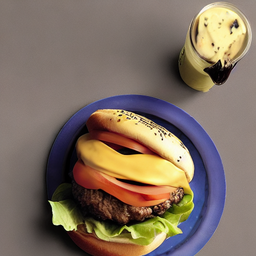} & \includegraphics[width=0.2\textwidth]{./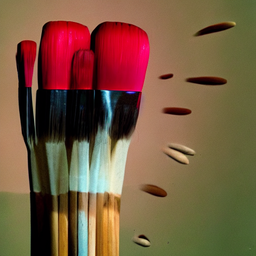} & \includegraphics[width=0.2\textwidth]{./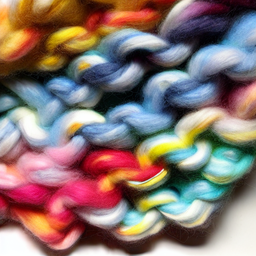} \\
        
        \textcolor{Green}{\textbf{True:}} Bonnet & \textcolor{Green}{\textbf{True:}} Garbage Truck & \textcolor{Green}{\textbf{True:}} Hay & \textcolor{Green}{\textbf{True:}} Cheeseburger & \textcolor{Green}{\textbf{True:}} Paintbrush & \textcolor{Green}{\textbf{True:}} Wool \\
        \textcolor{red}{\textbf{Adv\textsuperscript{U}:}} Sombrero & \textcolor{red}{\textbf{Adv\textsuperscript{U}:}} Snowplow & \textcolor{red}{\textbf{Adv\textsuperscript{U}:}} Ox & \textcolor{red}{\textbf{Adv\textsuperscript{U}:}} Banana & \textcolor{red}{\textbf{Adv\textsuperscript{U}:}} Matchstick & \textcolor{red}{\textbf{Adv\textsuperscript{U}:}} Dishrag \\
    \end{tabular}}
    }
    \vspace{-2mm}
    \caption{Adversarial samples generated by NatADiff with ResNet-50 \citep{He2015}, Inception-v3 \citep{Inceptionv3_Szegedy2016}, and ViT-H \citep{Dosovitskiy2021} surrogate models (see column labels). We report the true class and adversarial target for each image. Superscripts T and U denote random and similarity targeted attacks, respectively.}
    \label{fig:NatADiff samples}
\end{figure*}

\begin{figure*}
    \centering
    {
    \resizebox{\linewidth}{!}{%
    \small
    \begin{tabular}{@{}c@{}c@{}c@{}c@{}c@{}c@{}c@{}c@{}}
        \normalsize\textbf{Source} & \normalsize\textbf{PGD} & \normalsize\textbf{AA} & \normalsize\textbf{NCF} & \normalsize\textbf{DiffAttack} & \normalsize\textbf{ACA} & \normalsize\textbf{AdvClass} & \normalsize\textbf{NatADiff} \\
        \includegraphics[width=0.2\textwidth]{./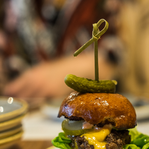} & \includegraphics[width=0.2\textwidth]{./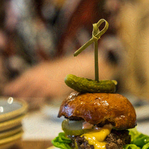} & \includegraphics[width=0.2\textwidth]{./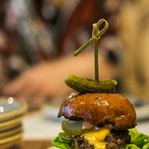} & \includegraphics[width=0.2\textwidth]{./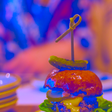} & \includegraphics[width=0.2\textwidth]{./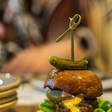} & \includegraphics[width=0.2\textwidth]{./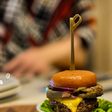} & \includegraphics[width=0.2\textwidth]{./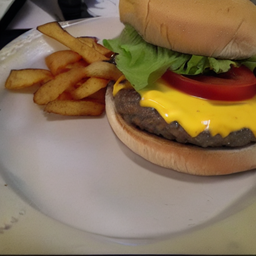} & \includegraphics[width=0.2\textwidth]{./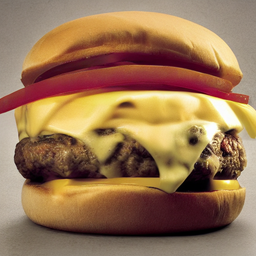}
    \end{tabular}}
    }
    \vspace{-2mm}
    \caption{Source and adversarial samples generated by PGD \cite{Madry2019}, AutoAttack (AA) \cite{Croce2020}, NCF \citep{NCF_Yuan2022}, DiffAttack \citep{DiffAttack_Chen2025}, ACA \citep{ContentDiffusionAttack_Chen2023}, AdvClass \citep{Dai2024}, and NatADiff with a ResNet-50 \citep{He2015} surrogate model. Note true class is ``burger'' and adversarial target for AdvClass and NatADiff is ``banana''.}
    \label{fig:Attack visual comparison}
\end{figure*}

\begin{table}
    \centering
    \caption{\textbf{Attack success rate} (\%) and \textbf{image quality} of adversarial samples. Superscripts T and U denote random and similarity targeted attacks, respectively. \textcolor{red}{\textbf{Bold}} and \textcolor{BrickRed}{\underline{underlined}} values highlight the best and second best scores for each surrogate model. White-box ASR (same surrogate and victim model) is denoted with an $^*$. Note we do not report image quality for constrained perturbation-based attacks (these attacks make imperceptible image alterations).}
    \label{tab:Classifier ASR and image quality}
    \resizebox{1\textwidth}{!}{
        \begin{tabular}{@{}cc@{}c@{\hspace{1em}}ccccc@{}c@{\hspace{1em}}cccc@{}ccccc@{}}
            \specialrule{1.2pt}{0pt}{0pt}
            \noalign{\vskip 0.5ex}
            \multicolumn{1}{@{}c}{\multirow{3}{*}{\shortstack{Surrogate\\Model}}} & \multicolumn{1}{c@{}}{\multirow{3}{*}{Attack}} && \multicolumn{10}{c}{Victim Model ASR (\%)} && \multicolumn{1}{c}{\multirow{3}{*}{\shortstack{Average\\ASR}}} & \multicolumn{1}{c}{\multirow{3}{*}{\shortstack{IS\\($\boldsymbol{\uparrow}$)}}} & \multicolumn{1}{c}{\multirow{3}{*}{\shortstack{FID-\\Val ($\boldsymbol{\downarrow}$)}}} & \multicolumn{1}{c}{\multirow{3}{*}{\shortstack{FID-\\A ($\boldsymbol{\downarrow}$)}}} \\
            &&& \multicolumn{5}{c}{CNNs} && \multicolumn{4}{c}{Transformers} &&&& \\
            \cline{4-8} \cline{10-13} 
            \noalign{\vskip 0.5ex}
            &&& RN-50 & Inc-v3 & RN-152 & AdvRes & AdvInc && ViT-H & Max-ViT & Swin-B & DeIT &&&& \\
            \specialrule{1.2pt}{0pt}{0pt}
            \noalign{\vskip 0.5ex}
            & Clean & & $5.3$ & $7.6$ & $2.9$ & $3.0$ & $5.8$ && $10.9$ & $3.8$ & $4.5$ & $7.4$ && $5.7$ & $55.0$ & $58.0$ & $94.7$ \\
            \specialrule{1.2pt}{0pt}{0pt}
            \noalign{\vskip 0.2ex}
            \multicolumn{1}{@{}c}{\multirow{9}{*}{RN-50}} & PGD & & $99.4^*$ & $11.8$ & $5.2$ & $4.9$ & $8.1$ && $10.5$ & $4.4$ & $5.5$ & $8.2$ && $17.6$ & - & - & - \\
            & AA & & $\textcolor{red}{\mathbf{100^*}}$ & $13.3$ & $10.0$ & $3.9$ & $8.8$ && $10.5$ & $5.4$ & $5.6$ & $8.0$ && $18.4$ & - & - & - \\
            & NCF & & $74.8^*$ & $33.4$ & $37.3$ & $28.2$ & $31.2$ && $17.2$ & $24.0$ & $31.7$ & $37.2$ && $35.0$ & $30.4$ & $69.7$ & $85.5$ \\
            & DiffAttack & & $92.5^*$ & $47.1$ & $52.5$ & $35.3$ & $43.3$ && $28.4$ & $44.6$ & $42.4$ & $38.9$ && $47.2$ & $26.8$ & $64.1$ & $\textcolor{red}{\mathbf{76.8}}$ \\
            & ACA & & $78.8^*$ & $53.3$ & $52.7$ & $49.8$ & $53.1$ && $\textcolor{BrickRed}{\underline{41.8}}$ & $\textcolor{BrickRed}{\underline{46.4}}$ & $\textcolor{BrickRed}{\underline{49.3}}$ & $50.6$ && $52.9$ & $23.9$ & $65.0$ & $77.9$ \\
            & AdvClass\textsuperscript{T} & & $99.6^*$ & $35.0$ & $32.1$ & $31.4$ & $33.5$ && $25.8$ & $30.0$ & $30.8$ & $32.8$ && $39.0$ & $38.3$ & $\textcolor{red}{\mathbf{48.9}}$ & $92.4$ \\
            & AdvClass\textsuperscript{U} & & $\textcolor{BrickRed}{\underline{99.9^*}}$ & $42.5$ & $44.3$ & $38.7$ & $41.1$ && $29.7$ & $37.6$ & $38.4$ & $39.1$ && $45.7$ & $\textcolor{BrickRed}{\underline{38.5}}$ & $\textcolor{BrickRed}{\underline{50.2}}$ & $92.7$ \\
            \cline{2-18}
            \noalign{\vskip 0.5ex}
            & NatADiff\textsuperscript{T} & & $96.9^*$ & $\textcolor{BrickRed}{\underline{60.1}}$ & $\textcolor{BrickRed}{\underline{56.5}}$ & $\textcolor{BrickRed}{\underline{55.3}}$ & $\textcolor{BrickRed}{\underline{58.9}}$ && $36.8$ & $45.3$ & $49.0$ & $\textcolor{BrickRed}{\underline{52.3}}$ && $\textcolor{BrickRed}{\underline{56.8}}$ & $26.0$ & $66.5$ & $\textcolor{BrickRed}{\underline{77.3}}$ \\
            & NatADiff\textsuperscript{U} & & $99.3^*$ & $\textcolor{red}{\mathbf{68.3}}$ & $\textcolor{red}{\mathbf{72.1}}$ & $\textcolor{red}{\mathbf{65.3}}$ & $\textcolor{red}{\mathbf{66.8}}$ && $\textcolor{red}{\mathbf{45.3}}$ & $\textcolor{red}{\mathbf{64.1}}$ & $\textcolor{red}{\mathbf{65.2}}$ & $\textcolor{red}{\mathbf{67.0}}$ && $\textcolor{red}{\mathbf{68.2}}$ & $\textcolor{red}{\mathbf{43.2}}$ & $51.4$ & $95.9$ \\
            \specialrule{1.2pt}{0pt}{0pt}
            \noalign{\vskip 0.2ex}
            \multicolumn{1}{c}{\multirow{9}{*}{Inc-v3}} & PGD & & $6.0$ & $99.7^*$ & $4.0$ & $5.1$ & $10.4$ && $10.2$ & $4.1$ & $5.6$ & $7.4$ && $16.9$ & - & - & - \\
            & AA & & $7.3$ & $\textcolor{red}{\mathbf{100^*}}$ & $4.9$ & $4.8$ & $12.8$ && $10.6$ & $5.7$ & $6.1$ & $8.0$ && $17.8$ & - & - & - \\
            & NCF & & $31.0$ & $66.7^*$ & $23.1$ & $29.0$ & $36.3$ && $15.8$ & $18.3$ & $20.4$ & $30.5$ && $30.1$ & $31.7$ & $69.1$ & $83.0$ \\
            & DiffAttack & & $29.0$ & $74.6^*$ & $23.7$ & $30.0$ & $39.9$ && $18.9$ & $22.9$ & $26.5$ & $25.8$ && $32.4$ & $33.2$ & $63.7$ & $\textcolor{red}{\mathbf{78.2}}$ \\
            & ACA & & $50.9$ & $67.8^*$ & $48.2$ & $54.2$ & $60.1$ && $\textcolor{BrickRed}{\underline{43.6}}$ & $\textcolor{BrickRed}{\underline{45.1}}$ & $\textcolor{BrickRed}{\underline{48.8}}$ & $\textcolor{BrickRed}{\underline{51.3}}$ && $52.2$ & $23.1$ & $68.0$ & $\textcolor{BrickRed}{\underline{78.8}}$ \\
            & AdvClass\textsuperscript{T} & & $35.1$ & $99.6^*$ & $34.5$ & $35.6$ & $39.5$ && $28.8$ & $32.4$ & $34.0$ & $35.7$ && $41.7$ & $33.7$ & $51.0$ & $89.2$ \\
            & AdvClass\textsuperscript{U} & & $38.0$ & $\textcolor{BrickRed}{\underline{99.9^*}}$ & $38.7$ & $40.4$ & $44.2$ && $30.0$ & $36.0$ & $36.6$ & $38.9$ && $44.8$ & $\textcolor{BrickRed}{\underline{39.7}}$ & $\textcolor{red}{\mathbf{49.4}}$ & $93.3$ \\
            \cline{2-18}
            \noalign{\vskip 0.5ex}
            & NatADiff\textsuperscript{T} & & $\textcolor{BrickRed}{\underline{53.4}}$ & $97.9^*$ & $\textcolor{BrickRed}{\underline{49.4}}$ & $\textcolor{BrickRed}{\underline{57.3}}$ & $\textcolor{BrickRed}{\underline{62.6}}$ && $35.4$ & $44.4$ & $45.1$ & $50.8$ && $\textcolor{BrickRed}{\underline{55.2}}$ & $27.7$ & $66.6$ & $\textcolor{red}{\mathbf{78.2}}$ \\
            & NatADiff\textsuperscript{U} & & $\textcolor{red}{\mathbf{67.4}}$ & $99.4^*$ & $\textcolor{red}{\mathbf{65.7}}$ & $\textcolor{red}{\mathbf{70.1}}$ & $\textcolor{red}{\mathbf{75.7}}$ && $\textcolor{red}{\mathbf{44.4}}$ & $\textcolor{red}{\mathbf{60.3}}$ & $\textcolor{red}{\mathbf{60.2}}$ & $\textcolor{red}{\mathbf{63.1}}$ && $\textcolor{red}{\mathbf{67.4}}$ & $\textcolor{red}{\mathbf{47.0}}$ & $\textcolor{BrickRed}{\underline{50.5}}$ & $98.9$ \\
            \specialrule{1.2pt}{0pt}{0pt}
            \noalign{\vskip 0.2ex}
            \multicolumn{1}{c}{\multirow{9}{*}{ViT-H}} & PGD & & $5.8$ & $11.0$ & $3.6$ & $4.0$ & $7.8$ && $96.2^*$ & $4.5$ & $5.4$ & $9.2$ && $16.4$ & - & - & - \\
            & AA & & $6.5$ & $9.8$ & $3.9$ & $4.3$ & $8.6$ && $\textcolor{red}{\mathbf{100^*}}$ & $4.5$ & $5.9$ & $9.9$ && $17.0$ & - & - & - \\
            & NCF & & $20.0$ & $19.4$ & $14.8$ & $15.4$ & $18.5$ && $50.6^*$ & $11.9$ & $15.6$ & $21.2$ && $20.8$ & $\textcolor{red}{\mathbf{39.8}}$ & $63.1$ & $86.4$ \\
            & DiffAttack & & $20.5$ & $25.0$ & $17.2$ & $18.9$ & $22.4$ && $73.2^*$ & $18.1$ & $22.3$ & $20.6$ && $26.5$ & $35.2$ & $63.4$ & $\textcolor{red}{\mathbf{80.0}}$ \\
            & ACA & & $50.5$ & $54.5$ & $48.1$ & $49.1$ & $52.8$ && $75.8^*$ & $47.5$ & $49.7$ & $50.5$ && $53.2$ & $25.5$ & $64.2$ & $\textcolor{BrickRed}{\underline{80.9}}$ \\
            & AdvClass\textsuperscript{T} & & $33.9$ & $35.9$ & $33.4$ & $34.4$ & $34.4$ && $92.6^*$ & $31.9$ & $33.4$ & $36.0$ && $40.7$ & $38.9$ & $\textcolor{red}{\mathbf{48.5}}$ & $95.2$ \\
            & AdvClass\textsuperscript{U} & & $35.2$ & $37.5$ & $35.8$ & $35.2$ & $36.0$ && $98.7^*$ & $33.9$ & $34.9$ & $37.7$ && $42.8$ & $\textcolor{BrickRed}{\underline{39.2}}$ & $\textcolor{red}{\mathbf{48.5}}$ & $98.8$ \\
            \cline{2-18}
            \noalign{\vskip 0.5ex}
            & NatADiff\textsuperscript{T} & & $\textcolor{red}{\mathbf{70.7}}$ & $\textcolor{red}{\mathbf{73.5}}$ & $\textcolor{red}{\mathbf{68.4}}$ & $\textcolor{red}{\mathbf{71.3}}$ & $\textcolor{red}{\mathbf{72.1}}$ && $98.5^*$ & $\textcolor{red}{\mathbf{65.7}}$ & $\textcolor{red}{\mathbf{66.9}}$ & $\textcolor{red}{\mathbf{71.7}}$ && $\textcolor{red}{\mathbf{73.2}}$ & $15.3$ & $88.0$ & $93.5$ \\
            & NatADiff\textsuperscript{U} & & $\textcolor{BrickRed}{\underline{66.8}}$ & $\textcolor{BrickRed}{\underline{67.0}}$ & $\textcolor{BrickRed}{\underline{65.3}}$ & $\textcolor{BrickRed}{\underline{64.9}}$ & $\textcolor{BrickRed}{\underline{65.8}}$ && $\textcolor{BrickRed}{\underline{99.6^*}}$ & $\textcolor{BrickRed}{\underline{63.9}}$ & $\textcolor{BrickRed}{\underline{65.4}}$ & $\textcolor{BrickRed}{\underline{68.6}}$ && $\textcolor{BrickRed}{\underline{69.7}}$ & $31.9$ & $\textcolor{BrickRed}{\underline{53.9}}$ & $96.2$ \\
            \specialrule{1.2pt}{0pt}{0pt}
            \noalign{\vskip 0.2ex}
        \end{tabular}
    }
    \vspace{-6mm}
\end{table}

\noindent\textbf{Attack success.}
NatADiff had comparable white-box ASR to current state-of-the-art attacks, but vastly superior transferability across all experiments (see Table~\ref{tab:Classifier ASR and image quality}). This suggests that NatADiff is able to more effectively leverage the mechanisms underpinning natural adversarial samples. Additionally, adversarial training did not offer any meaningful robustness to NatADiff, with both targeted and untargeted attacks transferring to adversarially trained ResNet and Inception models.

PGD \citep{Madry2019} and AA \citep{Croce2020} had the lowest transferability, likely because both are perturbation-based attacks, i.e., they rely on the small pockets of adversarial samples that neighbor natural images. These adversarial regions are not guaranteed to align across classifier architectures, especially architectures that are dissimilar, e.g., convolutional vs. transformer. Similarly, the lower transferability of NCF \citep{NCF_Yuan2022} and DiffAttack \citep{DiffAttack_Chen2025} can be explained by their limited attack surface. NCF is restricted to attacking the color distribution of a “clean” source image (see Figure~\ref{fig:Attack visual comparison}). In contrast, DiffAttack crafts adversarial perturbations for a source image by perturbing the diffusion latent space subject to the constraint that the reconstructed adversarial image must remain sufficiently close to the original (see Figure~\ref{fig:Attack visual comparison}).

Adversarial classifier guidance \citep{Dai2024} is outperformed by ACA \citep{ContentDiffusionAttack_Chen2023} and NatADiff in all experiments. This can be attributed to the limited guidance provided by injecting non-robust classifier gradients into the diffusion sampling trajectory \citep{Shen2024}. ACA is the most comparable to NatADiff performance-wise; however, ACA alters the semantic structure of a source image and is thus constrained by the semantics of the initial image. In contrast, NatADiff has a wider attack surface as it is free to generate any image that fools a surrogate classifier. Furthermore, NatADiff uses a diffusion model to incorporate adversarial features that are classifier-agnostic (as seen in Figure~\ref{fig:NatADiff samples}), and as such, is the only method that does not solely rely on the gradient of a surrogate classifier.

ViT-H \citep{Dosovitskiy2021} is the current state-of-the-art in image classification and is the most resistant to transfer attacks. This is unsurprising, as it uses the modern transformer architecture and is the largest model examined. ViT-H learns a more robust feature representation than convolutional and smaller transformer models, which makes it less susceptible to both constrained and natural adversarial samples. However, despite the strengths of the ViT-H architecture, NatADiff is able to reliably generate samples that transfer to ViT-H—albeit at a lower ASR than equivalent attacks against all other models.

When comparing NatADiff's targeted attacks with their untargeted counterparts, we see that untargeted attacks outperform targeted attacks both in terms of victim classifier performance and transferability for all classifier \textit{except} ViT-H. We believe that the discrepancy between targeted and untargeted ViT-H samples can be attributed to the strength of the decision boundary learnt by the ViT-H model. The diffusion model struggles to generate feasible \textit{targeted} adversarial samples for ViT-H, leading to the introduction of image artifacts (as seen in Figure~\ref{fig:Low-Quality Targeted ViT Samples} from Appendix~\ref{apdx:Low-Quality Targeted ViT Samples}, and supported by the IS and FID-Val scores in Table~\ref{tab:Classifier ASR and image quality}). These artifacts substantially degrade image quality and artificially inflate the attack success rate of the targeted ViT-H samples. Importantly, this degradation in image quality was limited to targeted ViT-H attacks. This indicates that some adversarial targets are easier to achieve than others, which further motivates the use of similarity targeting as a method for identifying classifier “weak spots.”

\noindent\textbf{Image quality.}
We observe a clear disparity in the image quality of targeted and untargeted NatADiff variants (see Table~\ref{tab:Classifier ASR and image quality}). Targeted NatADiff samples exhibit lower FID-A but worse IS and FID-VAL, indicating that they are closer in distribution to known natural adversarial examples, albeit with lower image quality and less alignment to the ImageNet validation dataset. In contrast, untargeted NatADiff achieves IS and FID-VAL comparable to other generative methods, but with a higher FID-A, suggesting that overall image quality improves at the expense of alignment with natural adversarial samples. This follows from the known characteristics of natural adversarial samples, which often blend features from disparate classes \citep{Hendrycks2021, ShortcutLearning_Geirhos2020, RiskMinimisation_Arjovsky2020}. Replicating such blending places greater demands on the underlying diffusion model to locate plausible points on the image manifold, which can introduce artifacts and degrade image quality. In contrast, similarity targeting blends more related classes, yielding samples with higher visual fidelity but less alignment with natural adversarial distributions. Additionally, NCF \citep{NCF_Yuan2022}, DiffAttack \citep{DiffAttack_Chen2025}, and ACA \citep{ContentDiffusionAttack_Chen2023} all achieve superior FID-A than AdvClass \citep{Dai2024} and untargeted NatADiff. This can be attributed to the low FID-A of the clean baseline dataset that NCF, DiffAttack, and ACA use as their source, which causes them to inherit the same distributional properties. Conversely, AdvClass and NatADiff generate artificial samples and are thus constrained both by the distributional tendencies of the underlying diffusion model and the effect of similarity targeting.

\section{Conclusion} \label{sec:Conclusion}
We introduce NatADiff, an adversarial sampling scheme that leverages diffusion models to generate highly transferable adversarial samples. Our method is motivated by the observation that natural adversarial samples frequently contain features from the adversarial class, which deep learning models exploit to shortcut the classification processes. To leverage this behavior, we guide the diffusion trajectory towards the intersection of the true and adversarial classes. Our method achieves comparable white-box attack success rates to current state-of-the-art techniques, while exhibiting significantly higher transferability across models. Furthermore, samples generated using NatADiff align more closely with known natural adversarial samples than those generated via adversarial classifier guidance alone. These results demonstrate that NatADiff produces adversarial samples that transfer more effectively than existing attacks, and more faithfully resemble naturally occurring test-time errors than those generated from vanilla adversarial diffusion guidance.

\subsubsection*{Acknowledgments}
Max Collins is the recipient of the Research Training Program scholarship funded by the Australian Government. Professor Ajmal Mian is the recipient of an Australian Research Council Future Fellowship Award (project number FT210100268) funded by the Australian Government. This research was partially supported by National Intelligence and Security Discovery Research Grants (project number NS220100007), funded by the Department of Defence, Australia.

\bibliographystyle{iclr2026_conference}
\bibliography{./references}

\appendix
\addcontentsline{toc}{section}{Appendix} 
\part{Appendix} 
\parttoc 

\newpage
\section{Limitations}
While NatADiff is effective at generating highly transferable adversarial samples, it remains computationally expensive due to the iterative nature of diffusion and the overhead introduced by adversarial guidance. This is an inherent limitation of diffusion-based generative methods, and one unlikely to change without significant advances in generative modeling or architectural design. Additionally, the use of similarity targeting on datasets like ImageNet can lead to \textit{subtle} misclassifications--e.g., between similar dog breeds--which may diminish the perceived severity of the attack. A potential refinement would be to surface a ranked list of similar classes, allowing users to select more divergent adversarial targets while retaining the semantic grounding of similarity-based selection. We also note that we used a conservative setting for the adversarial boundary guidance term, $\mu$, as larger values caused generated samples to occasionally include the adversarial class, as discussed in Appendix~\ref{apdx:Selection of mu}. Finally, we restrict our evaluation to ImageNet classifiers, as ImageNet offers a diverse label space, which supports varied attack scenarios. Extending NatADiff to more specialized domains remains an avenue for future work.

\section{Ethics and broader impacts}
We adhere to the ICLR code of ethics. We acknowledge that this work explores the use of generative models as a means of creating highly transferable adversarial samples. While adversarial attacks raise legitimate concerns regarding misuse, our objective is to expose fundamental vulnerabilities of current classifiers and to better understand the structure of natural adversarial samples. By making our models and code publicly available, we aim to support transparency and reproducibility, and we believe that insight into generative adversarial mechanisms is a necessary step toward building more secure and interpretable classifiers. We do not use private or sensitive data, and all data and models used are publicly released and broadly studied. In future work, we plan to explore how NatADiff can be extended to detect or defend against naturally occurring adversarial samples.

\section{Reproducibility}
We ensure reproducibility by providing detailed descriptions of our algorithms (see Algorithms~\ref{alg:NatADiff} and \ref{alg:NatADiff Similarity}) and experiment parameter settings (see Tables~\ref{tab:ImageNet experiment parameters} and \ref{tab:Oxford Pets experiment parameters}). Our full codebase is included in the supplementary material, along with all configuration files required to replicate our experiments. Comparison methods are implemented using publicly available repositories, and we follow the authors' recommended hyperparameters unless otherwise stated.

\section{Perturbation-based attacks and defences}
In this section, we provide a brief overview of existing constrained perturbation-based attack and defense strategies for image classification models. We focus on the optimization-based formulation of adversarial attacks and highlight the theoretical underpinnings of common training-time defenses.

\subsection{Adversarial attacks}
\cite{Szegedy2014} were the first to demonstrate that imperceptible perturbations to an image's pixel values could cause deep learning models to misclassify the image with a high probability (see Figure~1). Mathematically, these constrained adversarial attacks can be considered a solution to the following constrained optimization problem: 
\begin{equation}
\label{eq:PGD formulation}
    \min_{\boldsymbol{\delta} \in \mathcal{S}} \ \mathcal{L}_h(\boldsymbol{x} + \boldsymbol{\delta}; \boldsymbol{\theta}, \tilde{y}),
\end{equation} 
where $\boldsymbol{\delta} \in \mathbb{R}^m$ is the computed perturbation, $\boldsymbol{x} \in \mathbb{R}^m$ is the vectorized ``clean'' image, $h : \mathbb{R}^m \xrightarrow{} \mathcal{Y}$ is a ``trained'' classifier model with parameters $\boldsymbol{\theta}$, $\tilde{y} \in \mathcal{Y}$ is the class targeted by the adversarial attack, $\mathcal{L}_h(\cdot; \boldsymbol{\theta}, \tilde{y})$ is the loss of the classifier with respect to the target adversarial class, and $\mathcal{S} \triangleq \{ \boldsymbol{\delta} \in \mathbb{R}^m : \lVert \boldsymbol{\delta} \rVert_p < L \}$ is a convex set of allowable perturbation sizes. Algorithms such as fast gradient sign method (FGSM) \citep{Goodfellow2015}, projected gradient descent (PGD) \citep{Madry2019}, and AutoAttack \citep{Croce2020}, have been proposed to efficiently solve the optimization problem in (\ref{eq:PGD formulation}).

Other attack methods have relaxed the constraint on the magnitude of the adversarial perturbation. These unconstrained adversarial attacks seek to alter the semantic information within an image, resulting in misclassification without visually altering the perceptible class. Additionally, techniques like selective cropping and rotation, texture remapping, color pallette transformations, and generative sampling have all been used to successfully ``fool'' modern deep learning models \citep{Brown2018, Bhattad2020, Song2018, ATGAN_Wang2020, Dai2024, Xie2025}. 

\subsection{Defences against adversarial attacks}
Several defensive measures have been proposed that aim to purify adversarial inputs \citep{Samangouei2018, Nie2022}, harden model architectures against attacks \citep{Gu2015, Xu2018}, or improve training procedures \citep{Szegedy2014, Madry2019}. A key challenge in designing adversarial defences is preventing attackers from crafting new attacks that exploit the adapted model. For this reason \textit{adversarial training} has become one of the most popular defences, as it both addresses the source of the adversarial attack, while providing theoretical guarantees of robustness against all possible perturbation-based adversaries.

Adversarial training can be formulated as the following saddle-point optimization problem: 
\begin{equation}
    \label{eq:adversarial training formulation}
    \min_{\boldsymbol{\theta}} \ {\E}_{(\boldsymbol{x}, y) \sim \mathcal{D}} \left[ \max_{\boldsymbol{\delta} \in \mathcal{S}} \ \mathcal{L}_h(\boldsymbol{x} + \boldsymbol{\delta}; \boldsymbol{\theta}, y) \right],
\end{equation} 
where $\mathcal{D}$ is the joint distribution of naturally occurring images and classes, and $y \in \mathcal{Y}$ is the true class label of $\boldsymbol{x}$ \citep{Szegedy2014,Madry2019}. The optimization problem in (\ref{eq:adversarial training formulation}) can be thought of as minimizing the loss caused by the strongest possible adversarial attack. Thus, any model that minimizes (\ref{eq:adversarial training formulation}) is theoretically guaranteed to be resistant to its strongest possible adversarial perturbations.

\section{Example natural adversarial samples} \label{apdx:NaturalAdversarial}
Figure~\ref{fig:NaturalAdversarial} shows natural adversarial samples from the ImageNet-A dataset \citep{Hendrycks2021}, each paired with a heatmap of the classifier-guidance gradient with respect to the adversarial class, $\nabla_{\boldsymbol{x}}\log(p(\tilde{y} \mid \boldsymbol{x}))$. These gradients highlight the image features that contribute to misclassification and that would be emphasized during adversarial classifier-guided diffusion \citep{Dai2024}. In the first image, the classifier gradient is concentrated around the school bus and the snowbanks running alongside the road; in the second, it is concentrated on the snail and its shadow; and in the third, it is concentrated on the power switch. This suggests the classifier has “learned” to associate vehicles beside snowbanks with snowplows, dark elliptical objects with cockroaches, and vertical rectangular boxes with pay phones.

\begin{figure}
    \centering
    \resizebox{0.7\textwidth}{!}{
    {
    \large
    \begin{tabular}{@{}c@{\hspace{0.5em}}c@{\hspace{0.5em}}c@{}}
        \multicolumn{1}{@{}p{0.4\linewidth}@{}}{\textcolor{Green}{School Bus} \hfill  \textcolor{red}{Snowplow ($34 \%$)}} &
        \multicolumn{1}{@{}p{0.4\linewidth}@{}}{\textcolor{Green}{Snail} \hfill  \textcolor{red}{Cockroach ($43 \%$)}} &
        \multicolumn{1}{@{}p{0.4\linewidth}@{}}{\textcolor{Green}{Robin} \hfill  \textcolor{red}{Pay-Phone ($22 \%$)}} \\

        \includegraphics[width=0.4\linewidth]{./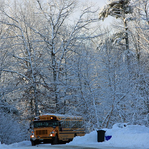} &
        \includegraphics[width=0.4\linewidth]{./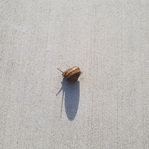} &
        \includegraphics[width=0.4\linewidth]{./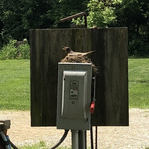} \\[0.5em]
        
        {\boldmath $\nabla_{\boldsymbol{x}}\log (p(\tilde{y} | \boldsymbol{x}))$ \unboldmath} & {\boldmath $\nabla_{\boldsymbol{x}}\log (p(\tilde{y} | \boldsymbol{x}))$ \unboldmath} & {\boldmath $\nabla_{\boldsymbol{x}}\log (p(\tilde{y} | \boldsymbol{x}))$ \unboldmath} \\

        \includegraphics[width=0.4\linewidth]{./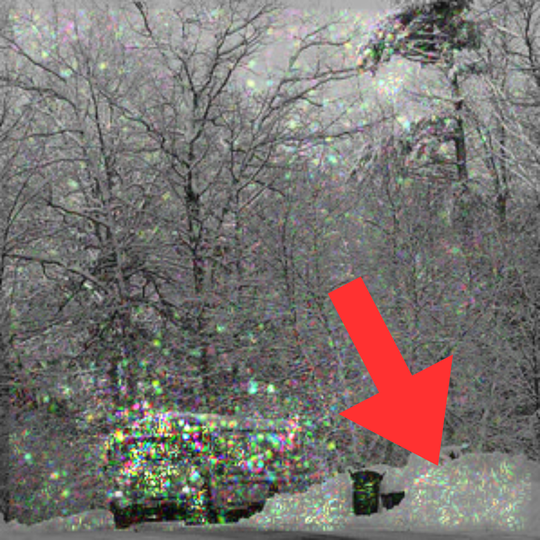} &
        \includegraphics[width=0.4\linewidth]{./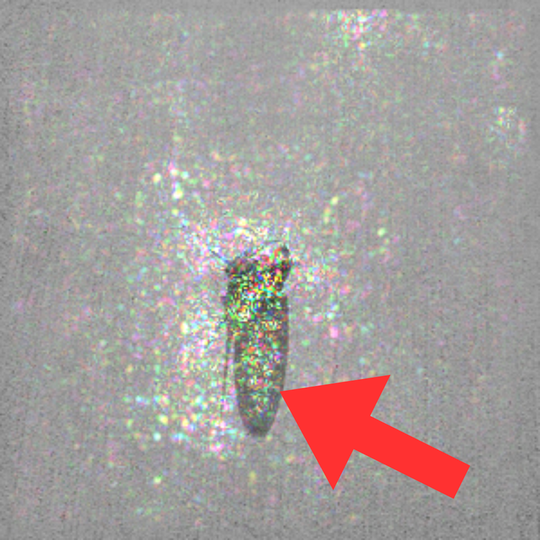} &
        \includegraphics[width=0.4\linewidth]{./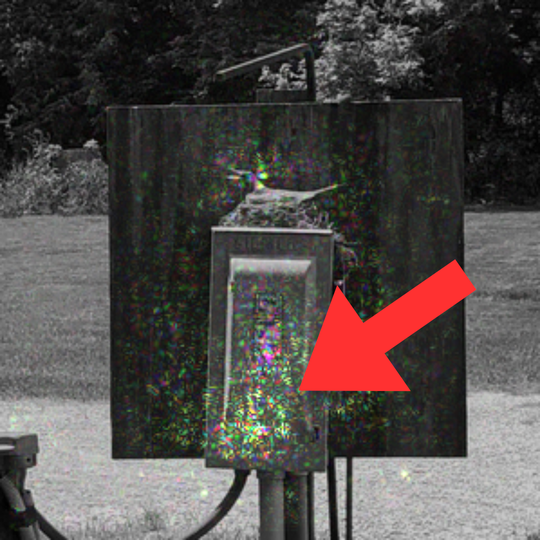} \\
    \end{tabular}
    }
    }
    \caption{\textbf{Top:} Natural adversarial samples compiled by \cite{Hendrycks2021} for ImageNet \citep{Deng2009} classifiers. The \textcolor{Green}{green} labels denote the ground-truth classes; the \textcolor{red}{red} labels are the classes assigned by a ResNet-50 classifier \citep{He2015}. \textbf{Bottom:} Heatmap of the ResNet-50 adversarial classifier-guidance \citep{Dai2024} gradient with respect to the adversarial classes. Arrows point to features from the adversarial class that affect the ResNet-50 classification.}
    \label{fig:NaturalAdversarial}
\end{figure}

\clearpage
\section{NatADiff similarity attack algorithm} \label{apdx:NatADiff Algorithm}
Algorithm~\ref{alg:NatADiff Similarity} provides the algorithm for the similarity targeted variant of NatADiff.

\begin{algorithm}[H]
    \scriptsize
    \caption{NatADiff--Similarity} \label{alg:NatADiff Similarity}
    \begin{algorithmic}
        \Require adversarial guidance parameters: $\omega$, $\rho$, $\mu$, $s$; true class: $y$; candidate adversarial classes: $\mathcal{Y}_{\text{cand}} = \{\tilde{y}_1, \tilde{y}_2, \dots \}$; victim classifier: $h$; forward diffusion functions: $\alpha(t)$, $\beta(t)$; stable diffusion model: $\boldsymbol{\epsilon}_{\theta^\star}$; VAE decoder: $V_{\text{dec}}$; CLIP text encoder: $C_{\text{enc}}$; collection of differentiable image transforms: $\{ \mathcal{T}_1, \mathcal{T}_2, \dots \}$; sequence of sampling steps with $t_1 = 0$, $t_N = T$, and $t_{i+1} > t_i$: $\{ t_{i} \}_{i=1}^N$; time-travel parameters: $R$, $r_l$, $r_u$; adversarial classifier bounds: $c_l$, $c_u$; number sampling attempts: $S$; guidance scalers: $\delta_\mu$, $\delta_s$\\

        \State $\tilde{y} = \underset{\gamma \in \mathcal{Y}{\text{cand}}}{\textup{arg} \min} \frac{C_{\text{enc}}(y) \cdot C_{\text{enc}}(\gamma)}{\lVert C_{\text{enc}}(y) \rVert_2 \lVert C_{\text{enc}}(\gamma) \rVert_2}$
        \State $\boldsymbol{z}_T \sim \mathcal{N}(0, I)$
        \For{$s = 1, \dots, S$}
            \For{$i = N, \dots, 1$}
                \If{$r_l \leq t_i \leq r_u$}
                    \State $\tilde{R} = R$
                \Else
                    \State $\tilde{R} = 1$
                \EndIf
                \For{$r = \tilde{R}, \dots, 1$} \Comment{Time-travel loop}
                    \State $\boldsymbol{v}_y = \boldsymbol{\epsilon}_{\theta^\star}(\boldsymbol{z}_{t_i}, t_i, y) - \boldsymbol{\epsilon}_{\theta^\star}(\boldsymbol{z}_{t_i}, t_i)$
                    \State $\boldsymbol{v}_{y \cap \tilde{y}} = \boldsymbol{\epsilon}_{\theta^\star}(\boldsymbol{z}_{t_i}, t_i, y \cap \tilde{y}) - \boldsymbol{\epsilon}_{\theta^\star}(\boldsymbol{z}_{t_i}, t_i)$
                    \State $\hat{\boldsymbol{\epsilon}} = \boldsymbol{\epsilon}_{\theta^\star}(\boldsymbol{x}_{t_i}, t_i) + (\omega - \mu\omega) \boldsymbol{v}_y + \mu \rho \boldsymbol{v}_{y \cap \tilde{y}}$
                    \If{$c_l \leq t \leq c_u$}
                        \State $\hat{\boldsymbol{x}}_0 = V_{\text{dec}} \left( \frac{\boldsymbol{z}_{t_i} - \beta(t_i) \hat{\boldsymbol{\epsilon}}}{\alpha(t_i)} \right)$
                        \State $\boldsymbol{g} = \nabla_{\boldsymbol{z}_{t_i}} \log \left( \sigma_{\tilde{y}} \left( \frac{1}{\lvert \boldsymbol{\mathcal{T}} \rvert} \sum_{j=1}^{\lvert \boldsymbol{\mathcal{T}} \rvert} h(\mathcal{T}_j(\hat{\boldsymbol{x}}_0)) \right) \right)$
                        \State $\boldsymbol{g} = \frac{\boldsymbol{g}}{\lVert \boldsymbol{g} \rVert_2}$
                        \State $\hat{\boldsymbol{\epsilon}} = \hat{\boldsymbol{\epsilon}} - s \beta(t) \boldsymbol{g}$
                    \EndIf
                    \State $\boldsymbol{z}_{t_{i-1}}$ $\gets$ \text{reverse diffusion step using $\hat{\boldsymbol{\epsilon}}$}
                    \If{$r > 1$} \Comment{Sampling $\boldsymbol{z}_{t_i} \sim p(\boldsymbol{z}_{t_i} | \boldsymbol{z}_{t_{i-1}})$}
                        \State $a = \frac{\alpha(t_i)}{\alpha(t_{i-1})}$
                        \State $b^2 = \beta(t_i)^2 - \left( a\beta(t_{i-1}) \right)^2$
                        \State $\boldsymbol{z}_{t_i} \sim \mathcal{N} \left(a \boldsymbol{z}_{t_{i-1}}, b^2 \cdot I \right)$
                    \EndIf
                \EndFor
            \EndFor
            \If{$\text{argmax}(h(V_{\text{dec}}(\boldsymbol{z}_{0}))) \neq \tilde{y}$}
                \State $\mu = \mu + \delta_{\mu}$
                \State $s = s + \delta_s$
            \Else
                \State \textbf{break} \Comment{End the search early if sample is found}
            \EndIf
        \EndFor
        \State \textbf{return} $V_{\text{dec}}(\boldsymbol{z}_{0})$
    \end{algorithmic}
\end{algorithm}

\section{NatADiff ablation studies}
Here we provide ablation studies examining the effect of classifier augmentations, adversarial boundary guidance, and adversarial classifier guidance. We also include a visualisation of how these components influence the generated image (see Figure~\ref{fig:NatADiff Grad Breakdown}). Recall from the main paper that augmented adversarial classifier guidance introduces visual features from the adversarial class (see Appendix~\ref{apdx:Effect of classifier augmentations}), adversarial boundary guidance further increases the amount of adversarial structure added and improves image quality (see Appendix~\ref{apdx:Selection of mu}), and time-travel sampling provides an additional improvement in image quality \citep{Lugmayr2022} (see Figure~\ref{fig:NatADiff Grad Breakdown}).

\begin{figure}
    \centering
    \resizebox{1\textwidth}{!}{
    {\footnotesize
    \begin{tabular}{@{}|@{\,}l@{\,}|@{}c@{}|@{}c@{}|@{}c@{}|@{}c@{}|@{}c@{}|@{}}
        \hline
        \textbf{Adv. Class. Guid.} & \xmark & \cmark & \cmark & \cmark & \cmark \\ 
        \textbf{Class. Aug.} & \xmark & \xmark & \cmark & \cmark & \cmark \\ 
        \textbf{Adv. Bound. Guid.} & \xmark & \xmark & \xmark & \cmark & \cmark \\ 
        \textbf{Time-Travel Samp.} & \xmark & \xmark & \xmark & \xmark & \cmark \\ 
        \hline
        \multicolumn{1}{|c|}{\textbf{Image}} & 
        \includegraphics[width=0.2\textwidth, align=c]{./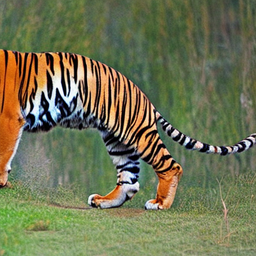} & 
        \includegraphics[width=0.2\textwidth, align=c]{./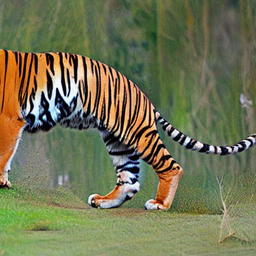} &
        \includegraphics[width=0.2\textwidth, align=c]{./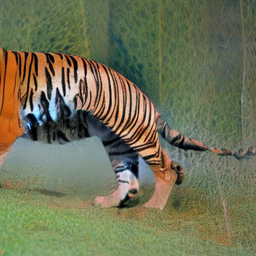} &
        \includegraphics[width=0.2\textwidth, align=c]{./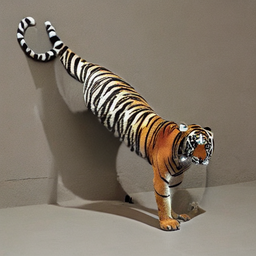} &
        \includegraphics[width=0.2\textwidth, align=c]{./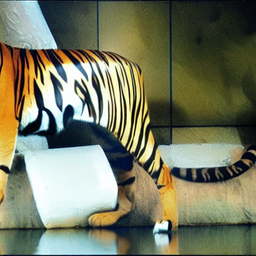} \\
        \hline
        \vspace{-2ex} & & & & & \\
        \multicolumn{1}{|c|}{\textbf{Classification}} &
        \makecell[ll]{
            \textbf{Res:} \textcolor{Green}{T $35 \%$} \\ 
            \textbf{Inc:} \textcolor{Green}{T $77 \%$} \\
            \textbf{ViT:} \textcolor{Green}{T $88 \%$} \\
            \textbf{ARes:} \textcolor{Green}{T $72 \%$} \\
            \textbf{AInc:} \textcolor{Green}{T $78 \%$} \\
        } &
        \makecell[ll]{
            \textbf{Res:} \textcolor{red}{TP $83 \%$} \\ 
            \textbf{Inc:} \textcolor{Green}{T $64 \%$} \\
            \textbf{ViT:} \textcolor{Green}{T $83 \%$} \\
            \textbf{ARes:} \textcolor{Green}{T $74 \%$} \\
            \textbf{AInc:} \textcolor{Green}{T $75 \%$} \\
        } &
        \makecell[ll]{
            \textbf{Res:} \textcolor{red}{TP $98 \%$} \\ 
            \textbf{Inc:} \textcolor{Green}{T $72 \%$} \\
            \textbf{ViT:} \textcolor{Green}{T $84 \%$} \\
            \textbf{ARes:} \textcolor{Green}{T $65 \%$} \\
            \textbf{AInc:} \textcolor{Green}{T $75 \%$} \\
        } &
        \makecell[ll]{
            \textbf{Res:} \textcolor{red}{TP $32 \%$} \\ 
            \textbf{Inc:} \textcolor{BurntOrange}{TC $20 \%$} \\
            \textbf{ViT:} \textcolor{Green}{T $62 \%$} \\
            \textbf{ARes:} \textcolor{red}{TP $32 \%$} \\
            \textbf{AInc:} \textcolor{Green}{T $52 \%$} \\
        } & 
        \makecell[ll]{
            \textbf{Res:} \textcolor{red}{TP $100 \%$} \\ 
            \textbf{Inc:} \textcolor{red}{TP $96 \%$} \\
            \textbf{ViT:} \textcolor{red}{TP $83 \%$} \\
            \textbf{ARes:} \textcolor{red}{TP $98 \%$} \\
            \textbf{AInc:} \textcolor{red}{TP $89 \%$} \\
        } \\
        \hline 
    \end{tabular}
    }
    }
    \caption{Effect of adversarial classifier guidance, classifier augmentations, adversarial boundary guidance, and time-travel sampling on samples generated by NatADiff. Prompt = ``tiger'', adversarial target = ``toilet paper'', surrogate model = ResNet-50 \citep{He2015}. Classification scores are given for ResNet-50, Inception-v3 \citep{Inceptionv3_Szegedy2016}, ViT-H \citep{Dosovitskiy2021}, and adversarially trained ResNet-50 and Inception victim models \citep{Kurakin2018}. Note: ``T'': ``Tiger'', ``TP'': ``Toilet Paper'', ``TC'': ``Tiger Cat''.}
    \label{fig:NatADiff Grad Breakdown}
\end{figure}

\subsection{Effect of classifier augmentations} \label{apdx:Effect of classifier augmentations}
In the main paper, we argue that classifier augmentations reduce the local adversarial signal of the surrogate classifier gradient and thereby encourage the diffusion model to introduce semantically meaningful adversarial features into the generated image. To verify this claim, we compare NatADiff samples produced with and without classifier augmentations. Using a ResNet-50 \citep{He2015} surrogate classifier, we report Attack Success Rate (ASR), Inception Score (IS) \citep{Salimans2016}, and Fréchet Inception Distance (FID) \citep{Frechet1957}. For FID, we evaluate with respect to both the ImageNet-Val \citep{Deng2009} and ImageNet-A \citep{Hendrycks2021} datasets to assess how closely NatADiff samples resemble natural images and known natural adversarial examples, respectively. The exact parameter settings for each experiment are provided in Table~\ref{tab:ImageNet experiment parameters}.

\begin{figure*}
    \centering
    {
    \resizebox{\linewidth}{!}{%
    \small
    \begin{tabular}{@{}l@{}l@{}l@{}l@{}l@{}l@{}}
        \multicolumn{6}{c}{\normalsize\textbf{NatADiff without Classifier Augmentations}} \\
        \includegraphics[width=0.2\textwidth]{./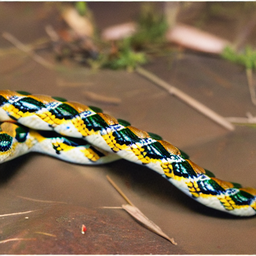} & \includegraphics[width=0.2\textwidth]{./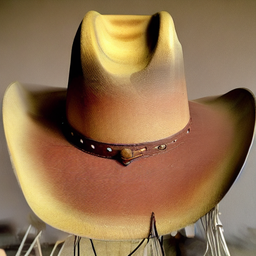} & \includegraphics[width=0.2\textwidth]{./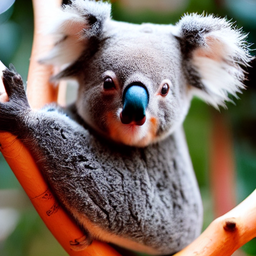} & \includegraphics[width=0.2\textwidth]{./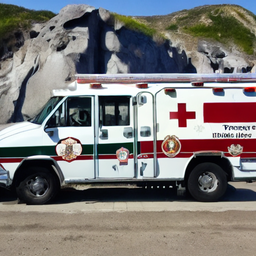} & \includegraphics[width=0.2\textwidth]{./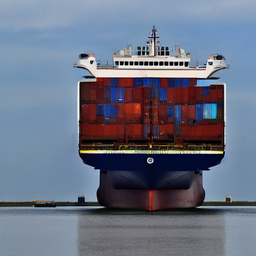} & \includegraphics[width=0.2\textwidth]{./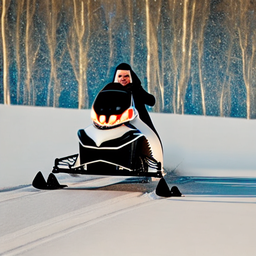} \\

        \multicolumn{6}{c}{\normalsize\textbf{NatADiff with Classifier Augmentations}} \\
        \includegraphics[width=0.2\textwidth]{./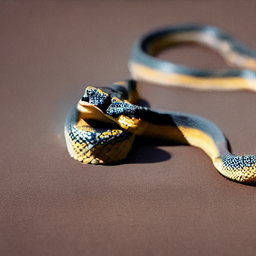} & \includegraphics[width=0.2\textwidth]{./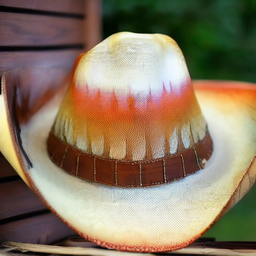} & \includegraphics[width=0.2\textwidth]{./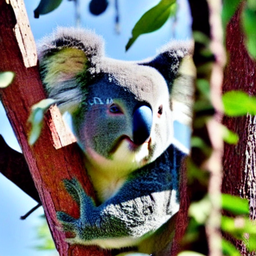} & \includegraphics[width=0.2\textwidth]{./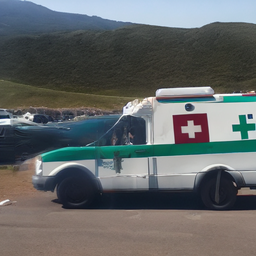} & \includegraphics[width=0.2\textwidth]{./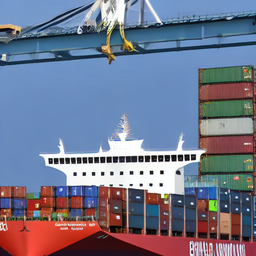} & \includegraphics[width=0.2\textwidth]{./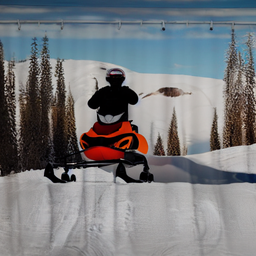} \\

        \textcolor{Green}{\textbf{True:}} Snake & \textcolor{Green}{\textbf{True:}} Cowboy Hat & \textcolor{Green}{\textbf{True:}} Koala & \textcolor{Green}{\textbf{True:}} Ambulance & \textcolor{Green}{\textbf{True:}} Container Ship & \textcolor{Green}{\textbf{True:}} Snowmobile \\
        \textcolor{red}{\textbf{Adv:}} Bow Tie & \textcolor{red}{\textbf{Adv:}} Mushroom & \textcolor{red}{\textbf{Adv:}} Birdhouse & \textcolor{red}{\textbf{Adv:}} Headland & \textcolor{red}{\textbf{Adv:}} Kite & \textcolor{red}{\textbf{Adv:}} Shower Curtain \\
    \end{tabular}}
    }
    \caption{Comparison of samples generated by NatADiff under targeted attack settings with and without classifier augmentations. We use a ResNet-50 \citep{He2015} surrogate model. We report the true class and adversarial target for each image.}
    \label{fig:NatADiff ablation - no augmentation}
\end{figure*}

\begin{table}
    \centering
    \caption{\textbf{Attack success rate} (\%) and \textbf{image quality} of adversarial samples generated by NatADiff under targeted attack settings with and without classifier augmentations. We use a ResNet-50 \citep{He2015} surrogate model. \textcolor{red}{\textbf{Bold}} values highlight the best score. White-box ASR (same surrogate and victim model) is denoted with an $^*$.}
    \label{tab:NatADiff ablation - no augmentation}
    \resizebox{1\textwidth}{!}{
        \begin{tabular}{@{}c@{}c@{\hspace{1em}}ccccc@{}c@{\hspace{1em}}cccc@{}ccccc@{}}
            \specialrule{1.2pt}{0pt}{0pt}
            \noalign{\vskip 0.5ex}
            \multicolumn{1}{c}{\multirow{3}{*}{Attack}} && \multicolumn{10}{c}{Victim Model ASR (\%)} && \multicolumn{1}{c}{\multirow{3}{*}{\shortstack{Average\\ASR}}} & \multicolumn{1}{c}{\multirow{3}{*}{\shortstack{IS\\($\boldsymbol{\uparrow}$)}}} & \multicolumn{1}{c}{\multirow{3}{*}{\shortstack{FID-\\Val ($\boldsymbol{\downarrow}$)}}} & \multicolumn{1}{c}{\multirow{3}{*}{\shortstack{FID-\\A ($\boldsymbol{\downarrow}$)}}} \\
            && \multicolumn{5}{c}{CNNs} && \multicolumn{4}{c}{Transformers} &&&& \\
            \cline{3-7} \cline{9-12} 
            \noalign{\vskip 0.5ex}
            && RN-50 & Inc-v3 & RN-152 & AdvRes & AdvInc && ViT-H & Max-ViT & Swin-B & DeIT &&&& \\
            \specialrule{1.2pt}{0pt}{0pt}
            \noalign{\vskip 0.5ex}
            NatADiff & & $96.9^*$ & $\textcolor{red}{\mathbf{60.1}}$ & $\textcolor{red}{\mathbf{56.5}}$ & $\textcolor{red}{\mathbf{55.3}}$ & $\textcolor{red}{\mathbf{58.9}}$ && $\textcolor{red}{\mathbf{36.8}}$ & $\textcolor{red}{\mathbf{45.3}}$ & $\textcolor{red}{\mathbf{49.0}}$ & $\textcolor{red}{\mathbf{52.3}}$ && $\textcolor{red}{\mathbf{56.8}}$ & $26.0$ & $66.5$ & $\textcolor{red}{\mathbf{77.3}}$ \\
            NatADiff (No-Aug) & & $\textcolor{red}{\mathbf{98.7^*}}$ & $48.5$ & $45.6$ & $44.1$ & $46.8$ && $31.7$ & $38.6$ & $40.6$ & $43.3$ && $48.7$ & $\textcolor{red}{\mathbf{30.5}}$ & $\textcolor{red}{\mathbf{56.7}}$ & $81.7$ \\
            \specialrule{1.2pt}{0pt}{0pt}
            \noalign{\vskip 0.2ex}
        \end{tabular}
    }
\end{table}

From Table~\ref{tab:NatADiff ablation - no augmentation} we observe that classifier augmentations significantly increase the transferability of adversarial samples, while retaining comparable white-box ASR (see Table~\ref{tab:NatADiff ablation - no augmentation}). Images generated with classifier augmentations have slightly reduced overall image quality (IS and FID-VAL), but improved FID-A. This suggests that classifier augmentations introduce slightly more generative artifacts in an image, but also incorporate more meaningful adversarial features, which produces images that align more closely with known natural adversarial samples (see Figures~\ref{fig:NatADiff Grad Breakdown} and \ref{fig:NatADiff ablation - no augmentation})

\subsection{Effect of boundary guidance strength} \label{apdx:Selection of mu}
The adversarial boundary guidance term, $\mu$, governs how strongly features from the adversarial class are incorporated into the generated sample. To evaluate the effect of this parameter, we conduct an ablation study across $\mu \in \{ 0.0, 0.1, 0.2, 0.3, 0.4, 0.5 \}$ using a ResNet-50 \citep{He2015} surrogate model and applying NatADiff in targeted mode, i.e., with adversarial classes selected at random. We report attack success rate, Inception Score (IS) \citep{Salimans2016} and Fréchet Inception Distance (FID) \citep{Frechet1957}. For FID, we evaluate with respect to both the ImageNet-Val \citep{Deng2009} and ImageNet-A \citep{Hendrycks2021} datasets to assess how closely NatADiff samples resemble natural images and known natural adversarial examples, respectively. The exact parameter settings for each experiment are provided in Table~\ref{tab:ImageNet experiment parameters}.

\begin{table}[h]
    \centering
    \caption{\textbf{Attack success rate} (ASR) and image quality of adversarial samples generated by NatADiff under targeted attack settings with varying adversarial boundary guidance strength, $\mu$. We use a ResNet-50 \citep{He2015} surrogate model. \textcolor{red}{\textbf{Bold}} values highlight the best score. White-box ASR (same surrogate and victim model) is denoted with an $^*$.}
    \label{tab:NatADiff ablation - boundary guidance}
    \resizebox{1\textwidth}{!}{
        \begin{tabular}{@{}c@{}c@{\hspace{1em}}ccccc@{}c@{\hspace{1em}}cccc@{}ccccc@{}}
            \specialrule{1.2pt}{0pt}{0pt}
            \noalign{\vskip 0.5ex}
            \multicolumn{1}{@{}c@{}}{\multirow{3}{*}{Attack}} && \multicolumn{10}{c}{Victim Model ASR (\%)} && \multicolumn{1}{c}{\multirow{3}{*}{\shortstack{Average\\ASR}}} & \multicolumn{1}{c}{\multirow{3}{*}{\shortstack{IS\\($\boldsymbol{\uparrow}$)}}} & \multicolumn{1}{c}{\multirow{3}{*}{\shortstack{FID-\\Val ($\boldsymbol{\downarrow}$)}}} & \multicolumn{1}{c}{\multirow{3}{*}{\shortstack{FID-\\A ($\boldsymbol{\downarrow}$)}}} \\
            && \multicolumn{5}{c}{CNNs} && \multicolumn{4}{c}{Transformers} &&&& \\
            \cline{3-7} \cline{9-12} 
            \noalign{\vskip 0.5ex}
            && RN-50 & Inc-v3 & RN-152 & AdvRes & AdvInc && ViT-H & Max-ViT & Swin-B & DeIT &&&& \\
            \specialrule{1.2pt}{0pt}{0pt}
            \noalign{\vskip 0.5ex}
            NatADiff ($\mu=0.0$) & & $95.2^*$ & $54.2$ & $49.6$ & $48.7$ & $53.7$ && $32.6$ & $42.4$ & $43.7$ & $48.4$ && $52.1$ & $26.1$ & $67.7$ & $78.9$ \\
            NatADiff ($\mu=0.1$) & & $95.4^*$ & $55.2$ & $52.5$ & $51.5$ & $53.9$ && $33.8$ & $44.2$ & $45.0$ & $49.3$ && $53.4$ & $26.6$ & $63.6$ & $78.1$ \\
            NatADiff ($\mu=0.2$) & & $96.9^*$ & $60.1$ & $56.5$ & $55.3$ & $58.9$ && $36.8$ & $45.3$ & $49.0$ & $52.3$ && $56.8$ & $26.0$ & $66.5$ & $\textcolor{red}{\mathbf{77.3}}$ \\
            NatADiff ($\mu=0.3$) & & $97.4^*$ & $62.4$ & $60.0$ & $57.8$ & $61.2$ && $42.6$ & $50.7$ & $53.4$ & $55.0$ && $60.1$ & $27.6$ & $63.8$ & $77.8$ \\
            NatADiff ($\mu=0.4$) & & $\textcolor{red}{\mathbf{98.5^*}}$ & $67.8$ & $65.2$ & $62.5$ & $65.5$ && $49.3$ & $57.1$ & $59.0$ & $60.7$ && $65.1$ & $28.9$ & $63.4$ & $80.2$ \\
            NatADiff ($\mu=0.5$) & & $\textcolor{red}{\mathbf{98.5^*}}$ & $\textcolor{red}{\mathbf{71.6}}$ & $\textcolor{red}{\mathbf{70.1}}$ & $\textcolor{red}{\mathbf{68.0}}$ & $\textcolor{red}{\mathbf{70.6}}$ && $\textcolor{red}{\mathbf{53.7}}$ & $\textcolor{red}{\mathbf{62.7}}$ & $\textcolor{red}{\mathbf{63.8}}$ & $\textcolor{red}{\mathbf{66.4}}$ && $\textcolor{red}{\mathbf{69.5}}$ & $\textcolor{red}{\mathbf{32.0}}$ & $\textcolor{red}{\mathbf{61.7}}$ & $80.1$ \\
            \specialrule{1.2pt}{0pt}{0pt}
            \noalign{\vskip 0.2ex}
        \end{tabular}
    }
\end{table}

\begin{figure*}[h]
    \centering
    {
    \resizebox{\linewidth}{!}{%
    \begin{tabular}{@{}c@{}c@{}c@{}c@{}c@{}c@{}}
        \multicolumn{3}{@{}l@{}}{\textcolor{Green}{Strawberry} \hfill} & \multicolumn{3}{@{}r@{}}{\hfill \textcolor{red}{Crossword}} \\
        \includegraphics[width=0.2\textwidth]{./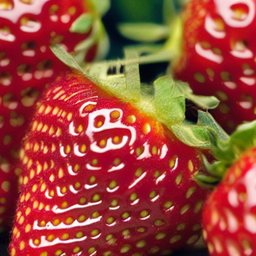} & \includegraphics[width=0.2\textwidth]{./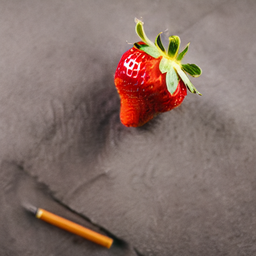} & \includegraphics[width=0.2\textwidth]{./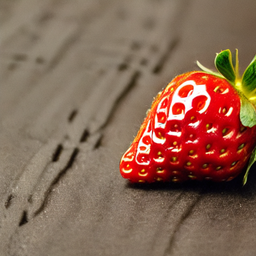} & \includegraphics[width=0.2\textwidth]{./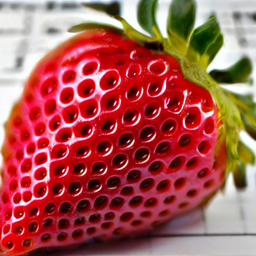} & \includegraphics[width=0.2\textwidth]{./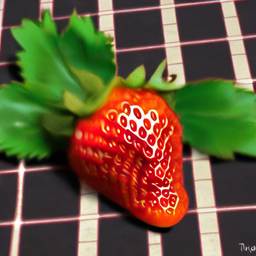} & \includegraphics[width=0.2\textwidth]{./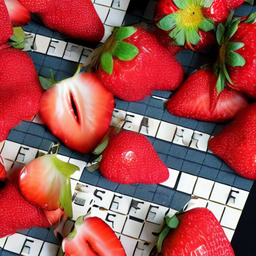} \\
        \multicolumn{6}{c}{(a)} \\

        \multicolumn{3}{@{}l@{}}{\textcolor{Green}{Stove} \hfill} & \multicolumn{3}{@{}r@{}}{\textcolor{red}{\hfill Ape}} \\
        \includegraphics[width=0.2\textwidth]{./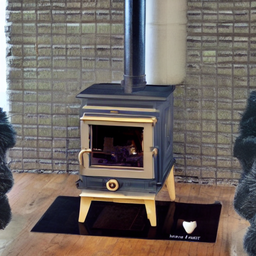} & \includegraphics[width=0.2\textwidth]{./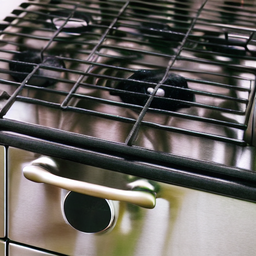} & \includegraphics[width=0.2\textwidth]{./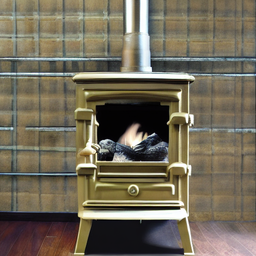} & \includegraphics[width=0.2\textwidth]{./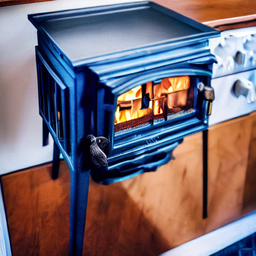} & \includegraphics[width=0.2\textwidth]{./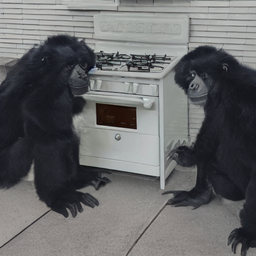} & \includegraphics[width=0.2\textwidth]{./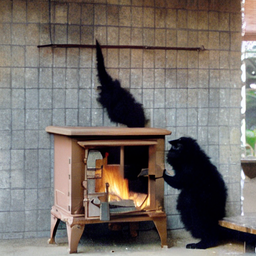} \\
        \multicolumn{6}{c}{(b)} \\

        \multicolumn{3}{@{}l@{}}{\textcolor{Green}{Lipstick} \hfill} & \multicolumn{3}{@{}r@{}}{\textcolor{red}{\hfill Flower}} \\
        \includegraphics[width=0.2\textwidth]{./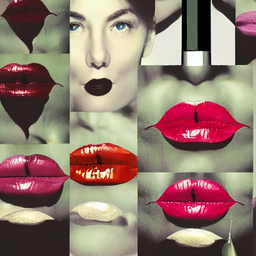} & \includegraphics[width=0.2\textwidth]{./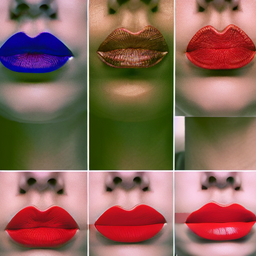} & \includegraphics[width=0.2\textwidth]{./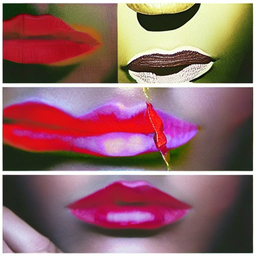} & \includegraphics[width=0.2\textwidth]{./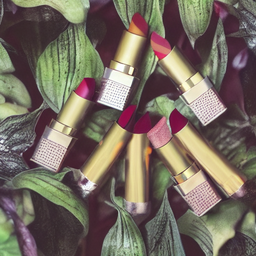} & \includegraphics[width=0.2\textwidth]{./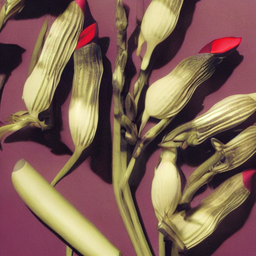} & \includegraphics[width=0.2\textwidth]{./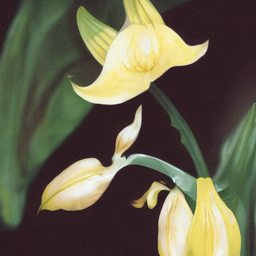} \\
        \multicolumn{6}{c}{(c)} \\

        \multicolumn{3}{@{}l@{}}{\textcolor{Green}{Bagel} \hfill} & \multicolumn{3}{@{}r@{}}{\textcolor{red}{\hfill Triumphal Arch}} \\
        \includegraphics[width=0.2\textwidth]{./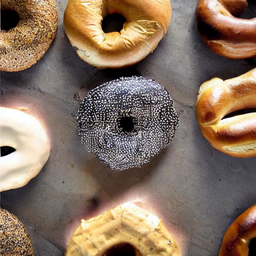} & \includegraphics[width=0.2\textwidth]{./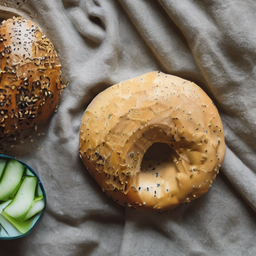} & \includegraphics[width=0.2\textwidth]{./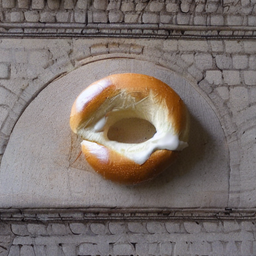} & \includegraphics[width=0.2\textwidth]{./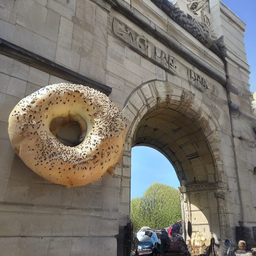} & \includegraphics[width=0.2\textwidth]{./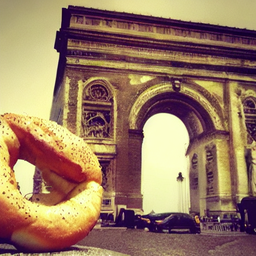} & \includegraphics[width=0.2\textwidth]{./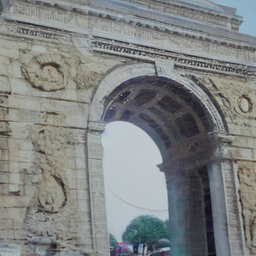} \\
        $\mu = 0.0$ & $\mu = 0.1$ & $\mu = 0.2$ & $\mu = 0.3$ & $\mu = 0.4$ & $\mu = 0.5$ \\
        \multicolumn{6}{c}{(d)}
    \end{tabular}}
    }
\caption{Samples generated by NatADiff using the same random seed while varying $\mu$ from $0.0$ to $0.5$, shown left to right. \textcolor{Green}{Green} and \textcolor{red}{red} labels denote the true and adversarial classes, respectively. Images in (a) and (b) exhibit the \textit{dual class} phenomenon, where large $\mu$ values cause objects from both the true and adversarial classes to appear. Images in (c) and (d) demonstrate the \textit{flipped class} phenomenon, where large $\mu$ values causes the sample to fully adopt the adversarial class, suppressing the original class features.}
\label{fig:NatADiff mu blending samples}
\end{figure*}

We observe that attack success rate, IS, and FID-Val increase alongside $\mu$ (see Table~\ref{tab:NatADiff ablation - boundary guidance}). Interestingly, the lowest FID-A was observed at $\mu=0.2$. These quantitative results suggest that larger values of $\mu$ tend to improve NatADiff performance; however, they do not capture the qualitative shift in sample structure. Large values of $\mu$ introduce two distinct phenomena: \textit{dual class} samples, in which both the true and adversarial classes are present in the image (see Figure~\ref{fig:NatADiff mu blending samples}~(a) and (b)), and \textit{flipped class} samples, in which the original class is entirely overwritten by the adversarial target (see Figure~\ref{fig:NatADiff mu blending samples}~(c) and (d)). Furthermore, as seen in Figure~\ref{fig:NatADiff mu blending samples}, the optimal value of $\mu$ appears to vary across true-adversarial class pairs. Thus, we select a conservative value of $\mu = 0.2$, as manual qualitative investigation found this did not lead to dual and flipped class samples, and experimental results indicate it best aligns with natural adversarial samples as measured by FID-A.

\subsection{Effect of adversarial boundary guidance prompt structure}
Adversarial boundary guidance requires that the true and adversarial classes be encoded into a textual prompt. To examine how this intersection prompt shapes attack performance, we compare NatADiff samples generated using different prompt templates. Concretely, we study four constructions--\textit{``$<$class name$>$ and $<$class name$>$''}, \textit{``A photo of $<$class name$>$ and $<$class name$>$''}, \textit{``$<$class name$>$ next to $<$class name$>$''}, and \textit{``$<$class name$>$ behind $<$class name$>$''}--and evaluate both class orderings, resulting in a total of 8 prompt formats. For each experiment we generate 150 targeted NatADiff samples using a ResNet-50 \citep{He2015} surrogate model, and we report attack success rate, Inception Score (IS) \citep{Salimans2016} and Fréchet Inception Distance (FID) \citep{Frechet1957}. For FID, we evaluate with respect to both the ImageNet-Val \citep{Deng2009} and ImageNet-A \citep{Hendrycks2021} datasets to assess how closely NatADiff samples resemble natural images and known natural adversarial examples, respectively. The exact parameter settings for each experiment are provided in Table~\ref{tab:ImageNet experiment parameters}.

Overall, the prompt structure exerted relatively minimal influence on attack performance or image quality. Table~\ref{tab:NatADiff ablation - boundary guidance prompt template} shows that average ASR varied by less than $7.7\%$ across all prompt formats, indicating that adversarial boundary guidance is robust to the particular phrasing used to express the class intersection. The format \textit{``A photo of $<$class name of $\tilde{y}$$>$ and $<$class name of $y$$>$''} yielded the highest average ASR, but this also coincided with the lowest IS, suggesting that the improved attack transferability may come at the cost of sample fidelity. For this reason, we adopt the prompt \textit{``$<$class name of $\tilde{y}$$>$ and $<$class name of $y$$>$''}, which attains the second-highest ASR while maintaining superior image fidelity, and therefore provides a more favourable balance between attack effectiveness and image quality.

\begin{table}[h]
    \centering
    \caption{\textbf{Attack success rate} (ASR) and image quality of adversarial samples generated by NatADiff under targeted attack settings with varying adversarial boundary guidance prompt templates. We use a ResNet-50 \citep{He2015} surrogate model. \textcolor{red}{\textbf{Bold}} values highlight the best score. White-box ASR (same surrogate and victim model) is denoted with an $^*$.}
    \label{tab:NatADiff ablation - boundary guidance prompt template}
    \resizebox{1\textwidth}{!}{
        \begin{tabular}{@{}l@{}c@{\hspace{1em}}ccccc@{}c@{\hspace{1em}}cccc@{}ccccc@{}}
            \specialrule{1.2pt}{0pt}{0pt}
            \noalign{\vskip 0.5ex}
            \multicolumn{1}{@{}c@{}}{\multirow{3}{*}{Prompt Structure}} && \multicolumn{10}{c}{Victim Model ASR (\%)} && \multicolumn{1}{c}{\multirow{3}{*}{\shortstack{Average\\ASR}}} & \multicolumn{1}{c}{\multirow{3}{*}{\shortstack{IS\\($\boldsymbol{\uparrow}$)}}} & \multicolumn{1}{c}{\multirow{3}{*}{\shortstack{FID-\\Val ($\boldsymbol{\downarrow}$)}}} & \multicolumn{1}{c}{\multirow{3}{*}{\shortstack{FID-\\A ($\boldsymbol{\downarrow}$)}}} \\
            && \multicolumn{5}{c}{CNNs} && \multicolumn{4}{c}{Transformers} &&&& \\
            \cline{3-7} \cline{9-12} 
            \noalign{\vskip 0.5ex}
            && RN-50 & Inc-v3 & RN-152 & AdvRes & AdvInc && ViT-H & Max-ViT & Swin-B & DeIT &&&& \\
            \specialrule{1.2pt}{0pt}{0pt}
            \noalign{\vskip 0.5ex}
            $y$ and $\tilde{y}$ & & $\textcolor{BrickRed}{\underline{99.3^*}}$ & $59.3$ & $\textcolor{BrickRed}{\underline{63.3}}$ & $\textcolor{BrickRed}{\underline{60.0}}$ & $56.7$ && $34.0$ & $47.3$ & $51.3$ & $54.0$ && $58.4$ & $\textcolor{BrickRed}{\underline{9.2}}$ & $205.0$ & $213.2$ \\
            $\tilde{y}$ and $y$ & & $98.7^*$ & $\textcolor{BrickRed}{\underline{62.0}}$ & $\textcolor{BrickRed}{\underline{63.3}}$ & $58.0$ & $\textcolor{BrickRed}{\underline{62.0}}$ && $\textcolor{BrickRed}{\underline{40.7}}$ & $48.7$ & $50.0$ & $54.0$ && $\textcolor{BrickRed}{\underline{59.7}}$ & $9.0$ & $201.1$ & $212.0$ \\
            A photo of $y$ and $\tilde{y}$ & & $98.7^*$ & $57.3$ & $62.0$ & $58.0$ & $54.7$ && $39.3$ & $\textcolor{BrickRed}{\underline{51.3}}$ & $\textcolor{red}{\mathbf{54.0}}$ & $\textcolor{BrickRed}{\underline{56.7}}$ && $59.1$ & $\textcolor{red}{\mathbf{9.4}}$ & $201.2$ & $212.6$ \\
            A photo of $\tilde{y}$ and $y$ & & $\textcolor{BrickRed}{\underline{99.3^*}}$ & $\textcolor{red}{\mathbf{67.3}}$ & $\textcolor{red}{\mathbf{68.7}}$ & $\textcolor{red}{\mathbf{63.3}}$ & $\textcolor{red}{\mathbf{62.7}}$ && $36.7$ & $\textcolor{red}{\mathbf{52.7}}$ & $\textcolor{BrickRed}{\underline{53.3}}$ & $\textcolor{red}{\mathbf{57.3}}$ && $\textcolor{red}{\mathbf{62.4}}$ & $8.4$ & $204.1$ & $211.6$ \\
            $y$ next to $\tilde{y}$ & & $97.3^*$ & $54.7$ & $52.7$ & $52.0$ & $56.7$ && $33.3$ & $46.0$ & $47.3$ & $52.0$ && $54.7$ & $8.4$ & $205.6$ & $214.6$ \\
            $\tilde{y}$ next to $y$ & & $\textcolor{red}{\mathbf{100.0^*}}$ & $59.3$ & $58.7$ & $57.3$ & $56.7$ && $32.7$ & $50.0$ & $48.7$ & $52.0$ && $57.3$ & $8.8$ & $\textcolor{BrickRed}{\underline{199.4}}$ & $\textcolor{BrickRed}{\underline{210.2}}$ \\
            $y$ behind $\tilde{y}$ & & $98.0^*$ & $59.3$ & $60.0$ & $\textcolor{BrickRed}{\underline{60.0}}$ & $55.3$ && $\textcolor{red}{\mathbf{41.3}}$ & $48.7$ & $52.0$ & $55.3$ && $58.9$ & $8.9$ & $205.4$ & $216.3$ \\
            $\tilde{y}$ behind $y$ & & $\textcolor{BrickRed}{\underline{99.3^*}}$ & $58.0$ & $57.3$ & $52.0$ & $58.7$ && $36.7$ & $44.0$ & $50.7$ & $50.0$ && $56.3$ & $8.4$ & $\textcolor{red}{\mathbf{197.9}}$ & $\textcolor{red}{\mathbf{208.7}}$ \\
            \specialrule{1.2pt}{0pt}{0pt}
            \noalign{\vskip 0.2ex}
        \end{tabular}
    }
\end{table}

\subsection{Selection of adversarial classifier guidance strength}
The adversarial classifier guidance term, $s$, governs the strength of the guidance provided by the surrogate classifier. This is analogous to the guidance term used in classifier-guided diffusion, where a trained classifier steers the sampling trajectory toward a desired class \citep{ClassifierGuidance_Dhariwal2021}. Classifier-guided diffusion is a well-understood process that trades sample diversity for class adherence, and the optimal guidance strength varies with the underlying classifier model \citep{ClassifierGuidance_Dhariwal2021, FreeDoM_Yu2023, Shen2024, Dai2024}.

To investigate whether our understanding of classifier-guided diffusion extends to \textit{adversarial} classifier guidance, we conduct an ablation study on the influence of the guidance strength term, $s$. For each ablation, we generate $50$ targeted NatADiff samples using using classifier guidance strengths of $s = 20$, $50$, $100$, and $200$. We report attack success rate (ASR), Inception Score (IS) \citep{Salimans2016}, and Fréchet Inception Distance (FID) \citep{Frechet1957} across ResNet-50 \citep{He2015}, Inception-v3 \citep{Inceptionv3_Szegedy2016}, and ViT-H \citep{Dosovitskiy2021} classifiers using an adversarial boundary guidance strength of $\mu = 0.2$. We take FID with respect to both ImageNet-Val \citep{Deng2009} and ImageNet-A \citep{Hendrycks2021} to assess how closely samples resemble natural images and known natural adversarial examples, respectively. Additionally, we examine the average classifier gradient over the diffusion sampling path. At each sampling time $t$, we compute the average gradient as $\frac{1}{n} \sum_{j=1}^n \left( \nabla_{\boldsymbol{x}_t} \log(p(y|\boldsymbol{x}_t)) \right)_j$. The exact parameter settings for each experiment are provided in Table~\ref{tab:ImageNet experiment parameters}.

From Table~\ref{tab:NatADiff ablation - adversarial classifier guidance}, we observe that ASR increases with adversarial classifier guidance strength. Gradient stability also depends on the guidance strength, as illustrated in Figures~\ref{fig:ResNet50 time-grad plots}, \ref{fig:Inception time-grad plots}, and \ref{fig:ViT time-grad plots}. When the guidance strength is too low, NatADiff's ASR drops, and the gradient does not smoothly converge to $0$ as $t \to 0$. This is likely because the guidance is insufficient to push the sample into regions of the adversarial class on the classifier manifold—evidenced by low ASR scores coinciding with weak guidance. Conversely, sufficiently large guidance strengths yield substantially higher ASR and gradients that smoothly converge to $0$ as the sample enters the desired region of the manifold.

We find that FID scores degrade as guidance strength increases, while IS remains relatively unaffected in all cases except the ViT-H classifier. This suggests that increasing guidance strength reduces sample diversity, consistent with the conventional understanding of classifier-guided diffusion. However, excessive classifier guidance can also ``push'' samples off the image manifold, as reflected by the IS degradation observed for ViT-H. This aligns with findings from the main paper, where targeted attacks against ViT-H yielded reduced image quality (see Appendix~\ref{apdx:Low-Quality Targeted ViT Samples}).

Overall, these results indicate that adversarial classifier guidance operates according to the same underlying mechanisms as standard classifier-guided diffusion. Following \cite{ClassifierGuidance_Dhariwal2021} and \cite{Dai2024}, for each classifier we manually tuned $s$ by incrementally increasing it until NatADiff successfully generated adversarial samples. We additionally implemented an adaptive scaling scheme that increased the guidance strength if a sample failed to fool the surrogate classifier.

\begin{table}
    \centering
    \caption{\textbf{Attack success rate} (ASR) and image quality of adversarial samples generated by NatADiff under targeted attack settings with varying adversarial classifier guidance strength, $s$. We use a ResNet-50 \citep{He2015} surrogate model. \textcolor{red}{\textbf{Bold}} values highlight the best score. White-box ASR (same surrogate and victim model) is denoted with an $^*$.}
    \label{tab:NatADiff ablation - adversarial classifier guidance}
    \resizebox{1\textwidth}{!}{
        \begin{tabular}{@{}cc@{}c@{\hspace{1em}}ccccc@{}c@{\hspace{1em}}cccc@{}ccccc@{}}
            \specialrule{1.2pt}{0pt}{0pt}
            \noalign{\vskip 0.5ex}
            \multicolumn{1}{@{}c}{\multirow{3}{*}{\shortstack{Surrogate\\Model}}} & \multicolumn{1}{c@{}}{\multirow{3}{*}{Attack}} && \multicolumn{10}{c}{Victim Model ASR (\%)} && \multicolumn{1}{c}{\multirow{3}{*}{\shortstack{Average\\ASR}}} & \multicolumn{1}{c}{\multirow{3}{*}{\shortstack{IS\\($\boldsymbol{\uparrow}$)}}} & \multicolumn{1}{c}{\multirow{3}{*}{\shortstack{FID-\\Val ($\boldsymbol{\downarrow}$)}}} & \multicolumn{1}{c}{\multirow{3}{*}{\shortstack{FID-\\A ($\boldsymbol{\downarrow}$)}}} \\
            &&& \multicolumn{5}{c}{CNNs} && \multicolumn{4}{c}{Transformers} &&&& \\
            \cline{4-8} \cline{10-13} 
            \noalign{\vskip 0.5ex}
            &&& RN-50 & Inc-v3 & RN-152 & AdvRes & AdvInc && ViT-H & Max-ViT & Swin-B & DeIT &&&& \\
            \specialrule{1.2pt}{0pt}{0pt}
            \noalign{\vskip 0.2ex}
            \multicolumn{1}{@{}c}{\multirow{4}{*}{RN-50}} & NatADiff ($s=20$) & & $38.0^*$ & $36.0$ & $34.0$ & $38.0$ & $36.0$ && $32.0$ & $32.0$ & $30.0$ & $26.0$ && $33.6$ & $4.4$ & $\textcolor{red}{\mathbf{233.5}}$ & $\textcolor{red}{\mathbf{280.5}}$ \\
            & NatADiff ($s=50$) & & $62.0^*$ & $42.0$ & $46.0$ & $38.0$ & $46.0$ && $36.0$ & $38.0$ & $36.0$ & $36.0$ && $42.2$ & $4.3$ & $242.4$ & $282.7$ \\
            & NatADiff ($s=100$) & & $86.0^*$ & $50.0$ & $54.0$ & $50.0$ & $46.0$ && $36.0$ & $44.0$ & $48.0$ & $50.0$ && $51.6$ & $4.5$ & $266.5$ & $293.3$ \\
            & NatADiff ($s=200$) & & $\textcolor{red}{\mathbf{96.0^*}}$ & $\textcolor{red}{\mathbf{62.0}}$ & $\textcolor{red}{\mathbf{76.0}}$ & $\textcolor{red}{\mathbf{68.0}}$ & $\textcolor{red}{\mathbf{60.0}}$ && $\textcolor{red}{\mathbf{46.0}}$ & $\textcolor{red}{\mathbf{62.0}}$ & $\textcolor{red}{\mathbf{66.0}}$ & $\textcolor{red}{\mathbf{66.0}}$ && $\textcolor{red}{\mathbf{66.9}}$ & $\textcolor{red}{\mathbf{4.7}}$ & $319.0$ & $326.2$ \\
            \specialrule{1.2pt}{0pt}{0pt}
            \noalign{\vskip 0.2ex}
            \multicolumn{1}{c}{\multirow{4}{*}{Inc-v3}} & NatADiff ($s=20$) & & $30.0^*$ & $46.0$ & $28.0$ & $36.0$ & $36.0$ && $26.0$ & $26.0$ & $26.0$ & $34.0$ && $32.0$ & $3.9$ & $\textcolor{red}{\mathbf{223.4}}$ & $\textcolor{red}{\mathbf{273.6}}$ \\
            & NatADiff ($s=50$) & & $44.0^*$ & $70.0$ & $44.0$ & $40.0$ & $48.0$ && $34.0$ & $34.0$ & $42.0$ & $42.0$ && $44.2$ & $\textcolor{red}{\mathbf{4.2}}$ & $244.5$ & $284.4$ \\
            & NatADiff ($s=100$) & & $50.0^*$ & $94.0$ & $54.0$ & $58.0$ & $54.0$ && $38.0$ & $44.0$ & $46.0$ & $46.0$ && $53.8$ & $\textcolor{red}{\mathbf{4.2}}$ & $269.2$ & $297.2$ \\
            & NatADiff ($s=200$) & & $\textcolor{red}{\mathbf{72.0^*}}$ & $\textcolor{red}{\mathbf{96.0}}$ & $\textcolor{red}{\mathbf{66.0}}$ & $\textcolor{red}{\mathbf{68.0}}$ & $\textcolor{red}{\mathbf{80.0}}$ && $\textcolor{red}{\mathbf{46.0}}$ & $\textcolor{red}{\mathbf{54.0}}$ & $\textcolor{red}{\mathbf{60.0}}$ & $\textcolor{red}{\mathbf{56.0}}$ && $\textcolor{red}{\mathbf{66.4}}$ & $4.1$ & $319.4$ & $320.8$ \\
            \specialrule{1.2pt}{0pt}{0pt}
            \noalign{\vskip 0.2ex}
            \multicolumn{1}{c}{\multirow{4}{*}{ViT-H}} & NatADiff ($s=20$) & & $28.0^*$ & $34.0$ & $26.0$ & $30.0$ & $28.0$ && $30.0$ & $24.0$ & $30.0$ & $28.0$ && $28.7$ & $3.9$ & $\textcolor{red}{\mathbf{224.7}}$ & $\textcolor{red}{\mathbf{279.0}}$ \\
            & NatADiff ($s=50$) & & $32.0^*$ & $46.0$ & $38.0$ & $40.0$ & $32.0$ && $38.0$ & $36.0$ & $34.0$ & $36.0$ && $36.9$ & $\textcolor{red}{\mathbf{4.0}}$ & $240.0$ & $286.4$ \\
            & NatADiff ($s=100$) & & $46.0^*$ & $46.0$ & $46.0$ & $40.0$ & $44.0$ && $52.0$ & $42.0$ & $48.0$ & $48.0$ && $45.8$ & $3.9$ & $242.2$ & $279.5$ \\
            & NatADiff ($s=200$) & & $\textcolor{red}{\mathbf{78.0^*}}$ & $\textcolor{red}{\mathbf{78.0}}$ & $\textcolor{red}{\mathbf{80.0}}$ & $\textcolor{red}{\mathbf{78.0}}$ & $\textcolor{red}{\mathbf{78.0}}$ && $\textcolor{red}{\mathbf{86.0}}$ & $\textcolor{red}{\mathbf{70.0}}$ & $\textcolor{red}{\mathbf{72.0}}$ & $\textcolor{red}{\mathbf{82.0}}$ && $\textcolor{red}{\mathbf{78.0}}$ & $2.9$ & $281.3$ & $294.5$ \\
            \specialrule{1.2pt}{0pt}{0pt}
            \noalign{\vskip 0.2ex}
        \end{tabular}
    }
\end{table}

\begin{figure*}
    \centering
    {
    \resizebox{\linewidth}{!}{%
    \scriptsize
    \begin{tabular}{@{}c@{\hspace{0.5em}}c@{}}
        \includegraphics[width=0.48\textwidth]{./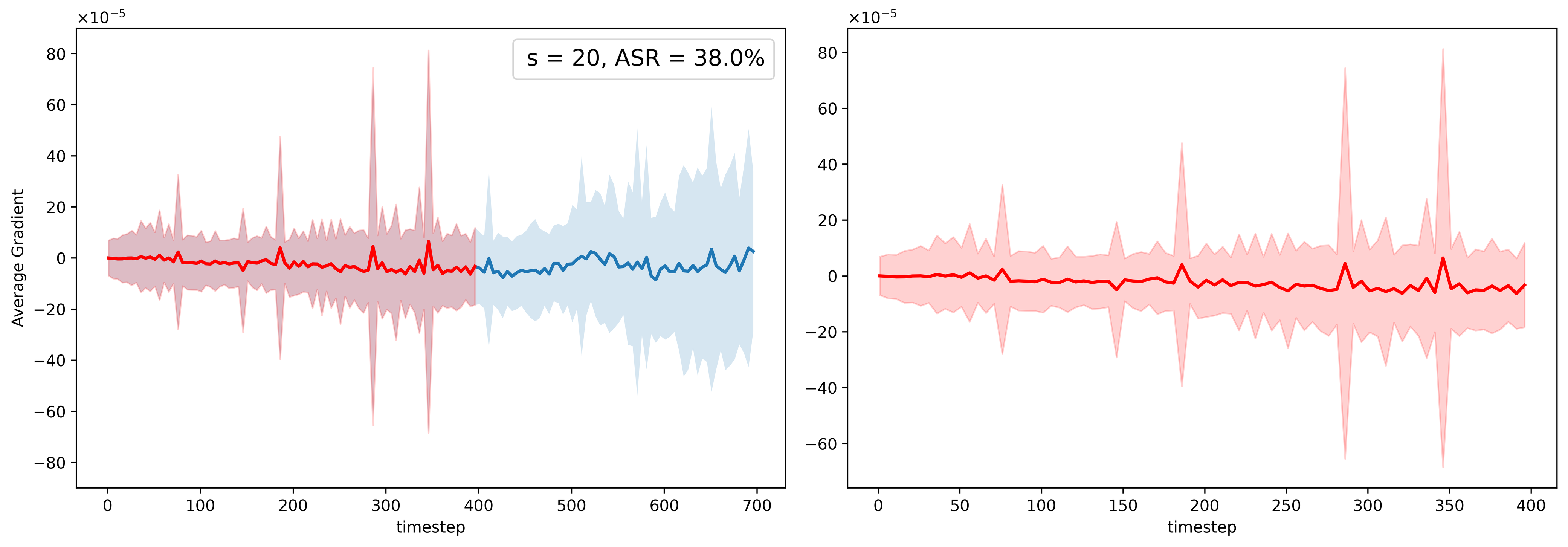} & \includegraphics[width=0.48\textwidth]{./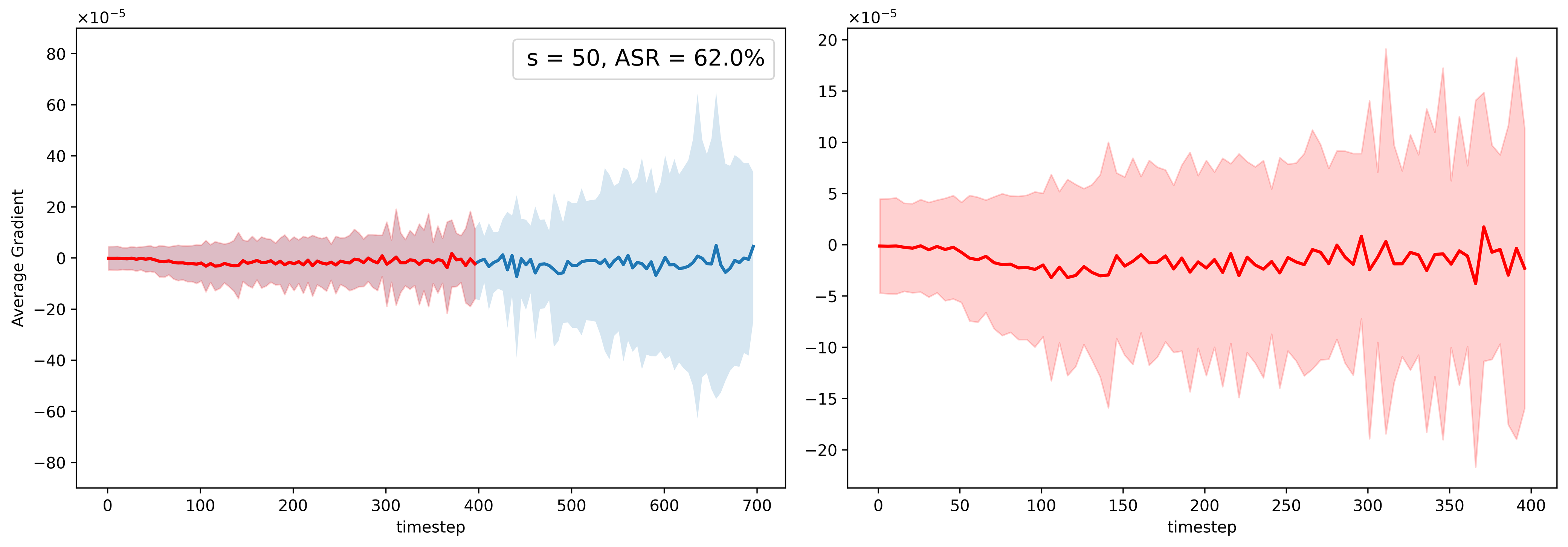} \\
        (a) & (b) \\

        \includegraphics[width=0.48\textwidth]{./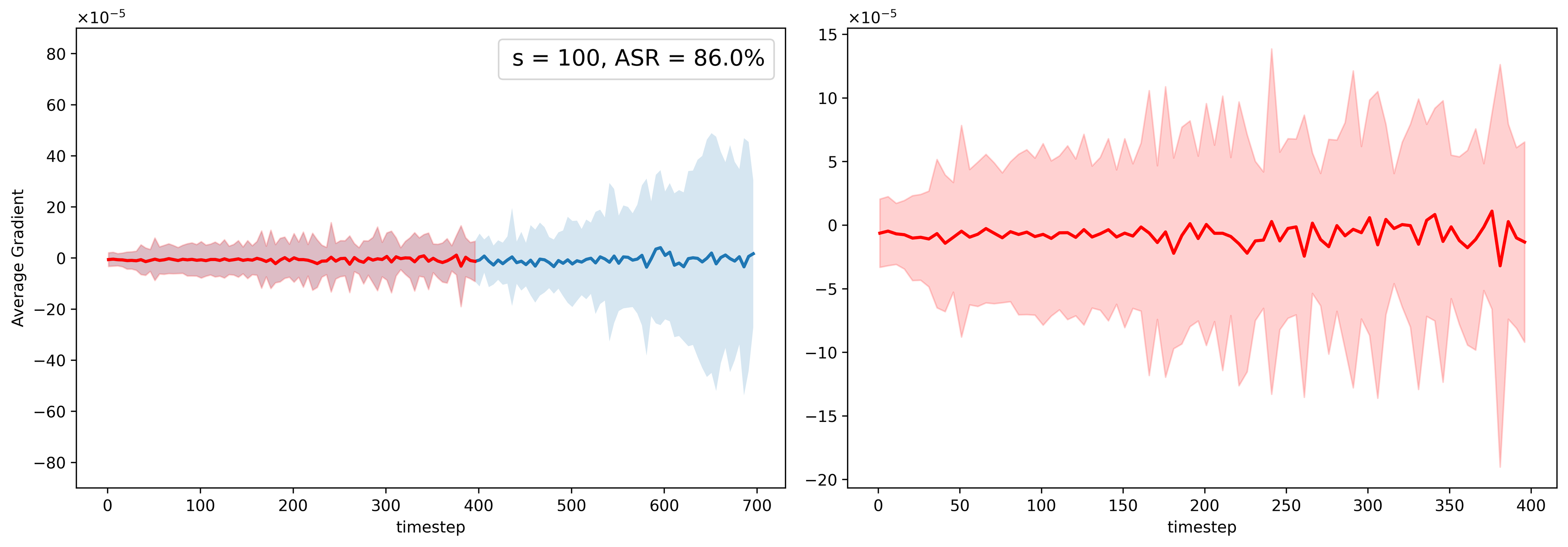} & \includegraphics[width=0.48\textwidth]{./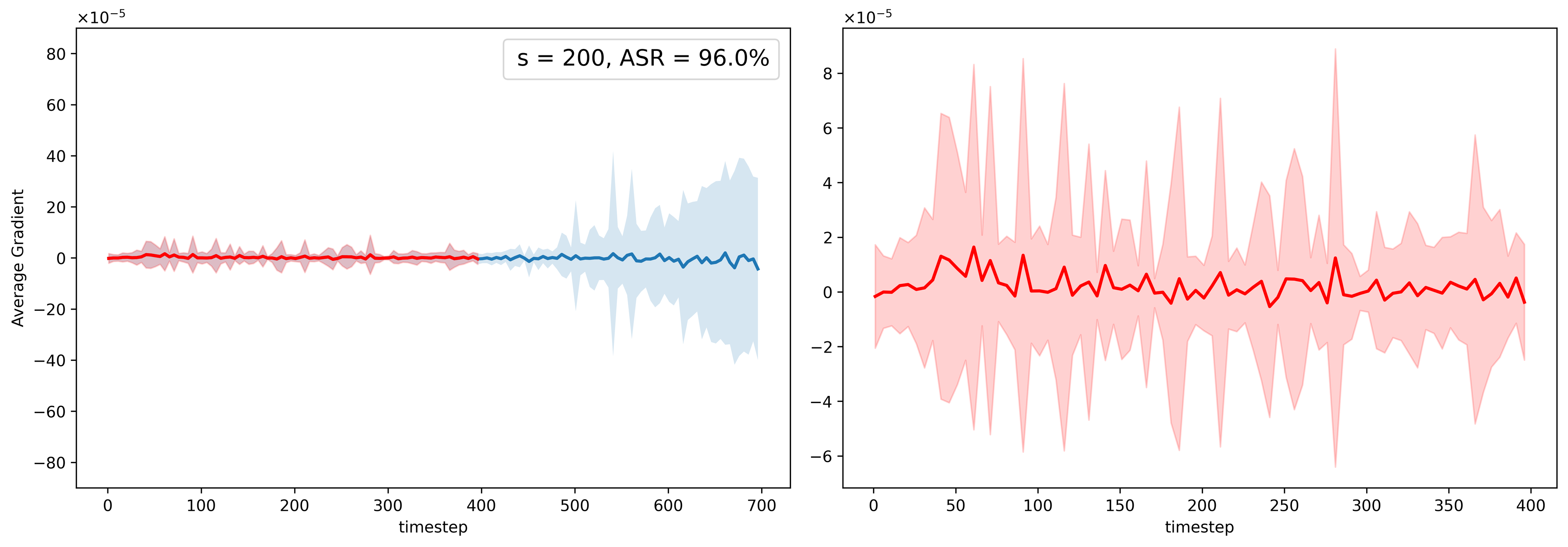} \\
        (c) & (d) \\
    \end{tabular}}
    }
    \vspace{-2mm}
    \caption{ResNet-50 \citep{He2015} classifier gradient with respect to $\boldsymbol{x}_t$ plotted against sampling time. The solid line denotes the average across the experimental run, while the shaded region is plus/minus 1 standard deviation. The final 400 sampling steps are enhanced and displayed in the second panel of each subfigure. For each subfigure, the classifier guidance strength ($s$) and white-box ASR are: (a) $s = 20$, $\text{ASR} = 38.0\%$; (b) $s = 50$, $\text{ASR} = 62.0\%$; (c) $s = 100$, $\text{ASR} = 86.0\%$; (d) $s = 200$, $\text{ASR} = 96.0\%$. Please zoom in for improved visibility.}
    \label{fig:ResNet50 time-grad plots}
\end{figure*}

\begin{figure*}
    \centering
    {
    \resizebox{\linewidth}{!}{%
    \scriptsize
    \begin{tabular}{@{}c@{\hspace{0.5em}}c@{}}
        \includegraphics[width=0.48\textwidth]{./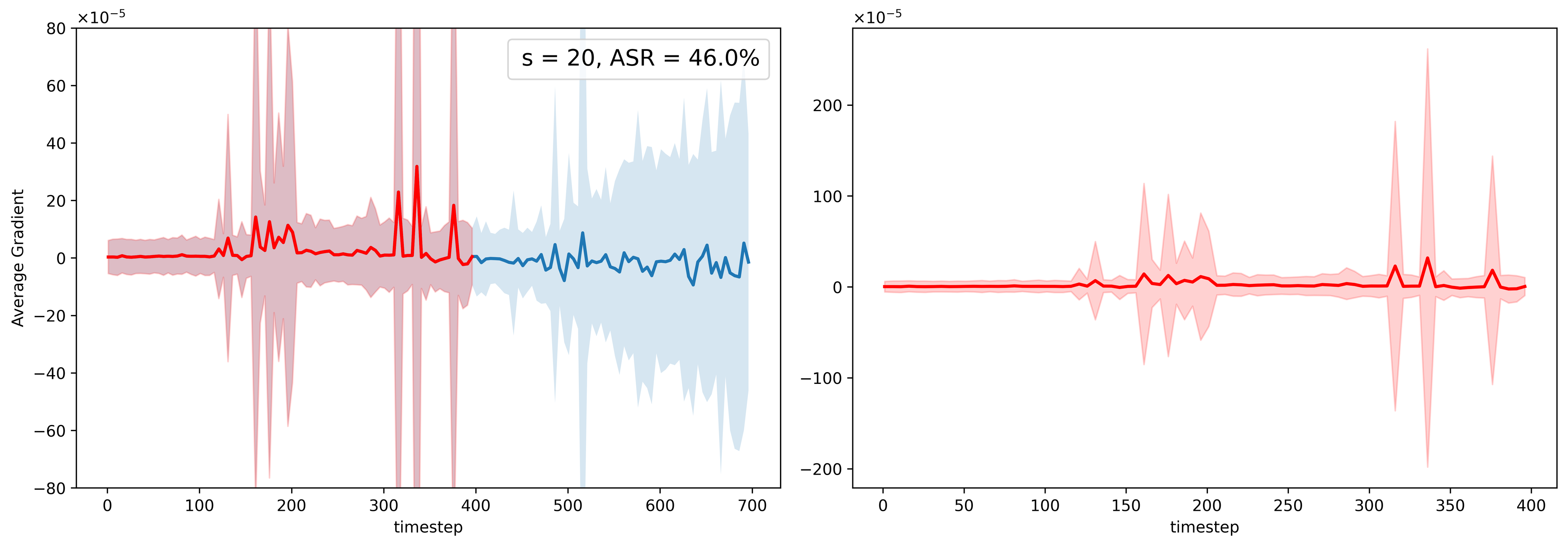} & \includegraphics[width=0.48\textwidth]{./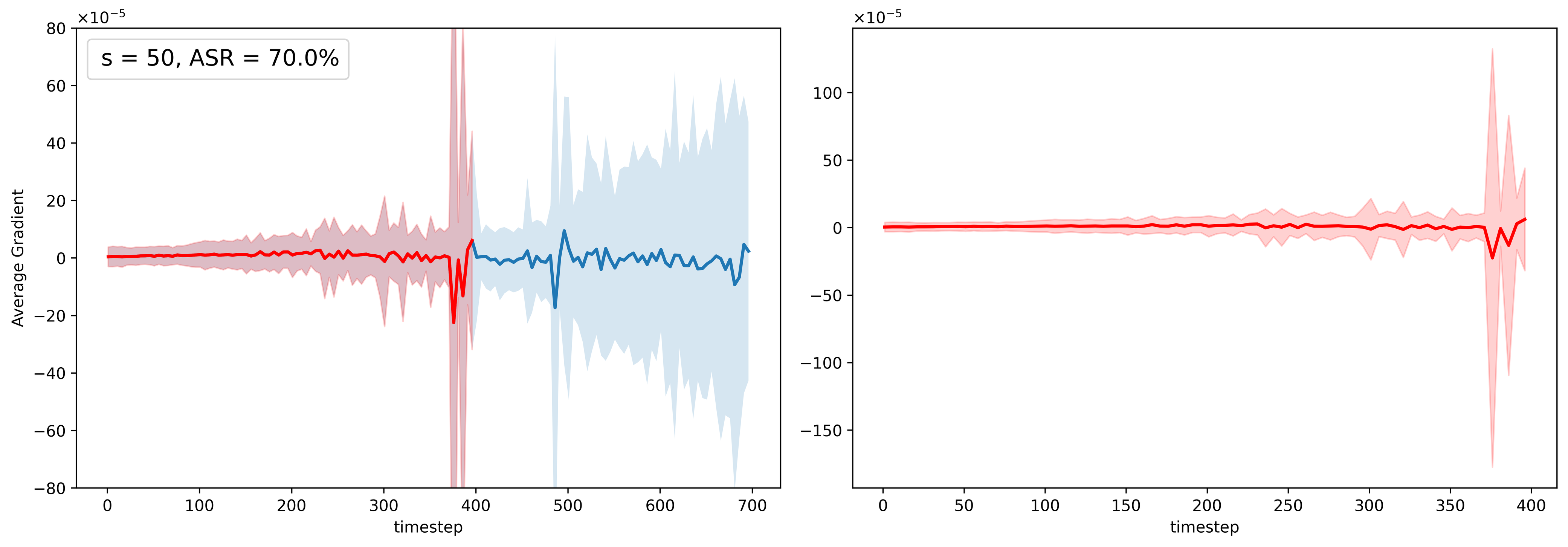} \\
        (a) & (b) \\

        \includegraphics[width=0.48\textwidth]{./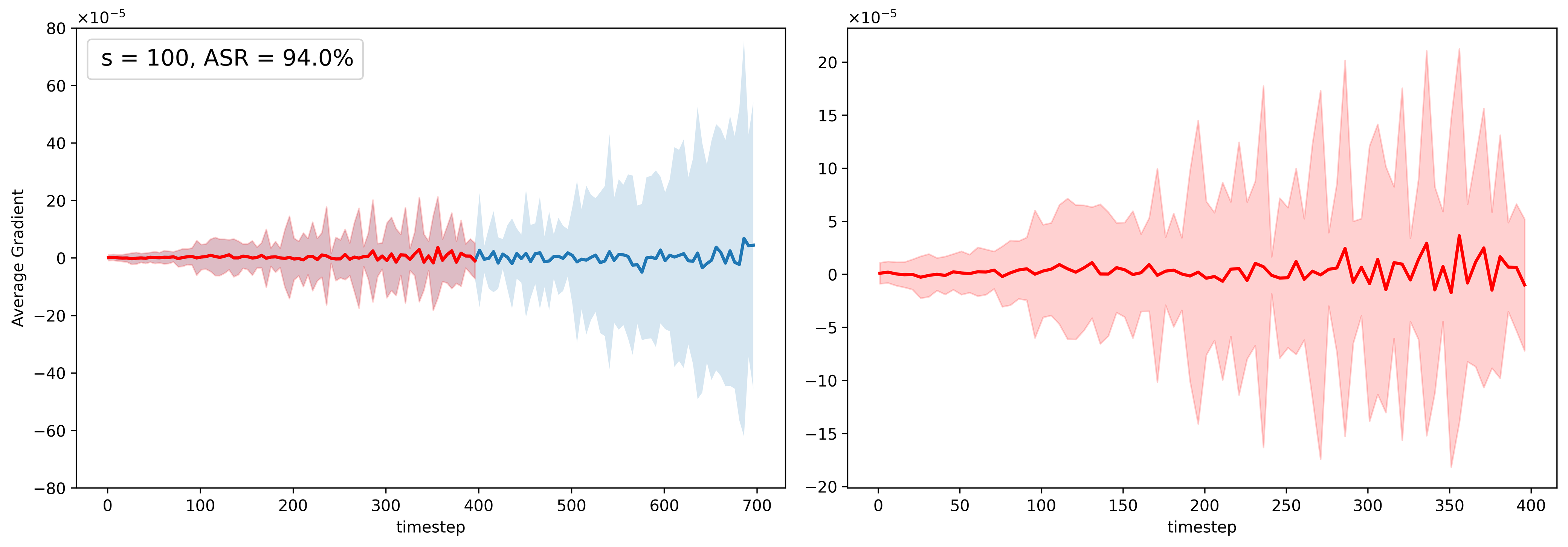} & \includegraphics[width=0.48\textwidth]{./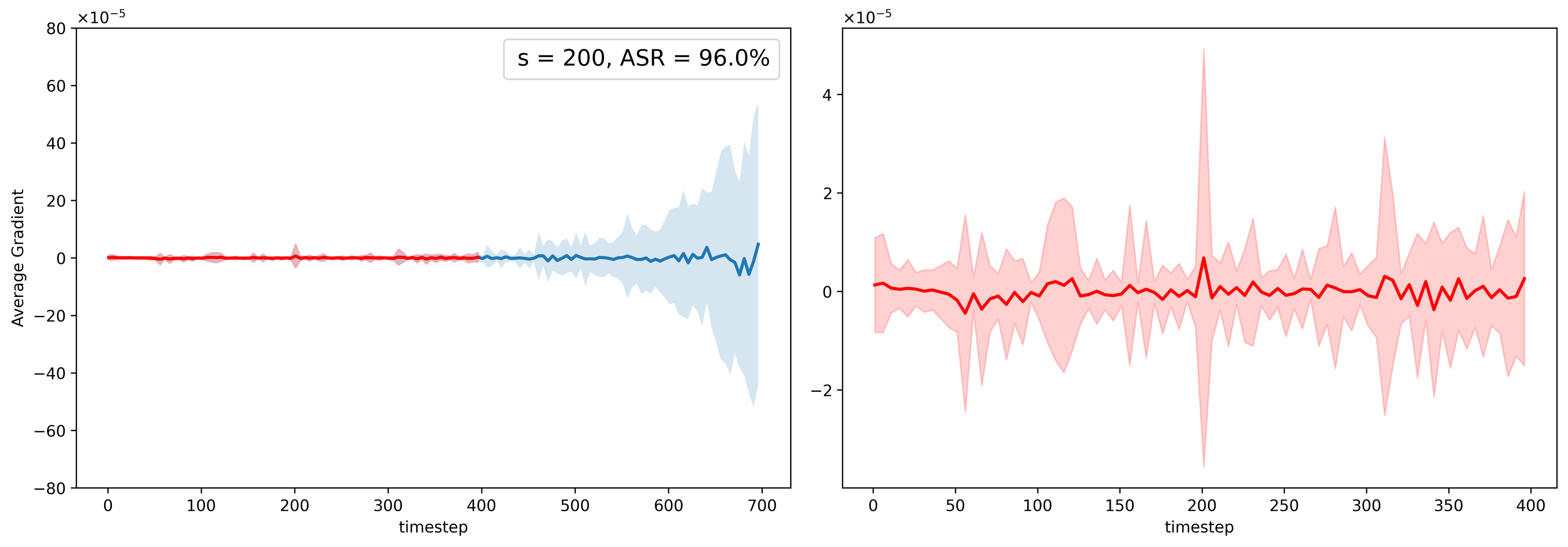} \\
        (c) & (d) \\
    \end{tabular}}
    }
    \vspace{-2mm}
    \caption{Inception-v3 \citep{Inceptionv3_Szegedy2016} classifier gradient with respect to $\boldsymbol{x}_t$ plotted against sampling time. The solid line denotes the average across all experiment runs, while the shaded region is plus/minus 1 standard deviation. The final 400 sampling steps are enhanced and displayed in the second panel of each subfigure. For each subfigure, the classifier guidance strength ($s$) and white-box ASR are: (a) $s = 20$, $\text{ASR} = 46.0\%$; (b) $s = 50$, $\text{ASR} = 70.0\%$; (c) $s = 100$, $\text{ASR} = 94.0\%$; (d) $s = 200$, $\text{ASR} = 96.0\%$. Please zoom in for improved visibility.}
    \label{fig:Inception time-grad plots}
\end{figure*}

\begin{figure*}
    \centering
    {
    \resizebox{\linewidth}{!}{%
    \scriptsize
    \begin{tabular}{@{}c@{\hspace{0.5em}}c@{}}
        \includegraphics[width=0.48\textwidth]{./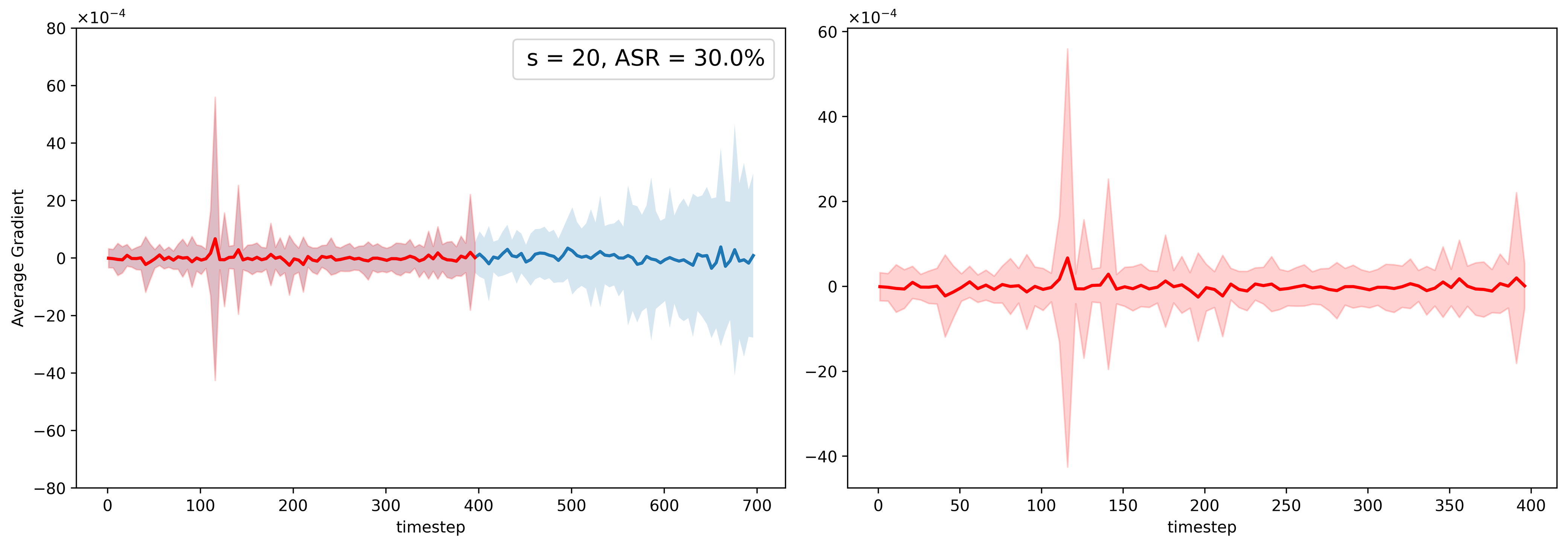} & \includegraphics[width=0.48\textwidth]{./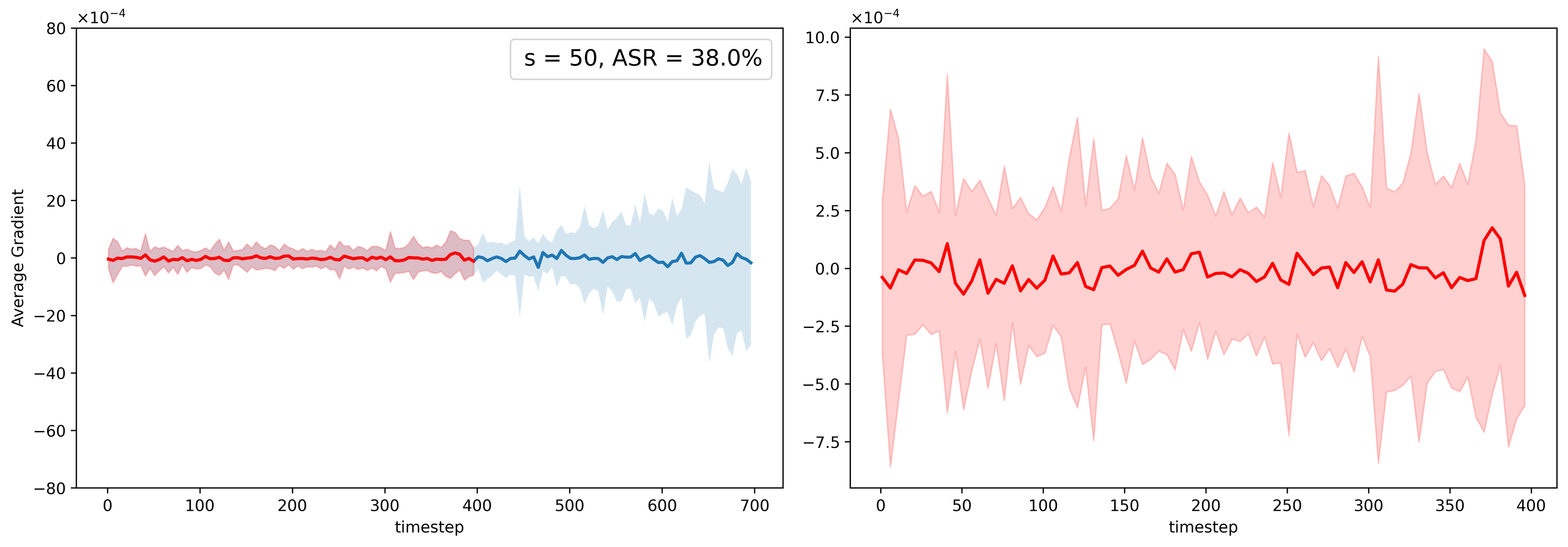} \\
        (a) & (b) \\

        \includegraphics[width=0.48\textwidth]{./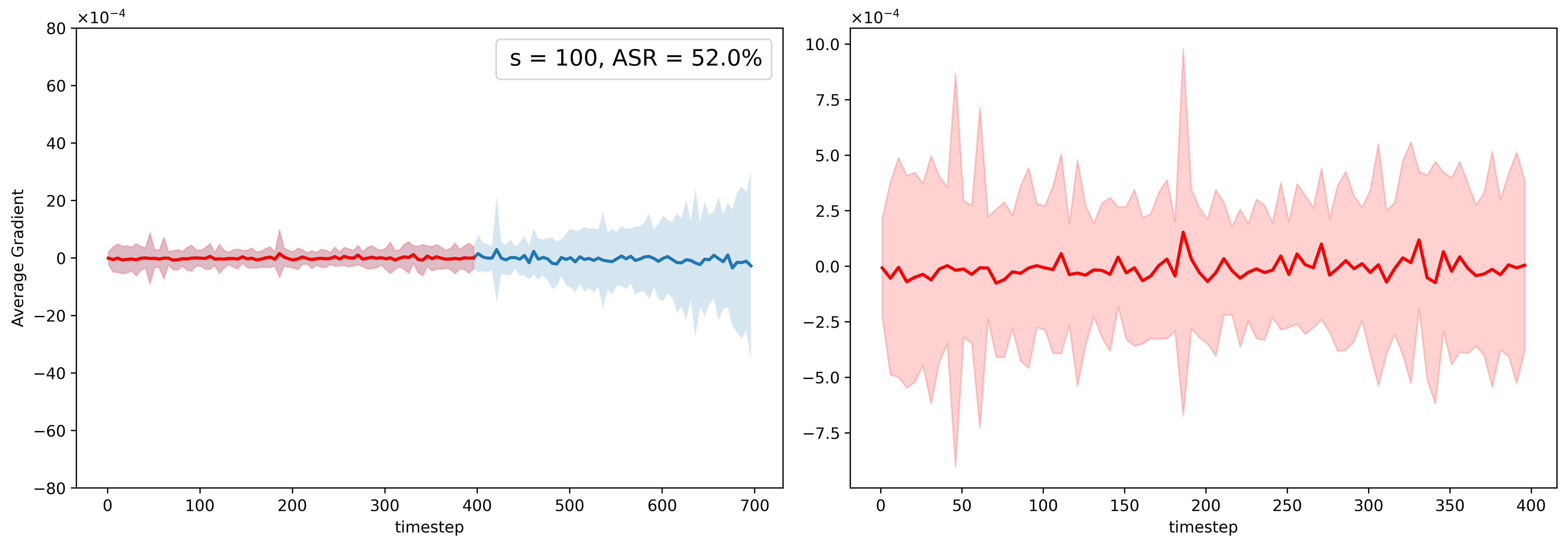} & \includegraphics[width=0.48\textwidth]{./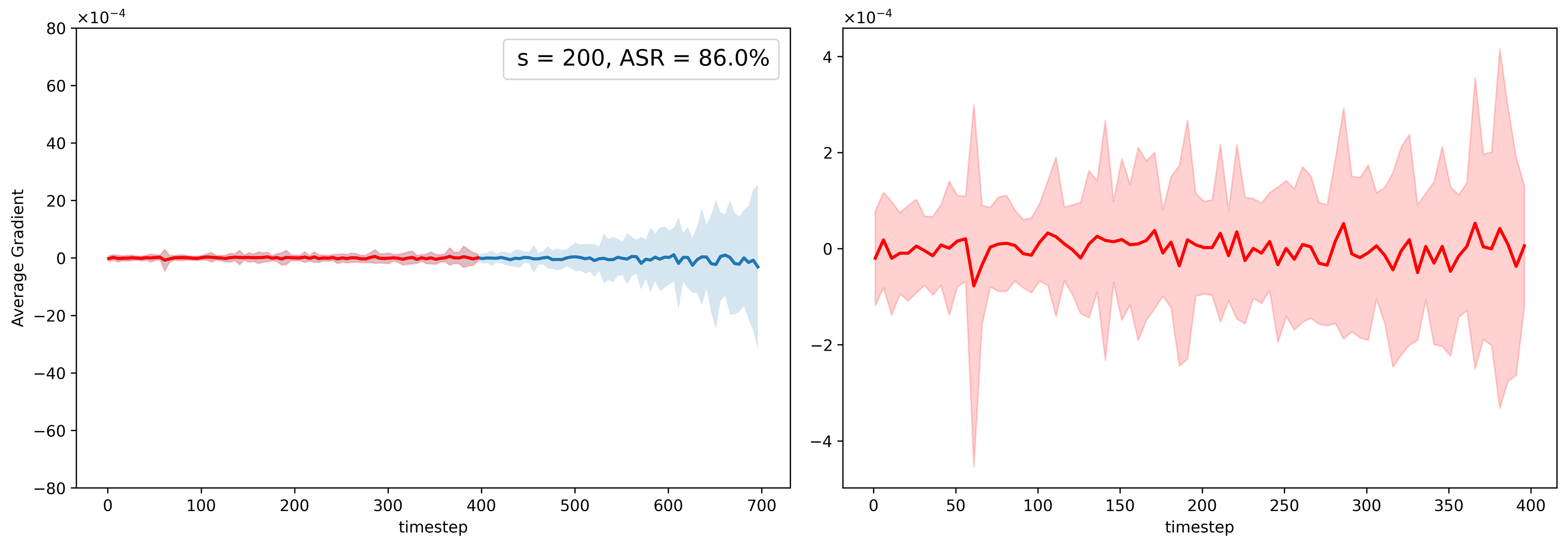} \\
        (c) & (d) \\
    \end{tabular}}
    }
    \vspace{-2mm}
    \caption{ViT-H \citep{Dosovitskiy2021} classifier gradient with respect to $\boldsymbol{x}_t$ plotted against sampling time. The solid line denotes the average across the experimental run, while the shaded region is plus/minus 1 standard deviation. The final 400 sampling steps are enhanced and displayed in the second panel of each subfigure. For each subfigure, the classifier guidance strength ($s$) and white-box ASR are: (a) $s = 20$, $\text{ASR} = 30.0\%$ (b) $s = 50$, $\text{ASR} = 38.0\%$, (c) $s = 100$, $\text{ASR} = 52.0\%$, (d) $s = 200$, $\text{ASR} = 86.0\%$. Please zoom in for improved visibility.}
    \label{fig:ViT time-grad plots}
\end{figure*}

\section{Manifold plots} \label{sec:Manifold plots}
In the main paper we claim that adversarial boundary guidance directs the diffusion sampling trajectory towards class intersections. To further support the validity of this claim, we visualise the learned classifier manifolds using both UMAP \citep{UMAP_McInnes2018} and t-SNE \citep{tSNE_Maaten2008}, which are dimensionality-reduction techniques that allow the image manifolds to be viewed on a 2D plane. UMAP provides a topology-preserving embedding that tends to maintain global structure, while t-SNE preserves local neighbourhood relationships at the cost of deforming global structure. We use both methods to provide complementary views that illustrate how NatADiff samples relate to the true-class and adversarial-class distributions.

To construct the image manifolds, we use NatADiff to generate adversarial samples for 12 true--adversarial class pairs. All adversarial samples are generated using a ResNet-50 \citep{He2015} surrogate model, and we pass UMAP and t-SNE the penultimate-layer feature embeddings from this classifier. The NatADiff samples are plotted alongside the feature embeddings of clean images from the true and adversarial classes taken from the ImageNet dataset \citep{Deng2009}. The resulting visualisations are provided in Figures~\ref{fig:NatADiff UMAP plots} and \ref{fig:NatADiff tSNE plots}.

Across both embeddings, NatADiff samples consistently form one or more distinct clusters that fall between the manifolds of the true and adversarial target classes. This positioning directly supports the main paper's claim: NatADiff guides samples toward class intersections, generating samples that naturally fall near class boundaries. Furthermore, this aligns with the established understanding of natural adversarial samples (test-time errors), which are believed to arise when classifiers rely on spurious contextual cues rather than truly discriminative features. As a sample moves closer to a classifier's decision boundary, the prevalence of contextual cues from the adversarial class increases, which provides a natural explanation for NatADiff's strong attack performance.

\begin{figure*}
    \centering
    {
    \resizebox{\linewidth}{!}{%
    \scriptsize
    \begin{tabular}{@{}c@{}c@{}c@{}}
        \includegraphics[width=0.33\textwidth]{./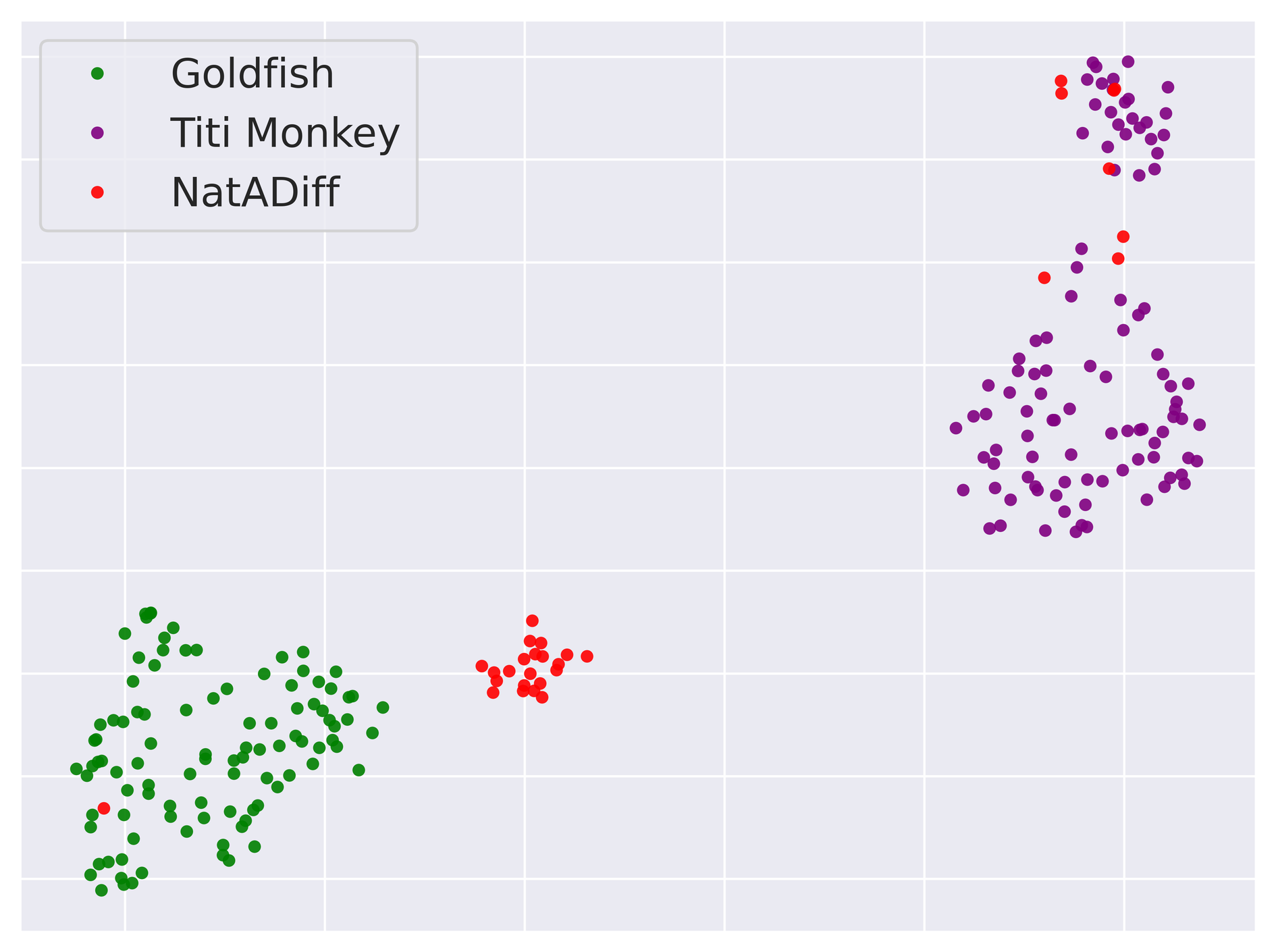} & \includegraphics[width=0.33\textwidth]{./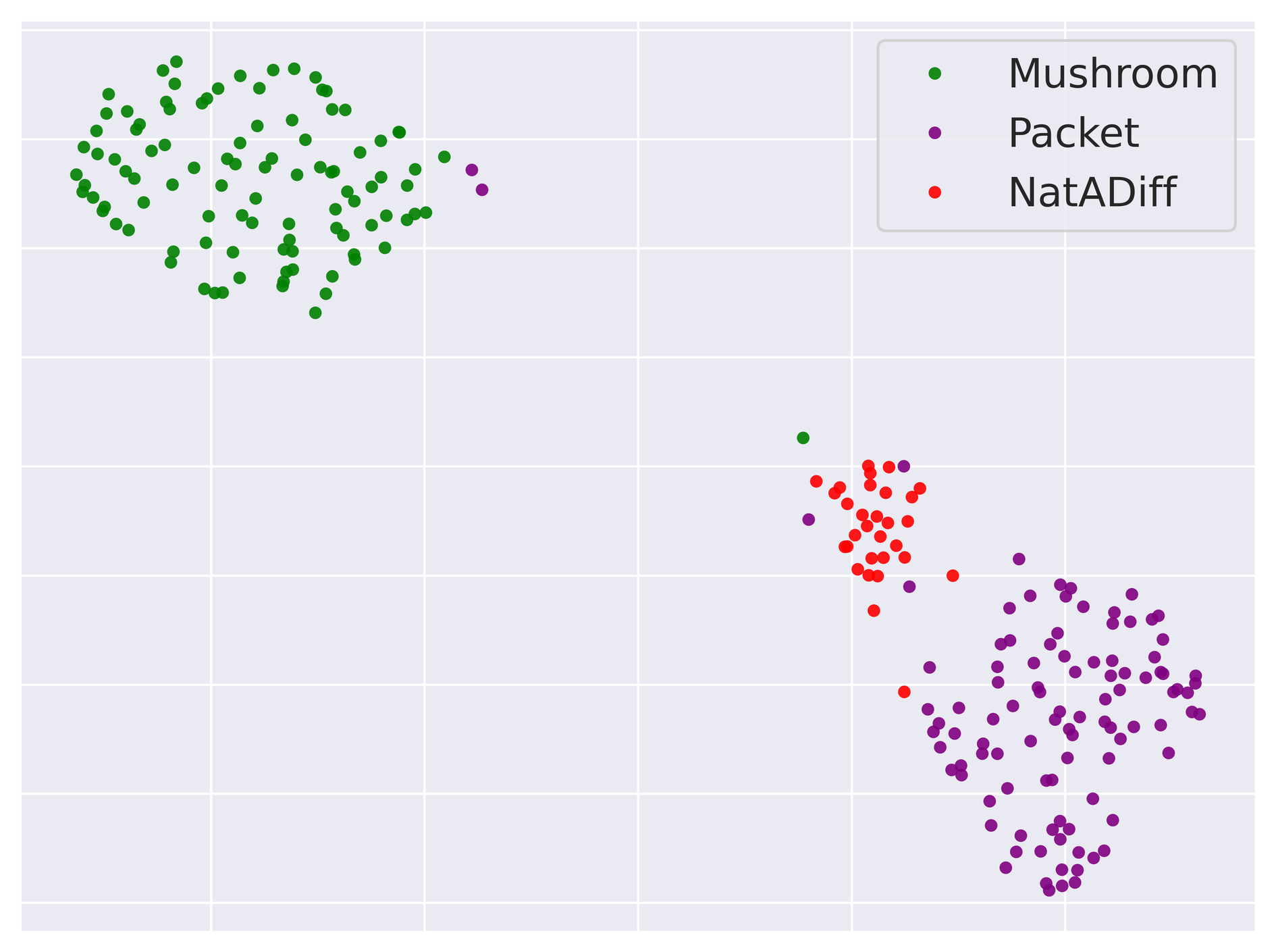} & \includegraphics[width=0.33\textwidth]{./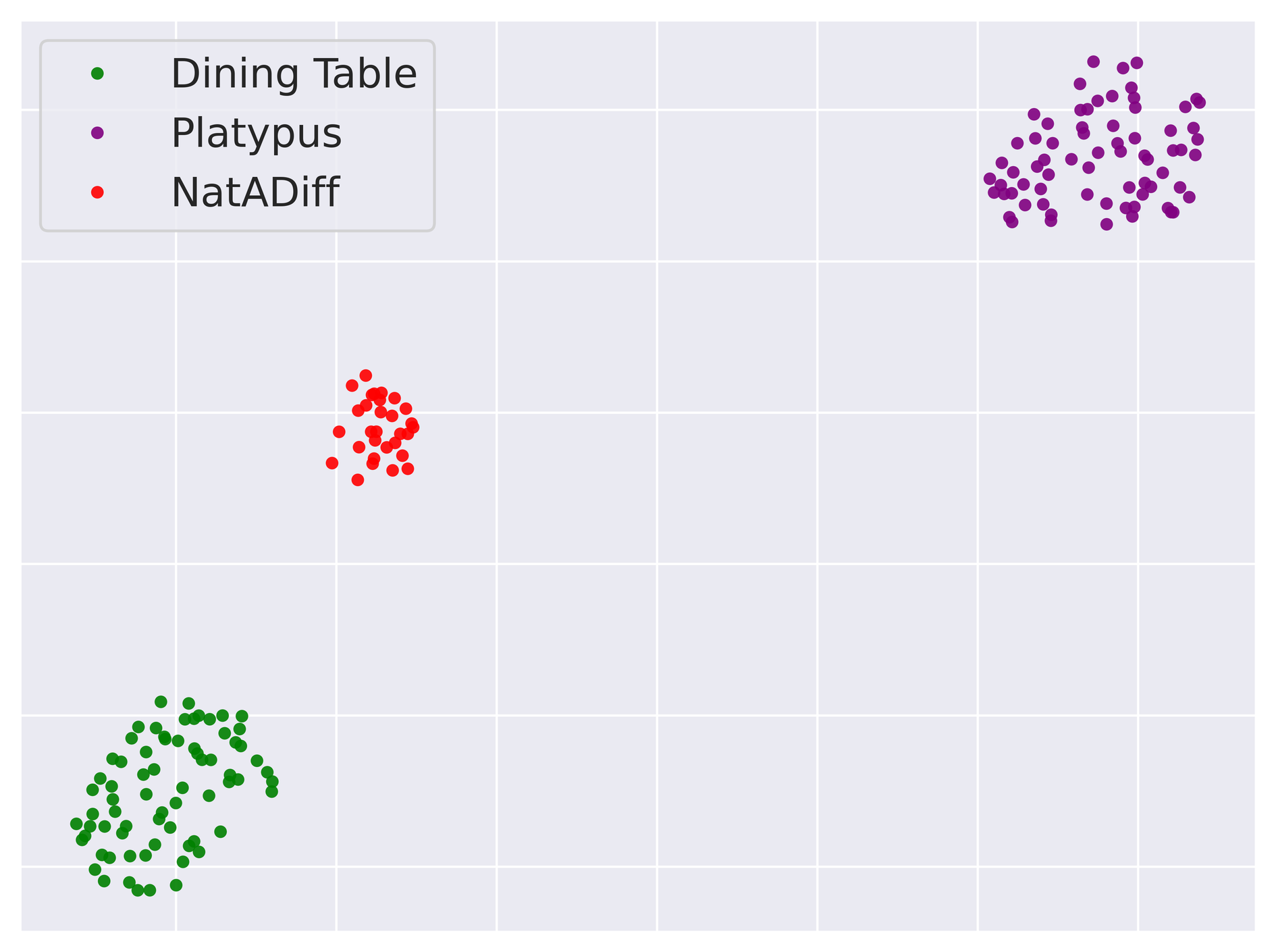} \\

        \includegraphics[width=0.33\textwidth]{./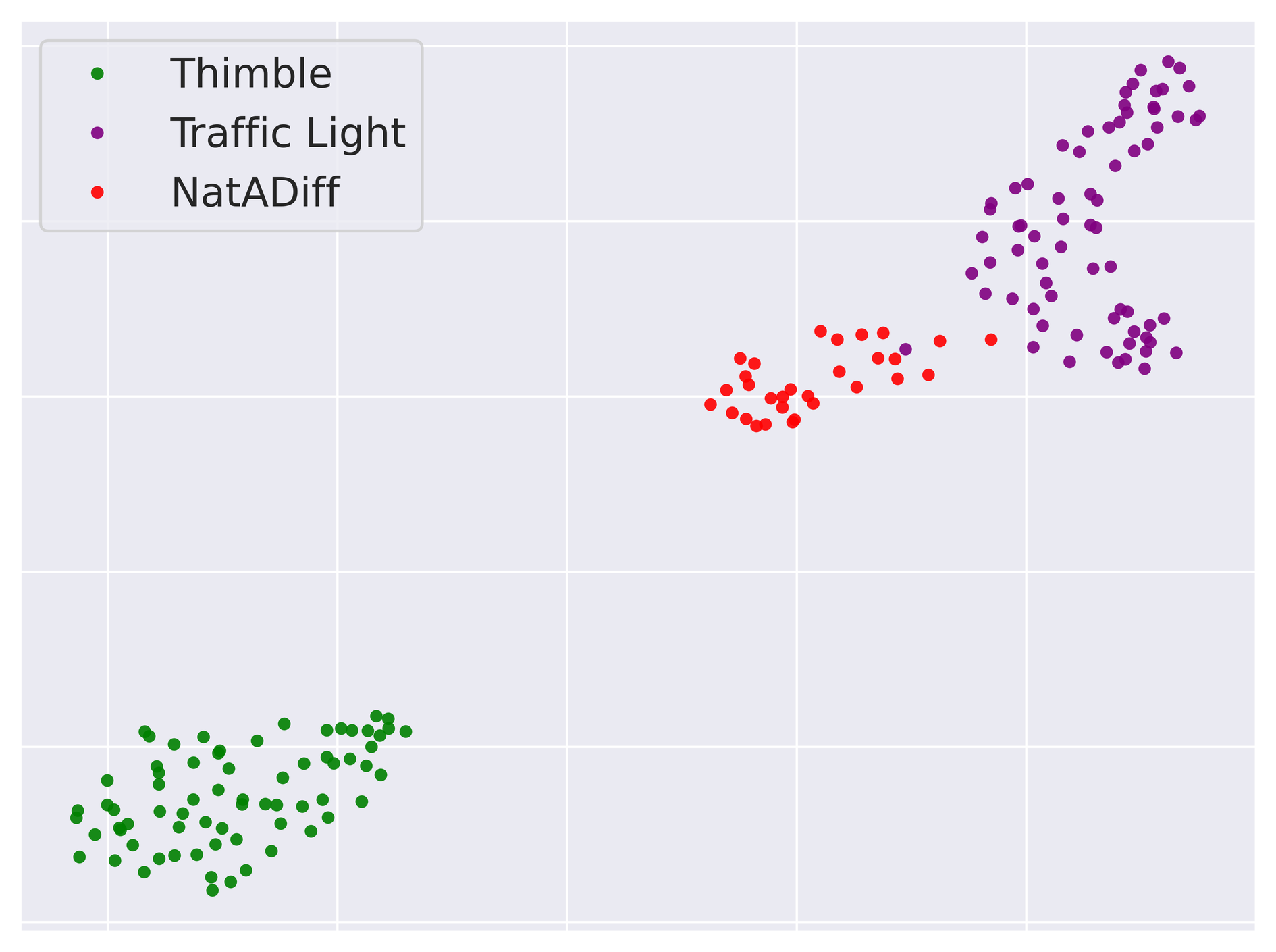} & \includegraphics[width=0.33\textwidth]{./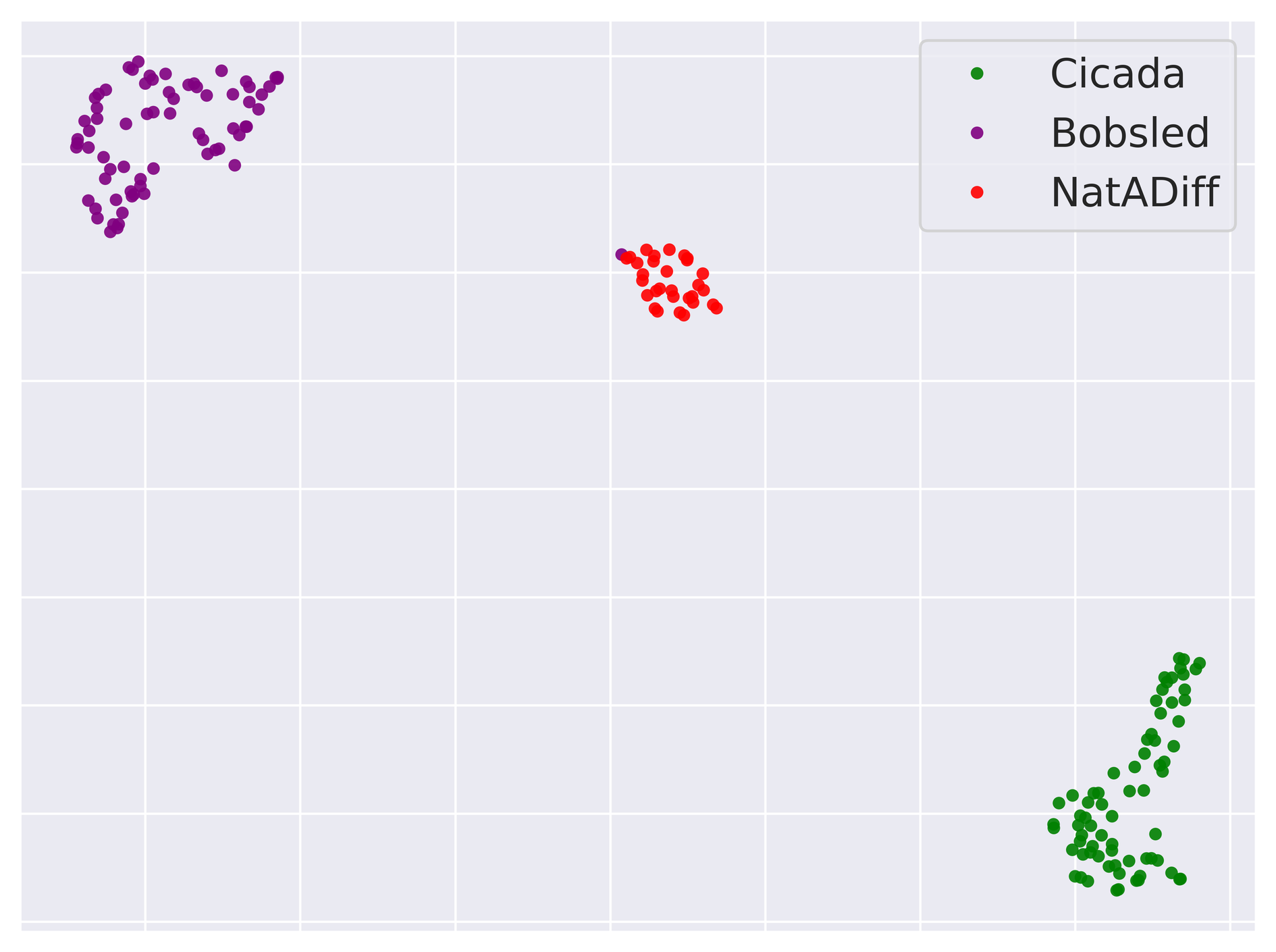} & \includegraphics[width=0.33\textwidth]{./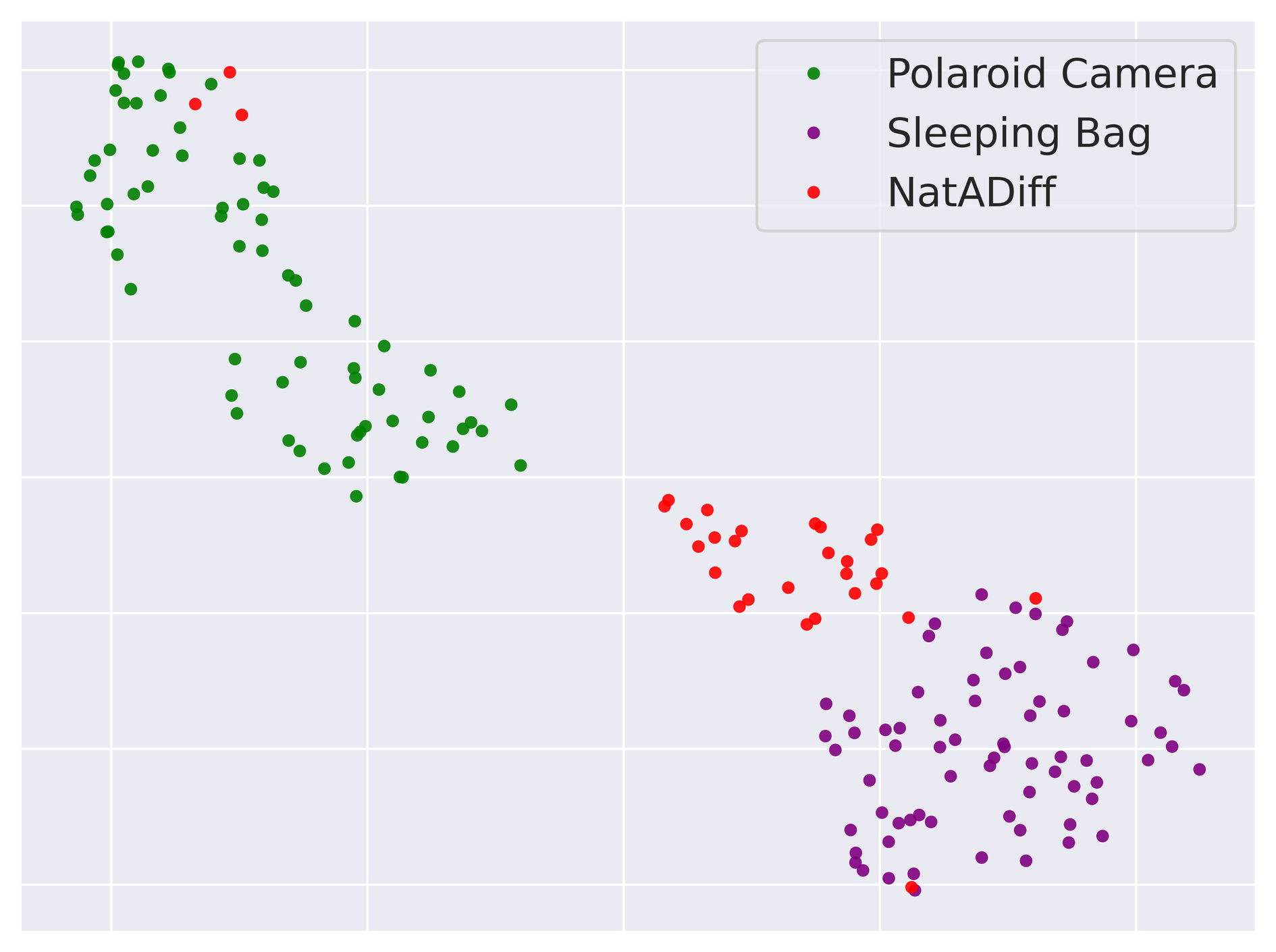} \\

        \includegraphics[width=0.33\textwidth]{./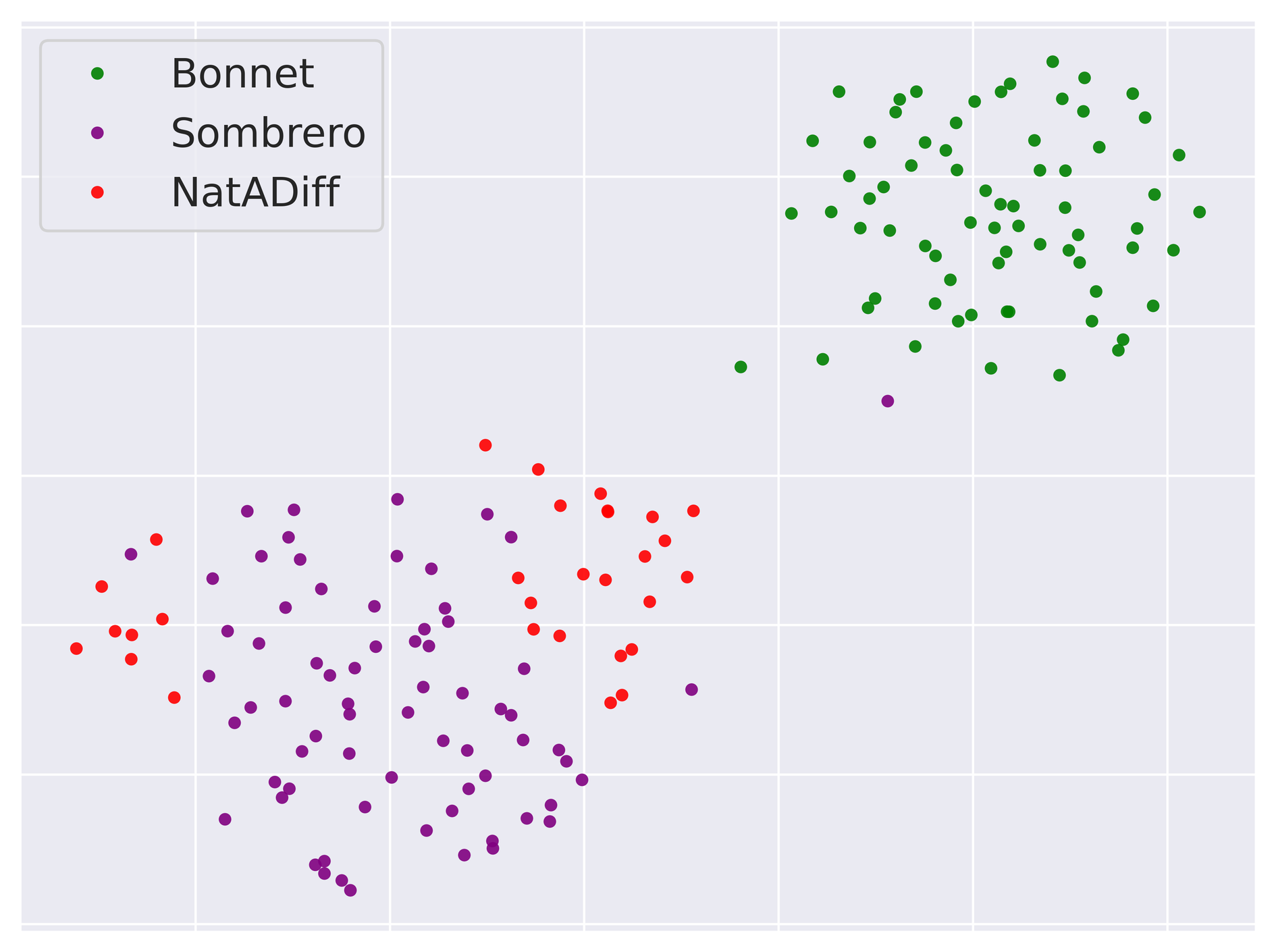} & \includegraphics[width=0.33\textwidth]{./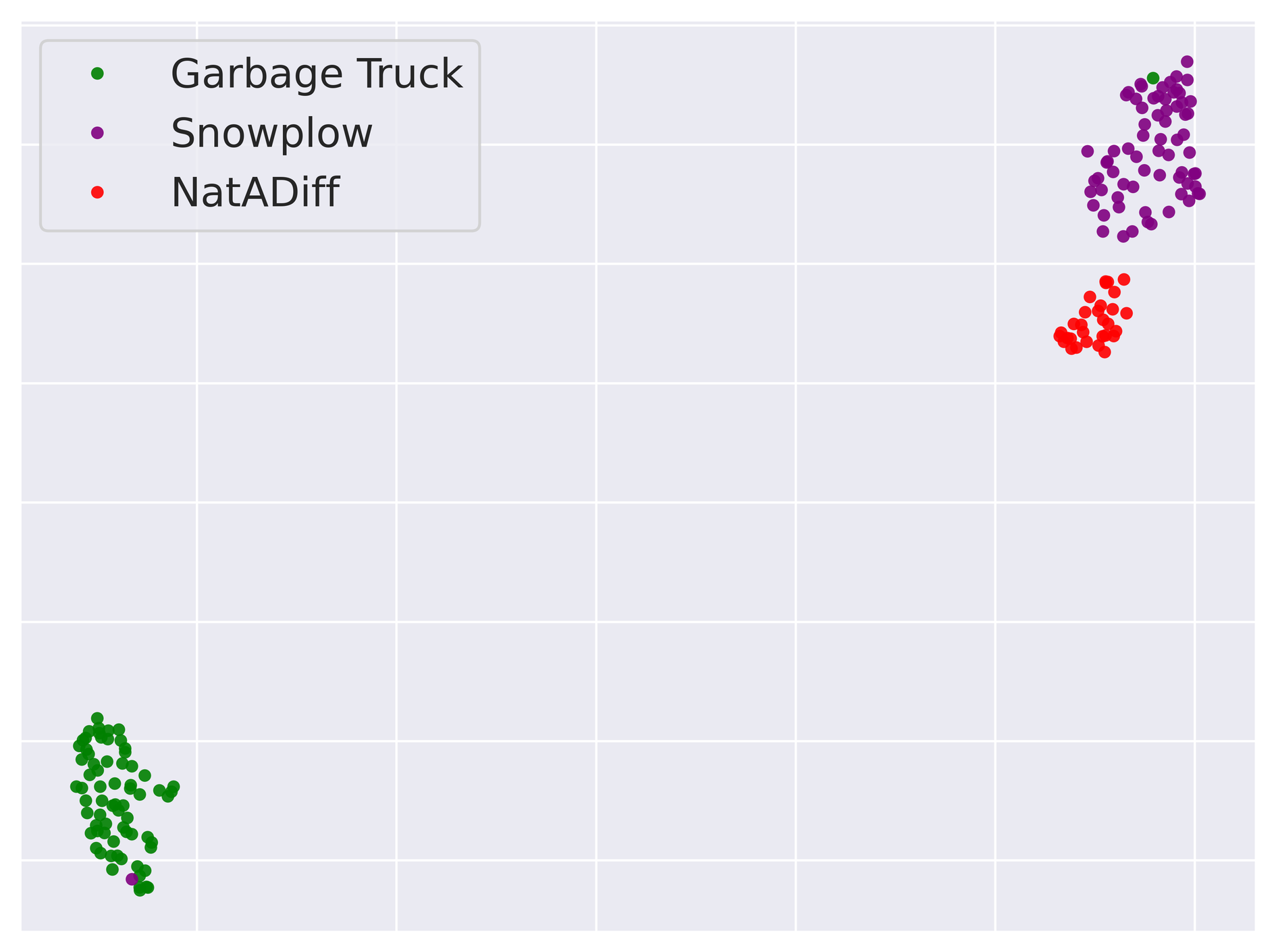} & \includegraphics[width=0.33\textwidth]{./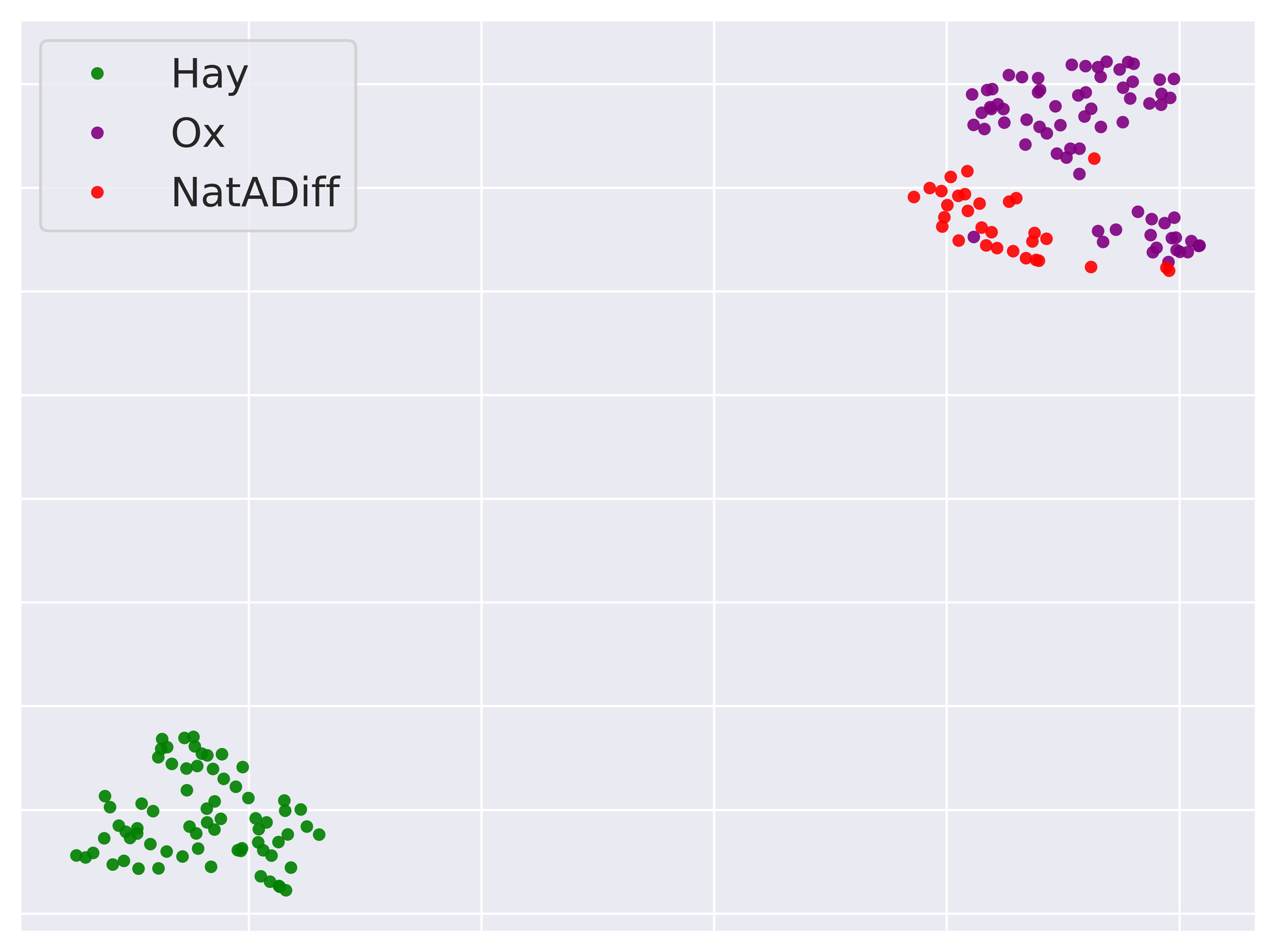} \\

        \includegraphics[width=0.33\textwidth]{./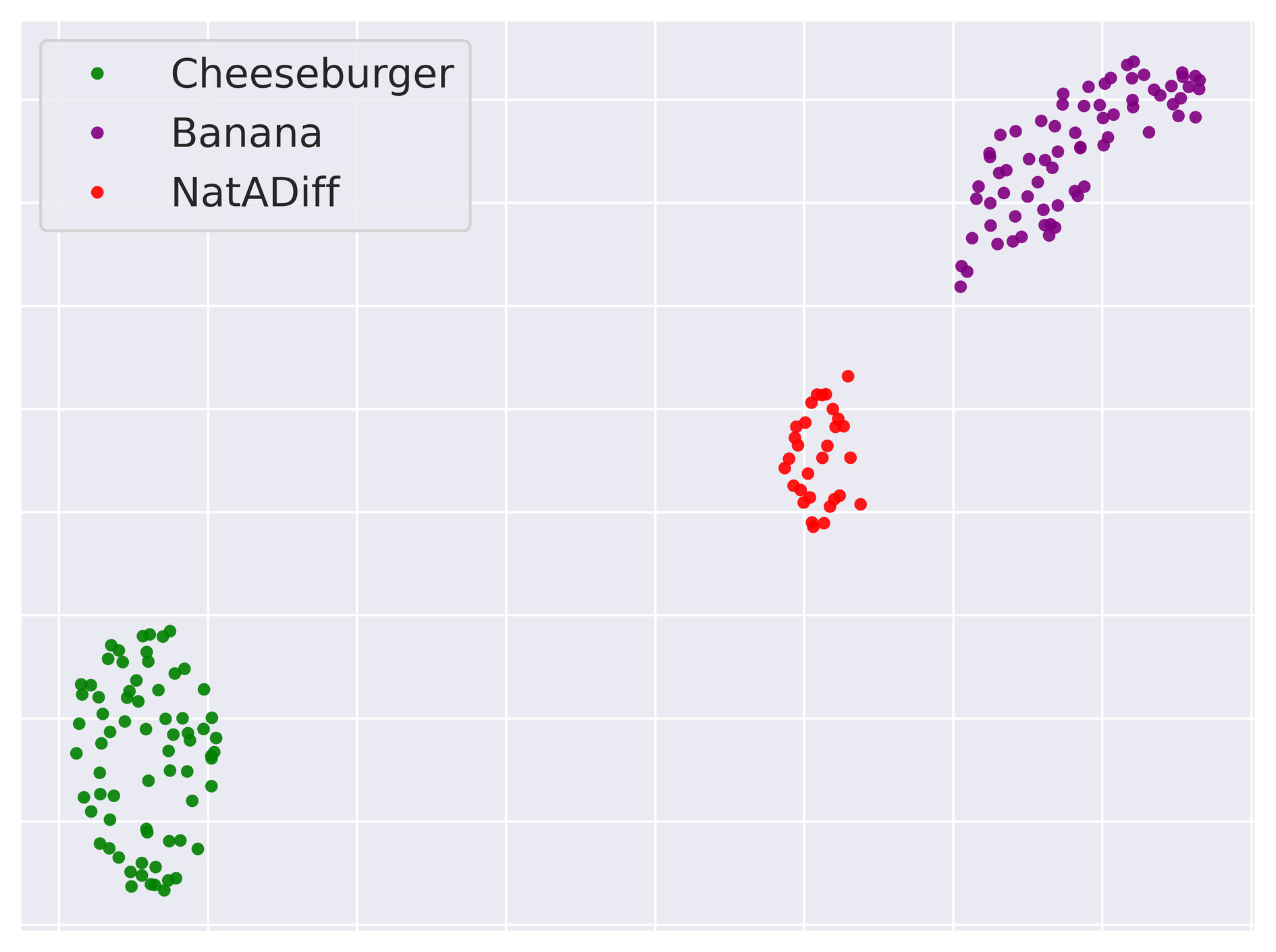} & \includegraphics[width=0.33\textwidth]{./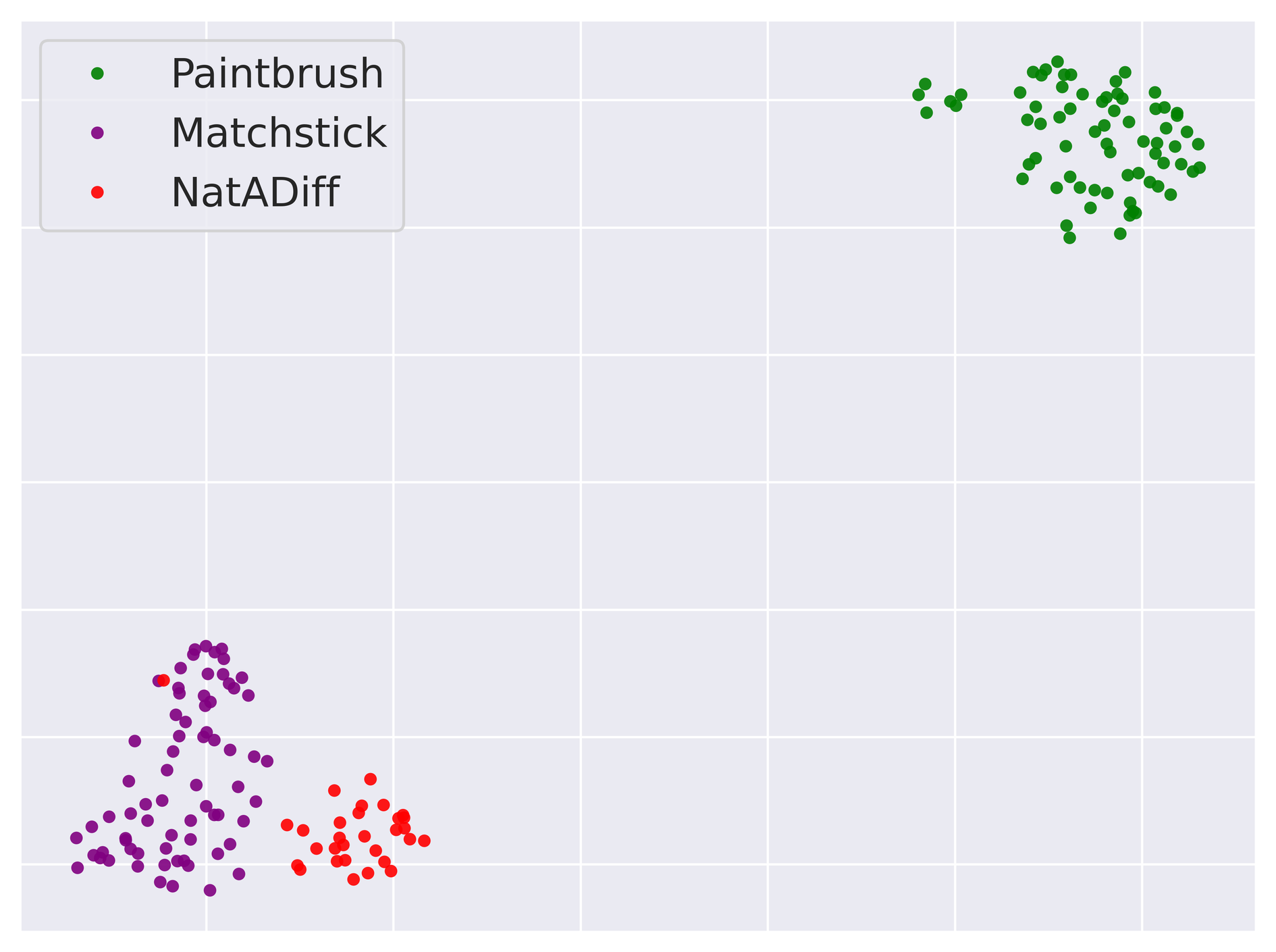} & \includegraphics[width=0.33\textwidth]{./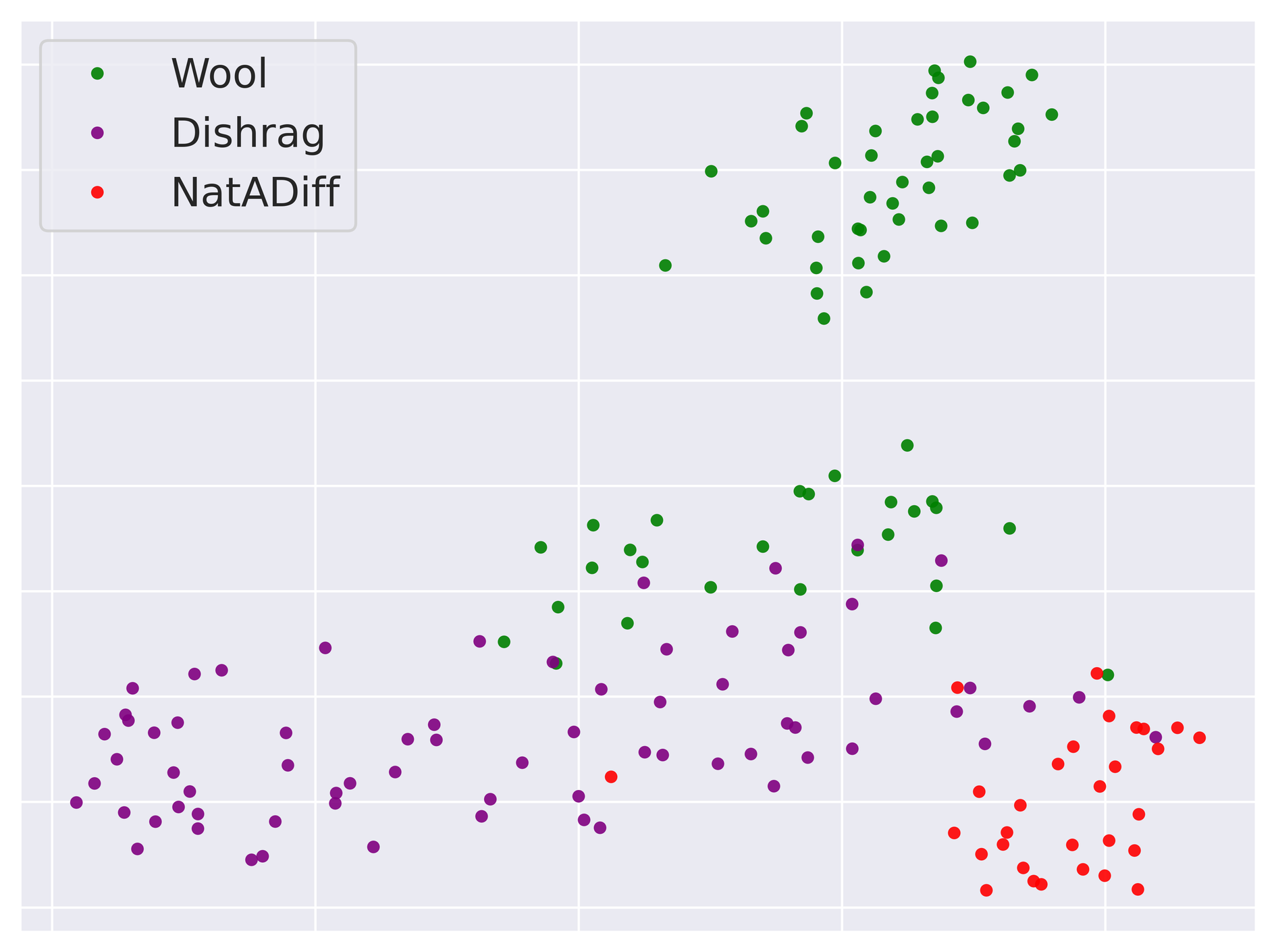}
    \end{tabular}}
    }
    \vspace{-2mm}
    \caption{UMAP \citep{UMAP_McInnes2018} plots of NatADiff samples alongside the ground truth manifolds. \textcolor{ForestGreen}{\textbf{Green}} and \textcolor{Plum}{\textbf{purple}} samples are real images of the true and adversarial classes taken from the ImageNet dataset \citep{Deng2009}. \textcolor{red}{\textbf{Red}} samples are generated by NatADiff using a ResNet-50 \citep{He2015} surrogate model. UMAP is applied to embeddings taken from the penultimate layer of a ResNet-50 classifier. Please zoom in for a better visual.}
    \label{fig:NatADiff UMAP plots}
\end{figure*}

\begin{figure*}
    \centering
    {
    \resizebox{\linewidth}{!}{%
    \scriptsize
    \begin{tabular}{@{}c@{}c@{}c@{}}
        \includegraphics[width=0.33\textwidth]{./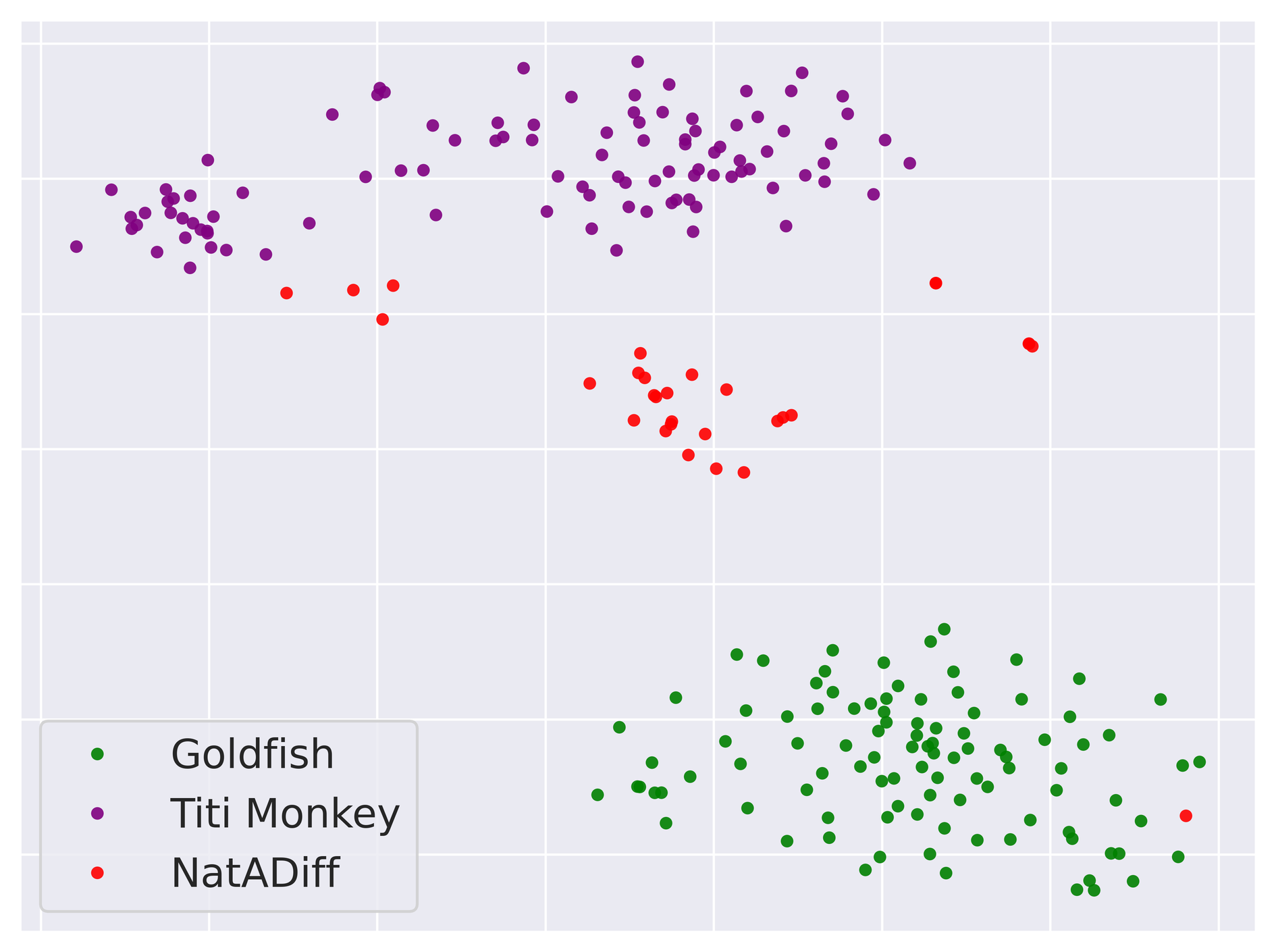} & \includegraphics[width=0.33\textwidth]{./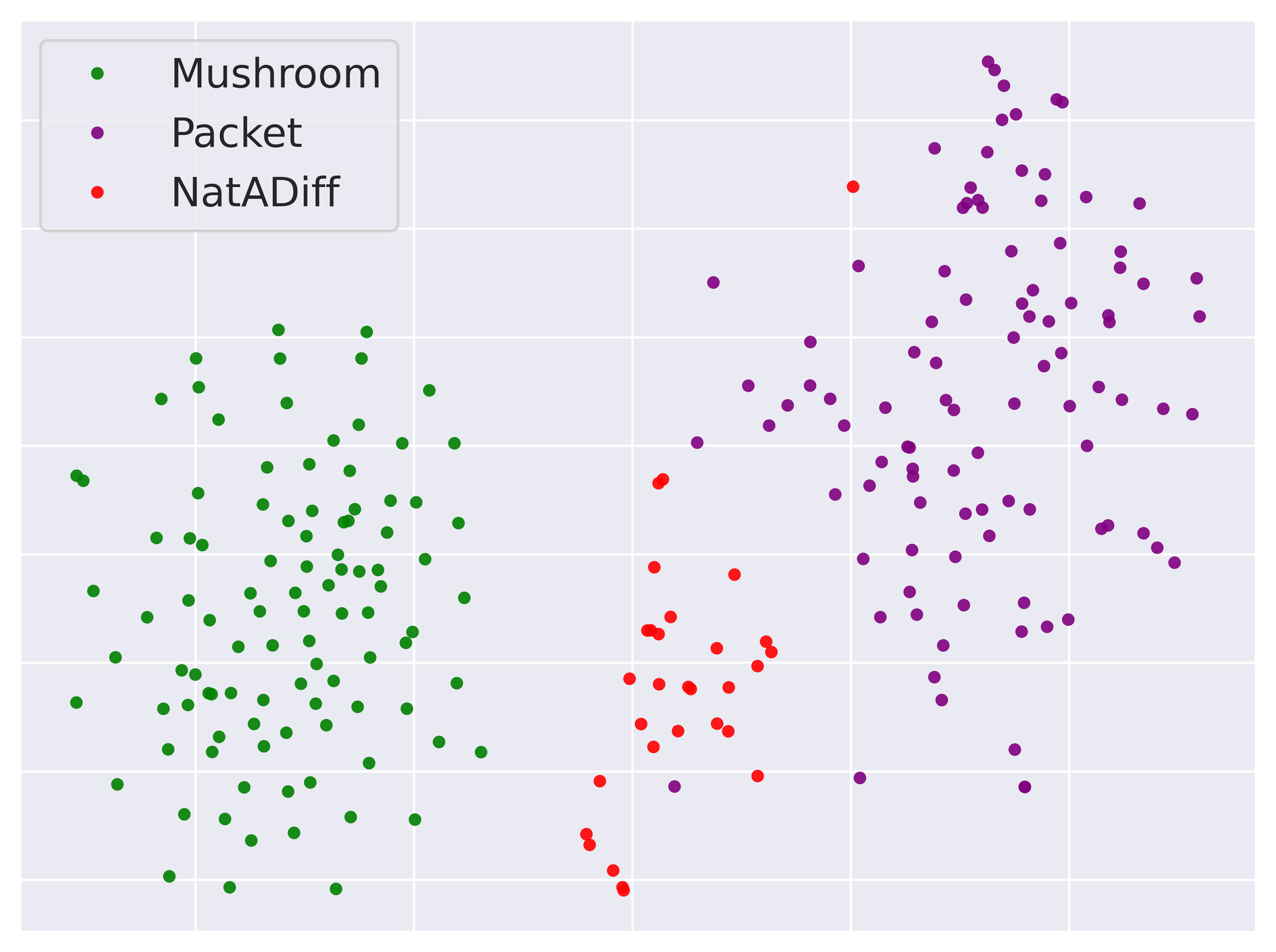} & \includegraphics[width=0.33\textwidth]{./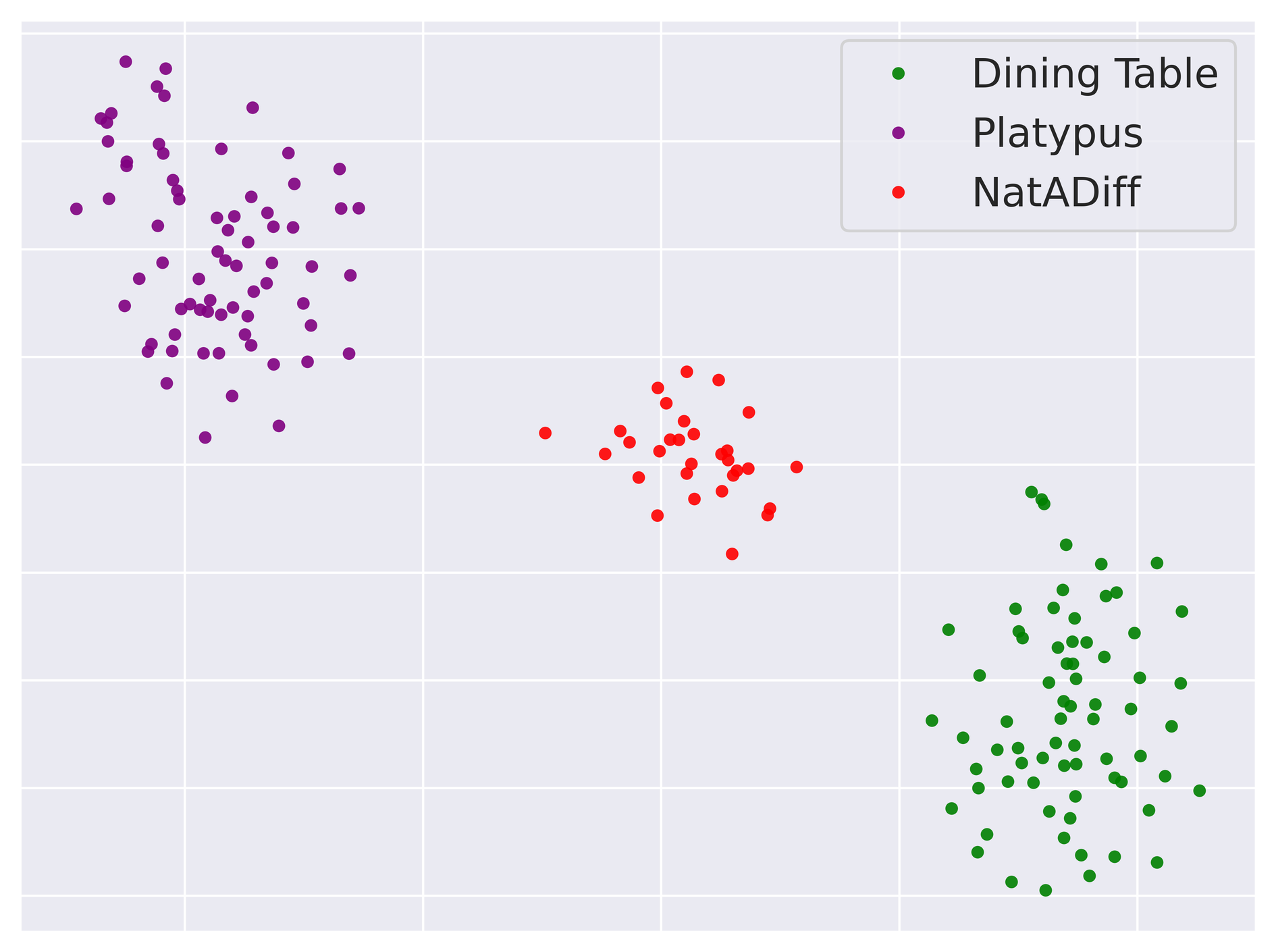} \\

        \includegraphics[width=0.33\textwidth]{./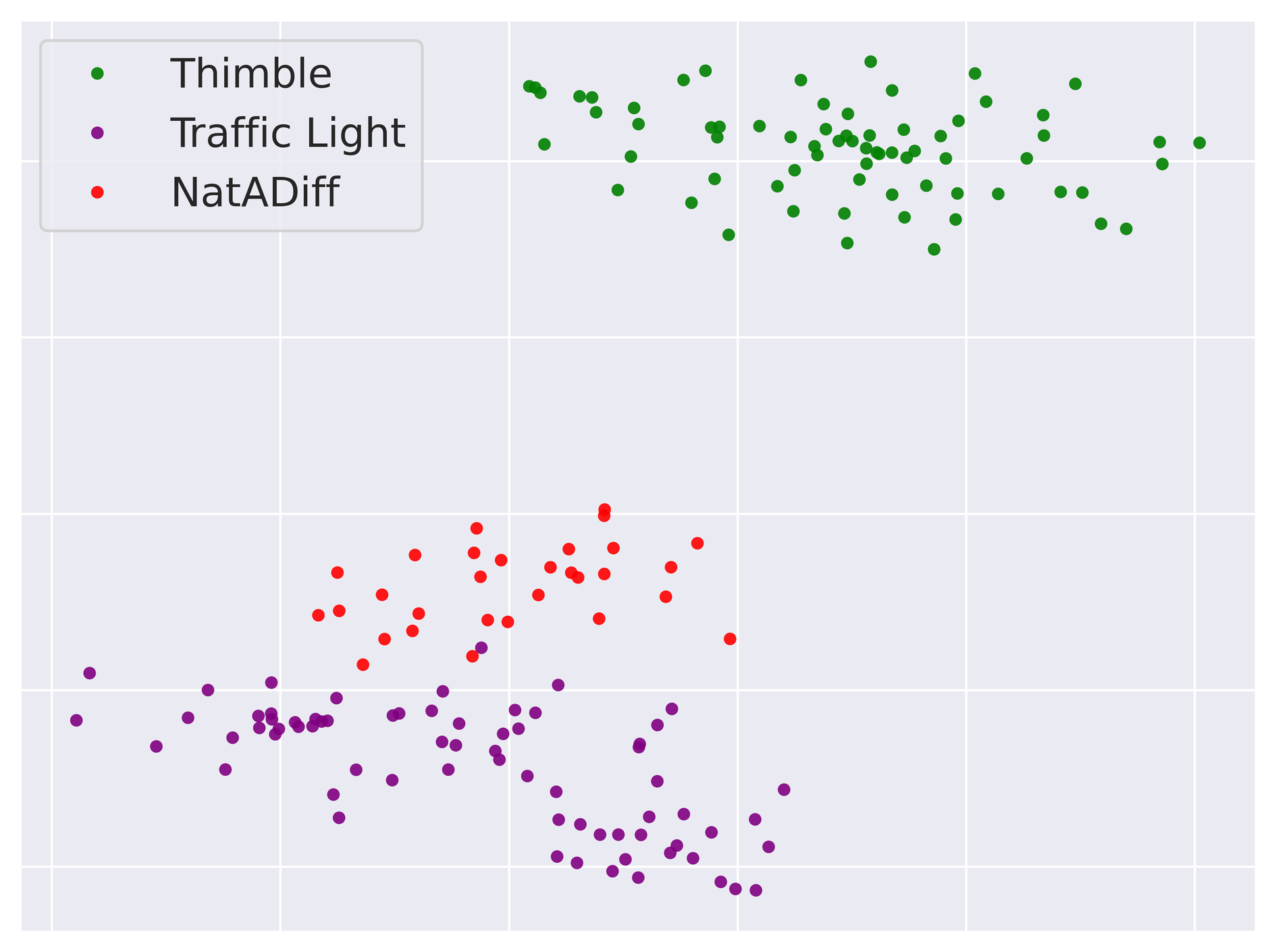} & \includegraphics[width=0.33\textwidth]{./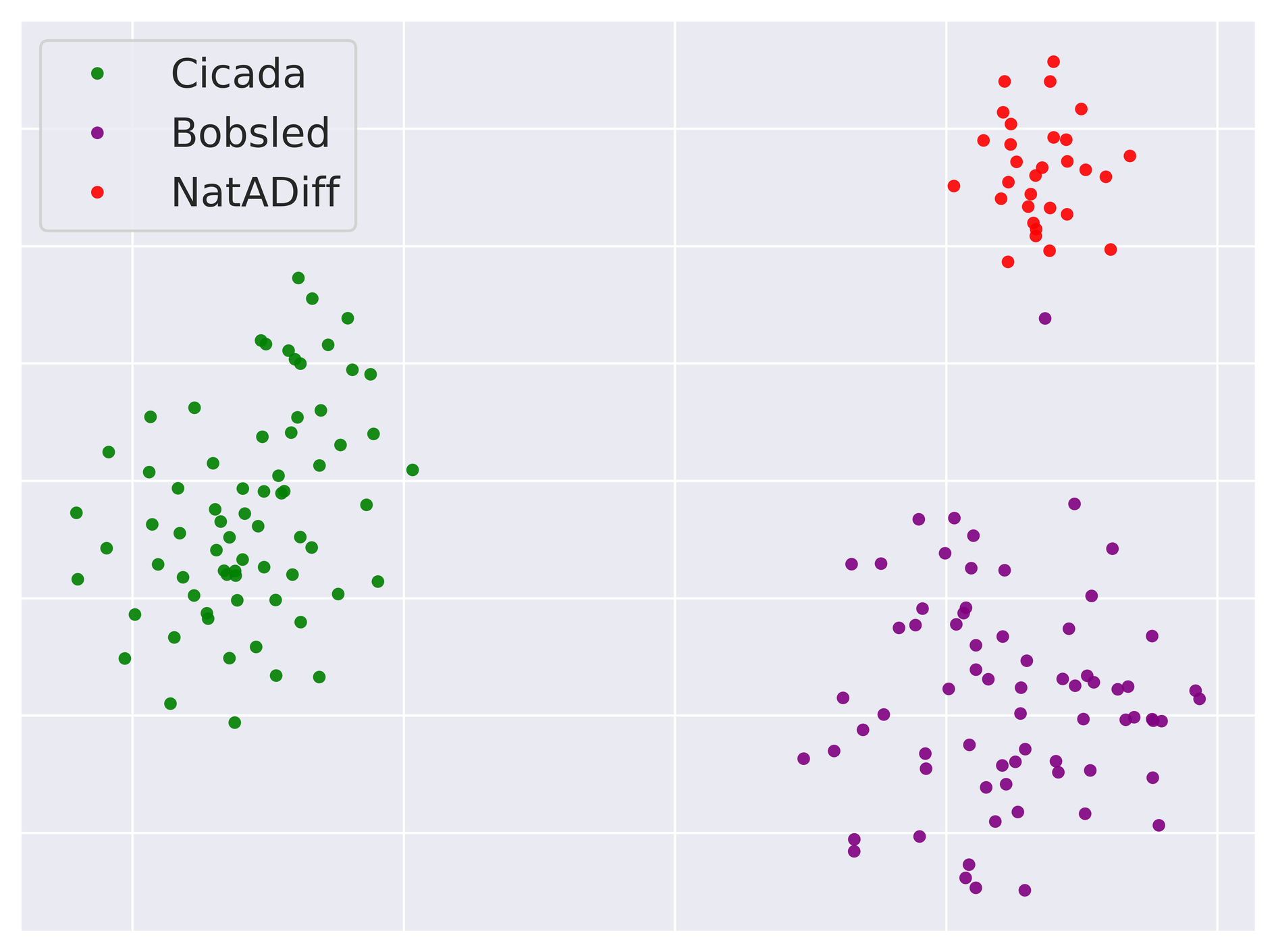} & \includegraphics[width=0.33\textwidth]{./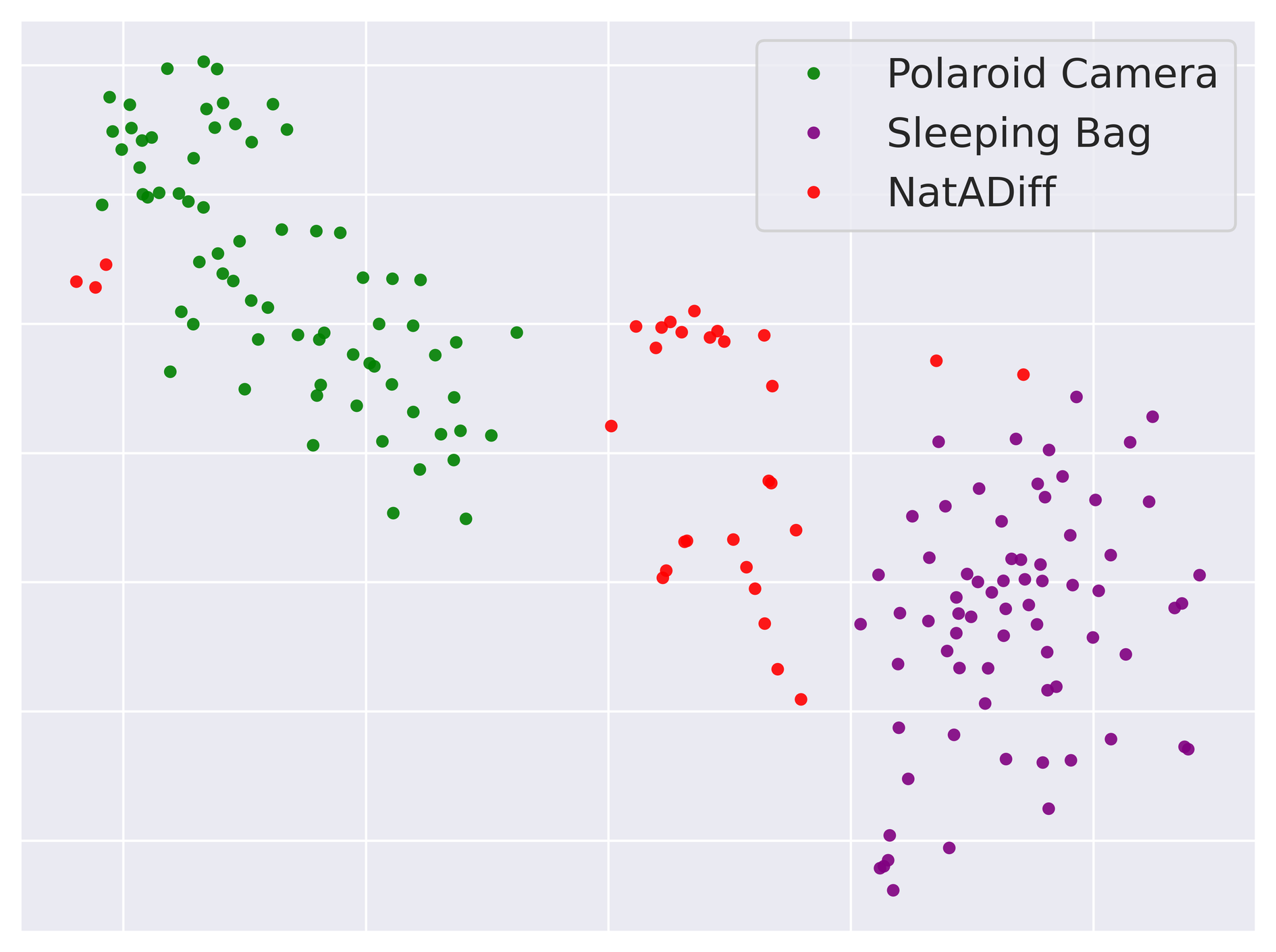} \\

        \includegraphics[width=0.33\textwidth]{./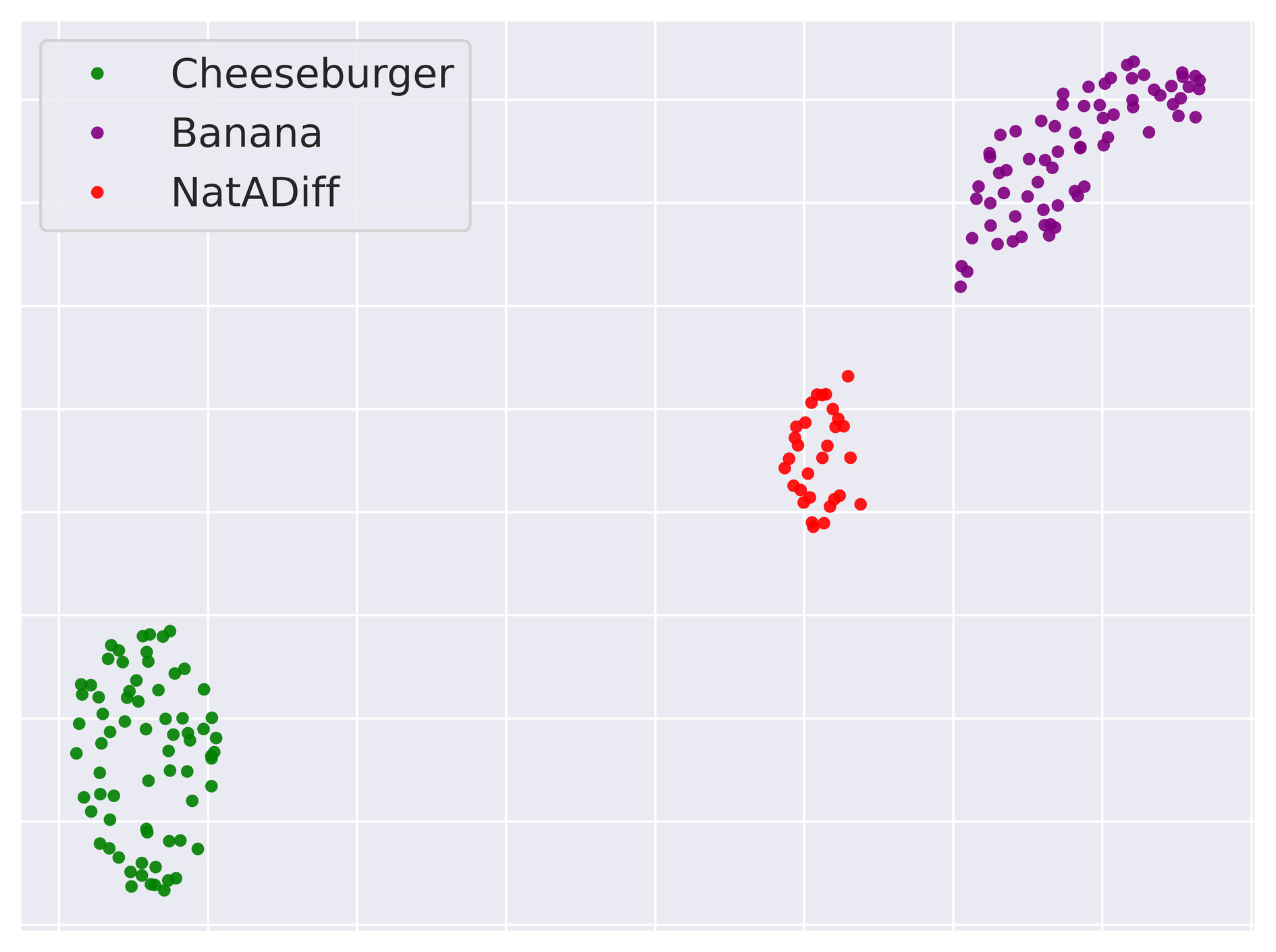} & \includegraphics[width=0.33\textwidth]{./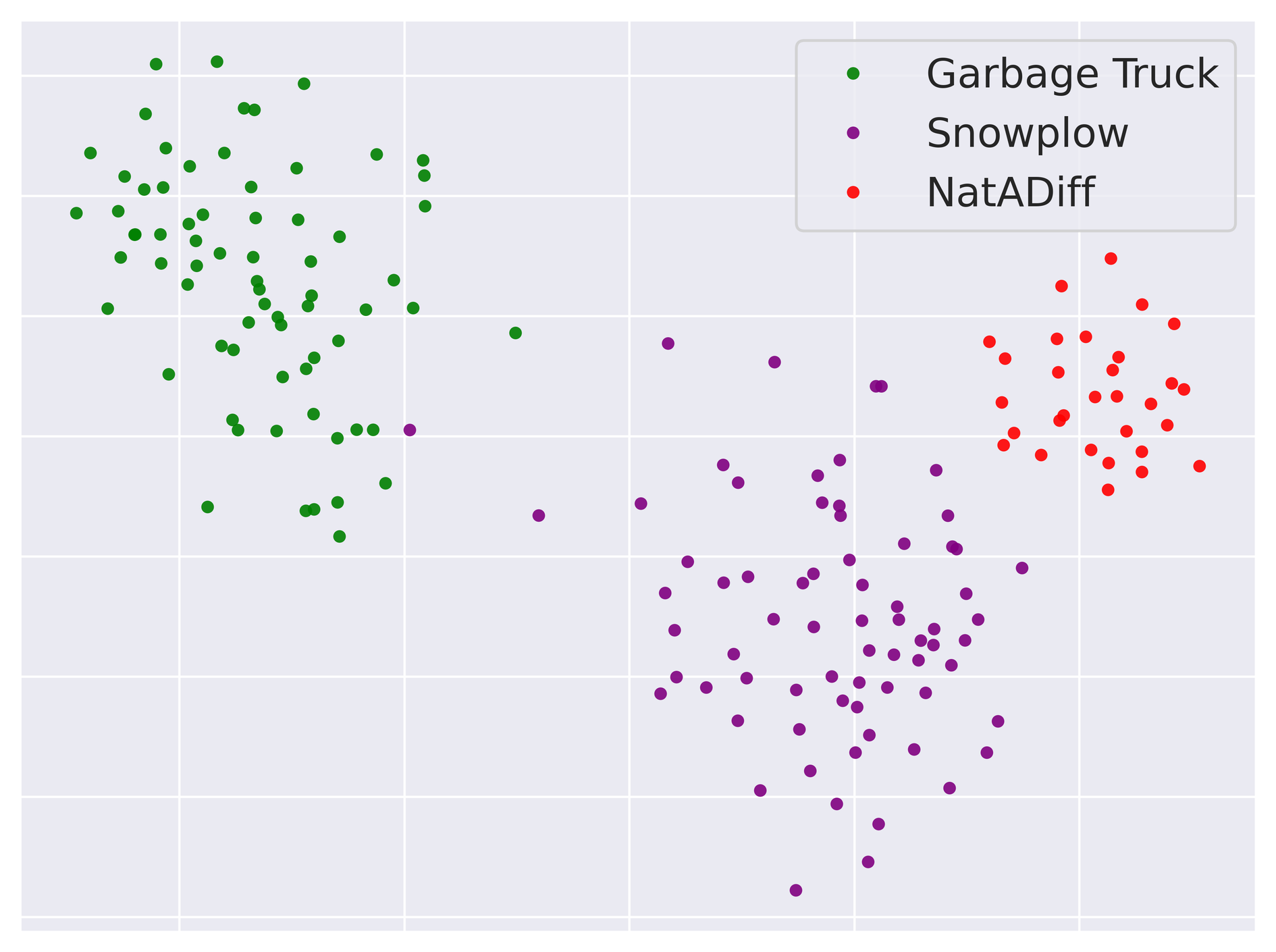} & \includegraphics[width=0.33\textwidth]{./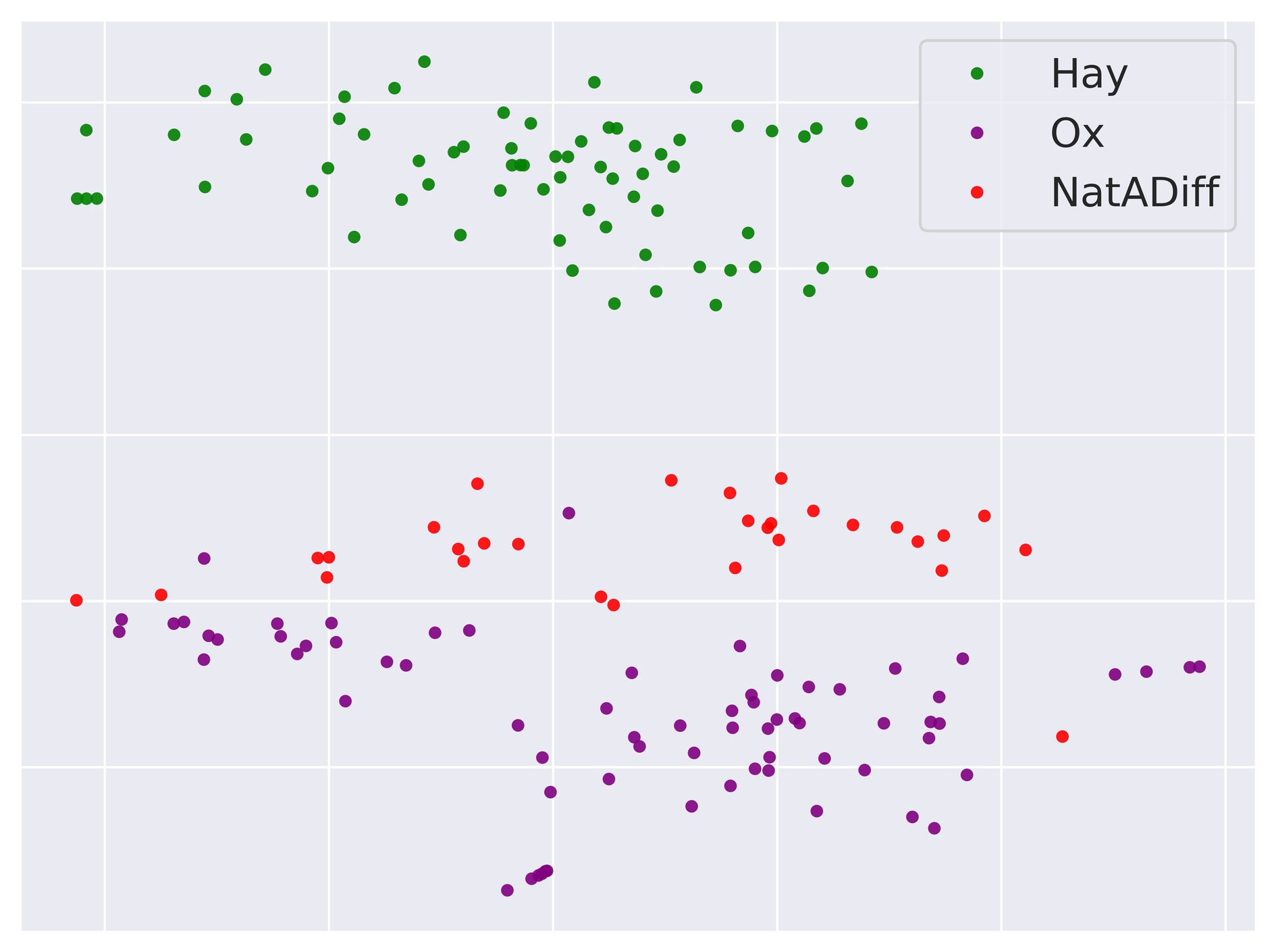} \\

        \includegraphics[width=0.33\textwidth]{./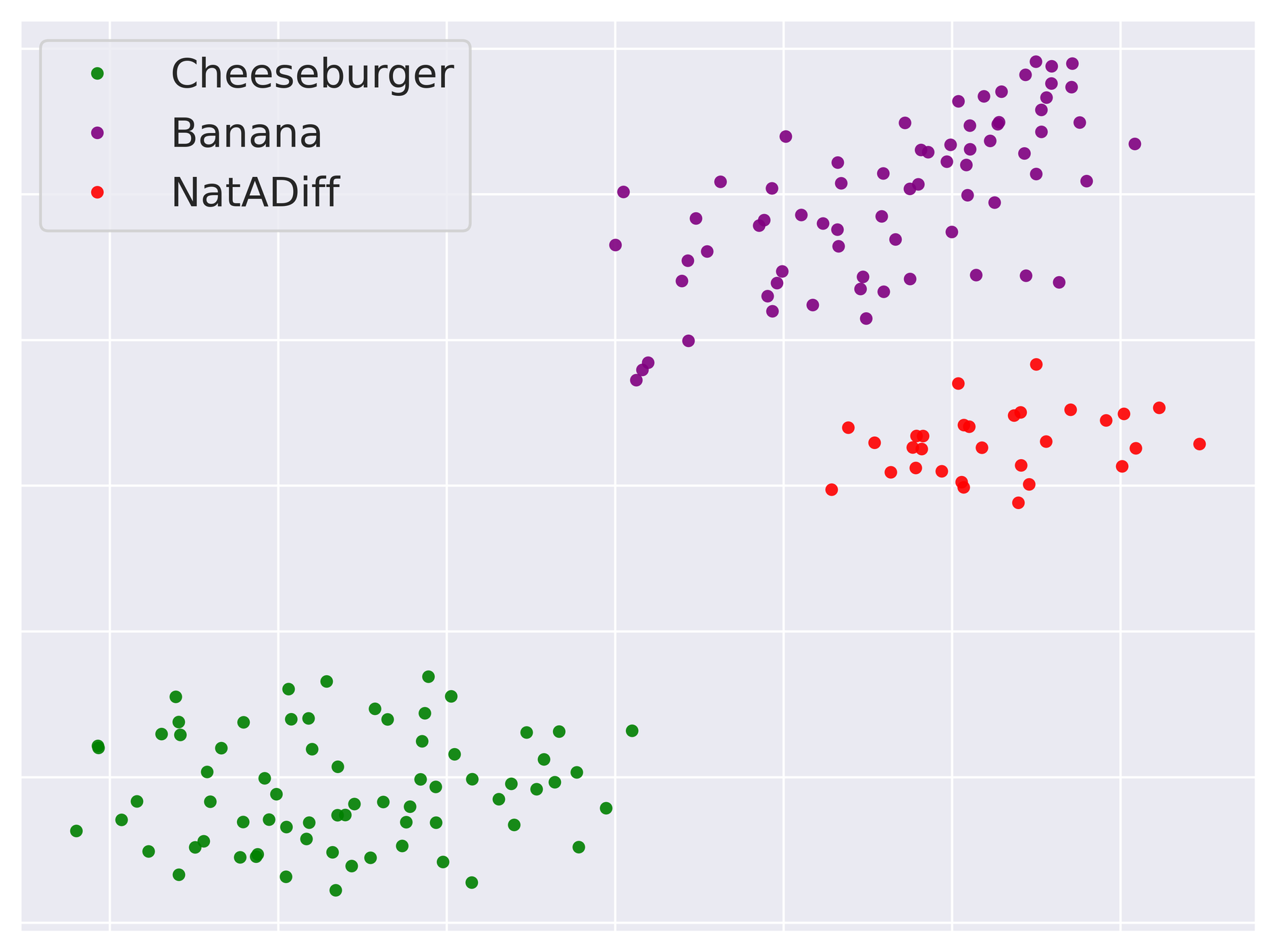} & \includegraphics[width=0.33\textwidth]{./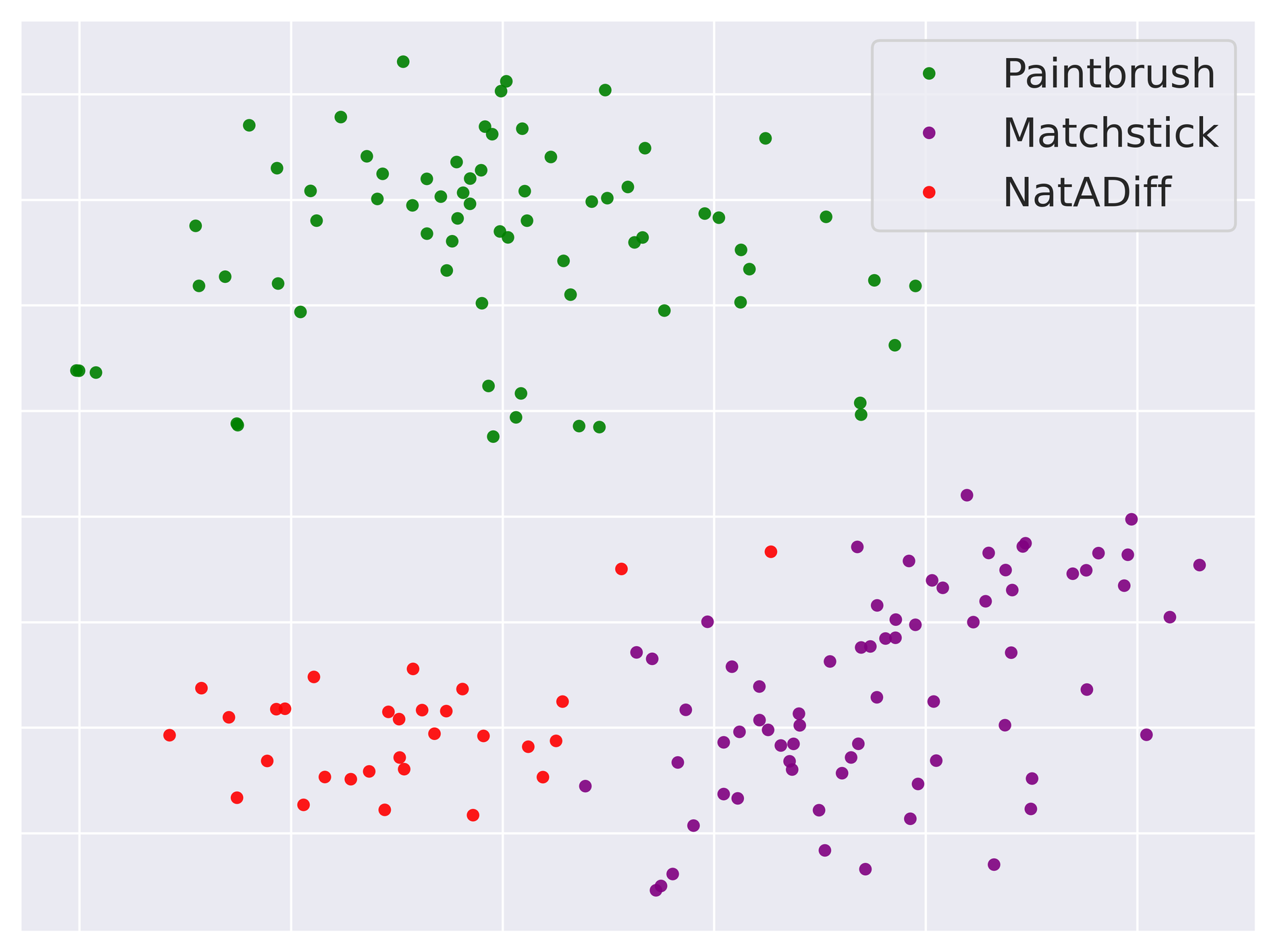} & \includegraphics[width=0.33\textwidth]{./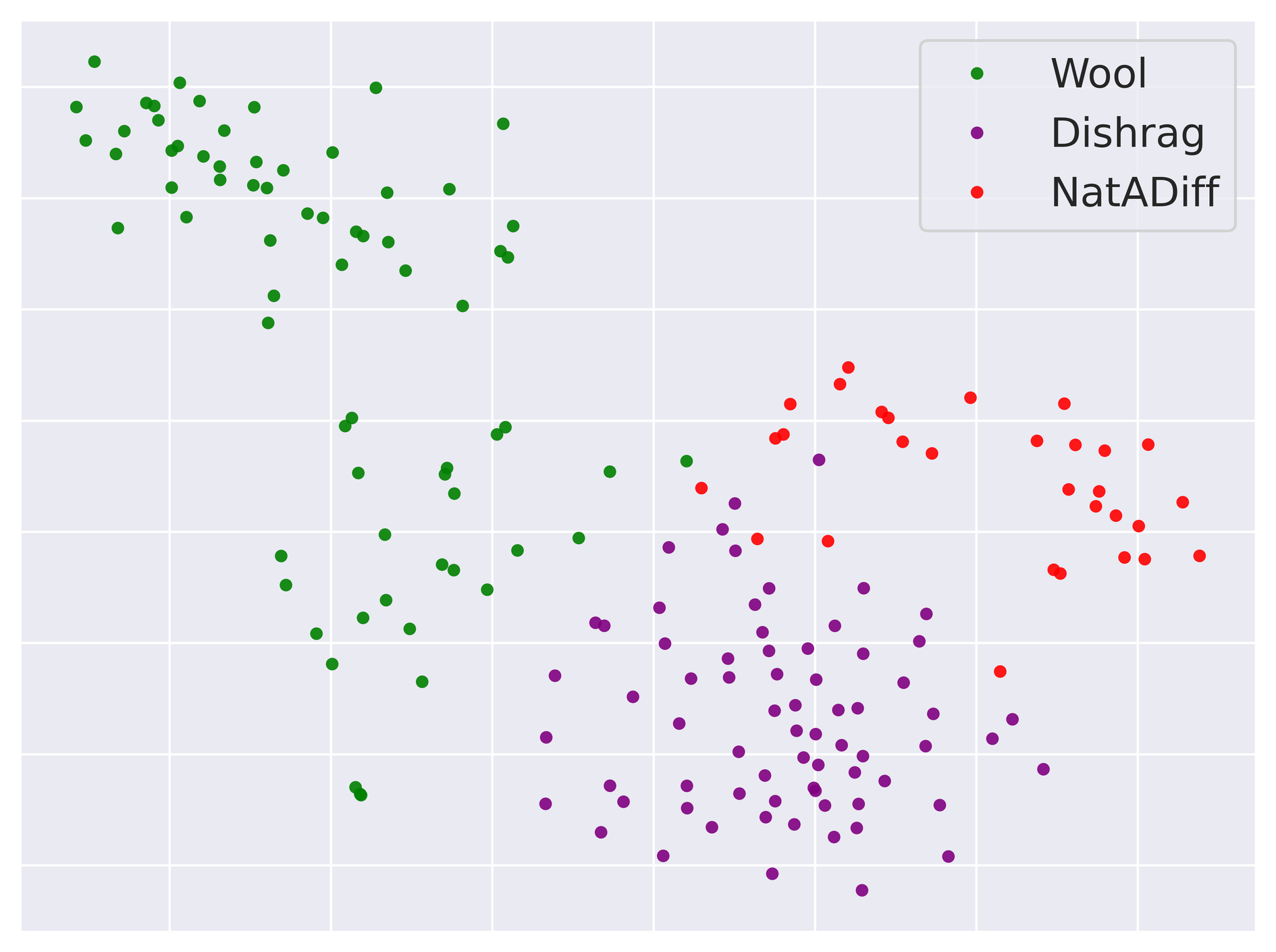}
    \end{tabular}}
    }
    \vspace{-2mm}
    \caption{t-SNE \citep{tSNE_Maaten2008} plots of NatADiff samples alongside the ground truth manifolds. \textcolor{ForestGreen}{\textbf{Green}} and \textcolor{Plum}{\textbf{purple}} samples are real images of the true and adversarial classes taken from the ImageNet dataset \citep{Deng2009}. \textcolor{red}{\textbf{Red}} samples are generated by NatADiff using a ResNet-50 \citep{He2015} surrogate model. t-SNE is applied to embeddings taken from the penultimate layer of a ResNet-50 classifier. Please zoom in for a better visual.}
    \label{fig:NatADiff tSNE plots}
\end{figure*}

\clearpage
\section{NatADiff on Oxford Pets} \label{sec:NatADiff on Oxford Pets}
In the main paper, we present results for NatADiff when targeting ImageNet classifiers exclusively. To strengthen the claims made in the main paper, we extend our evaluation to the Oxford Pets dataset \citep{OxPets_Parkhi2012}, which is more fine-grained and presents a unique challenge as opposed to ImageNet. In this setting, the adversarial structural elements introduced by NatADiff will need to be more subtle.

We use NatADiff to attack ResNet-50 \citep{He2015}, ResNet-152 \citep{He2015}, and ViT \citep{Dosovitskiy2021} classifiers with weights sourced from HuggingFace \citep{HuggingFace_Wolf}. For each class in Oxford Pets, we generate 5 adversarial samples using both targeted (randomly selected) and untargeted (similarity-based) variants of NatADiff. We compare NatADiff against PGD \citep{Madry2019}, AutoAttack \citep{Croce2020}, and adversarial classifier guidance (AdvClass) \citep{Dai2024}. We exclude comparisons with NCF \citep{NCF_Yuan2022}, DiffAttack \citep{DiffAttack_Chen2025}, and ACA \citep{ContentDiffusionAttack_Chen2023}, as these methods do not provide variants targeting Oxford Pets classifiers. We report attack success rate (ASR) and the image quality metrics BRISQUE \citep{BRISQUE_Mittal2012} and TReS \citep{TReS_Alireza2022}.

From Table~\ref{tab:Oxford Pets classifier ASR and image quality}, we observe that NatADiff behaves similarly to the ImageNet study, outperforming other methods in terms of attack transferability while maintaining competitive white-box ASR. NatADiff also typically achieves better BRISQUE and is close to AdvClass in terms of TReS, suggesting that NatADiff produces high-fidelity samples, as shown in Figure~\ref{fig:NatADiff Oxford Pets samples}. Overall, NatADiff generalizes well to fine-grained datasets, with adversarial boundary guidance extending to minute structural changes. These results are consistent with the strong performance of similarity-targeted attacks in the ImageNet study, indicating that adversarial boundary guidance is particularly effective when classes share similar structural elements.

\begin{table}[h]
    \centering
    \caption{\textbf{Attack success rate} (\%) and \textbf{image quality} of adversarial samples generated for the Oxford Pets dataset \citep{OxPets_Parkhi2012}. Superscripts T and U denote random and similarity targeted attacks, respectively. \textcolor{red}{\textbf{Bold}} and \textcolor{BrickRed}{\underline{underlined}} values highlight the best and second best scores. White-box ASR (same surrogate and victim model) is denoted with an $^*$. Note we do not report image quality for constrained perturbation-based attacks (these attacks make imperceptible image alterations).}
    \label{tab:Oxford Pets classifier ASR and image quality}
    \resizebox{1\textwidth}{!}{
        \begin{tabular}{@{}cc@{}c@{\hspace{1em}}ccc@{}cccc@{}}
            \specialrule{1.2pt}{0pt}{0pt}
            \noalign{\vskip 0.5ex}
            \multicolumn{1}{@{}c}{\multirow{2}{*}{\shortstack{Surrogate\\Model}}} & \multicolumn{1}{c@{}}{\multirow{2}{*}{Attack}} && \multicolumn{3}{c}{Victim Model ASR (\%)} && \multicolumn{1}{c}{\multirow{2}{*}{\shortstack{Average\\ASR}}} & \multicolumn{1}{c}{\multirow{2}{*}{\shortstack{BRISQUE ($\boldsymbol{\downarrow}$)}}} & \multicolumn{1}{c}{\multirow{2}{*}{\shortstack{TReS ($\boldsymbol{\uparrow}$)}}} \\
            \cline{4-6}
            \noalign{\vskip 0.5ex}
            &&& ResNet-50 & ResNet-152 & ViT &&&& \\
            \specialrule{1.2pt}{0pt}{0pt}
            \noalign{\vskip 0.5ex}
            & Clean & & $6.0$ & $8.6$ & $3.8$ && $6.1$ & - & - \\
            \specialrule{1.2pt}{0pt}{0pt}
            \noalign{\vskip 0.2ex}
            \multicolumn{1}{@{}c}{\multirow{6}{*}{ResNet-50}} & PGD & & $\textcolor{red}{\mathbf{100^*}}$ & $10.3$ & $3.8$ && $38.0$ & - & - \\
            & AA & & $\textcolor{red}{\mathbf{100^*}}$ & $8.1$ & $4.3$ && $37.5$ & - & - \\
            & AdvClass\textsuperscript{T} & & $\textcolor{BrickRed}{\underline{99.5^*}}$ & $17.3$ & $19.5$ && $45.4$ & $5.3$ & $\textcolor{BrickRed}{\underline{80.9}}$ \\
            & AdvClass\textsuperscript{U} & & $98.9^*$ & $19.5$ & $16.8$ && $45.0$ & $\textcolor{BrickRed}{\underline{5.0}}$ & $\textcolor{red}{\mathbf{81.3}}$ \\
            \cline{2-10}
            \noalign{\vskip 0.4ex}
            & NatADiff\textsuperscript{T} & & $97.8^*$ & $\textcolor{BrickRed}{\underline{30.8}}$ & $\textcolor{BrickRed}{\underline{27.6}}$ && $\textcolor{BrickRed}{\underline{52.1}}$ & $\textcolor{red}{\mathbf{4.2}}$ & $78.9$ \\
            & NatADiff\textsuperscript{U} & & $96.8^*$ & $\textcolor{red}{\mathbf{37.3}}$ & $\textcolor{red}{\mathbf{35.7}}$ && $\textcolor{red}{\mathbf{56.6}}$ & $5.5$ & $80.2$ \\
            \specialrule{1.2pt}{0pt}{0pt}
            \noalign{\vskip 0.2ex}
            \multicolumn{1}{@{}c}{\multirow{6}{*}{ResNet-152}} & PGD & & $14.6$ & $\textcolor{BrickRed}{\underline{99.5^*}}$ & $9.2$ && $41.1$ & - & - \\
            & AA & & $12.4$ & $\textcolor{red}{\mathbf{100^*}}$ & $6.5$ && $39.6$ & - & - \\
            & AdvClass\textsuperscript{T} & & $34.6$ & $98.4^*$ & $22.2$ && $51.7$ & $7.2$ & $\textcolor{BrickRed}{\underline{80.3}}$ \\
            & AdvClass\textsuperscript{U} & & $37.3$ & $98.9^*$ & $26.5$ && $54.2$ & $6.9$ & $\textcolor{red}{\mathbf{80.9}}$ \\
            \cline{2-10}
            \noalign{\vskip 0.4ex}
            & NatADiff\textsuperscript{T} & & $\textcolor{BrickRed}{\underline{51.9}}$ & $95.7^*$ & $\textcolor{BrickRed}{\underline{37.3}}$ && $\textcolor{BrickRed}{\underline{61.6}}$ & $\textcolor{BrickRed}{\underline{5.7}}$ & $79.4$ \\
            & NatADiff\textsuperscript{U} & & $\textcolor{red}{\mathbf{57.8}}$ & $98.9^*$ & $\textcolor{red}{\mathbf{48.6}}$ && $\textcolor{red}{\mathbf{68.5}}$ & $\textcolor{red}{\mathbf{5.2}}$ & $79.6$ \\
            \specialrule{1.2pt}{0pt}{0pt}
            \noalign{\vskip 0.2ex}
            \multicolumn{1}{@{}c}{\multirow{6}{*}{ViT}} & PGD & & $13.0$ & $10.8$ & $\textcolor{red}{\mathbf{100^*}}$ && & - & - \\
            & AA & & $7.6$ & $8.6$ & $\textcolor{BrickRed}{\underline{97.3^*}}$ && $37.8$ & - & - \\
            & AdvClass\textsuperscript{T} & & $28.1$ & $22.2$ & $\textcolor{red}{\mathbf{100^*}}$ && $50.1$ & $6.7$ & $\textcolor{BrickRed}{\underline{80.2}}$ \\
            & AdvClass\textsuperscript{U} & & $24.3$ & $24.3$ & $\textcolor{red}{\mathbf{100^*}}$ && $49.5$ & $6.7$ & $80.1$ \\
            \cline{2-10}
            \noalign{\vskip 0.4ex}
            & NatADiff\textsuperscript{T} & & $\textcolor{red}{\mathbf{56.2}}$ & $\textcolor{BrickRed}{\underline{29.2}}$ & $\textcolor{red}{\mathbf{100^*}}$ && $\textcolor{BrickRed}{\underline{61.8}}$ & $\textcolor{red}{\mathbf{4.9}}$ & $\textcolor{red}{\mathbf{80.3}}$ \\
            & NatADiff\textsuperscript{U} & & $\textcolor{BrickRed}{\underline{54.6}}$ & $\textcolor{red}{\mathbf{46.5}}$ & $\textcolor{red}{\mathbf{100^*}}$ && $\textcolor{red}{\mathbf{67.0}}$ & $\textcolor{BrickRed}{\underline{5.3}}$ & $80.1$ \\
            \specialrule{1.2pt}{0pt}{0pt}
            \noalign{\vskip 0.2ex}
        \end{tabular}
    }
    \vspace{-6mm}
\end{table}

\begin{figure*}
    \centering
    {
    \resizebox{\linewidth}{!}{%
    \scriptsize
    \begin{tabular}{@{}l@{}l@{\hspace{0.5em}}l@{}l@{\hspace{0.5em}}l@{}l@{}}
        \multicolumn{2}{c}{\normalsize\textbf{ResNet-50}} & \multicolumn{2}{c}{\normalsize\textbf{ResNet-152}} & \multicolumn{2}{c}{\normalsize\textbf{ViT}} \\
        \includegraphics[width=0.2\textwidth]{./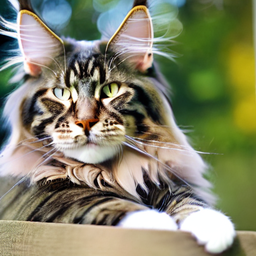} & \includegraphics[width=0.2\textwidth]{./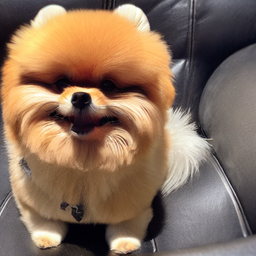} & \includegraphics[width=0.2\textwidth]{./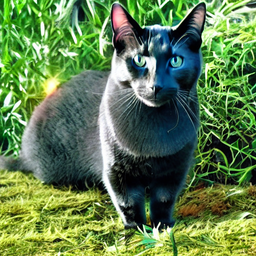} & \includegraphics[width=0.2\textwidth]{./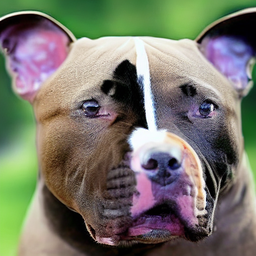} & \includegraphics[width=0.2\textwidth]{./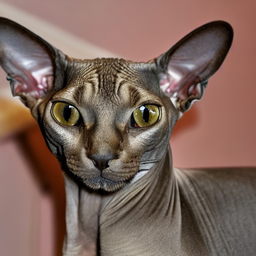} & \includegraphics[width=0.2\textwidth]{./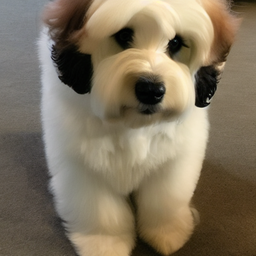} \\

        \textcolor{Green}{\textbf{True:}} Maine Coon & \textcolor{Green}{\textbf{True:}} Pomeranian & \textcolor{Green}{\textbf{True:}} Bombay & \textcolor{Green}{\textbf{True:}} Pit Bull Terrier & \textcolor{Green}{\textbf{True:}} Sphynx & \textcolor{Green}{\textbf{True:}} Havanese \\
        \textcolor{red}{\textbf{Adv\textsuperscript{T}:}} Bengal & \textcolor{red}{\textbf{Adv\textsuperscript{T}:}} Havanese & \textcolor{red}{\textbf{Adv\textsuperscript{T}:}} Russian Blue & \textcolor{red}{\textbf{Adv\textsuperscript{T}:}} Pomeranian & \textcolor{red}{\textbf{Adv\textsuperscript{T}:}} Abyssinian & \textcolor{red}{\textbf{Adv\textsuperscript{T}:}} Newfoundland \\
        
        \includegraphics[width=0.2\textwidth]{./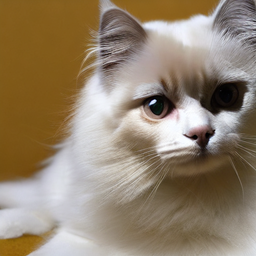} & \includegraphics[width=0.2\textwidth]{./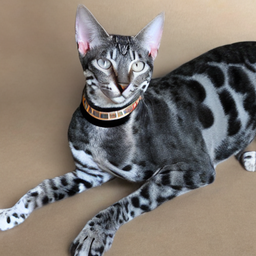} & \includegraphics[width=0.2\textwidth]{./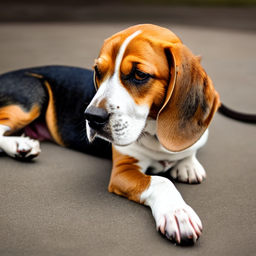} & \includegraphics[width=0.2\textwidth]{./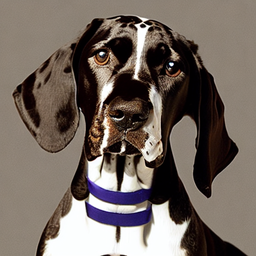} & \includegraphics[width=0.2\textwidth]{./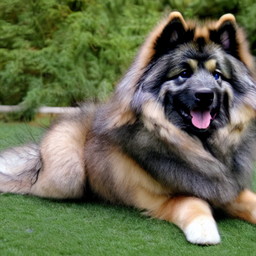} & \includegraphics[width=0.2\textwidth]{./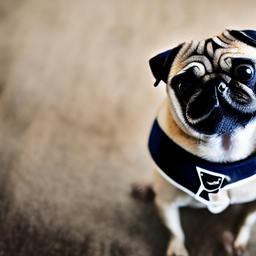} \\
        
        \textcolor{Green}{\textbf{True:}} Birman & \textcolor{Green}{\textbf{True:}} Egyptian Mau & \textcolor{Green}{\textbf{True:}} Beagle & \textcolor{Green}{\textbf{True:}} German Shorthaired & \textcolor{Green}{\textbf{True:}} Keeshound & \textcolor{Green}{\textbf{True:}} Pug \\
        \textcolor{red}{\textbf{Adv\textsuperscript{U}:}} Ragdoll & \textcolor{red}{\textbf{Adv\textsuperscript{U}:}} German Shorthaired & \textcolor{red}{\textbf{Adv\textsuperscript{U}:}} Basset Hound & \textcolor{red}{\textbf{Adv\textsuperscript{U}:}} Basset Hound & \textcolor{red}{\textbf{Adv\textsuperscript{U}:}} Leonberger & \textcolor{red}{\textbf{Adv\textsuperscript{U}:}} Chihuahua \\
    \end{tabular}}
    }
    \vspace{-2mm}
    \caption{Adversarial samples generated by NatADiff for the Oxford Pets dataset \citep{OxPets_Parkhi2012} using ResNet-50 \citep{He2015}, ResNet-152 \citep{He2015}, and ViT \citep{Dosovitskiy2021} surrogate models (see column labels). We report the true class and adversarial target for each image. Superscripts T and U denote random and similarity targeted attacks, respectively.}
    \label{fig:NatADiff Oxford Pets samples}
\end{figure*}

\clearpage
\section{NatADiff experiment parameter settings} \label{apdx:Experiment parameter settings}
Here we provide the NatADiff parameter values used in our experiments (see Tables~\ref{tab:ImageNet experiment parameters} and \ref{tab:Oxford Pets experiment parameters}).

\begin{table}[h]
    \centering
    \caption{\textbf{Experiment parameters} used in diffusion-based adversarial sampling experiments on the ImageNet dataset \citep{Deng2009}. Parameters refer to those defined in Algorithms~\ref{alg:NatADiff} and \ref{alg:NatADiff Similarity}. Experiments were conducted with ResNet-50 \citep{He2015}, Inception-v3 \citep{Inceptionv3_Szegedy2016}, and ViT-H \citep{Dosovitskiy2021} surrogate models.}
    \label{tab:ImageNet experiment parameters}
    \resizebox{1\textwidth}{!}{
        \begin{tabular}{@{}cc@{\hspace{1em}}c@{}cccccccccccc@{}}
            \specialrule{1.2pt}{0pt}{0pt}
            \noalign{\vskip 0.5ex}
            \multicolumn{1}{@{}c}{\multirow{2}{*}{\shortstack{Surrogate\\Model}}} & \multicolumn{1}{c@{\hspace{1em}}}{\multirow{2}{*}{Attack}} && \multicolumn{12}{c}{NatADiff Parameters} \\
            \noalign{\vskip 0.5ex}
            \cline{3-15}
            \noalign{\vskip 0.5ex}
            &&& $\omega$ & $\rho$ & $\mu$ & $s$ & $R$ & $r_l$ & $r_u$ & $c_l$ & $c_u$ & $S$ & $\delta_\mu$ & $\delta_s$ \\
            \specialrule{1.2pt}{0pt}{0pt}
            \noalign{\vskip 0.5ex}
            \multicolumn{1}{@{}c}{\multirow{15}{*}{RN-50}} & NatADiff\textsuperscript{T} (Aug) && $7.5$ & $7.5$ & $0.2$ & $50$ & $5$ & $500$ & $800$ & $0$ & $700$ & $5$ & $0$ & $15$ \\
            & NatADiff\textsuperscript{T} ($\mu = 0.0$) && $7.5$ & $7.5$ & $0.0$ & $50$ & $5$ & $500$ & $800$ & $0$ & $700$ & $5$ & $0$ & $15$ \\
            & NatADiff\textsuperscript{T} ($\mu = 0.1$) && $7.5$ & $7.5$ & $0.1$ & $50$ & $5$ & $500$ & $800$ & $0$ & $700$ & $5$ & $0$ & $15$ \\
            & NatADiff\textsuperscript{T} ($\mu = 0.2$) && $7.5$ & $7.5$ & $0.2$ & $50$ & $5$ & $500$ & $800$ & $0$ & $700$ & $5$ & $0$ & $15$ \\
            & NatADiff\textsuperscript{T} ($\mu = 0.3$) && $7.5$ & $7.5$ & $0.3$ & $50$ & $5$ & $500$ & $800$ & $0$ & $700$ & $5$ & $0$ & $15$ \\
            & NatADiff\textsuperscript{T} ($\mu = 0.4$) && $7.5$ & $7.5$ & $0.4$ & $50$ & $5$ & $500$ & $800$ & $0$ & $700$ & $5$ & $0$ & $15$ \\
            & NatADiff\textsuperscript{T} ($\mu = 0.5$) && $7.5$ & $7.5$ & $0.5$ & $50$ & $5$ & $500$ & $800$ & $0$ & $700$ & $5$ & $0$ & $15$ \\
            & NatADiff\textsuperscript{T} (Prompt) && $7.5$ & $7.5$ & $0.2$ & $50$ & $5$ & $500$ & $800$ & $0$ & $700$ & $5$ & $0$ & $15$ \\            
            & NatADiff\textsuperscript{T} ($s = 20$) && $7.5$ & $7.5$ & $0.2$ & $20$ & $5$ & $500$ & $800$ & $0$ & $700$ & $5$ & $0$ & $15$ \\
            & NatADiff\textsuperscript{T} ($s = 50$) && $7.5$ & $7.5$ & $0.2$ & $50$ & $5$ & $500$ & $800$ & $0$ & $700$ & $5$ & $0$ & $15$ \\
            & NatADiff\textsuperscript{T} ($s = 100$) && $7.5$ & $7.5$ & $0.2$ & $100$ & $5$ & $500$ & $800$ & $0$ & $700$ & $5$ & $0$ & $15$ \\
            & NatADiff\textsuperscript{T} ($s = 200$) && $7.5$ & $7.5$ & $0.2$ & $200$ & $5$ & $500$ & $800$ & $0$ & $700$ & $5$ & $0$ & $15$ \\
            & AdvClass\textsuperscript{T} && $7.5$ & $0.0$ & $0.0$ & $500$ & $0$ & $0$ & $0$ & $0$ & $200$ & $5$ & $0$ & $250$ \\
            & AdvClass\textsuperscript{U} && $7.5$ & $0.0$ & $0.0$ & $500$ & $0$ & $0$ & $0$ & $0$ & $200$ & $5$ & $0$ & $250$ \\ 
            & NatADiff\textsuperscript{T} && $7.5$ & $7.5$ & $0.2$ & $50$ & $5$ & $500$ & $800$ & $0$ & $700$ & $5$ & $0$ & $15$ \\
            & NatADiff\textsuperscript{U} && $7.5$ & $7.5$ & $0.2$ & $50$ & $5$ & $500$ & $800$ & $0$ & $700$ & $5$ & $0$ & $25$ \\
            \specialrule{1.2pt}{0pt}{0pt}
            \noalign{\vskip 0.2ex}
            \multicolumn{1}{@{}c}{\multirow{8}{*}{Inc-v3}} & NatADiff\textsuperscript{T} ($s = 20$) && $7.5$ & $7.5$ & $0.2$ & $20$ & $5$ & $500$ & $800$ & $0$ & $700$ & $5$ & $0$ & $20$ \\
            & NatADiff\textsuperscript{T} ($s = 50$) && $7.5$ & $7.5$ & $0.2$ & $50$ & $5$ & $500$ & $800$ & $0$ & $700$ & $5$ & $0$ & $20$ \\
            & NatADiff\textsuperscript{T} ($s = 100$) && $7.5$ & $7.5$ & $0.2$ & $100$ & $5$ & $500$ & $800$ & $0$ & $700$ & $5$ & $0$ & $20$ \\
            & NatADiff\textsuperscript{T} ($s = 200$) && $7.5$ & $7.5$ & $0.2$ & $200$ & $5$ & $500$ & $800$ & $0$ & $700$ & $5$ & $0$ & $20$ \\
            & AdvClass\textsuperscript{T} && $7.5$ & $0.0$ & $0.0$ & $500$ & $0$ & $0$ & $0$ & $0$ & $200$ & $5$ & $0$ & $250$ \\
            & AdvClass\textsuperscript{U} && $7.5$ & $0.0$ & $0.0$ & $500$ & $0$ & $0$ & $0$ & $0$ & $200$ & $5$ & $0$ & $250$ \\
            & NatADiff\textsuperscript{T} && $7.5$ & $7.5$ & $0.2$ & $50$ & $5$ & $500$ & $800$ & $0$ & $700$ & $5$ & $0$ & $20$ \\
            & NatADiff\textsuperscript{U} && $7.5$ & $7.5$ & $0.2$ & $50$ & $5$ & $500$ & $800$ & $0$ & $700$ & $5$ & $0$ & $20$ \\
            \specialrule{1.2pt}{0pt}{0pt}
            \noalign{\vskip 0.2ex}
            \multicolumn{1}{@{}c}{\multirow{8}{*}{ViT-H}} & NatADiff\textsuperscript{T} ($s = 20$) && $7.5$ & $7.5$ & $0.2$ & $20$ & $5$ & $500$ & $800$ & $0$ & $700$ & $5$ & $0$ & $50$ \\
            & NatADiff\textsuperscript{T} ($s = 50$) && $7.5$ & $7.5$ & $0.2$ & $50$ & $5$ & $500$ & $800$ & $0$ & $700$ & $5$ & $0$ & $50$ \\
            & NatADiff\textsuperscript{T} ($s = 100$) && $7.5$ & $7.5$ & $0.2$ & $100$ & $5$ & $500$ & $800$ & $0$ & $700$ & $5$ & $0$ & $50$ \\
            & NatADiff\textsuperscript{T} ($s = 200$) && $7.5$ & $7.5$ & $0.2$ & $200$ & $5$ & $500$ & $800$ & $0$ & $700$ & $5$ & $0$ & $50$ \\
            & AdvClass\textsuperscript{T} && $7.5$ & $0.0$ & $0.0$ & $500$ & $0$ & $0$ & $0$ & $0$ & $200$ & $5$ & $0$ & $250$ \\
            & AdvClass\textsuperscript{U} && $7.5$ & $0.0$ & $0.0$ & $500$ & $0$ & $0$ & $0$ & $0$ & $200$ & $5$ & $0$ & $250$ \\
            & NatADiff\textsuperscript{T} && $7.5$ & $7.5$ & $0.2$ & $100$ & $5$ & $500$ & $800$ & $0$ & $700$ & $5$ & $0$ & $50$ \\
            & NatADiff\textsuperscript{U} && $7.5$ & $7.5$ & $0.2$ & $100$ & $5$ & $500$ & $800$ & $0$ & $700$ & $5$ & $0$ & $50$ \\
            \specialrule{1.2pt}{0pt}{0pt}
            \noalign{\vskip 0.2ex}
        \end{tabular}
    }
\end{table}

\begin{table}[h]
    \centering
    \caption{\textbf{Experiment parameters} used in diffusion-based adversarial sampling experiments on the Oxford Pets dataset \citep{OxPets_Parkhi2012}. Parameters refer to those defined in Algorithms~\ref{alg:NatADiff} and \ref{alg:NatADiff Similarity}. Experiments were conducted with ResNet-50 \citep{He2015}, ResNet-152 \citep{He2015}, and ViT \citep{Dosovitskiy2021} surrogate models.}
    \label{tab:Oxford Pets experiment parameters}
    \resizebox{1\textwidth}{!}{
        \begin{tabular}{@{}cc@{\hspace{1em}}c@{}cccccccccccc@{}}
            \specialrule{1.2pt}{0pt}{0pt}
            \noalign{\vskip 0.5ex}
            \multicolumn{1}{@{}c}{\multirow{2}{*}{\shortstack{Surrogate\\Model}}} & \multicolumn{1}{c@{\hspace{1em}}}{\multirow{2}{*}{Attack}} && \multicolumn{12}{c}{NatADiff Parameters} \\
            \noalign{\vskip 0.5ex}
            \cline{3-15}
            \noalign{\vskip 0.5ex}
            &&& $\omega$ & $\rho$ & $\mu$ & $s$ & $R$ & $r_l$ & $r_u$ & $c_l$ & $c_u$ & $S$ & $\delta_\mu$ & $\delta_s$ \\
            \specialrule{1.2pt}{0pt}{0pt}
            \noalign{\vskip 0.5ex}
            \multicolumn{1}{@{}c}{\multirow{4}{*}{RN-50}} & AdvClass\textsuperscript{T} && $7.5$ & $0.0$ & $0.0$ & $500$ & $0$ & $0$ & $0$ & $0$ & $200$ & $5$ & $0$ & $250$ \\
            & AdvClass\textsuperscript{U} && $7.5$ & $0.0$ & $0.0$ & $500$ & $0$ & $0$ & $0$ & $0$ & $200$ & $5$ & $0$ & $250$ \\ 
            & NatADiff\textsuperscript{T} && $7.5$ & $7.5$ & $0.2$ & $50$ & $5$ & $500$ & $800$ & $0$ & $600$ & $5$ & $0$ & $25$ \\
            & NatADiff\textsuperscript{U} && $7.5$ & $7.5$ & $0.2$ & $50$ & $5$ & $500$ & $800$ & $0$ & $600$ & $5$ & $0$ & $25$ \\
            \specialrule{1.2pt}{0pt}{0pt}
            \noalign{\vskip 0.2ex}
            \multicolumn{1}{@{}c}{\multirow{4}{*}{RN-152}} & AdvClass\textsuperscript{T} && $7.5$ & $0.0$ & $0.0$ & $500$ & $0$ & $0$ & $0$ & $0$ & $200$ & $5$ & $0$ & $250$ \\
            & AdvClass\textsuperscript{U} && $7.5$ & $0.0$ & $0.0$ & $500$ & $0$ & $0$ & $0$ & $0$ & $200$ & $5$ & $0$ & $250$ \\
            & NatADiff\textsuperscript{T} && $7.5$ & $7.5$ & $0.2$ & $50$ & $5$ & $500$ & $800$ & $0$ & $600$ & $5$ & $0$ & $25$ \\
            & NatADiff\textsuperscript{U} && $7.5$ & $7.5$ & $0.2$ & $50$ & $5$ & $500$ & $800$ & $0$ & $600$ & $5$ & $0$ & $25$ \\
            \specialrule{1.2pt}{0pt}{0pt}
            \noalign{\vskip 0.2ex}
            \multicolumn{1}{@{}c}{\multirow{4}{*}{ViT}} & AdvClass\textsuperscript{T} && $7.5$ & $0.0$ & $0.0$ & $500$ & $0$ & $0$ & $0$ & $0$ & $200$ & $5$ & $0$ & $250$ \\
            & AdvClass\textsuperscript{U} && $7.5$ & $0.0$ & $0.0$ & $500$ & $0$ & $0$ & $0$ & $0$ & $200$ & $5$ & $0$ & $250$ \\
            & NatADiff\textsuperscript{T} && $7.5$ & $7.5$ & $0.2$ & $50$ & $5$ & $500$ & $800$ & $0$ & $600$ & $5$ & $0$ & $25$ \\
            & NatADiff\textsuperscript{U} && $7.5$ & $7.5$ & $0.2$ & $50$ & $5$ & $500$ & $800$ & $0$ & $600$ & $5$ & $0$ & $25$ \\
            \specialrule{1.2pt}{0pt}{0pt}
            \noalign{\vskip 0.2ex}
        \end{tabular}
    }
\end{table}

\clearpage
\section{Additional NatADiff samples}
We provide NatADiff samples alongside the classification scores of ResNet-50 \citep{He2015}, Inception-v3 \citep{Inceptionv3_Szegedy2016}, ViT-H \citep{Dosovitskiy2021}, and adversarially trained ResNet-50 and Inception victim models \citep{Kurakin2018}. Samples were generated using ResNet-50 \citep{He2015}, Inception-v3 \citep{Inceptionv3_Szegedy2016}, and ViT-H \citep{Dosovitskiy2021} surrogate models.

\subsection{Mixed Samples}
\begin{figure*}[h]
    \centering
    {
    \resizebox{\linewidth}{!}{%
    \tiny
    \begin{tabular}{@{}l@{}l@{\hspace{0.5em}}l@{}l@{\hspace{0.5em}}l@{}l@{}}
        \multicolumn{2}{c}{\large\textbf{ResNet-50}} & \multicolumn{2}{c}{\large\textbf{Inception-v3}} & \multicolumn{2}{c}{\large\textbf{ViT-H}} \\
        \includegraphics[width=0.2\textwidth]{./images/NatADiff_Samples/Res_T_Images/Image_2.png} & \includegraphics[width=0.2\textwidth]{./images/NatADiff_Samples/Res_T_Images/Image_17.png} & \includegraphics[width=0.2\textwidth]{./images/NatADiff_Samples/Inc_T_Images/Image_6.png} & \includegraphics[width=0.2\textwidth]{./images/NatADiff_Samples/Inc_T_Images/Image_17.png} & \includegraphics[width=0.2\textwidth]{./images/NatADiff_Samples/ViT_T_Images/Image_21.png} & \includegraphics[width=0.2\textwidth]{./images/NatADiff_Samples/ViT_T_Images/Image_10.png} \\

        \textcolor{Green}{\textbf{True:}} Goldfish & \textcolor{Green}{\textbf{True:}} Mushroom & \textcolor{Green}{\textbf{True:}} Dining Table & \textcolor{Green}{\textbf{True:}} Thimble & \textcolor{Green}{\textbf{True:}} Cicada & \textcolor{Green}{\textbf{True:}} Polaroid Camera \\
        \textcolor{red}{\textbf{Adv\textsuperscript{T}:}} Titi Monkey & \textcolor{red}{\textbf{Adv\textsuperscript{T}:}} Packet & \textcolor{red}{\textbf{Adv\textsuperscript{T}:}} Platypus & \textcolor{red}{\textbf{Adv\textsuperscript{T}:}} Traffic Light & \textcolor{red}{\textbf{Adv\textsuperscript{T}:}} Bobsled & \textcolor{red}{\textbf{Adv\textsuperscript{T}:}} Sleeping Bag \\
        \textbf{Res:} \textcolor{red}{Titi Monkey $92 \%$} & \textbf{Res:} \textcolor{red}{Packet $80 \%$} & \textbf{Res:} \textcolor{red}{Platypus $40 \%$} & \textbf{Res:} \textcolor{red}{Traffic Light $39 \%$} & \textbf{Res:} \textcolor{red}{Bobsled $16 \%$} & \textbf{Res:} \textcolor{Green}{Polaroid Camera $22 \%$} \\
        \textbf{Inc:} \textcolor{red}{Titi Monkey $72 \%$} & \textbf{Inc:} \textcolor{BurntOrange}{T-Shirt $36 \%$} & \textbf{Inc:} \textcolor{red}{Platypus $100 \%$} & \textbf{Inc:} \textcolor{red}{Traffic Light $100 \%$} & \textbf{Inc:} \textcolor{BurntOrange}{Leaf Beetle $18 \%$} & \textbf{Inc:} \textcolor{Green}{Polaroid Camera $76 \%$} \\
        \textbf{ViT:} \textcolor{Green}{Goldfish $48 \%$} & \textbf{ViT:} \textcolor{BurntOrange}{T-Shirt $87 \%$} & \textbf{ViT:} \textcolor{Green}{Dining Table $100 \%$} & \textbf{ViT:} \textcolor{BurntOrange}{Saltshaker $69 \%$} & \textbf{ViT:} \textcolor{red}{Bobsled $100 \%$} & \textbf{ViT:} \textcolor{red}{Sleeping Bag $100 \%$} \\
        \textbf{AdvRes:} \textcolor{Green}{Goldfish $85 \%$} & \textbf{AdvRes:} \textcolor{BurntOrange}{Bolete $20 \%$} & \textbf{AdvRes:} \textcolor{red}{Platypus $82 \%$} & \textbf{AdvRes:} \textcolor{red}{Traffic Light $83 \%$} & \textbf{AdvRes:} \textcolor{Green}{Cicada $75 \%$} & \textbf{AdvRes:} \textcolor{red}{Sleeping Bag $97 \%$} \\
        \textbf{AdvInc:} \textcolor{red}{Titi Monkey $50 \%$} & \textbf{AdvInc:} \textcolor{red}{Packet $24 \%$} & \textbf{AdvInc:} \textcolor{red}{Platypus $86 \%$} & \textbf{AdvInc:} \textcolor{BurntOrange}{Tennis Ball $46 \%$} & \textbf{AdvInc:} \textcolor{BurntOrange}{Fly $26 \%$} & \textbf{AdvInc:} \textcolor{Green}{Polaroid Camera $70 \%$} \\
        
        \includegraphics[width=0.2\textwidth]{./images/NatADiff_Samples/Res_U_Images/Image_11.png} & \includegraphics[width=0.2\textwidth]{./images/NatADiff_Samples/Res_U_Images/Image_18.png} & \includegraphics[width=0.2\textwidth]{./images/NatADiff_Samples/Inc_U_Images/Image_3.png} & \includegraphics[width=0.2\textwidth]{./images/NatADiff_Samples/Inc_U_Images/Image_5.png} & \includegraphics[width=0.2\textwidth]{./images/NatADiff_Samples/ViT_U_Images/Image_16.png} & \includegraphics[width=0.2\textwidth]{./images/NatADiff_Samples/ViT_U_Images/Image_20.png} \\
        
        \textcolor{Green}{\textbf{True:}} Bonnet & \textcolor{Green}{\textbf{True:}} Garbage Truck & \textcolor{Green}{\textbf{True:}} Hay & \textcolor{Green}{\textbf{True:}} Cheeseburger & \textcolor{Green}{\textbf{True:}} Paintbrush & \textcolor{Green}{\textbf{True:}} Wool \\
        \textcolor{red}{\textbf{Adv\textsuperscript{U}:}} Sombrero & \textcolor{red}{\textbf{Adv\textsuperscript{U}:}} Snowplow & \textcolor{red}{\textbf{Adv\textsuperscript{U}:}} Ox & \textcolor{red}{\textbf{Adv\textsuperscript{U}:}} Banana & \textcolor{red}{\textbf{Adv\textsuperscript{U}:}} Matchstick & \textcolor{red}{\textbf{Adv\textsuperscript{U}:}} Dishrag \\
        \textbf{Res:} \textcolor{red}{Sombrero $97 \%$} & \textbf{Res:} \textcolor{red}{Snowplow $91 \%$} & \textbf{Res:} \textcolor{Green}{Hay $64 \%$} & \textbf{Res:} \textcolor{Green}{Cheeseburger $14 \%$} & \textbf{Res:} \textcolor{Green}{Paintbrush $26 \%$} & \textbf{Res:} \textcolor{Green}{Wool $15 \%$} \\
        \textbf{Inc:} \textcolor{red}{Sombrero $97 \%$} & \textbf{Inc:} \textcolor{red}{Snowplow $57\%$} & \textbf{Inc:} \textcolor{red}{Ox $100 \%$} & \textbf{Inc:} \textcolor{red}{Banana $100 \%$} & \textbf{Inc:} \textcolor{red}{Matchstick $71 \%$} & \textbf{Inc:} \textcolor{BurntOrange}{Chain Armor $26 \%$} \\
        \textbf{ViT:} \textcolor{Green}{Bonnet $92 \%$} & \textbf{ViT:} \textcolor{red}{Snowplow $98 \%$} & \textbf{ViT:} \textcolor{Green}{Hay $100 \%$} & \textbf{ViT:} \textcolor{Green}{Cheeseburger $93 \%$} & \textbf{ViT:} \textcolor{red}{Matchstick $100 \%$} & \textbf{ViT:} \textcolor{red}{Dishrag $100 \%$} \\
        \textbf{AdvRes:} \textcolor{red}{Sombrero $86 \%$} & \textbf{AdvRes:} \textcolor{red}{Snowplow $70 \%$} & \textbf{AdvRes:} \textcolor{red}{Ox $84 \%$} & \textbf{AdvRes:} \textcolor{Green}{Cheeseburger $64 \%$} & \textbf{AdvRes:} \textcolor{Green}{Paintbrush $88 \%$} & \textbf{AdvRes:} \textcolor{red}{Dishrag $65 \%$} \\
        \textbf{AdvInc:} \textcolor{red}{Sombrero $97 \%$} & \textbf{AdvInc:} \textcolor{Green}{Garbage Truck $52 \%$} & \textbf{AdvInc:} \textcolor{red}{Ox $36 \%$} & \textbf{AdvInc:} \textcolor{red}{Banana $100 \%$} & \textbf{AdvInc:} \textcolor{Green}{Paintbrush $94 \%$} & \textbf{AdvInc:} \textcolor{Green}{Wool $13 \%$} \\
    \end{tabular}}
    }
    \caption{Adversarial samples generated using NatADiff with ResNet-50 \citep{He2015}, Inception-v3 \citep{Inceptionv3_Szegedy2016}, and ViT-H \citep{Dosovitskiy2021} surrogate models (see column labels). We report the true class, adversarial target, and classification scores of the surrogate and adversarially trained ResNet-50 and Inception victim models \citep{Kurakin2018}. Superscripts T and U denote random and similarity targeted attacks, respectively.}
    \label{fig:NatADiff mixed extra samples}
\end{figure*}

\clearpage
\newpage
\subsection{ResNet-50 samples}
\begin{figure*}[h]
    \centering
    {
    \resizebox{\linewidth}{!}{%
    \tiny
    \begin{tabular}{@{}l@{}l@{}l@{}l@{}l@{}l@{}}
        \includegraphics[width=0.2\textwidth]{./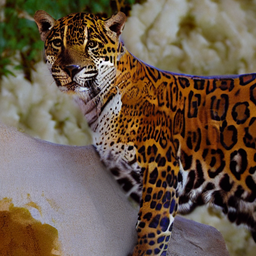} & \includegraphics[width=0.2\textwidth]{./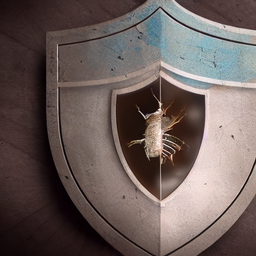} & \includegraphics[width=0.2\textwidth]{./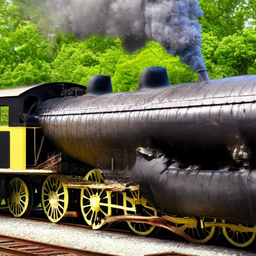} & \includegraphics[width=0.2\textwidth]{./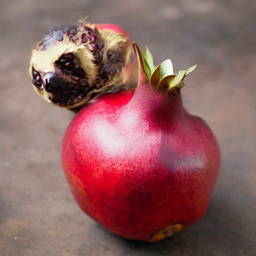} & \includegraphics[width=0.2\textwidth]{./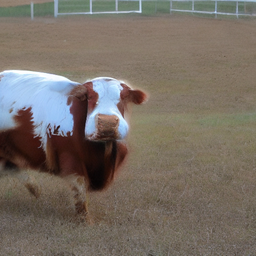} & \includegraphics[width=0.2\textwidth]{./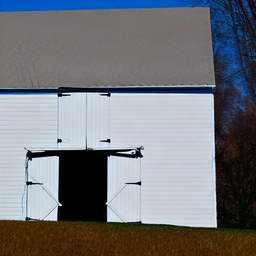}  \\

        \textcolor{Green}{\textbf{True:}} Jaguar & \textcolor{Green}{\textbf{True:}} Shield & \textcolor{Green}{\textbf{True:}} Steam Train & \textcolor{Green}{\textbf{True:}} Pomegranate & \textcolor{Green}{\textbf{True:}} Ox & \textcolor{Green}{\textbf{True:}} Barn  \\
        \textcolor{red}{\textbf{Adv\textsuperscript{T}:}} Mashed Potato & \textcolor{red}{\textbf{Adv\textsuperscript{T}:}} Isopod & \textcolor{red}{\textbf{Adv\textsuperscript{T}:}} Submarine & \textcolor{red}{\textbf{Adv\textsuperscript{T}:}} Three-Toed Sloth & \textcolor{red}{\textbf{Adv\textsuperscript{T}:}} Brittany Spaniel & \textcolor{red}{\textbf{Adv\textsuperscript{T}:}} Ring-Binder  \\
        \textbf{Res:} \textcolor{red}{Mashed Potato $96 \%$} & \textbf{Res:} \textcolor{red}{Isopod $54 \%$} & \textbf{Res:} \textcolor{red}{Submarine $59 \%$} & \textbf{Res:} \textcolor{red}{Three-Toed Sloth $51 \%$} & \textbf{Res:} \textcolor{red}{Brittany Spaniel $57 \%$} & \textbf{Res:} \textcolor{red}{Ring-Binder $79 \%$}  \\
        \textbf{Inc:} \textcolor{Green}{Jaguar $61 \%$} & \textbf{Inc:} \textcolor{red}{Isopod $85 \%$} & \textbf{Inc:} \textcolor{red}{Submarine $98 \%$} & \textbf{Inc:} \textcolor{Green}{Pomegranate $86 \%$} & \textbf{Inc:} \textcolor{red}{Brittany Spaniel $77 \%$} & \textbf{Inc:} \textcolor{red}{Ring-Binder $38 \%$}  \\
        \textbf{ViT:} \textcolor{Green}{Jaguar $97 \%$} & \textbf{ViT:} \textcolor{BurntOrange}{Cockroach $74 \%$} & \textbf{ViT:} \textcolor{Green}{Steam Train $98 \%$} & \textbf{ViT:} \textcolor{red}{Three-Toed Sloth $88 \%$} & \textbf{ViT:} \textcolor{Green}{Ox $93 \%$} & \textbf{ViT:} \textcolor{Green}{Barn $100 \%$}  \\
        \textbf{AdvRes:} \textcolor{red}{Mashed Potato $81 \%$} & \textbf{AdvRes:} \textcolor{Green}{Shield $57 \%$} & \textbf{AdvRes:} \textcolor{Green}{Steam Train $88 \%$} & \textbf{AdvRes:} \textcolor{Green}{Pomegranate $60 \%$} & \textbf{AdvRes:} \textcolor{red}{Brittany Spaniel $90 \%$} & \textbf{AdvRes:} \textcolor{Green}{Barn $24 \%$}  \\
        \textbf{AdvInc:} \textcolor{Green}{Jaguar $76 \%$} & \textbf{AdvInc:} \textcolor{BurntOrange}{Cricket $38 \%$} & \textbf{AdvInc:} \textcolor{red}{Submarine $62 \%$} & \textbf{AdvInc:} \textcolor{Green}{Pomegranate $68 \%$} & \textbf{AdvInc:} \textcolor{Green}{Ox $36 \%$} & \textbf{AdvInc:} \textcolor{Green}{Barn $57 \%$}  \\
        
        \includegraphics[width=0.2\textwidth]{./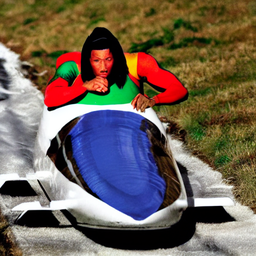} & \includegraphics[width=0.2\textwidth]{./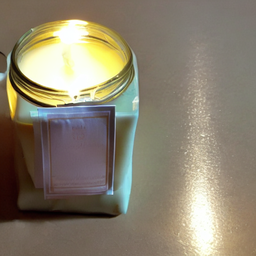} & \includegraphics[width=0.2\textwidth]{./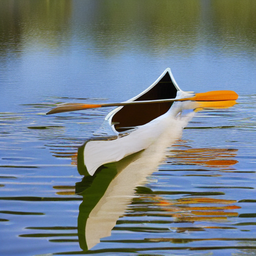} & \includegraphics[width=0.2\textwidth]{./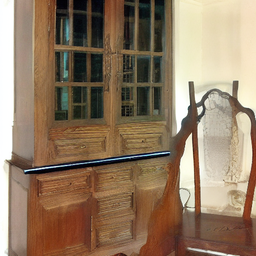} & \includegraphics[width=0.2\textwidth]{./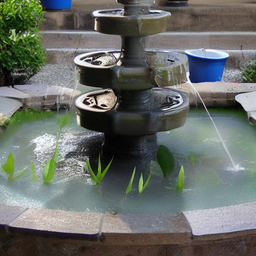} & \includegraphics[width=0.2\textwidth]{./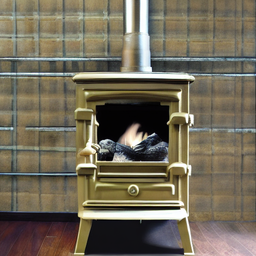}  \\

        \textcolor{Green}{\textbf{True:}} Bobsled & \textcolor{Green}{\textbf{True:}} Candle & \textcolor{Green}{\textbf{True:}} Canoe & \textcolor{Green}{\textbf{True:}} China Cabinet & \textcolor{Green}{\textbf{True:}} Fountain & \textcolor{Green}{\textbf{True:}} Stove  \\
        \textcolor{red}{\textbf{Adv\textsuperscript{T}:}} Russian Wolfhound & \textcolor{red}{\textbf{Adv\textsuperscript{T}:}} Perfume & \textcolor{red}{\textbf{Adv\textsuperscript{T}:}} White Heron & \textcolor{red}{\textbf{Adv\textsuperscript{T}:}} Rifle & \textcolor{red}{\textbf{Adv\textsuperscript{T}:}} Typewriter & \textcolor{red}{\textbf{Adv\textsuperscript{T}:}} Ape  \\
        \textbf{Res:} \textcolor{red}{Russian Wolfhound $99 \%$} & \textbf{Res:} \textcolor{red}{Perfume $96 \%$} & \textbf{Res:} \textcolor{red}{White Heron $44 \%$} & \textbf{Res:} \textcolor{red}{Rifle $98 \%$} & \textbf{Res:} \textcolor{red}{Typewriter $55 \%$} & \textbf{Res:} \textcolor{red}{Ape $73 \%$}  \\
        \textbf{Inc:} \textcolor{red}{Russian Wolfhound $14 \%$} & \textbf{Inc:} \textcolor{red}{Perfume $88 \%$} & \textbf{Inc:} \textcolor{red}{White Heron $15 \%$} & \textbf{Inc:} \textcolor{red}{Rifle $82 \%$} & \textbf{Inc:} \textcolor{BurntOrange}{Dutch Oven $30 \%$} & \textbf{Inc:} \textcolor{BurntOrange}{Hourglass $9 \%$}  \\
        \textbf{ViT:} \textcolor{Green}{Bobsled $100 \%$} & \textbf{ViT:} \textcolor{Green}{Candle $95 \%$} & \textbf{ViT:} \textcolor{Green}{Canoe $84 \%$} & \textbf{ViT:} \textcolor{Green}{China Cabinet $26 \%$} & \textbf{ViT:} \textcolor{Green}{Fountain $99 \%$} & \textbf{ViT:} \textcolor{Green}{Stove $95 \%$}  \\
        \textbf{AdvRes:} \textcolor{Green}{Bobsled $59 \%$} & \textbf{AdvRes:} \textcolor{Green}{Candle $76 \%$} & \textbf{AdvRes:} \textcolor{Green}{Canoe $44 \%$} & \textbf{AdvRes:} \textcolor{Green}{China Cabinet $73 \%$} & \textbf{AdvRes:} \textcolor{BurntOrange}{Dutch Oven $83 \%$} & \textbf{AdvRes:} \textcolor{Green}{Stove $68 \%$}  \\
        \textbf{AdvInc:} \textcolor{BurntOrange}{Sleeping Bag $44 \%$} & \textbf{AdvInc:} \textcolor{red}{Perfume $99 \%$} & \textbf{AdvInc:} \textcolor{red}{White Heron $37 \%$} & \textbf{AdvInc:} \textcolor{red}{Rifle $90 \%$} & \textbf{AdvInc:} \textcolor{BurntOrange}{Dutch Oven $29 \%$} & \textbf{AdvInc:} \textcolor{BurntOrange}{Guillotine $39 \%$}  \\
        
        \includegraphics[width=0.2\textwidth]{./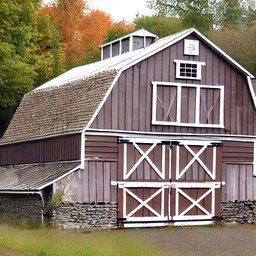} & \includegraphics[width=0.2\textwidth]{./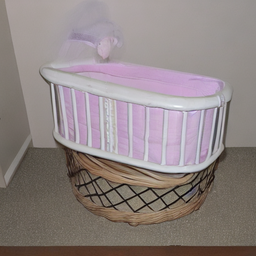} & \includegraphics[width=0.2\textwidth]{./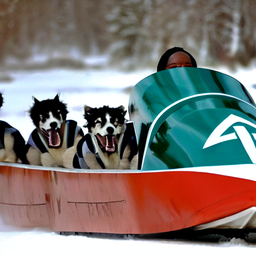} & \includegraphics[width=0.2\textwidth]{./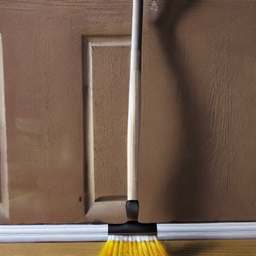} & \includegraphics[width=0.2\textwidth]{./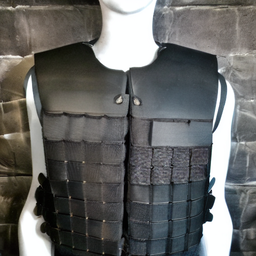} & \includegraphics[width=0.2\textwidth]{./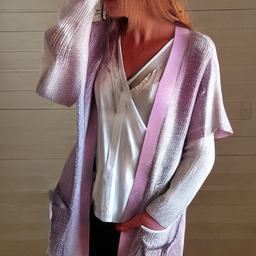}  \\

        \textcolor{Green}{\textbf{True:}} Barn & \textcolor{Green}{\textbf{True:}} Bassinet & \textcolor{Green}{\textbf{True:}} Bobsled & \textcolor{Green}{\textbf{True:}} Broom & \textcolor{Green}{\textbf{True:}} Bulletproof Vest & \textcolor{Green}{\textbf{True:}} Cardigan  \\
        \textcolor{red}{\textbf{Adv\textsuperscript{U}:}} Boathouse & \textcolor{red}{\textbf{Adv\textsuperscript{U}:}} Hamper & \textcolor{red}{\textbf{Adv\textsuperscript{U}:}} Dogsled & \textcolor{red}{\textbf{Adv\textsuperscript{U}:}} Crutch & \textcolor{red}{\textbf{Adv\textsuperscript{U}:}} Cuirass & \textcolor{red}{\textbf{Adv\textsuperscript{U}:}} Kimono  \\
        \textbf{Res:} \textcolor{red}{Boathouse $70 \%$} & \textbf{Res:} \textcolor{red}{Hamper $99 \%$} & \textbf{Res:} \textcolor{red}{Dogsled $59 \%$} & \textbf{Res:} \textcolor{red}{Crutch $34 \%$} & \textbf{Res:} \textcolor{red}{Cuirass $86 \%$} & \textbf{Res:} \textcolor{red}{Kimono $89 \%$}  \\
        \textbf{Inc:} \textcolor{Green}{Barn $90 \%$} & \textbf{Inc:} \textcolor{Green}{Bassinet $38 \%$} & \textbf{Inc:} \textcolor{red}{Dogsled $75 \%$} & \textbf{Inc:} \textcolor{Green}{Broom $39 \%$} & \textbf{Inc:} \textcolor{red}{Cuirass $67 \%$} & \textbf{Inc:} \textcolor{BurntOrange}{Stole $44 \%$}  \\
        \textbf{ViT:} \textcolor{Green}{Barn $96 \%$} & \textbf{ViT:} \textcolor{red}{Hamper $48 \%$} & \textbf{ViT:} \textcolor{red}{Dogsled $78 \%$} & \textbf{ViT:} \textcolor{Green}{Broom $99 \%$} & \textbf{ViT:} \textcolor{Green}{Bulletproof Vest $96 \%$} & \textbf{ViT:} \textcolor{Green}{Cardigan $83 \%$}  \\
        \textbf{AdvRes:} \textcolor{red}{Boathouse $80 \%$} & \textbf{AdvRes:} \textcolor{red}{Hamper $84 \%$} & \textbf{AdvRes:} \textcolor{red}{Dogsled $92 \%$} & \textbf{AdvRes:} \textcolor{red}{Crutch $77 \%$} & \textbf{AdvRes:} \textcolor{red}{Cuirass $57 \%$} & \textbf{AdvRes:} \textcolor{Green}{Cardigan $74 \%$}  \\
        \textbf{AdvInc:} \textcolor{Green}{Barn $65 \%$} & \textbf{AdvInc:} \textcolor{red}{Hamper $84 \%$} & \textbf{AdvInc:} \textcolor{Green}{Bobsled $38 \%$} & \textbf{AdvInc:} \textcolor{red}{Crutch $70 \%$} & \textbf{AdvInc:} \textcolor{red}{Cuirass $44 \%$} & \textbf{AdvInc:} \textcolor{red}{Kimono $44 \%$}  \\

        \includegraphics[width=0.2\textwidth]{./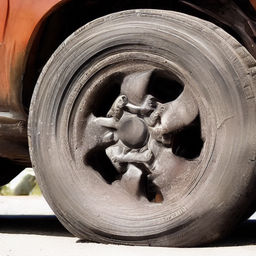} & \includegraphics[width=0.2\textwidth]{./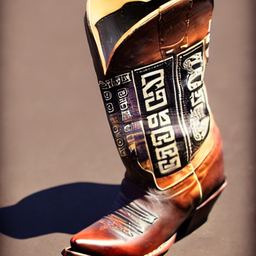} & \includegraphics[width=0.2\textwidth]{./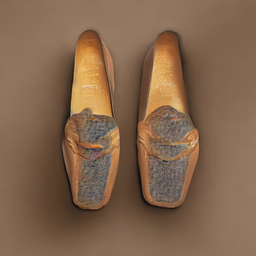} & \includegraphics[width=0.2\textwidth]{./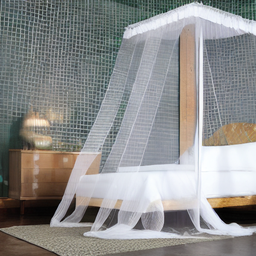} & \includegraphics[width=0.2\textwidth]{./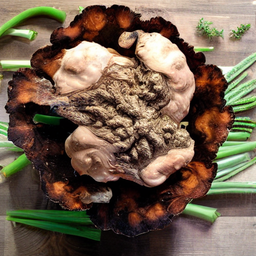} & \includegraphics[width=0.2\textwidth]{./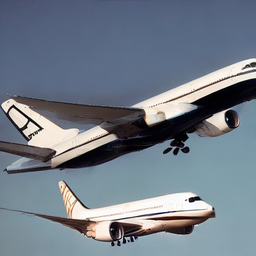}  \\

        \textcolor{Green}{\textbf{True:}} Car Wheel & \textcolor{Green}{\textbf{True:}} Cowboy Boot & \textcolor{Green}{\textbf{True:}} Loafer & \textcolor{Green}{\textbf{True:}} Mosquito Net & \textcolor{Green}{\textbf{True:}} Hen-Of-The-Woods & \textcolor{Green}{\textbf{True:}} Airliner  \\
        \textcolor{red}{\textbf{Adv\textsuperscript{U}:}} Potter'S Wheel & \textcolor{red}{\textbf{Adv\textsuperscript{U}:}} Beer Bottle & \textcolor{red}{\textbf{Adv\textsuperscript{U}:}} French Loaf & \textcolor{red}{\textbf{Adv\textsuperscript{U}:}} Window Screen & \textcolor{red}{\textbf{Adv\textsuperscript{U}:}} Gyromitra & \textcolor{red}{\textbf{Adv\textsuperscript{U}:}} Space Shuttle  \\
        \textbf{Res:} \textcolor{red}{Potter's Wheel $76 \%$} & \textbf{Res:} \textcolor{red}{Beer Bottle $62 \%$} & \textbf{Res:} \textcolor{red}{French Loaf $91 \%$} & \textbf{Res:} \textcolor{red}{Window Screen $66 \%$} & \textbf{Res:} \textcolor{red}{Gyromitra $98 \%$} & \textbf{Res:} \textcolor{red}{Space Shuttle $85 \%$}  \\
        \textbf{Inc:} \textcolor{red}{Potter's Wheel $64 \%$} & \textbf{Inc:} \textcolor{red}{Beer Bottle $44 \%$} & \textbf{Inc:} \textcolor{red}{French Loaf $22 \%$} & \textbf{Inc:} \textcolor{red}{Window Screen $33 \%$} & \textbf{Inc:} \textcolor{BurntOrange}{Ice Cream $64 \%$} & \textbf{Inc:} \textcolor{Green}{Airliner $70 \%$}  \\
        \textbf{ViT:} \textcolor{Green}{Car Wheel $99 \%$} & \textbf{ViT:} \textcolor{Green}{Cowboy Boot $99 \%$} & \textbf{ViT:} \textcolor{Green}{Loafer $81 \%$} & \textbf{ViT:} \textcolor{Green}{Mosquito Net $98 \%$} & \textbf{ViT:} \textcolor{BurntOrange}{Acorn Squash $49 \%$} & \textbf{ViT:} \textcolor{Green}{Airliner $82 \%$}  \\
        \textbf{AdvRes:} \textcolor{Green}{Car Wheel $81 \%$} & \textbf{AdvRes:} \textcolor{Green}{Cowboy Boot $42 \%$} & \textbf{AdvRes:} \textcolor{BurntOrange}{Clog $92 \%$} & \textbf{AdvRes:} \textcolor{Green}{Mosquito Net $81 \%$} & \textbf{AdvRes:} \textcolor{red}{Gyromitra $85 \%$} & \textbf{AdvRes:} \textcolor{red}{Space Shuttle $66 \%$}  \\
        \textbf{AdvInc:} \textcolor{BurntOrange}{Disk Brake $25 \%$} & \textbf{AdvInc:} \textcolor{red}{Beer Bottle $73 \%$} & \textbf{AdvInc:} \textcolor{BurntOrange}{Cowboy Boot $22 \%$} & \textbf{AdvInc:} \textcolor{red}{Window Screen $64 \%$} & \textbf{AdvInc:} \textcolor{BurntOrange}{Ice Cream $46 \%$} & \textbf{AdvInc:} \textcolor{red}{Space Shuttle $39 \%$}  \\
    \end{tabular}}
    }
\caption{Adversarial samples generated by NatADiff with a ResNet-50 \citep{He2015} surrogate model. We report the true class, adversarial target, and classification scores of ResNet-50 \citep{He2015}, Inception-v3 \citep{Inceptionv3_Szegedy2016}, ViT-H \citep{Dosovitskiy2021}, and adversarially trained ResNet-50 and Inception victim models \citep{Kurakin2018}. Superscripts T and U indicate targeted and untargeted (similarity-based) attacks, respectively.}
\label{fig:NatADiff ResNet-50 extra samples}
\end{figure*}

\clearpage
\newpage
\subsection{Inception-v3 samples}
\begin{figure*}[h]
    \centering
    {
    \resizebox{\linewidth}{!}{%
    \tiny
    \begin{tabular}{@{}l@{}l@{}l@{}l@{}l@{}l@{}}
        \includegraphics[width=0.2\textwidth]{./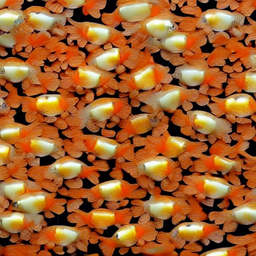} & \includegraphics[width=0.2\textwidth]{./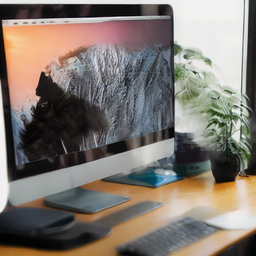} & \includegraphics[width=0.2\textwidth]{./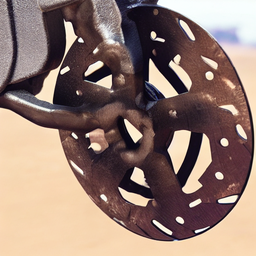} & \includegraphics[width=0.2\textwidth]{./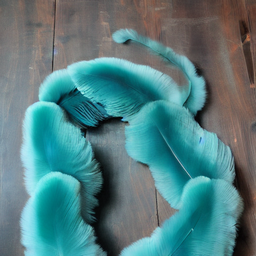} & \includegraphics[width=0.2\textwidth]{./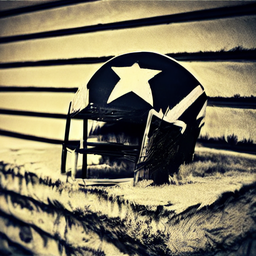} & \includegraphics[width=0.2\textwidth]{./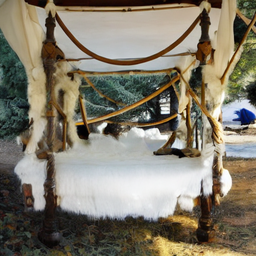}  \\

        \textcolor{Green}{\textbf{True:}} Goldfish & \textcolor{Green}{\textbf{True:}} Desktop Computer & \textcolor{Green}{\textbf{True:}} Disk Brake & \textcolor{Green}{\textbf{True:}} Feather Boa & \textcolor{Green}{\textbf{True:}} Football Helmet & \textcolor{Green}{\textbf{True:}} Four-Poster Bed  \\
        \textcolor{red}{\textbf{Adv\textsuperscript{T}:}} Corn & \textcolor{red}{\textbf{Adv\textsuperscript{T}:}} Persian Cat & \textcolor{red}{\textbf{Adv\textsuperscript{T}:}} Pretzel & \textcolor{red}{\textbf{Adv\textsuperscript{T}:}} Hen-Of-The-Woods & \textcolor{red}{\textbf{Adv\textsuperscript{T}:}} Barn & \textcolor{red}{\textbf{Adv\textsuperscript{T}:}} Dogsled  \\
        \textbf{Res:} \textcolor{red}{Corn $20 \%$} & \textbf{Res:} \textcolor{Green}{Desktop Computer $22 \%$} & \textbf{Res:} \textcolor{red}{Pretzel $48 \%$} & \textbf{Res:} \textcolor{BurntOrange}{Mushroom $12 \%$} & \textbf{Res:} \textcolor{red}{Barn $3 \%$} & \textbf{Res:} \textcolor{red}{Dogsled $42 \%$}  \\
        \textbf{Inc:} \textcolor{red}{Corn $100 \%$} & \textbf{Inc:} \textcolor{red}{Persian Cat $100 \%$} & \textbf{Inc:} \textcolor{red}{Pretzel $100 \%$} & \textbf{Inc:} \textcolor{red}{Hen-Of-The-Woods $80 \%$} & \textbf{Inc:} \textcolor{red}{Barn $100 \%$} & \textbf{Inc:} \textcolor{red}{Dogsled $100 \%$}  \\
        \textbf{ViT:} \textcolor{Green}{Goldfish $97 \%$} & \textbf{ViT:} \textcolor{Green}{Desktop Computer $18 \%$} & \textbf{ViT:} \textcolor{BurntOrange}{Padlock $36 \%$} & \textbf{ViT:} \textcolor{Green}{Feather Boa $95 \%$} & \textbf{ViT:} \textcolor{Green}{Football Helmet $67 \%$} & \textbf{ViT:} \textcolor{Green}{Four-Poster Bed $93 \%$}  \\
        \textbf{AdvRes:} \textcolor{red}{Corn $90 \%$} & \textbf{AdvRes:} \textcolor{red}{Persian Cat $66 \%$} & \textbf{AdvRes:} \textcolor{red}{Pretzel $93 \%$} & \textbf{AdvRes:} \textcolor{BurntOrange}{Jellyfish $13 \%$} & \textbf{AdvRes:} \textcolor{red}{Barn $91 \%$} & \textbf{AdvRes:} \textcolor{red}{Dogsled $97 \%$}  \\
        \textbf{AdvInc:} \textcolor{red}{Corn $70 \%$} & \textbf{AdvInc:} \textcolor{red}{Persian Cat $21 \%$} & \textbf{AdvInc:} \textcolor{red}{Pretzel $94 \%$} & \textbf{AdvInc:} \textcolor{BurntOrange}{Mushroom $45 \%$} & \textbf{AdvInc:} \textcolor{Green}{Football Helmet $25 \%$} & \textbf{AdvInc:} \textcolor{red}{Dogsled $88 \%$}  \\

        \includegraphics[width=0.2\textwidth]{./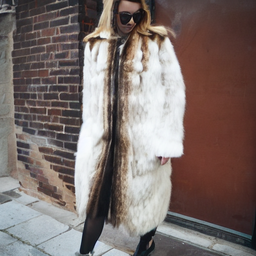} & \includegraphics[width=0.2\textwidth]{./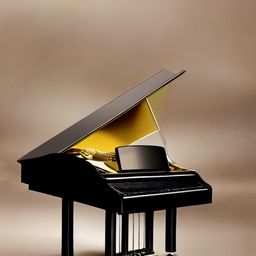} & \includegraphics[width=0.2\textwidth]{./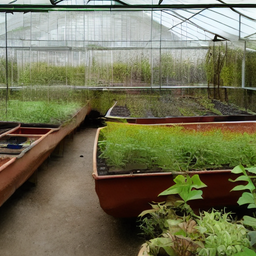} & \includegraphics[width=0.2\textwidth]{./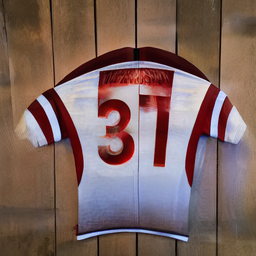} & \includegraphics[width=0.2\textwidth]{./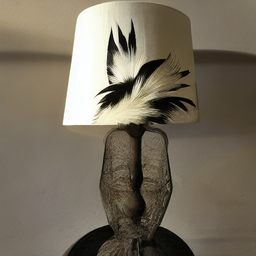} & \includegraphics[width=0.2\textwidth]{./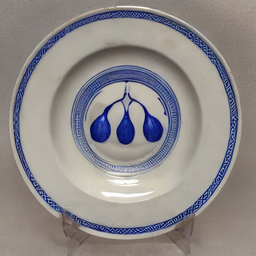}  \\

        \textcolor{Green}{\textbf{True:}} Fur Coat & \textcolor{Green}{\textbf{True:}} Grand Piano & \textcolor{Green}{\textbf{True:}} Greenhouse & \textcolor{Green}{\textbf{True:}} Jersey, T-Shirt, Tee Shirt & \textcolor{Green}{\textbf{True:}} Lampshade & \textcolor{Green}{\textbf{True:}} Plate  \\
        \textcolor{red}{\textbf{Adv\textsuperscript{T}:}} Barracouta & \textcolor{red}{\textbf{Adv\textsuperscript{T}:}} Notebook & \textcolor{red}{\textbf{Adv\textsuperscript{T}:}} Canoe & \textcolor{red}{\textbf{Adv\textsuperscript{T}:}} Planetarium & \textcolor{red}{\textbf{Adv\textsuperscript{T}:}} Skunk & \textcolor{red}{\textbf{Adv\textsuperscript{T}:}} Water Jug  \\
        \textbf{Res:} \textcolor{Green}{Fur Coat $45 \%$} & \textbf{Res:} \textcolor{Green}{Grand Piano $22 \%$} & \textbf{Res:} \textcolor{red}{Canoe $31 \%$} & \textbf{Res:} \textcolor{BurntOrange}{Vestment $9 \%$} & \textbf{Res:} \textcolor{Green}{Lampshade $16 \%$} & \textbf{Res:} \textcolor{BurntOrange}{Soup Bowl $20 \%$}  \\
        \textbf{Inc:} \textcolor{red}{Barracouta $100 \%$} & \textbf{Inc:} \textcolor{red}{Notebook $99 \%$} & \textbf{Inc:} \textcolor{red}{Canoe $100 \%$} & \textbf{Inc:} \textcolor{red}{Planetarium $100 \%$} & \textbf{Inc:} \textcolor{red}{Skunk $99 \%$} & \textbf{Inc:} \textcolor{red}{Water Jug $82 \%$}  \\
        \textbf{ViT:} \textcolor{Green}{Fur Coat $100 \%$} & \textbf{ViT:} \textcolor{Green}{Grand Piano $86 \%$} & \textbf{ViT:} \textcolor{Green}{Greenhouse $99 \%$} & \textbf{ViT:} \textcolor{BurntOrange}{Football Helmet $81 \%$} & \textbf{ViT:} \textcolor{Green}{Lampshade $88 \%$} & \textbf{ViT:} \textcolor{BurntOrange}{Fig $62 \%$}  \\
        \textbf{AdvRes:} \textcolor{red}{Barracouta $89 \%$} & \textbf{AdvRes:} \textcolor{red}{Notebook $83 \%$} & \textbf{AdvRes:} \textcolor{Green}{Greenhouse $79 \%$} & \textbf{AdvRes:} \textcolor{BurntOrange}{Cinema $54 \%$} & \textbf{AdvRes:} \textcolor{red}{Skunk $86 \%$} & \textbf{AdvRes:} \textcolor{Green}{Plate $37 \%$}  \\
        \textbf{AdvInc:} \textcolor{red}{Barracouta $74 \%$} & \textbf{AdvInc:} \textcolor{red}{Notebook $80 \%$} & \textbf{AdvInc:} \textcolor{red}{Canoe $95 \%$} & \textbf{AdvInc:} \textcolor{BurntOrange}{Mailbox $23 \%$} & \textbf{AdvInc:} \textcolor{red}{Skunk $33 \%$} & \textbf{AdvInc:} \textcolor{BurntOrange}{Potter'S Wheel $16 \%$}  \\

        \includegraphics[width=0.2\textwidth]{./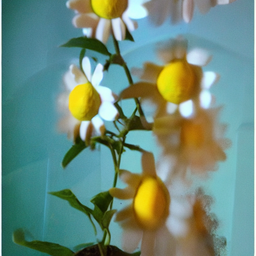} & \includegraphics[width=0.2\textwidth]{./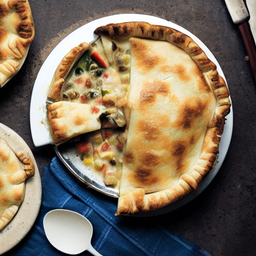} & \includegraphics[width=0.2\textwidth]{./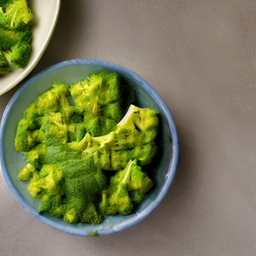} & \includegraphics[width=0.2\textwidth]{./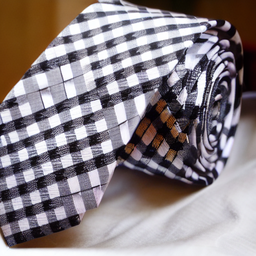} & \includegraphics[width=0.2\textwidth]{./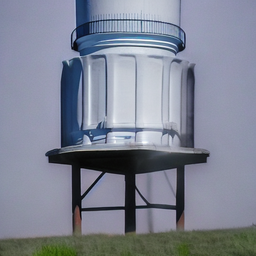} & \includegraphics[width=0.2\textwidth]{./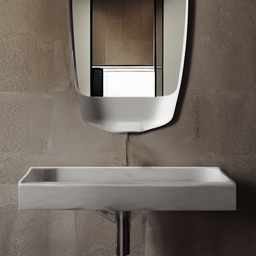}  \\

        \textcolor{Green}{\textbf{True:}} Daisy & \textcolor{Green}{\textbf{True:}} Potpie & \textcolor{Green}{\textbf{True:}} Broccoli & \textcolor{Green}{\textbf{True:}} Windsor Tie & \textcolor{Green}{\textbf{True:}} Water Tower & \textcolor{Green}{\textbf{True:}} Washbasin  \\
        \textcolor{red}{\textbf{Adv\textsuperscript{U}:}} Lemon & \textcolor{red}{\textbf{Adv\textsuperscript{U}:}} Pizza & \textcolor{red}{\textbf{Adv\textsuperscript{U}:}} Guacamole & \textcolor{red}{\textbf{Adv\textsuperscript{U}:}} Bow Tie & \textcolor{red}{\textbf{Adv\textsuperscript{U}:}} Water Jug & \textcolor{red}{\textbf{Adv\textsuperscript{U}:}} Soap Dispenser  \\
        \textbf{Res:} \textcolor{red}{Lemon $32 \%$} & \textbf{Res:} \textcolor{red}{Pizza $57 \%$} & \textbf{Res:} \textcolor{Green}{Broccoli $58 \%$} & \textbf{Res:} \textcolor{red}{Bow Tie $42 \%$} & \textbf{Res:} \textcolor{red}{Water Jug $8 \%$} & \textbf{Res:} \textcolor{red}{Soap Dispenser $75 \%$}  \\
        \textbf{Inc:} \textcolor{red}{Lemon $100 \%$} & \textbf{Inc:} \textcolor{red}{Pizza $100 \%$} & \textbf{Inc:} \textcolor{red}{Guacamole $100 \%$} & \textbf{Inc:} \textcolor{red}{Bow Tie $100 \%$} & \textbf{Inc:} \textcolor{red}{Water Jug $93 \%$} & \textbf{Inc:} \textcolor{red}{Soap Dispenser $100 \%$}  \\
        \textbf{ViT:} \textcolor{Green}{Daisy $59 \%$} & \textbf{ViT:} \textcolor{Green}{Potpie $100 \%$} & \textbf{ViT:} \textcolor{Green}{Broccoli $80 \%$} & \textbf{ViT:} \textcolor{Green}{Windsor Tie $99 \%$} & \textbf{ViT:} \textcolor{Green}{Water Tower $100 \%$} & \textbf{ViT:} \textcolor{Green}{Washbasin $97 \%$}  \\
        \textbf{AdvRes:} \textcolor{red}{Lemon $91 \%$} & \textbf{AdvRes:} \textcolor{red}{Pizza $86 \%$} & \textbf{AdvRes:} \textcolor{red}{Guacamole $88 \%$} & \textbf{AdvRes:} \textcolor{red}{Bow Tie $96 \%$} & \textbf{AdvRes:} \textcolor{red}{Water Jug $40 \%$} & \textbf{AdvRes:} \textcolor{red}{Soap Dispenser $100 \%$}  \\
        \textbf{AdvInc:} \textcolor{red}{Lemon $80 \%$} & \textbf{AdvInc:} \textcolor{Green}{Potpie $73 \%$} & \textbf{AdvInc:} \textcolor{red}{Guacamole $98 \%$} & \textbf{AdvInc:} \textcolor{red}{Bow Tie $92 \%$} & \textbf{AdvInc:} \textcolor{red}{Water Jug $19 \%$} & \textbf{AdvInc:} \textcolor{red}{Soap Dispenser $100 \%$}  \\

        \includegraphics[width=0.2\textwidth]{./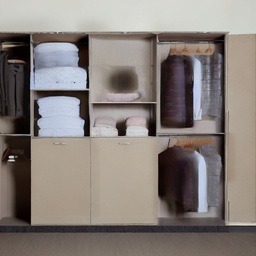} & \includegraphics[width=0.2\textwidth]{./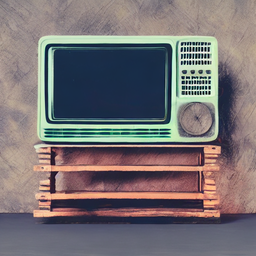} & \includegraphics[width=0.2\textwidth]{./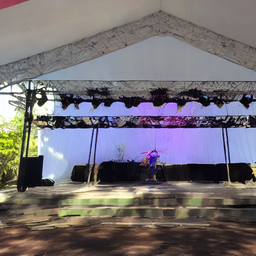} & \includegraphics[width=0.2\textwidth]{./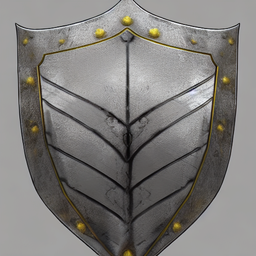} & \includegraphics[width=0.2\textwidth]{./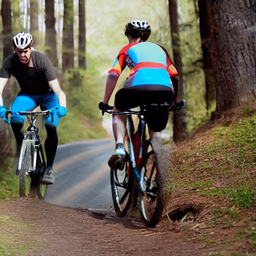} & \includegraphics[width=0.2\textwidth]{./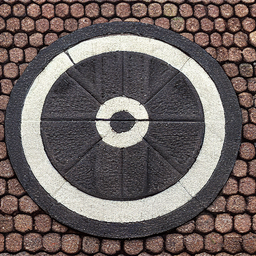}  \\

        \textcolor{Green}{\textbf{True:}} Wardrobe & \textcolor{Green}{\textbf{True:}} Television & \textcolor{Green}{\textbf{True:}} Stage & \textcolor{Green}{\textbf{True:}} Shield & \textcolor{Green}{\textbf{True:}} Mountain Bike & \textcolor{Green}{\textbf{True:}} Manhole Cover  \\
        \textcolor{red}{\textbf{Adv\textsuperscript{U}:}} Medicine Chest & \textcolor{red}{\textbf{Adv\textsuperscript{U}:}} Radio, Wireless & \textcolor{red}{\textbf{Adv\textsuperscript{U}:}} Swing & \textcolor{red}{\textbf{Adv\textsuperscript{U}:}} Breastplate & \textcolor{red}{\textbf{Adv\textsuperscript{U}:}} Tandem & \textcolor{red}{\textbf{Adv\textsuperscript{U}:}} Doormat  \\
        \textbf{Res:} \textcolor{red}{Medicine Chest $40 \%$} & \textbf{Res:} \textcolor{red}{Radio, Wireless $21 \%$} & \textbf{Res:} \textcolor{red}{Swing $59 \%$} & \textbf{Res:} \textcolor{Green}{Shield $29 \%$} & \textbf{Res:} \textcolor{Green}{Mountain Bike $19 \%$} & \textbf{Res:} \textcolor{BurntOrange}{Labyrinth $28 \%$}  \\
        \textbf{Inc:} \textcolor{red}{Medicine Chest $100 \%$} & \textbf{Inc:} \textcolor{red}{Radio, Wireless $100 \%$} & \textbf{Inc:} \textcolor{red}{Swing $100 \%$} & \textbf{Inc:} \textcolor{red}{Breastplate $100 \%$} & \textbf{Inc:} \textcolor{red}{Tandem $98 \%$} & \textbf{Inc:} \textcolor{red}{Doormat $100 \%$}  \\
        \textbf{ViT:} \textcolor{Green}{Wardrobe $86 \%$} & \textbf{ViT:} \textcolor{Green}{Television $84 \%$} & \textbf{ViT:} \textcolor{Green}{Stage $54 \%$} & \textbf{ViT:} \textcolor{red}{Breastplate $56 \%$} & \textbf{ViT:} \textcolor{red}{Tandem $49 \%$} & \textbf{ViT:} \textcolor{BurntOrange}{Sundial $33 \%$}  \\
        \textbf{AdvRes:} \textcolor{red}{Medicine Chest $89 \%$} & \textbf{AdvRes:} \textcolor{red}{Radio, Wireless $95 \%$} & \textbf{AdvRes:} \textcolor{red}{Swing $99 \%$} & \textbf{AdvRes:} \textcolor{Green}{Shield $32 \%$} & \textbf{AdvRes:} \textcolor{red}{Tandem $67 \%$} & \textbf{AdvRes:} \textcolor{red}{Doormat $84 \%$}  \\
        \textbf{AdvInc:} \textcolor{red}{Medicine Chest $87 \%$} & \textbf{AdvInc:} \textcolor{red}{Radio, Wireless $98 \%$} & \textbf{AdvInc:} \textcolor{Green}{Stage $18 \%$} & \textbf{AdvInc:} \textcolor{red}{Breastplate $80 \%$} & \textbf{AdvInc:} \textcolor{red}{Tandem $50 \%$} & \textbf{AdvInc:} \textcolor{Green}{Manhole Cover $31 \%$}  \\
    \end{tabular}}
    }
\caption{Adversarial samples generated by NatADiff with an Inception-v3 \citep{Inceptionv3_Szegedy2016} surrogate model. We report the true class, adversarial target, and classification scores of ResNet-50 \citep{He2015}, Inception-v3 \citep{Inceptionv3_Szegedy2016}, ViT-H \citep{Dosovitskiy2021}, and adversarially trained ResNet-50 and Inception victim models \citep{Kurakin2018}. Superscripts T and U indicate targeted and untargeted (similarity-based) attacks, respectively.}
\label{fig:NatADiff Inception-v3 extra samples}
\end{figure*}

\clearpage
\newpage
\subsection{ViT samples}
\begin{figure*}[h]
    \centering
    {
    \resizebox{\linewidth}{!}{%
    \tiny
    \begin{tabular}{@{}l@{}l@{}l@{}l@{}l@{}l@{}}
        \includegraphics[width=0.2\textwidth]{./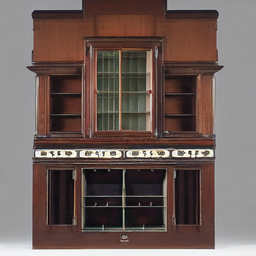} & \includegraphics[width=0.2\textwidth]{./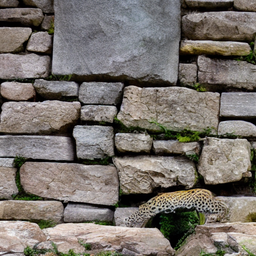} & \includegraphics[width=0.2\textwidth]{./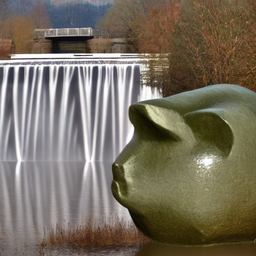} & \includegraphics[width=0.2\textwidth]{./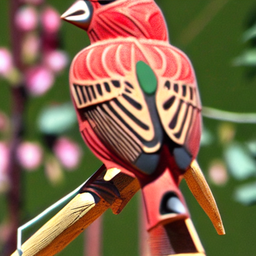} & \includegraphics[width=0.2\textwidth]{./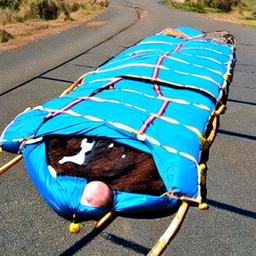} & \includegraphics[width=0.2\textwidth]{./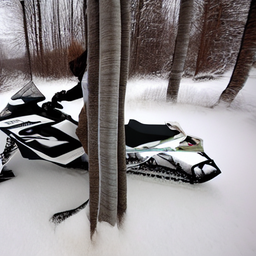}  \\

        \textcolor{Green}{\textbf{True:}} Chiffonier & \textcolor{Green}{\textbf{True:}} Stone Wall & \textcolor{Green}{\textbf{True:}} Dam & \textcolor{Green}{\textbf{True:}} House Finch & \textcolor{Green}{\textbf{True:}} Sleeping Bag & \textcolor{Green}{\textbf{True:}} Snowmobile  \\
        \textcolor{red}{\textbf{Adv\textsuperscript{T}:}} Tobacco Shop & \textcolor{red}{\textbf{Adv\textsuperscript{T}:}} Jaguar & \textcolor{red}{\textbf{Adv\textsuperscript{T}:}} Piggy Bank & \textcolor{red}{\textbf{Adv\textsuperscript{T}:}} Totem Pole & \textcolor{red}{\textbf{Adv\textsuperscript{T}:}} Oxcart & \textcolor{red}{\textbf{Adv\textsuperscript{T}:}} Shower Curtain  \\
        \textbf{Res:} \textcolor{red}{Tobacco Shop $15 \%$} & \textbf{Res:} \textcolor{Green}{Stone Wall $18 \%$} & \textbf{Res:} \textcolor{red}{Piggy Bank $18 \%$} & \textbf{Res:} \textcolor{red}{Totem Pole $4 \%$} & \textbf{Res:} \textcolor{BurntOrange}{Stretcher $29 \%$} & \textbf{Res:} \textcolor{BurntOrange}{Four-Poster $8 \%$}  \\
        \textbf{Inc:} \textcolor{BurntOrange}{Window Shade $6 \%$} & \textbf{Inc:} \textcolor{BurntOrange}{Leopard $46 \%$} & \textbf{Inc:} \textcolor{BurntOrange}{Fountain $68 \%$} & \textbf{Inc:} \textcolor{BurntOrange}{Maraca $14 \%$} & \textbf{Inc:} \textcolor{BurntOrange}{Park Bench $21 \%$} & \textbf{Inc:} \textcolor{BurntOrange}{Ski $54 \%$}  \\
        \textbf{ViT:} \textcolor{red}{Tobacco Shop $99 \%$} & \textbf{ViT:} \textcolor{red}{Jaguar $99 \%$} & \textbf{ViT:} \textcolor{red}{Piggy Bank $97 \%$} & \textbf{ViT:} \textcolor{red}{Totem Pole $99 \%$} & \textbf{ViT:} \textcolor{red}{Oxcart $98 \%$} & \textbf{ViT:} \textcolor{red}{Shower Curtain $89 \%$}  \\
        \textbf{AdvRes:} \textcolor{BurntOrange}{Library $47 \%$} & \textbf{AdvRes:} \textcolor{BurntOrange}{Leopard $69 \%$} & \textbf{AdvRes:} \textcolor{red}{Piggy Bank $77 \%$} & \textbf{AdvRes:} \textcolor{BurntOrange}{Toyshop $24 \%$} & \textbf{AdvRes:} \textcolor{BurntOrange}{Stretcher $37 \%$} & \textbf{AdvRes:} \textcolor{BurntOrange}{Dam $23 \%$}  \\
        \textbf{AdvInc:} \textcolor{BurntOrange}{Bookshop $20 \%$} & \textbf{AdvInc:} \textcolor{Green}{Stone Wall $54 \%$} & \textbf{AdvInc:} \textcolor{red}{Piggy Bank $84 \%$} & \textbf{AdvInc:} \textcolor{BurntOrange}{Jay $6 \%$} & \textbf{AdvInc:} \textcolor{Green}{Sleeping Bag $26 \%$} & \textbf{AdvInc:} \textcolor{Green}{Snowmobile $26 \%$}  \\

        \includegraphics[width=0.2\textwidth]{./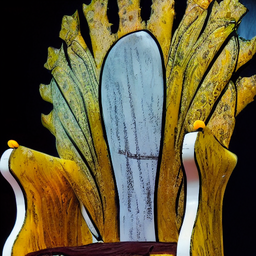} & \includegraphics[width=0.2\textwidth]{./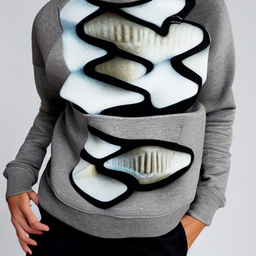} & \includegraphics[width=0.2\textwidth]{./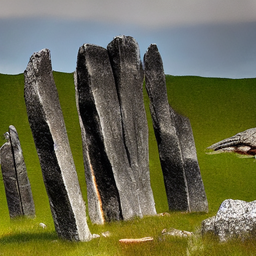} & \includegraphics[width=0.2\textwidth]{./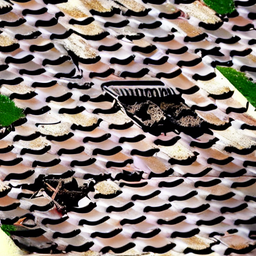} & \includegraphics[width=0.2\textwidth]{./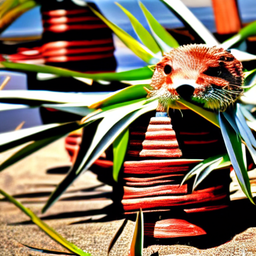} & \includegraphics[width=0.2\textwidth]{./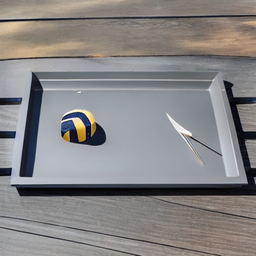}  \\

        \textcolor{Green}{\textbf{True:}} Throne & \textcolor{Green}{\textbf{True:}} Sweatshirt & \textcolor{Green}{\textbf{True:}} Megalith & \textcolor{Green}{\textbf{True:}} Tile Roof & \textcolor{Green}{\textbf{True:}} Otter & \textcolor{Green}{\textbf{True:}} Tray  \\
        \textcolor{red}{\textbf{Adv\textsuperscript{T}:}} Flatworm & \textcolor{red}{\textbf{Adv\textsuperscript{T}:}} Neck Brace & \textcolor{red}{\textbf{Adv\textsuperscript{T}:}} Quail & \textcolor{red}{\textbf{Adv\textsuperscript{T}:}} Agaric & \textcolor{red}{\textbf{Adv\textsuperscript{T}:}} Pineapple & \textcolor{red}{\textbf{Adv\textsuperscript{T}:}} Volleyball  \\
        \textbf{Res:} \textcolor{BurntOrange}{Boat Paddle $10 \%$} & \textbf{Res:} \textcolor{BurntOrange}{Ski Mask $32 \%$} & \textbf{Res:} \textcolor{Green}{Megalith $9 \%$} & \textbf{Res:} \textcolor{BurntOrange}{Apron $29 \%$} & \textbf{Res:} \textcolor{red}{Pineapple $2 \%$} & \textbf{Res:} \textcolor{Green}{Tray $17 \%$}  \\
        \textbf{Inc:} \textcolor{BurntOrange}{Clog $35 \%$} & \textbf{Inc:} \textcolor{BurntOrange}{Spatula $25 \%$} & \textbf{Inc:} \textcolor{BurntOrange}{Worm Fence $13 \%$} & \textbf{Inc:} \textcolor{BurntOrange}{Apron $37 \%$} & \textbf{Inc:} \textcolor{BurntOrange}{Pinwheel $12 \%$} & \textbf{Inc:} \textcolor{red}{Volleyball $85 \%$}  \\
        \textbf{ViT:} \textcolor{red}{Flatworm $78 \%$} & \textbf{ViT:} \textcolor{red}{Neck Brace $100 \%$} & \textbf{ViT:} \textcolor{red}{Quail $75 \%$} & \textbf{ViT:} \textcolor{red}{Agaric $60 \%$} & \textbf{ViT:} \textcolor{red}{Pineapple $100 \%$} & \textbf{ViT:} \textcolor{red}{Volleyball $85 \%$}  \\
        \textbf{AdvRes:} \textcolor{BurntOrange}{Wooden Spoon $27 \%$} & \textbf{AdvRes:} \textcolor{BurntOrange}{Ski Mask $38 \%$} & \textbf{AdvRes:} \textcolor{BurntOrange}{Chickadee $31 \%$} & \textbf{AdvRes:} \textcolor{Green}{Tile Roof $93 \%$} & \textbf{AdvRes:} \textcolor{Green}{Otter $57 \%$} & \textbf{AdvRes:} \textcolor{red}{Volleyball $96 \%$}  \\
        \textbf{AdvInc:} \textcolor{BurntOrange}{Wooden Spoon $71 \%$} & \textbf{AdvInc:} \textcolor{BurntOrange}{Wool $16 \%$} & \textbf{AdvInc:} \textcolor{BurntOrange}{Worm Fence $31 \%$} & \textbf{AdvInc:} \textcolor{Green}{Tile Roof $43 \%$} & \textbf{AdvInc:} \textcolor{Green}{Otter $6 \%$} & \textbf{AdvInc:} \textcolor{Green}{Tray $36 \%$}  \\

        \includegraphics[width=0.2\textwidth]{./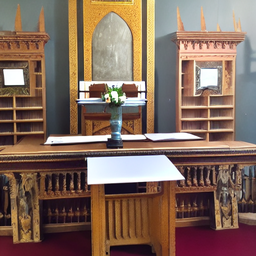} & \includegraphics[width=0.2\textwidth]{./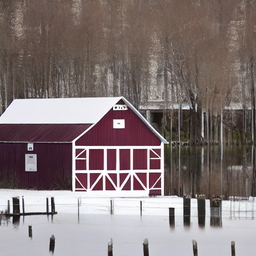} & \includegraphics[width=0.2\textwidth]{./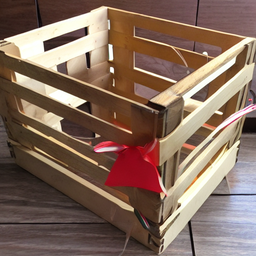} & \includegraphics[width=0.2\textwidth]{./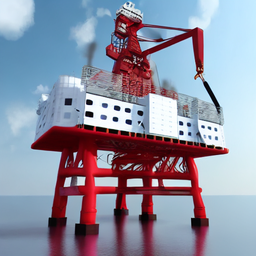} & \includegraphics[width=0.2\textwidth]{./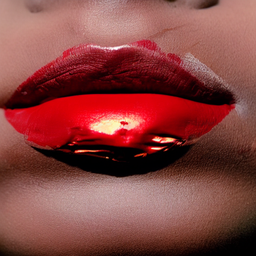} & \includegraphics[width=0.2\textwidth]{./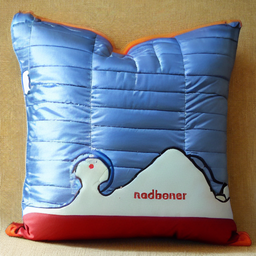}  \\

        \textcolor{Green}{\textbf{True:}} Altar & \textcolor{Green}{\textbf{True:}} Barn & \textcolor{Green}{\textbf{True:}} Crate & \textcolor{Green}{\textbf{True:}} Drilling Platform & \textcolor{Green}{\textbf{True:}} Lipstick & \textcolor{Green}{\textbf{True:}} Pillow  \\
        \textcolor{red}{\textbf{Adv\textsuperscript{U}:}} Desk & \textcolor{red}{\textbf{Adv\textsuperscript{U}:}} Boathouse & \textcolor{red}{\textbf{Adv\textsuperscript{U}:}} Hamper & \textcolor{red}{\textbf{Adv\textsuperscript{U}:}} Container Ship & \textcolor{red}{\textbf{Adv\textsuperscript{U}:}} Red Wine & \textcolor{red}{\textbf{Adv\textsuperscript{U}:}} Sleeping Bag  \\
        \textbf{Res:} \textcolor{red}{Desk $8 \%$} & \textbf{Res:} \textcolor{red}{Boathouse $40 \%$} & \textbf{Res:} \textcolor{Green}{Crate $48 \%$} & \textbf{Res:} \textcolor{BurntOrange}{Crane $20 \%$} & \textbf{Res:} \textcolor{Green}{Lipstick $69 \%$} & \textbf{Res:} \textcolor{red}{Sleeping Bag $23 \%$}  \\
        \textbf{Inc:} \textcolor{BurntOrange}{Throne $77 \%$} & \textbf{Inc:} \textcolor{Green}{Barn $82 \%$} & \textbf{Inc:} \textcolor{Green}{Crate $71 \%$} & \textbf{Inc:} \textcolor{Green}{Drilling Platform $30 \%$} & \textbf{Inc:} \textcolor{BurntOrange}{Mask $33 \%$} & \textbf{Inc:} \textcolor{red}{Sleeping Bag $30 \%$}  \\
        \textbf{ViT:} \textcolor{red}{Desk $98 \%$} & \textbf{ViT:} \textcolor{red}{Boathouse $100 \%$} & \textbf{ViT:} \textcolor{red}{Hamper $91 \%$} & \textbf{ViT:} \textcolor{red}{Container Ship $99 \%$} & \textbf{ViT:} \textcolor{red}{Red Wine $93 \%$} & \textbf{ViT:} \textcolor{red}{Sleeping Bag $100 \%$}  \\
        \textbf{AdvRes:} \textcolor{BurntOrange}{Throne $84 \%$} & \textbf{AdvRes:} \textcolor{Green}{Barn $71 \%$} & \textbf{AdvRes:} \textcolor{BurntOrange}{Cradle $35 \%$} & \textbf{AdvRes:} \textcolor{BurntOrange}{Crane $65 \%$} & \textbf{AdvRes:} \textcolor{BurntOrange}{Conch $25 \%$} & \textbf{AdvRes:} \textcolor{Green}{Pillow $46 \%$}  \\
        \textbf{AdvInc:} \textcolor{Green}{Altar $36 \%$} & \textbf{AdvInc:} \textcolor{Green}{Barn $83 \%$} & \textbf{AdvInc:} \textcolor{BurntOrange}{Cradle $7 \%$} & \textbf{AdvInc:} \textcolor{Green}{Drilling Platform $54 \%$} & \textbf{AdvInc:} \textcolor{red}{Red Wine $20 \%$} & \textbf{AdvInc:} \textcolor{red}{Sleeping Bag $32 \%$}  \\

        \includegraphics[width=0.2\textwidth]{./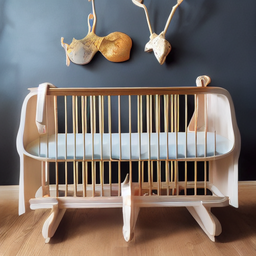} & \includegraphics[width=0.2\textwidth]{./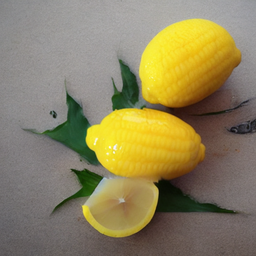} & \includegraphics[width=0.2\textwidth]{./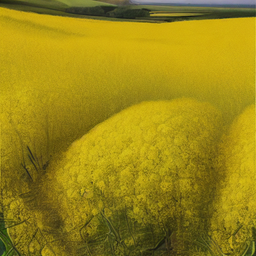} & \includegraphics[width=0.2\textwidth]{./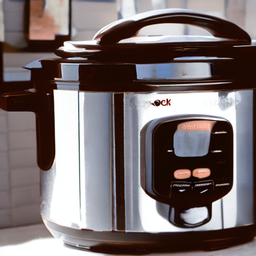} & \includegraphics[width=0.2\textwidth]{./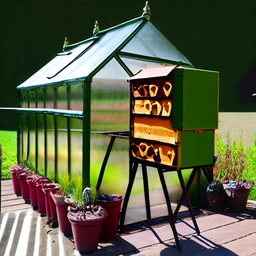} & \includegraphics[width=0.2\textwidth]{./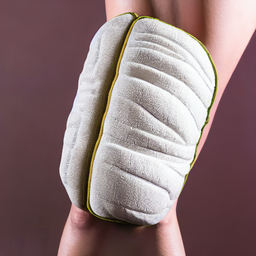}  \\

        \textcolor{Green}{\textbf{True:}} Crib & \textcolor{Green}{\textbf{True:}} Lemon & \textcolor{Green}{\textbf{True:}} Rapeseed & \textcolor{Green}{\textbf{True:}} Crock Pot & \textcolor{Green}{\textbf{True:}} Greenhouse & \textcolor{Green}{\textbf{True:}} Knee Pad  \\
        \textcolor{red}{\textbf{Adv\textsuperscript{U}:}} Cradle & \textcolor{red}{\textbf{Adv\textsuperscript{U}:}} Corn & \textcolor{red}{\textbf{Adv\textsuperscript{U}:}} Lemon & \textcolor{red}{\textbf{Adv\textsuperscript{U}:}} Coffeepot & \textcolor{red}{\textbf{Adv\textsuperscript{U}:}} Apiary & \textcolor{red}{\textbf{Adv\textsuperscript{U}:}} Pillow  \\
        \textbf{Res:} \textcolor{BurntOrange}{Plate Rack $24 \%$} & \textbf{Res:} \textcolor{Green}{Lemon $43 \%$} & \textbf{Res:} \textcolor{Green}{Rapeseed $43 \%$} & \textbf{Res:} \textcolor{red}{Coffeepot $21 \%$} & \textbf{Res:} \textcolor{BurntOrange}{Mailbox $10 \%$} & \textbf{Res:} \textcolor{Green}{Knee Pad $19 \%$}  \\
        \textbf{Inc:} \textcolor{BurntOrange}{Plate Rack $31 \%$} & \textbf{Inc:} \textcolor{Green}{Lemon $27 \%$} & \textbf{Inc:} \textcolor{red}{Lemon $25 \%$} & \textbf{Inc:} \textcolor{Green}{Crock Pot $55 \%$} & \textbf{Inc:} \textcolor{BurntOrange}{Guillotine $13 \%$} & \textbf{Inc:} \textcolor{BurntOrange}{Bath Towel $39 \%$}  \\
        \textbf{ViT:} \textcolor{red}{Cradle $99 \%$} & \textbf{ViT:} \textcolor{red}{Corn $100 \%$} & \textbf{ViT:} \textcolor{red}{Lemon $97 \%$} & \textbf{ViT:} \textcolor{red}{Coffeepot $62 \%$} & \textbf{ViT:} \textcolor{red}{Apiary $100 \%$} & \textbf{ViT:} \textcolor{red}{Pillow $98 \%$}  \\
        \textbf{AdvRes:} \textcolor{Green}{Crib $37 \%$} & \textbf{AdvRes:} \textcolor{Green}{Lemon $84 \%$} & \textbf{AdvRes:} \textcolor{Green}{Rapeseed $99 \%$} & \textbf{AdvRes:} \textcolor{Green}{Crock Pot $93 \%$} & \textbf{AdvRes:} \textcolor{Green}{Greenhouse $38 \%$} & \textbf{AdvRes:} \textcolor{BurntOrange}{Wool $57 \%$}  \\
        \textbf{AdvInc:} \textcolor{BurntOrange}{Plate Rack $46 \%$} & \textbf{AdvInc:} \textcolor{red}{Corn $46 \%$} & \textbf{AdvInc:} \textcolor{BurntOrange}{Bath Towel $4 \%$} & \textbf{AdvInc:} \textcolor{Green}{Crock Pot $84 \%$} & \textbf{AdvInc:} \textcolor{Green}{Greenhouse $27 \%$} & \textbf{AdvInc:} \textcolor{BurntOrange}{Bath Towel $78 \%$}  \\
    \end{tabular}}
    }
\caption{Adversarial samples generated by NatADiff with a ViT-H \citep{Dosovitskiy2021} surrogate model. We report the true class, adversarial target, and classification scores of ResNet-50 \citep{He2015}, Inception-v3 \citep{Inceptionv3_Szegedy2016}, ViT-H \citep{Dosovitskiy2021}, and adversarially trained ResNet-50 and Inception victim models \citep{Kurakin2018}. Superscripts T and U indicate targeted and untargeted (similarity-based) attacks, respectively.}
\label{fig:NatADiff ViT extra samples}
\end{figure*}

\clearpage
\newpage
\subsection{Low-quality targeted ViT samples} \label{apdx:Low-Quality Targeted ViT Samples}
Targeted ViT-H samples show degraded image quality, as seen in Figure~\ref{fig:Low-Quality Targeted ViT Samples} and supported by the IS and FID-Val scores in Table~\ref{tab:Classifier ASR and image quality}. Targeted attacks typically blend features from disparate classes, which places greater demands on the diffusion model to locate a feasible point on the image manifold. ViT-H is a strong classifier, and its decision boundaries between unrelated classes appear to be more accurate than those of the other classifiers we examined. As a result, the diffusion model struggles to generate feasible targeted adversarial samples for ViT-H, and artifacts are introduced. These artifacts substantially degrade image quality and artificially inflate the attack success rate of the targeted ViT-H samples (as seen in Table \ref{tab:Classifier ASR and image quality}). Importantly, these artifacts occur only in the ViT-H targeted setting, which is directly observable from the image-quality metrics in Table~\ref{tab:Classifier ASR and image quality}.

\begin{figure*}[h]
    \centering
    {
    \resizebox{\linewidth}{!}{%
    \tiny
    \begin{tabular}{@{}l@{}l@{}l@{}l@{}l@{}l@{}}
        \includegraphics[width=0.2\textwidth]{./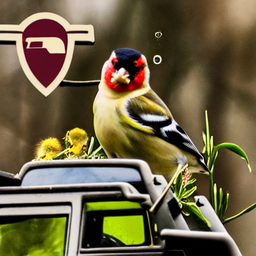} & \includegraphics[width=0.2\textwidth]{./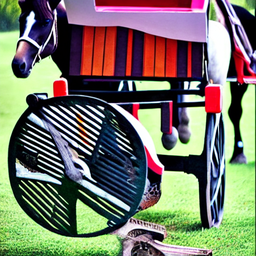} & \includegraphics[width=0.2\textwidth]{./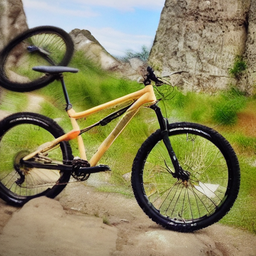} & \includegraphics[width=0.2\textwidth]{./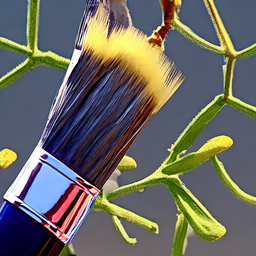} & \includegraphics[width=0.2\textwidth]{./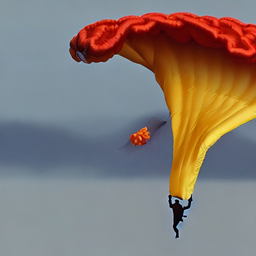} & \includegraphics[width=0.2\textwidth]{./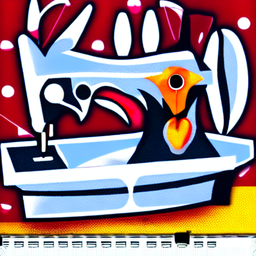}  \\

        \textcolor{Green}{\textbf{True:}} Goldfinch & \textcolor{Green}{\textbf{True:}} Horse Cart & \textcolor{Green}{\textbf{True:}} Mountain Bike & \textcolor{Green}{\textbf{True:}} Paintbrush & \textcolor{Green}{\textbf{True:}} Parachute & \textcolor{Green}{\textbf{True:}} Sewing Machine  \\
        \textcolor{red}{\textbf{Adv\textsuperscript{T}:}} Jeep & \textcolor{red}{\textbf{Adv\textsuperscript{T}:}} Dowitcher (Bird) & \textcolor{red}{\textbf{Adv\textsuperscript{T}:}} Matchstick & \textcolor{red}{\textbf{Adv\textsuperscript{T}:}} Apiary & \textcolor{red}{\textbf{Adv\textsuperscript{T}:}} Coral Fungus & \textcolor{red}{\textbf{Adv\textsuperscript{T}:}} Rooster  \\

        \includegraphics[width=0.2\textwidth]{./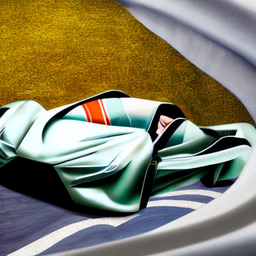} & \includegraphics[width=0.2\textwidth]{./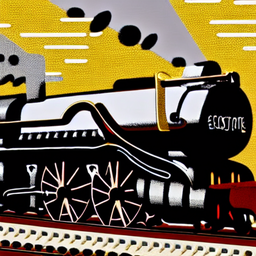} & \includegraphics[width=0.2\textwidth]{./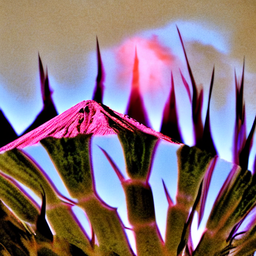} & \includegraphics[width=0.2\textwidth]{./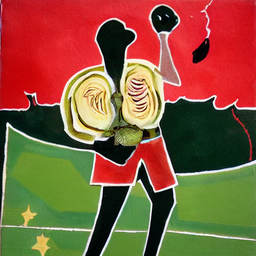} & \includegraphics[width=0.2\textwidth]{./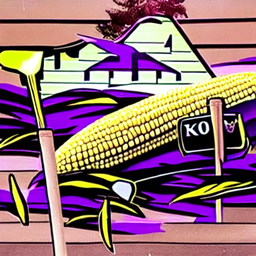} & \includegraphics[width=0.2\textwidth]{./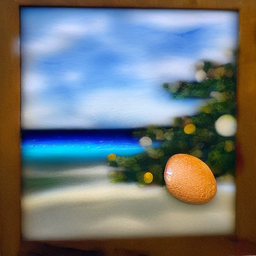}  \\

        \textcolor{Green}{\textbf{True:}} Sports Car & \textcolor{Green}{\textbf{True:}} Steam Locomotive & \textcolor{Green}{\textbf{True:}} Volcano & \textcolor{Green}{\textbf{True:}} Baseball Player & \textcolor{Green}{\textbf{True:}} Corn & \textcolor{Green}{\textbf{True:}} Seashore \\
        \textcolor{red}{\textbf{Adv\textsuperscript{T}:}} Kimono & \textcolor{red}{\textbf{Adv\textsuperscript{T}:}} Typewriter & \textcolor{red}{\textbf{Adv\textsuperscript{T}:}} Cardoon (Plant) & \textcolor{red}{\textbf{Adv\textsuperscript{T}:}} Artichoke & \textcolor{red}{\textbf{Adv\textsuperscript{T}:}} Llama & \textcolor{red}{\textbf{Adv\textsuperscript{T}:}} Eggnog  \\
    \end{tabular}}
    }
\caption{Low-quality adversarial samples generated by NatADiff with a ViT-H \citep{Dosovitskiy2021} surrogate model under targeted attack settings. We report the true class and adversarial target for each image. Superscripts T and U indicate targeted and untargeted (similarity-based) attacks, respectively.}
\label{fig:Low-Quality Targeted ViT Samples}
\end{figure*}

\section{Additional image quality metrics}
We use NIQE \citep{NIQE_Mittal2013}, BRISQUE \citep{BRISQUE_Mittal2012}, and TReS \citep{TReS_Alireza2022} to provide additional no-reference image quality evaluations of adversarial sampling methods. We assess the image quality of NCF \citep{NCF_Yuan2022}, DiffAttack \citep{DiffAttack_Chen2025}, ACA \citep{ContentDiffusionAttack_Chen2023}, adversarial classifier guidance (AdvClass) \citep{Dai2024}, and NatADiff across ResNet-50 \citep{He2015}, Inception-v3 \citep{Inceptionv3_Szegedy2016}, and ViT-H \citep{Dosovitskiy2021} surrogate models. These metrics more closely align with human perception of image quality, but they do not address adherence to a target data distribution, i.e., how well samples “fit into” the ImageNet dataset.

Adversarial classifier guidance and DiffAttack frequently outperformed other methods on NIQE, BRISQUE, and TReS (see Table~\ref{tab:Full Image Quality}); however, as discussed in Section~\ref{sec:Results}, this is likely because these methods apply constrained perturbations to source images and clean stable diffusion outputs, respectively. When considering only methods that make structural image alterations, we see that NatADiff outperforms NCF across all image metrics, and outperforms ACA on NIQE and BRISQUE, with similar TReS scores that slightly favour ACA. This supports the findings from the main paper that NatADiff is able to construct visually high-quality adversarial samples.

\begin{table}[h]
    \centering
    \caption{\textbf{Image quality} of adversarial samples generated using ACA \citep{ContentDiffusionAttack_Chen2023}, DiffAttack \citep{DiffAttack_Chen2025}, adversarial classifier guidance \citep{Dai2024}, and NatADiff. \textcolor{red}{\textbf{Bold}} and \textcolor{BrickRed}{\underline{underlined}} values highlight the best and second best scores for each surrogate model. Superscripts T and U denote targeted and untargeted attacks, respectively. Note that we report FID with respect to ImageNet-Val (FID-Val) and ImageNet-A (FID-A).}
    \label{tab:Full Image Quality}
    \resizebox{1\textwidth}{!}{
        \begin{tabular}{@{}cc@{\hspace{1em}}c@{}cccccc@{}}
            \specialrule{1.2pt}{0pt}{0pt}
            \noalign{\vskip 0.5ex}
            \multicolumn{1}{@{}c}{\multirow{2}{*}{\shortstack{Surrogate\\Model}}} & \multicolumn{1}{c@{\hspace{1em}}}{\multirow{2}{*}{Attack}} && \multicolumn{6}{c}{Image Quality Metrics} \\
            \noalign{\vskip 0.5ex}
            \cline{3-9}
            \noalign{\vskip 0.5ex}
            &&& IS ($\boldsymbol{\uparrow}$) & FID-Val ($\boldsymbol{\downarrow}$) & FID-A ($\boldsymbol{\downarrow}$) & NIQE ($\boldsymbol{\downarrow}$) & BRISQUE ($\boldsymbol{\downarrow}$) & TReS ($\boldsymbol{\uparrow}$) \\
            \specialrule{1.2pt}{0pt}{0pt}
            \noalign{\vskip 0.5ex}
            & Clean & & $55.0$ & $58.0$ & $94.7$ & $4.6$ & $18.1$ & $88.6$ \\
            \specialrule{1.2pt}{0pt}{0pt}
            \noalign{\vskip 0.5ex}
            \multicolumn{1}{@{}c}{\multirow{7}{*}{RN-50}} & NCF & & $30.4$ & $69.7$ & $85.5$ & $5.5$ & $19.8$ & $68.9$ \\
            & DiffAttack & & $26.8$ & $64.1$ & $\textcolor{red}{\mathbf{76.8}}$ & $5.7$ & $17.7$ & $\textcolor{red}{\mathbf{81.8}}$ \\
            & ACA & & $23.9$ & $65.0$ & $77.9$ & $6.8$ & $24.4$ & $80.8$ \\
            & AdvClass\textsuperscript{T} & & $38.3$ & $\textcolor{red}{\mathbf{48.9}}$ & $92.4$ & $\textcolor{red}{\mathbf{4.3}}$ & $\textcolor{red}{\mathbf{11.9}}$ & $\textcolor{red}{\mathbf{81.8}}$ \\
            & AdvClass\textsuperscript{U} & & $\textcolor{BrickRed}{\underline{38.5}}$ & $\textcolor{BrickRed}{\underline{50.2}}$ & $92.7$ & $\textcolor{BrickRed}{\underline{4.6}}$ & $12.2$ & $\textcolor{BrickRed}{\underline{81.3}}$ \\
            \cline{2-9}
            \noalign{\vskip 0.5ex}
            & NatADiff\textsuperscript{T} & & $26.0$ & $66.5$ & $\textcolor{BrickRed}{\underline{77.3}}$ & $4.8$ & $12.3$ & $76.0$ \\
            & NatADiff\textsuperscript{U} & & $\textcolor{red}{\mathbf{43.2}}$ & $51.4$ & $95.9$ & $4.8$ & $\textcolor{BrickRed}{\underline{12.0}}$ & $77.9$ \\
            \specialrule{1.2pt}{0pt}{0pt}
            \noalign{\vskip 0.2ex}
            \multicolumn{1}{@{}c}{\multirow{7}{*}{Inc-v3}} & NCF & & $31.7$ & $69.1$ & $83.0$ & $\textcolor{BrickRed}{\underline{4.7}}$ & $19.0$ & $76.3$ \\
            & DiffAttack & & $33.2$ & $63.7$ & $\textcolor{red}{\mathbf{78.2}}$ & $5.8$ & $18.0$ & $\textcolor{BrickRed}{\underline{81.2}}$ \\
            & ACA & & $23.1$ & $68.0$ & $\textcolor{BrickRed}{\underline{78.8}}$ & $7.8$ & $28.4$ & $78.7$ \\
            & AdvClass\textsuperscript{T} & & $33.7$ & $51.0$ & $89.2$ & $\textcolor{red}{\mathbf{4.5}}$ & $\textcolor{BrickRed}{\underline{12.3}}$ & $\textcolor{red}{\mathbf{81.4}}$ \\
            & AdvClass\textsuperscript{U} & & $\textcolor{BrickRed}{\underline{39.7}}$ & $\textcolor{red}{\mathbf{49.4}}$ & $93.3$ & $\textcolor{red}{\mathbf{4.5}}$ & $\textcolor{BrickRed}{\underline{12.3}}$ & $\textcolor{BrickRed}{\underline{81.2}}$ \\
            \cline{2-9}
            \noalign{\vskip 0.5ex}
            & NatADiff\textsuperscript{T} & & $27.7$ & $66.6$ & $\textcolor{red}{\mathbf{78.2}}$ & $4.8$ & $12.4$ & $76.6$ \\
            & NatADiff\textsuperscript{U} & & $\textcolor{red}{\mathbf{47.0}}$ & $\textcolor{BrickRed}{\underline{50.5}}$ & $98.9$ & $\textcolor{BrickRed}{\underline{4.7}}$ & $\textcolor{red}{\mathbf{11.7}}$ & $78.9$ \\
            \specialrule{1.2pt}{0pt}{0pt}
            \noalign{\vskip 0.2ex}
            \multicolumn{1}{@{}c}{\multirow{7}{*}{ViT-H}} & NCF & & $\textcolor{red}{\mathbf{39.8}}$ & $63.1$ & $86.4$ & $5.6$ & $20.0$ & $70.1$ \\
            & DiffAttack & & $35.2$ & $63.4$ & $\textcolor{red}{\mathbf{80.0}}$ & $6.0$ & $18.4$ & $\textcolor{red}{\mathbf{81.3}}$ \\
            & ACA & & $25.5$ & $64.2$ & $\textcolor{BrickRed}{\underline{80.9}}$ & $7.4$ & $25.5$ & $79.2$ \\
            & AdvClass\textsuperscript{T} & & $38.9$ & $\textcolor{red}{\mathbf{48.5}}$ & $95.2$ & $\textcolor{red}{\mathbf{4.2}}$ & $\textcolor{BrickRed}{\underline{14.8}}$ & $77.7$ \\
            & AdvClass\textsuperscript{U} & & $\textcolor{BrickRed}{\underline{39.2}}$ & $\textcolor{red}{\mathbf{48.5}}$ & $98.8$ & $\textcolor{BrickRed}{\underline{4.3}}$ & $\textcolor{red}{\mathbf{13.5}}$ & $\textcolor{BrickRed}{\underline{79.8}}$ \\
            \cline{2-9}
            \noalign{\vskip 0.5ex}
            & NatADiff\textsuperscript{T} & & $15.3$ & $88.0$ & $93.5$ & $4.9$ & $20.9$ & $74.7$ \\
            & NatADiff\textsuperscript{U} & & $31.9$ & $\textcolor{BrickRed}{\underline{53.9}}$ & $96.2$ & $4.6$ & $\textcolor{BrickRed}{\underline{14.8}}$ & $78.8$ \\
            \specialrule{1.2pt}{0pt}{0pt}
            \noalign{\vskip 0.2ex}
        \end{tabular}
    }
\end{table}

\section{Resistance to adversarial defences} \label{apdx:Resistance to adversarial defences}
It has previously been shown that perturbation-based adversarial attacks are sensitive to image transformations--such as rotations, crops, and translations--which can substantially reduce attack success rates \citep{Guo2018}. In addition to such transformation-based defences, purification approaches aim to remove adversarial noise prior to classification. One such method is \textit{DiffPure} \citep{Nie2022}, which leverages a denoising diffusion model to project adversarial samples back onto the natural image manifold. Given an adversarial image, $\tilde{\boldsymbol{x}}_0$, the forward diffusion process, $p(\boldsymbol{x}_t | \boldsymbol{x}_0 = \tilde{\boldsymbol{x}}_0)$, is applied for $t \ll T$ (recall that $T$ is the termination time of the forward process), introducing Gaussian noise without fully destroying the original signal. The sample is then passed through the reverse-time diffusion process \citep{Anderson1982} or flow ODE \citep{Song2021} to recover a purified image. \cite{Nie2022} empirically demonstrated and theoretically proved that this procedure effectively removes perturbation-based adversarial noise, enabling the reverse process to reconstruct a ``clean'' version of the image. Intuitively, the noise injected during forward diffusion overwhelms the adversarial signal, allowing the diffusion model to project the corrupted sample back onto the natural image manifold.

We evaluate the robustness of standard image transformations and DiffPure against adversarial samples generated by PGD \citep{Madry2019}, AutoAttack \citep{Croce2020}, NCF \citep{NCF_Yuan2022}, DiffAttack \citep{DiffAttack_Chen2025}, ACA \citep{ContentDiffusionAttack_Chen2023}, adversarial classifier guidance (AdvClass) \citep{Dai2024}, and NatADiff. We use a ResNet-50 \citep{He2015} surrogate model and test NatADiff and adversarial classifier under both targeted and untargeted modes. Defences are applied as a pre-processing step to classification; for image transformations, we average classification probabilities over augmented views obtained via cropping, rotation, and grayscale conversion. To quantify defence effectiveness, we report attack success rate (ASR).

Our results show that transform-purification did not meaningfully reduce the efficacy of NatADiff, though it successfully defended against PGD attacks (see Table~\ref{tab:Transform and diffpure attack success rate}). In contrast, DiffPure provided a much stronger defence to most attacks, reducing NatADiff's average ASR by $7.9\%$. However, DiffPure occasionally degraded overall classifier accuracy, leading to increased ASR for non-surrogate classifiers. This likely occurred when the reverse diffusion process failed to recover the original image, rendering classification unreliable. Compared to other attacks, NatADiff still exhibited superior white-box performance and achieved either best or second-best transferability across all victim classifiers. Interestingly, NCF showed a significant increase in transferability under the DiffPure defence. We hypothesize that this stems from NCF’s color-based attacks pushing samples into low-probability regions of the manifold during the forward diffusion process. This increases the likelihood that DiffPure fails to recover the original image in the reverse process, thereby degrading classifier accuracy.

Consistent with findings in the main paper, NatADiff achieved best or near-best performance under both transformation and DiffPure defences. Given that natural adversarial samples are known to bypass perturbation-based defences and image transformations \citep{DetectingNaturalvsArtificialAdvSamples_Agarwal2022}, these results further support our claim that NatADiff generates adversarial examples that are more semantically aligned with naturally occurring test-time errors.

\begin{table}[H]
    \centering
    \caption{\textbf{Attack success rate} of \textbf{transform} and \textbf{DiffPure-purified} adversarial samples generated by PGD \citep{Madry2019}, AutoAttack \citep{Croce2020}, NCF \citep{NCF_Yuan2022}, DiffAttack \citep{DiffAttack_Chen2025}, ACA \citep{ContentDiffusionAttack_Chen2023}, adversarial classifier guidance (AdvClass) \citep{Dai2024}, and NatADiff. Samples are generated using a ResNet-50 \citep{He2015} surrogate model. \textcolor{red}{\textbf{Bold}} and \textcolor{BrickRed}{\underline{underlined}} values highlight the best and second best scores for each purification method. Superscripts T and U denote targeted and untargeted attacks, respectively. White-box ASR (same surrogate and victim model) is denoted with an $^*$.}
    \label{tab:Transform and diffpure attack success rate}
\resizebox{1\textwidth}{!}{
        \begin{tabular}{@{}cc@{}c@{\hspace{1em}}ccccc@{}c@{\hspace{1em}}cccc@{}cc@{}}
            \specialrule{1.2pt}{0pt}{0pt}
            \noalign{\vskip 0.5ex}
            \multicolumn{1}{@{}c}{\multirow{3}{*}{\shortstack{Purification\\Method}}} & \multicolumn{1}{c@{}}{\multirow{3}{*}{Attack}} && \multicolumn{10}{c}{Victim Model ASR (\%)} && \multicolumn{1}{c}{\multirow{3}{*}{\shortstack{Average\\ASR}}} \\
            &&& \multicolumn{5}{c}{CNNs} && \multicolumn{4}{c}{Transformers} && \\
            \cline{4-8} \cline{10-13} 
            \noalign{\vskip 0.5ex}
            &&& RN-50 & Inc-v3 & RN-152 & AdvRes & AdvInc && ViT-H & Max-ViT & Swin-B & DeIT && \\
            \specialrule{1.2pt}{0pt}{0pt}
            \noalign{\vskip 0.2ex}
            & Clean & & $5.3$ & $7.6$ & $2.9$ & $3.0$ & $5.8$ && $10.9$ & $3.8$ & $4.5$ & $7.4$ && $5.7$ \\
            \specialrule{1.2pt}{0pt}{0pt}
            \noalign{\vskip 0.2ex}
            \multicolumn{1}{@{}c}{\multirow{9}{*}{None}} & PGD & & $99.4^*$ & $11.8$ & $5.2$ & $4.9$ & $8.1$ && $10.5$ & $4.4$ & $5.5$ & $8.2$ && $17.6$ \\
            & AA & & $\textcolor{red}{\mathbf{100^*}}$ & $13.3$ & $10.0$ & $3.9$ & $8.8$ && $10.5$ & $5.4$ & $5.6$ & $8.0$ && $18.4$ \\
            & NCF & & $74.8^*$ & $33.4$ & $37.3$ & $28.2$ & $31.2$ && $17.2$ & $24.0$ & $31.7$ & $37.2$ && $35.0$ \\
            & DiffAttack & & $92.5^*$ & $47.1$ & $52.5$ & $35.3$ & $43.3$ && $28.4$ & $44.6$ & $42.4$ & $38.9$ && $47.2$ \\
            & ACA & & $78.8^*$ & $53.3$ & $52.7$ & $49.8$ & $53.1$ && $\textcolor{BrickRed}{\underline{41.8}}$ & $\textcolor{BrickRed}{\underline{46.4}}$ & $\textcolor{BrickRed}{\underline{49.3}}$ & $50.6$ && $52.9$ \\
            & AdvClass\textsuperscript{T} & & $99.6^*$ & $35.0$ & $32.1$ & $31.4$ & $33.5$ && $25.8$ & $30.0$ & $30.8$ & $32.8$ && $39.0$ \\
            & AdvClass\textsuperscript{U} & & $\textcolor{BrickRed}{\underline{99.9^*}}$ & $42.5$ & $44.3$ & $38.7$ & $41.1$ && $29.7$ & $37.6$ & $38.4$ & $39.1$ && $45.7$ \\
            \cline{2-15}
            \noalign{\vskip 0.5ex}
            & NatADiff\textsuperscript{T} & & $96.9^*$ & $\textcolor{BrickRed}{\underline{60.1}}$ & $\textcolor{BrickRed}{\underline{56.5}}$ & $\textcolor{BrickRed}{\underline{55.3}}$ & $\textcolor{BrickRed}{\underline{58.9}}$ && $36.8$ & $45.3$ & $49.0$ & $\textcolor{BrickRed}{\underline{52.3}}$ && $\textcolor{BrickRed}{\underline{56.8}}$ \\
            & NatADiff\textsuperscript{U} & & $99.3^*$ & $\textcolor{red}{\mathbf{68.3}}$ & $\textcolor{red}{\mathbf{72.1}}$ & $\textcolor{red}{\mathbf{65.3}}$ & $\textcolor{red}{\mathbf{66.8}}$ && $\textcolor{red}{\mathbf{45.3}}$ & $\textcolor{red}{\mathbf{64.1}}$ & $\textcolor{red}{\mathbf{65.2}}$ & $\textcolor{red}{\mathbf{67.0}}$ && $\textcolor{red}{\mathbf{68.2}}$ \\
            \specialrule{1.2pt}{0pt}{0pt}
            \noalign{\vskip 0.2ex}
            \multicolumn{1}{@{}c}{\multirow{9}{*}{Transform}} & PGD & & $14.5^*$ & $12.1$ & $4.8$ & $4.7$ & $7.4$ && $9.2$ & $3.5$ & $5.2$ & $7.3$ && $7.6$ \\
            & AA & & $79.4^*$ & $12.4$ & $7.9$ & $3.3$ & $9.1$ && $10.2$ & $3.4$ & $5.8$ & $7.3$ && $15.4$ \\
            & NCF & & $60.2^*$ & $35.1$ & $39.6$ & $28.4$ & $31.2$ && $16.7$ & $27.1$ & $33.7$ & $37.7$ && $34.4$ \\
            & DiffAttack & & $73.9^*$ & $48.6$ & $50.1$ & $39.9$ & $45.8$ && $28.6$ & $43.4$ & $46.7$ & $39.0$ && $46.2$ \\
            & ACA &  & $64.2^*$ & $54.8$ & $52.2$ & $50.4$ & $56.4$ && $\textcolor{BrickRed}{\underline{40.8}}$ & $\textcolor{BrickRed}{\underline{47.1}}$ & $\textcolor{BrickRed}{\underline{51.9}}$ & $51.0$ && $52.1$ \\
            & AdvClass\textsuperscript{T} & & $35.2^*$ & $33.4$ & $30.4$ & $29.7$ & $31.5$ && $25.5$ & $28.9$ & $31.4$ & $32.1$ && $30.9$ \\
            & AdvClass\textsuperscript{U} & & $65.8^*$ & $39.8$ & $40.8$ & $38.3$ & $40.0$ && $29.1$ & $36.3$ & $37.0$ & $38.4$ && $40.6$ \\
            \cline{2-15}
            \noalign{\vskip 0.5ex}
            & NatADiff\textsuperscript{T} & & $\textcolor{BrickRed}{\underline{85.7^*}}$ & $\textcolor{BrickRed}{\underline{59.8}}$ & $\textcolor{BrickRed}{\underline{55.6}}$ & $\textcolor{BrickRed}{\underline{55.0}}$ & $\textcolor{BrickRed}{\underline{56.8}}$ && $36.1$ & $46.7$ & $48.6$ & $\textcolor{BrickRed}{\underline{52.4}}$ && $\textcolor{BrickRed}{\underline{55.2}}$ \\
            & NatADiff\textsuperscript{U} & & $\textcolor{red}{\mathbf{96.6^*}}$ & $\textcolor{red}{\mathbf{68.7}}$ & $\textcolor{red}{\mathbf{73.3}}$ & $\textcolor{red}{\mathbf{67.3}}$ & $\textcolor{red}{\mathbf{68.3}}$ && $\textcolor{red}{\mathbf{45.0}}$ & $\textcolor{red}{\mathbf{65.4}}$ & $\textcolor{red}{\mathbf{66.8}}$ & $\textcolor{red}{\mathbf{70.3}}$ && $\textcolor{red}{\mathbf{69.1}}$ \\
            \specialrule{1.2pt}{0pt}{0pt}
            \noalign{\vskip 0.2ex}
            \multicolumn{1}{@{}c}{\multirow{9}{*}{DiffPure}} & PGD & & $21.9^*$ & $30.8$ & $20.3$ & $23.3$ & $26.5$ && $21.3$ & $16.7$ & $19.4$ & $20.6$ && $22.3$ \\
            & AA & & $23.5^*$ & $32.5$ & $19.8$ & $22.7$ & $28.4$ && $21.7$ & $18.9$ & $21.2$ & $23.6$ && $23.6$ \\
            & NCF & & $\textcolor{BrickRed}{\underline{68.8^*}}$ & $59.9$ & $\textcolor{BrickRed}{\underline{60.4}}$ & $53.9$ & $55.3$ && $46.1$ & $\textcolor{BrickRed}{\underline{57.5}}$ & $\textcolor{red}{\mathbf{62.3}}$ & $\textcolor{BrickRed}{\underline{59.7}}$ && $\textcolor{BrickRed}{\underline{58.2}}$ \\
            & DiffAttack & & $45.9^*$ & $44.0$ & $39.3$ & $37.3$ & $43.5$ && $38.4$ & $37.4$ & $41.8$ & $38.7$ && $40.7$ \\
            & ACA & & $60.8^*$ & $\textcolor{red}{\mathbf{63.1}}$ & $55.3$ & $\textcolor{BrickRed}{\underline{57.3}}$ & $\textcolor{BrickRed}{\underline{60.1}}$ && $\textcolor{red}{\mathbf{50.4}}$ & $55.1$ & $56.7$ & $56.5$ && $57.3$ \\
            & AdvClass\textsuperscript{T} & & $35.0^*$ & $37.8$ & $34.2$ & $35.2$ & $37.2$ && $30.3$ & $34.3$ & $34.9$ & $36.7$ && $35.1$ \\
            & AdvClass\textsuperscript{U} & & $42.4^*$ & $42.6$ & $39.8$ & $39.8$ & $41.9$ && $34.3$ & $38.8$ & $40.3$ & $41.3$ && $40.1$ \\
            \cline{2-15}
            \noalign{\vskip 0.5ex}
            & NatADiff\textsuperscript{T} & & $56.3^*$ & $56.6$ & $52.2$ & $53.1$ & $54.8$ && $42.3$ & $49.1$ & $51.0$ & $53.4$ && $52.1$ \\
            & NatADiff\textsuperscript{U} & & $\textcolor{red}{\mathbf{71.3^*}}$ & $\textcolor{BrickRed}{\underline{62.2}}$ & $\textcolor{red}{\mathbf{61.2}}$ & $\textcolor{red}{\mathbf{61.0}}$ & $\textcolor{red}{\mathbf{61.3}}$ && $\textcolor{BrickRed}{\underline{47.8}}$ & $\textcolor{red}{\mathbf{58.5}}$ & $\textcolor{BrickRed}{\underline{60.2}}$ & $\textcolor{red}{\mathbf{61.2}}$ && $\textcolor{red}{\mathbf{60.5}}$ \\
            \specialrule{1.2pt}{0pt}{0pt}
            \noalign{\vskip 0.2ex}
        \end{tabular}
    }
\end{table}

\section{Runtime comparison} \label{apdx:Runtime comparison}
We provide a runtime comparison of PGD \citep{Madry2019}, AutoAttack (AA) \citep{Croce2020}, NCF \citep{NCF_Yuan2022}, DiffAttack \citep{DiffAttack_Chen2025}, ACA \citep{ContentDiffusionAttack_Chen2023}, adversarial classifier guidance (AdvClass) \citep{Dai2024} and NatADiff. It is clear that the generative approach of NatADiff requires substantially greater runtime; however, this cost yields stronger adversarial samples that transfer more effectively across classifiers (see Table~\ref{tab:Classifier ASR and image quality}), and that are more resistant to adversarial purification (see Appendix~\ref{apdx:Resistance to adversarial defences}). We argue that the trade-off between runtime and state-of-the-art adversarial strength makes NatADiff a compelling attack strategy despite its slower speed.

\begin{table}[h]
    \centering
    \caption{\textbf{Time comparison} of adversarial attack methods.}
    \label{tab:Attack time comparison}
    \resizebox{0.35\textwidth}{!}{
        \begin{tabular}{@{}cc@{}}
            \specialrule{1.2pt}{0pt}{0pt}
            \noalign{\vskip 0.5ex}
            \multicolumn{1}{@{}c}{\multirow{2}{*}{Attack}} & \multicolumn{1}{c@{}}{\multirow{2}{*}{\shortstack{Average Runtime per\\Sample (seconds)}}} \\
            & \\
            \specialrule{1.2pt}{0pt}{0pt}
            \noalign{\vskip 0.5ex}
            PGD & $0.3$ \\
            AutoAttack & $0.7$ \\
            NCF & $6.9$ \\
            DiffAttack & $14.2$ \\
            ACA & $96.8$ \\
            AdvClass & $13.5$ \\
            NatADiff & $103.1$ \\
            \specialrule{1.2pt}{0pt}{0pt}
            \noalign{\vskip 0.2ex}
        \end{tabular}
    }
\end{table}

\section{User study}
NatADiff is a generative attack method designed to produce adversarial samples that lie near the decision boundary between the true and adversarial classes. However, as discussed in Appendix~\ref{apdx:Selection of mu}, the generated image may occasionally fully manifest the adversarial class, resulting in the loss of the intended ``true'' class. To mitigate this, we conservatively set the adversarial boundary guidance strength to $\mu = 0.2$. We conduct a human study to verify that this choice preserves the human-perceived class.

Accurate classification of ImageNet classes typically requires trained human annotators \citep{ImageNetBenchmark_Russakovsky2015, EvaluatingImageNet_Shankar2020}. Previous work has found that untrained annotators often exhibit significant class unawareness, particularly for fine-grained categories such as dog breeds \citep{ImageNetBenchmark_Russakovsky2015, EvaluatingImageNet_Shankar2020}. To address these limitations, we restrict our study to a curated subset of 148 easily recognisable ImageNet classes\footnote{Selected classes: 1, 9, 22, 47, 105, 130, 151, 152, 153, 154, 155, 156, 158, 159, 160, 161, 162, 163, 285, 286, 287, 288, 289, 290, 291, 292, 407, 425, 440, 448, 449, 462, 465, 468, 469, 470, 471, 472, 486, 493, 495, 496, 497, 502, 503, 505, 510, 514, 515, 516, 517, 520, 525, 526, 527, 532, 537, 555, 580, 607, 609, 663, 670, 671, 672, 701, 719, 734, 780, 787, 820, 909, 937, 963, 975, 999, 972, 973, 569, 933, 452, 958, 985, 900, 991, 400, 635, 920, 103, 482, 392, 843, 833, 259, 614, 757, 490, 715, 834, 222, 509, 374, 815, 927, 545, 531, 21, 303, 282, 132, 536, 913, 741, 624, 269, 547, 313, 640, 182, 849, 954, 524, 109, 657, 889, 884, 961, 94, 797, 817, 653, 134, 978, 803, 668, 436, 581, 519, 554, 942, 806, 410, 144, 693, 523, 26, 403, and 846.}.

We present 22 human participants with 60 randomly selected NatADiff samples generated using a ResNet-50 surrogate classifier under both targeted and untargeted attack settings. Each participant is asked whether the image contains the true class, the adversarial class, or neither. Participants reported seeing the true class in $91 \%$ of cases, the adversarial class in $7 \%$, and neither class in $2 \%$. NatADiff's label-flip rate of $9 \%$ is comparable to the $10 \%$ flip rate reported by \cite{Dai2024} for adversarial classifier guidance on the MNIST dataset \citep{Deng2012}. These results indicate that NatADiff reliably produces images that appear to belong to the true class for human observers, even while fooling classifiers with high probability. Moreover, in practical attack settings, an adversary could manually inspect generated images prior to deployment, further mitigating the impact of occasional class-flipped samples.

\clearpage
\section{Useful denoising diffusion results} \label{apdx:DiffusionMath}
This section outlines results necessary for working with denoising diffusion models. Citations of original authors are provided where applicable.

\subsection{Conditional forward distribution}
The following theorem describes the conditional forward distribution of the denoising diffusion model when conditioned on an arbitrary time $\tau$.

\begin{theorem}[Conditional Forward Distribution for Denosing Diffusion] \label{Thm:Conditional Forward}
    Let $\boldsymbol{x}_t \in \mathbb{R}^m$, $f(t) : \mathbb{R} \rightarrow \mathbb{R}$ and $g(t) : \mathbb{R} \rightarrow \mathbb{R}$ be continuous functions of $t$, and $\mathop{\cdot} d\boldsymbol{B}_t$ denote an It\^{o} integral with respect to the standard multi-dimensional Brownian motion process. Then the denoising diffusion model with forward process, \begin{equation}
        \label{eq:proof conditional forward diffusion}
        d\boldsymbol{x}_t = f(t)\boldsymbol{x}_t dt + g(t) \cdot d\boldsymbol{B}_t,
    \end{equation} admits a conditional forward distribution of \begin{equation}
        X_t | X_\tau \sim \mathcal{N} \left( \alpha(\tau, t) x_{\tau}, \ \beta(\tau, t)^2 I^{(m \times m)} \right) \ \forall \ t > \tau, \nonumber
    \end{equation}
    where $\alpha(\tau, t) = \exp \left( \int_{\tau}^t f(u) du \right)$, $\beta(\tau, t)^2 = \alpha(\tau, t)^2 \int_{\tau}^{t}\frac{g(u)^2}{\alpha(\tau, u)^2}du$, and $I^{(m \times m)}$ is the $m$-dimensional identity matrix. Additionally, it is understood that with slight abuse of notation $\alpha(t) = \alpha(0, t)$ and $\beta(t) = \alpha(0, t)$
\end{theorem}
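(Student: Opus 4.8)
The plan is to solve the linear SDE in (\ref{eq:proof conditional forward diffusion}) explicitly by the integrating-factor (variation-of-parameters) method and then read off the conditional law directly. The key structural observation is that the drift is linear in $\boldsymbol{x}_t$ while the diffusion coefficient $g(t)$ does not depend on $\boldsymbol{x}_t$ (additive noise), so the solution will be an affine functional of the driving Brownian motion and hence Gaussian once we condition on $X_\tau$.

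First I would introduce the deterministic integrating factor $\phi(t) = \exp\!\left(-\int_\tau^t f(u)\,du\right) = \alpha(\tau,t)^{-1}$, which is $C^1$ with $\phi'(t) = -f(t)\phi(t)$ and $\phi(\tau) = 1$. Setting $\boldsymbol{Y}_t = \phi(t)\boldsymbol{x}_t$ and applying the It\^{o} product rule (noting that $\phi$ is deterministic and of bounded variation, so it contributes no quadratic-variation cross term), I would obtain $d\boldsymbol{Y}_t = \phi'(t)\boldsymbol{x}_t\,dt + \phi(t)\,d\boldsymbol{x}_t = \phi(t)g(t)\cdot d\boldsymbol{B}_t$, in which the two drift contributions cancel exactly. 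Integrating from $\tau$ to $t$ and using $\boldsymbol{Y}_\tau = \boldsymbol{x}_\tau$ then yields the closed form $\boldsymbol{x}_t = \alpha(\tau,t)\boldsymbol{x}_\tau + \alpha(\tau,t)\int_\tau^t \frac{g(u)}{\alpha(\tau,u)}\cdot d\boldsymbol{B}_u$.

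Conditioning on $X_\tau = x_\tau$, the first term is the deterministic mean $\alpha(\tau,t)x_\tau$, and the remaining Wiener integral has a deterministic integrand $u \mapsto g(u)/\alpha(\tau,u)$, which is continuous on the compact interval $[\tau,t]$ and therefore square-integrable. A Wiener integral of a deterministic square-integrable integrand is centred Gaussian, which establishes the normality claim; the It\^{o} isometry then gives the per-component variance $\alpha(\tau,t)^2\int_\tau^t g(u)^2/\alpha(\tau,u)^2\,du = \beta(\tau,t)^2$. Since the components of $\boldsymbol{B}_t$ are independent standard Brownian motions and the scalar integrand acts identically on each coordinate, the off-diagonal covariances vanish and the covariance is exactly $\beta(\tau,t)^2 I^{(m\times m)}$, giving the stated conditional law. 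The special cases $\alpha(t)=\alpha(0,t)$ and $\beta(t)=\beta(0,t)$ follow by taking $\tau = 0$.

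The cancellation of the drift and the isometry computation are routine. The step needing the most care is the rigorous justification of the It\^{o} product rule for $\phi(t)\boldsymbol{x}_t$ together with the claim that the resulting Wiener integral is Gaussian with the asserted covariance: one must confirm square-integrability of the integrand (immediate from continuity on $[\tau,t]$), verify that multiplying by a deterministic $C^1$ factor introduces no additional It\^{o} correction, and invoke independence of the Brownian components to rule out off-diagonal covariance terms.
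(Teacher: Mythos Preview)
Your proposal is correct and follows essentially the same integrating-factor route as the paper: multiply by $\exp\!\bigl(-\int_\tau^t f(u)\,du\bigr)$, cancel the drift, integrate, and read off the Gaussian law of the resulting Wiener integral via the It\^{o} isometry. The only difference is cosmetic: the paper first passes to the Stratonovich formulation (which coincides with It\^{o} here because $g(t)$ is state-independent) and manipulates the equation as an ODE before converting back, whereas you stay in It\^{o} form and invoke the product rule directly, noting that the deterministic factor $\phi$ contributes no quadratic-variation correction---your route is slightly more direct but not substantively different.
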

\begin{proof}
    The diffusion in (\ref{eq:proof conditional forward diffusion}) has Stratonovich representation (see \citep{Pavliotis2014} for a treatment of It\^{o} and Stratonovich SDE formulations), \begin{equation}
        d\boldsymbol{x}_t = f(t)\boldsymbol{x}_t dt + g(t) \circ d\boldsymbol{B}_t, \nonumber
    \end{equation} where $\mathop{\circ} d\boldsymbol{B}_t$ denotes a Stratonovich integral with respect to the standard multi-dimensional Brownian motion process. Thus, the SDE can be solved in the usual manner:
    \begin{align*}
        d\boldsymbol{x}_t &= f(t)\boldsymbol{x}_t dt + g(t) \circ d\boldsymbol{B}_t \\
        \frac{d\boldsymbol{x}_t}{dt} &= f(t)\boldsymbol{x}_t + g(t) \circ \frac{d\boldsymbol{B}_t}{dt} \\
        \implies \left( e^{-\int_{\tau}^t f(u) du} \right) \frac{d\boldsymbol{x}_t}{dt} &= \left( e^{-\int_{\tau}^t f(u) du} \right) f(t)\boldsymbol{x}_t \\
        &\quad + \left( e^{-\int_{\tau}^t f(u) du} \right) g(t) \circ \frac{d\boldsymbol{B}_t}{dt} \quad \forall \ t > \tau \\
        \implies \left( e^{-\int_{\tau}^t f(u) du} \right) g(t) \circ \frac{d\boldsymbol{B}_t}{dt} &= \left( e^{-\int_{\tau}^t f(u) du} \right) \frac{d\boldsymbol{x}_t}{dt} - \left( e^{-\int_{\tau}^t f(u) du} \right) f(t)\boldsymbol{x}_t \\
        \implies \int_{\tau}^t \left( e^{-\int_{\tau}^v f(u) du} \right) g(v) \circ \frac{d\boldsymbol{B}_v}{dv} dv &= \int_{\tau}^t \left( e^{-\int_{\tau}^v f(u) du} \right) \frac{dx_v}{dv} - \left( e^{-\int_{\tau}^v f(u) du} \right) f(v)x_v dv \\
        \int_{\tau}^t \left( e^{-\int_{\tau}^v f(u) du} \right) g(v) \circ d\boldsymbol{B}_v &= \left . \left [ x_v \left( e^{-\int_{\tau}^v f(u) du} \right) \right ] \right \rvert_{v=\tau}^{v=t} \\
        \int_{\tau}^t \left( e^{-\int_{\tau}^v f(u) du} \right) g(v) \circ d\boldsymbol{B}_v &= \boldsymbol{x}_t \left( e^{-\int_{\tau}^t f(u) du} \right) - x_{\tau} \\
        \int_{\tau}^t \frac{g(v)}{\alpha(\tau, v)} \circ d\boldsymbol{B}_v &= \frac{\boldsymbol{x}_t}{\alpha(\tau, t)} - x_{\tau} \\
        \implies \boldsymbol{x}_t &= \alpha(\tau, t) x_{\tau} + \alpha(\tau, t) \int_{\tau}^t \frac{g(v)}{\alpha(\tau, v)} \circ d\boldsymbol{B}_v. \numberthis \label{eq:final diffusion strat sde}
    \end{align*}
    By rewriting (\ref{eq:final diffusion strat sde}) in its It\^{o} representation we have
    \begin{align*}
        \boldsymbol{x}_t &= \alpha(\tau, t) x_{\tau} + \alpha(\tau, t) \int_{\tau}^t \frac{g(v)}{\alpha(\tau, v)} \cdot d\boldsymbol{B}_v,
        \intertext{and as $\int_{\tau}^t \frac{g(u)}{\alpha(\tau, u)} \cdot d\boldsymbol{B}_u \sim \mathcal{N} \left(0, \ \int_{\tau}^{t}\frac{g(u)^2}{\alpha(\tau, u)^2}du \cdot I^{(m \times m)} \right)$, it follows that}
        \boldsymbol{x}_t | \boldsymbol{x}_\tau &\sim \mathcal{N} \left( \alpha(\tau, t) x_{\tau}, \ \alpha(\tau, t)^2 \int_{\tau}^{t}\frac{g(u)^2}{\alpha(\tau, u)^2}du \cdot I^{(m \times m)} \right).
    \end{align*}
\end{proof}

\subsection{Conditional forward alternate parameterisation}
When implementing time-travel sampling \citep{Lugmayr2022} we require access to the conditional forward distribution, $p(\boldsymbol{x}_t | \boldsymbol{x}_{\tau})$. However, it is frequently the case that diffusion schemes are formulated with respect to the full forward distribution, $p(\boldsymbol{x}_t | \boldsymbol{x}_0)$, and some proposed method of sampling the reverse-time diffusion \citep{Anderson1982}, or solving the flow ODE \citep{Song2021}. Thus, we provide a simple result to derive the conditional forward distribution, $p(\boldsymbol{x}_t | \boldsymbol{x}_{\tau})$, from the parameterisation of the full forward, $p(\boldsymbol{x}_t | \boldsymbol{x}_0)$.

\begin{lemma}[Conditional Forward Alternate Parameterisation] \label{Lma:Conditional Forward Alternate Parameterisation}
    Given the diffusion formulation in Theorem \ref{Thm:Conditional Forward}, then the conditional forward distribution can alternately be expressed as \begin{equation}
        X_t | X_\tau \sim \mathcal{N} \left( a x_{\tau}, \ b^2 I^{(m \times m)} \right) \ \forall \ t > \tau, \nonumber
    \end{equation}
    where \begin{equation}
        X_t | X_0 \sim \mathcal{N} \left( \alpha(0, t) x_{0}, \ \beta(0, t)^2 I^{(m \times m)} \right) \ \forall \ t > 0, \nonumber
    \end{equation} $a = \frac{\alpha(0, t)}{\alpha(0, \tau)}$, $b^2 = \beta(0, t)^2 - \left( a \beta(0, \tau) \right)^2$, , and $I^{(m \times m)}$ is the $m$-dimensional identity matrix. Additionally, it is understood that with slight abuse of notation $\alpha(t) = \alpha(0, t)$ and $\beta(t) = \alpha(0, t)$
\end{lemma}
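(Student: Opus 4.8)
The plan is to reduce everything to Theorem~\ref{Thm:Conditional Forward}, which already supplies the conditional forward law $X_t \mid X_\tau \sim \mathcal{N}\!\left(\alpha(\tau,t)\,x_\tau,\ \beta(\tau,t)^2 I^{(m\times m)}\right)$ for $t>\tau$. The entire task therefore collapses to two algebraic identities: showing that the mean coefficient $\alpha(\tau,t)$ equals $a=\alpha(0,t)/\alpha(0,\tau)$, and that the variance coefficient $\beta(\tau,t)^2$ equals $b^2=\beta(0,t)^2-\bigl(a\,\beta(0,\tau)\bigr)^2$. No probabilistic machinery beyond the cited theorem is needed; this is pure bookkeeping on the defining integrals.

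First I would handle the mean. Using $\alpha(\tau,t)=\exp\!\bigl(\int_\tau^t f(u)\,du\bigr)$ together with additivity of the integral, $\int_\tau^t f = \int_0^t f - \int_0^\tau f$, the exponential factorises into $\alpha(0,t)/\alpha(0,\tau)$, which is exactly $a$. The same computation yields the cocycle relation $\alpha(\tau,u)=\alpha(0,u)/\alpha(0,\tau)$ that I will need for the variance step, so I would record it here. This matches the mean immediately.

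Next I would compute $b^2$ directly from the full-forward parameterisation and show it equals $\beta(\tau,t)^2$. Substituting $\beta(0,t)^2=\alpha(0,t)^2\int_0^t \tfrac{g^2}{\alpha(0,u)^2}\,du$ and $a^2\beta(0,\tau)^2=\alpha(0,t)^2\int_0^\tau \tfrac{g^2}{\alpha(0,u)^2}\,du$ (the $\alpha(0,\tau)^2$ cancels), the difference telescopes to $b^2=\alpha(0,t)^2\int_\tau^t \tfrac{g^2}{\alpha(0,u)^2}\,du$. Separately, I would expand $\beta(\tau,t)^2=\alpha(\tau,t)^2\int_\tau^t \tfrac{g^2}{\alpha(\tau,u)^2}\,du$ and insert the cocycle relation $1/\alpha(\tau,u)^2=\alpha(0,\tau)^2/\alpha(0,u)^2$, giving $\alpha(\tau,t)^2\alpha(0,\tau)^2\int_\tau^t \tfrac{g^2}{\alpha(0,u)^2}\,du$. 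Since $\alpha(\tau,t)^2\alpha(0,\tau)^2=\alpha(0,t)^2$ by the mean identity, this coincides with the expression for $b^2$.

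There is no genuine obstacle here, so the \emph{care point} rather than a hard step is keeping the integration limits and the $\alpha(0,\tau)$ factors straight: the one place an error could slip in is the substitution $\alpha(\tau,u)=\alpha(0,u)/\alpha(0,\tau)$ inside the integrand, since it trades a lower-limit-$\tau$ exponential for lower-limit-$0$ exponentials and must be applied to $u$ (the integration variable), not $t$. Once both coefficients are matched, the two Gaussians are identical and the claim follows.
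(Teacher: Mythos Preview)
Your proposal is correct and essentially identical to the paper's own proof: both verify $a=\alpha(\tau,t)$ via additivity of the integral in the exponent, then match the variances by telescoping $\int_0^t-\int_0^\tau$ to $\int_\tau^t$ and applying the cocycle identity $\alpha(0,u)=\alpha(0,\tau)\alpha(\tau,u)$. The only cosmetic difference is that the paper runs the variance computation as a single chain from $b^2$ to $\beta(\tau,t)^2$, whereas you compute each side separately and meet in the middle at $\alpha(0,t)^2\int_\tau^t g(u)^2/\alpha(0,u)^2\,du$.
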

\begin{proof}
    We need to show that $a = \alpha(\tau, t) = \exp \left( \int_{\tau}^t f(u) du \right)$ and $b^2 = \beta(\tau, t)^2 = \alpha(\tau, t)^2 \int_{\tau}^{t}\frac{g(u)^2}{\alpha(\tau, u)^2}du$ as per Theorem \ref{Thm:Conditional Forward}. It follows that
    \begin{align*}
        a &= \frac{\alpha(0, t)}{\alpha(0, \tau)} \\
        &= \frac{\exp \left( \int_0^t f(u) du \right)}{\exp \left( \int_0^\tau f(u) du \right)} \\
        &= \exp \left( \int_0^t f(u) du - \int_0^\tau f(u) du \right) \\
        &= \exp \left( \int_{\tau}^t f(u) du \right) \\
        &= \alpha(\tau, t), \numberthis \label{eq:conditional forward alternate parameterisation a}
        \intertext{and}
        b^2 &= \beta(0, t)^2 - \left( a \beta(0, \tau) \right)^2 \\
        &= \beta(0, t)^2 - a^2 \beta(0, \tau)^2 \\
        &= \alpha(0, t)^2 \int_{0}^{t} \frac{g(u)^2}{\alpha(0, u)^2}du - \frac{\alpha(0, t)^2}{\alpha(0, \tau)^2} \alpha(0, \tau)^2 \int_{0}^{\tau}\frac{g(u)^2}{\alpha(0, u)^2} du \\ 
        &= \alpha(0, t)^2 \left[ \int_{0}^{t} \frac{g(u)^2}{\alpha(0, u)^2}du - \int_{0}^{\tau}\frac{g(u)^2}{\alpha(0, u)^2} du \right] \\
        &= \alpha(0, t)^2 \int_{\tau}^{t} \frac{g(u)^2}{\alpha(0, u)^2}du \\
        &= \alpha(0, t)^2 \int_{\tau}^{t} \frac{g(u)^2}{\alpha(0, \tau)^2 \alpha(\tau, u)^2}du \quad \text{as $\alpha(0, u) = \alpha(0, \tau) \alpha(\tau, u)$ by (\ref{eq:conditional forward alternate parameterisation a})} \\
        &= \frac{\alpha(0, t)^2}{\alpha(0, \tau)^2} \int_{\tau}^{t} \frac{g(u)^2}{\alpha(\tau, u)^2}du \\
        &= \alpha(\tau, t)^2 \int_{\tau}^{t} \frac{g(u)^2}{\alpha(\tau, u)^2}du \\
        &=\beta(\tau, t)^2. 
    \end{align*}
\end{proof}

\subsection{Score-model link}
The following Score-Model Link theorem is based on \citeauthor{Karras2022}'s \citep{Karras2022} argument. However, we provide a minor extension by conditioning on measurable sets taken from the sigma-algebra of an auxiliary random variable, $Y$. For additional treatments see \citep{Karras2022, Hyvarinen2005, Vincent2011}.
\begin{theorem}[Score-Model Link] \label{Thm:Score-Model Link}
    Let $\boldsymbol{x}_t \in \mathbb{R}^m$, $f(t) : \mathbb{R} \rightarrow \mathbb{R}$ and $g(t) : \mathbb{R} \rightarrow \mathbb{R}$ be continuous functions of $t$, and $\mathop{\cdot} d\boldsymbol{B}_t$ denote an It\^{o} integral with respect to the standard multi-dimensional Brownian motion process. Suppose that $\boldsymbol{x}_t$ evolves according to the diffusion, \begin{equation}
        \label{eq:proof score-model conditional forward diffusion}
        d\boldsymbol{x}_t = f(t)\boldsymbol{x}_t dt + g(t) \cdot d\boldsymbol{B}_t,
    \end{equation} with observed initial data distribution, $p_{\text{data}}(\boldsymbol{x}_0 | y \in \xi)$, where $\xi$ is taken to be an arbitrary element of the sigma-algebra, $\mathcal{Y}$, associated with the random variable\footnote{Note that this is a slight abuse of notation. We are assuming that $(\Omega_Y, \mathcal{Y}, P)$ is a probability space and $Y: \Omega_Y \rightarrow \Omega_Y$ a random variable such that $Y(\omega) = \omega \ \forall \ \omega \in \Omega_Y$. That is to say, we do not need to take the pre-image when crafting probability statements.} $Y$, i.e., $\xi \in \mathcal{Y}$. 
    
    Define \begin{equation} 
        \hat{\boldsymbol{x}}_0(\boldsymbol{x}_t, t, \xi) = \frac{\boldsymbol{x}_t - \beta(t) \boldsymbol{\epsilon}_{\theta}(\boldsymbol{x}_t, t, \xi)}{\alpha(t)},
    \end{equation}
    where $\alpha(t) = \exp \left( \int_0^t f(u) du \right)$, $\beta(t)^2 = \alpha(t)^2 \int_0^t \frac{g(u)^2}{\alpha(u)^2}du$, and $\boldsymbol{\epsilon}_{\theta} : \mathbb{R}^m \times \mathbb{R} \times \mathcal{Y} \rightarrow \mathbb{R}^m$ is a model parameterised by $\theta \in \Theta$ with sufficient capacity such that the Universal Approximation Theorem \citep{Cybenko1989, Hornik1991} holds for all $\xi \in \mathcal{Y}$. Then if $\beta(t)^2 > 0 \ \forall \ t \in (0, T]$, the following statements are true:
    \begin{gather}
        \intertext{1.}
        \nabla_{\boldsymbol{x}_t} \log(p(\boldsymbol{x}_t | y \in \xi)) = - \frac{1}{\beta(t)} \boldsymbol{\epsilon}_{\theta^\star}(\boldsymbol{x}_t, t, \xi) \quad \forall \ t \in (0, T], \ \boldsymbol{x}_t \in \mathbb{R}^m, \ \xi \in \mathcal{Y}; \numberthis \label{eq:score-model equation 1}
        \intertext{2.}
        \boldsymbol{\epsilon}_{\theta^{\star}}(\boldsymbol{x}_t, t, \xi) = \frac{{\E}_{\boldsymbol{x}_0 \sim p_{\text{data}}(\boldsymbol{x}_0 | y \in \xi)} \left[ \frac{1}{\beta(t)} \bigl(\boldsymbol{x}_t - \alpha(t) \boldsymbol{x}_0 \bigr) p(\boldsymbol{x}_t | \boldsymbol{x}_0) \right]}{p(\boldsymbol{x}_t | y \in \xi)} \quad \forall \ t \in (0, T], \numberthis \label{eq:score-model equation 2} \\
        \intertext{$\boldsymbol{x}_t \in \mathbb{R}^m, \ \xi \in \mathcal{Y}$; where}
        \theta^{\star} \triangleq \underset{\theta}{\textup{arg} \min} \ {\E}_{\boldsymbol{x}_0, \boldsymbol{x}_t, t \sim p(\boldsymbol{x}_0, \boldsymbol{x}_t, t | y \in \xi)} \left[ \left \lVert \boldsymbol{x}_0 - \hat{\boldsymbol{x}}_0(\boldsymbol{x}_t, t, \xi) \right \rVert_2^2 \right], \numberthis \label{eq:score-model optimisation} \\
        \intertext{$p(\boldsymbol{x}_0$, $\boldsymbol{x}_t, t | y \in \xi) = p(\boldsymbol{x}_t | \boldsymbol{x}_0) p_{\text{data}}(\boldsymbol{x}_0 | y \in \xi) p(t)$, and $p(t) \triangleq \frac{1}{T}$.}
    \end{gather}
\end{theorem}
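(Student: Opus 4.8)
The plan is to prove the two identities together by first solving the optimization in (\ref{eq:score-model optimisation}) in closed form and then recognizing the minimizer as a rescaled score. I would begin by rewriting the objective so that the network enters linearly. Substituting the definition of $\hat{\boldsymbol{x}}_0$ gives $\boldsymbol{x}_0 - \hat{\boldsymbol{x}}_0(\boldsymbol{x}_t, t, \xi) = \frac{1}{\alpha(t)}\bigl(\beta(t)\boldsymbol{\epsilon}_\theta(\boldsymbol{x}_t, t, \xi) - (\boldsymbol{x}_t - \alpha(t)\boldsymbol{x}_0)\bigr)$, so up to the positive constant $1/\alpha(t)^2$ the loss equals $\E\bigl[\lVert \beta(t)\boldsymbol{\epsilon}_\theta(\boldsymbol{x}_t, t, \xi) - (\boldsymbol{x}_t - \alpha(t)\boldsymbol{x}_0)\rVert_2^2\bigr]$, with the expectation taken over $p(\boldsymbol{x}_0, \boldsymbol{x}_t, t | y \in \xi)$.

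Next, invoking the universal-approximation hypothesis, I would argue that the minimizer can be computed pointwise: because $\boldsymbol{\epsilon}_{\theta^\star}(\boldsymbol{x}_t, t, \xi)$ may take any value at each fixed $(\boldsymbol{x}_t, t)$, it suffices to minimize the inner conditional expectation $\E_{\boldsymbol{x}_0 | \boldsymbol{x}_t}\bigl[\lVert \beta(t)\boldsymbol{e} - (\boldsymbol{x}_t - \alpha(t)\boldsymbol{x}_0)\rVert_2^2\bigr]$ over the free vector $\boldsymbol{e}$. This is an ordinary least-squares problem whose solution is the conditional mean, giving $\beta(t)\boldsymbol{\epsilon}_{\theta^\star}(\boldsymbol{x}_t, t, \xi) = \E_{\boldsymbol{x}_0 | \boldsymbol{x}_t}[\boldsymbol{x}_t - \alpha(t)\boldsymbol{x}_0]$. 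Expanding the conditional density via Bayes' rule as $p(\boldsymbol{x}_0 | \boldsymbol{x}_t, y \in \xi) = p(\boldsymbol{x}_t | \boldsymbol{x}_0)\, p_{\text{data}}(\boldsymbol{x}_0 | y \in \xi) / p(\boldsymbol{x}_t | y \in \xi)$ — using that the forward diffusion depends on $\xi$ only through $\boldsymbol{x}_0$, so $p(\boldsymbol{x}_t | \boldsymbol{x}_0, y \in \xi) = p(\boldsymbol{x}_t | \boldsymbol{x}_0)$ — yields (\ref{eq:score-model equation 2}) directly.

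Finally, to obtain (\ref{eq:score-model equation 1}) I would use the Gaussian gradient identity. Since Theorem~\ref{Thm:Conditional Forward} gives $p(\boldsymbol{x}_t | \boldsymbol{x}_0) = \mathcal{N}(\boldsymbol{x}_t; \alpha(t)\boldsymbol{x}_0, \beta(t)^2 I)$, a direct computation shows $\nabla_{\boldsymbol{x}_t} p(\boldsymbol{x}_t | \boldsymbol{x}_0) = -\frac{1}{\beta(t)^2}(\boldsymbol{x}_t - \alpha(t)\boldsymbol{x}_0)\, p(\boldsymbol{x}_t | \boldsymbol{x}_0)$, so the integrand in the numerator of (\ref{eq:score-model equation 2}) equals $-\beta(t)\nabla_{\boldsymbol{x}_t} p(\boldsymbol{x}_t | \boldsymbol{x}_0)$. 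Interchanging the gradient with the $\boldsymbol{x}_0$-integral collapses the numerator to $-\beta(t)\nabla_{\boldsymbol{x}_t} p(\boldsymbol{x}_t | y \in \xi)$, and dividing by $p(\boldsymbol{x}_t | y \in \xi)$ produces $-\beta(t)\nabla_{\boldsymbol{x}_t}\log p(\boldsymbol{x}_t | y \in \xi)$; rearranging gives (\ref{eq:score-model equation 1}).

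The main obstacle is rigor in the two limiting arguments rather than the algebra. The pointwise-minimization step needs the universal-approximation assumption translated carefully into the claim that the infimum over $\theta$ is attained by a function agreeing with the pointwise-optimal $\boldsymbol{e}$ on the support of $p(\boldsymbol{x}_t | y \in \xi)$; here the hypothesis $\beta(t)^2 > 0$ on $(0, T]$ is what makes the optimum well-defined and the subsequent division legitimate. Likewise, exchanging $\nabla_{\boldsymbol{x}_t}$ with the $\boldsymbol{x}_0$-integral requires a differentiation-under-the-integral (dominated-convergence) justification, which I would secure from the smoothness and Gaussian tail decay of $p(\boldsymbol{x}_t | \boldsymbol{x}_0)$ together with mild integrability assumptions on $p_{\text{data}}$.
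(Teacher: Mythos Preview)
Your proposal is correct and relies on the same two ingredients as the paper's proof: pointwise minimization of the quadratic loss (justified via the universal-approximation hypothesis) to obtain the closed-form optimum, and the Gaussian gradient identity $\nabla_{\boldsymbol{x}_t} p(\boldsymbol{x}_t | \boldsymbol{x}_0) = -\beta(t)^{-2}(\boldsymbol{x}_t - \alpha(t)\boldsymbol{x}_0)\, p(\boldsymbol{x}_t | \boldsymbol{x}_0)$ to connect that optimum to the score. The differences are organizational rather than substantive. The paper fixes $p_{\text{data}}(\boldsymbol{x}_0 \mid y \in \xi)$ to be an empirical measure (a finite sum of Dirac masses), computes the score $\nabla_{\boldsymbol{x}_t}\log p(\boldsymbol{x}_t \mid y \in \xi)$ \emph{first} as an explicit ratio of finite sums, then separately derives the pointwise minimizer $\boldsymbol{\epsilon}_\theta^\star$ by setting $\partial\ell/\partial\boldsymbol{\epsilon}_\theta = 0$, and finally matches the two expressions by inspection. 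You instead work with a general $p_{\text{data}}$, derive (\ref{eq:score-model equation 2}) first via Bayes' rule and the conditional-mean characterization of least squares, and then obtain (\ref{eq:score-model equation 1}) from (\ref{eq:score-model equation 2}) by differentiating under the integral. Your route is slightly more general and makes the logical dependence between the two conclusions clearer; the paper's route avoids the dominated-convergence step entirely (all integrals are finite sums) and makes the continuity of $\boldsymbol{\epsilon}_\theta^\star$ needed for the universal-approximation invocation visibly explicit, which you should also verify in your write-up.
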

\begin{proof}
    Let $\{ \boldsymbol{x}_0^{(1)}, \boldsymbol{x}_0^{(2)}, \dots, \boldsymbol{x}_0^{(N)} \}$ and $\{ y^{(1)}, y^{(2)}, \dots, y^{(N)} \}$ denote observed values of $\boldsymbol{x}_0$ and $Y$. Then the data density function is given by:
    \begin{equation}
        \label{eq:proof score-model initial data distribution}
        p_{\text{data}}(\boldsymbol{x}_0 | y \in \xi) = \frac{ \sum_{i=1}^N \delta(\boldsymbol{x}_0 - \boldsymbol{x}_0^{(i)}) \mathds{1}_{\{ y^{(i)} \in \xi \}}}{\sum_{i=1}^N \mathds{1}_{\{ y^{(i)} \in \xi \}}},
    \end{equation}
    where $\delta(\cdot)$ and $\mathds{1}_{\{ \cdot \}}$ denote the Dirac delta and indicator functions, respectively. The forward diffusion process in (\ref{eq:proof score-model conditional forward diffusion}) is independent of $Y$, and thus,
    \begin{align*}
        p(\boldsymbol{x}_t | y \in \xi) &= \int_{\Omega} p(\boldsymbol{x}_t | \boldsymbol{x}_0) p_{\text{data}}(\boldsymbol{x}_0 | y \in \xi) d\boldsymbol{x}_0 \\
        &= \frac{ \sum_{i=1}^N p(\boldsymbol{x}_t | \boldsymbol{x}_0^{(i)}) \mathds{1}_{\{ y^{(i)} \in \xi \}}}{\sum_{i=1}^N \mathds{1}_{\{ y^{(i)} \in \xi \}}}. \numberthis \label{eq:proof score-model data distribution}
        \intertext{Substituting (\ref{eq:proof score-model data distribution}) into the LHS of (\ref{eq:score-model equation 1}),}
        \nabla_{\boldsymbol{x}_t} \log(p(\boldsymbol{x}_t | y \in \xi)) &= \nabla_{\boldsymbol{x}_t} \log \left( \frac{ \sum_{i=1}^N p(\boldsymbol{x}_t | \boldsymbol{x}_0^{(i)}) \mathds{1}_{\{ y^{(i)} \in \xi \}}}{\sum_{i=1}^N \mathds{1}_{\{ y^{(i)} \in \xi \}}} \right) \\
        &= \nabla_{\boldsymbol{x}_t} \log \left( \sum_{i=1}^N p(\boldsymbol{x}_t | \boldsymbol{x}_0^{(i)}) \mathds{1}_{\{ y^{(i)} \in \xi \}} \right)  \\
        &= \frac{\sum_{i=1}^N \nabla_{\boldsymbol{x}_t} p(\boldsymbol{x}_t | \boldsymbol{x}_0^{(i)}) \mathds{1}_{\{ y^{(i)} \in \xi \}}}{\sum_{i=1}^N p(\boldsymbol{x}_t | \boldsymbol{x}_0^{(i)}) \mathds{1}_{\{ y^{(i)} \in \xi \}}}.
        \intertext{By Theorem \ref{Thm:Conditional Forward}, $p(\boldsymbol{x}_t|\boldsymbol{x}_0) = \mathcal{N} \left( \alpha(t) \boldsymbol{x}_0, \ \beta(t)^2 I^{(m \times m)} \right) \implies \nabla_{\boldsymbol{x}_t} p(\boldsymbol{x}_t | \boldsymbol{x}_0) = -\frac{1}{\beta(t)^2}(\boldsymbol{x}_t - \alpha(t) \boldsymbol{x}_0)p(\boldsymbol{x}_t | \boldsymbol{x}_0)$, thus,}
        \nabla_{\boldsymbol{x}_t} \log(p(\boldsymbol{x}_t | y \in \xi)) &= -\frac{1}{\beta(t)^2} \frac{\sum_{i=1}^N (\boldsymbol{x}_t - \alpha(t) \boldsymbol{x}_0^{(i)})p(\boldsymbol{x}_t | \boldsymbol{x}_0^{(i)}) \mathds{1}_{\{ y^{(i)} \in \xi \}}}{\sum_{i=1}^N p(\boldsymbol{x}_t | \boldsymbol{x}_0^{(i)}) \mathds{1}_{\{ y^{(i)} \in \xi \}}}. \numberthis \label{eq:proof log statement}
    \end{align*}
    Now we consider the optimisation problem in (\ref{eq:score-model optimisation}).
    \begin{align*}
        &\mathcal{L}(\xi; \theta) = {\E}_{\boldsymbol{x}_0, \boldsymbol{x}_t, t \sim p(\boldsymbol{x}_0, \boldsymbol{x}_t, t | y \in \xi)} \left[ \left \lVert \boldsymbol{x}_0 - \hat{\boldsymbol{x}}_0(\boldsymbol{x}_t, t, \xi) \right \rVert_2^2 \right] \\
        &= {\E}_{t \sim p(t)}\left[{\E}_{\boldsymbol{x}_0 \sim p(\boldsymbol{x}_0 | y \in \xi)}\left[{\E}_{\boldsymbol{x}_t \sim p(\boldsymbol{x}_t | \boldsymbol{x}_0)}\left[ \left \lVert \boldsymbol{x}_0 - \hat{\boldsymbol{x}}_0(\boldsymbol{x}_t, t, \xi) \right \rVert_2^2 \right]\right]\right] \\
        &= {\E}_{t \sim p(t)}\left[{\E}_{\boldsymbol{x}_0 \sim p(\boldsymbol{x}_0 | y \in \xi)}\left[ \int_{\Omega_{\boldsymbol{x}_t}} \left \lVert \boldsymbol{x}_0 - \hat{\boldsymbol{x}}_0(\boldsymbol{x}_t, t, \xi) \right \rVert_2^2 \ p(\boldsymbol{x}_t | \boldsymbol{x}_0)d\boldsymbol{x}_t \right]\right] \\
        &= {\E}_{t \sim p(t)}\left[ \int_{\Omega_{\boldsymbol{x}_0}} \int_{\Omega_{\boldsymbol{x}_t}} \left \lVert \boldsymbol{x}_0 - \hat{\boldsymbol{x}}_0(\boldsymbol{x}_t, t, \xi) \right \rVert_2^2 \ p(\boldsymbol{x}_t | \boldsymbol{x}_0)d\boldsymbol{x}_t \ p(\boldsymbol{x}_0 | y \in \xi) d\boldsymbol{x}_0 \right] \\
        &= {\E}_{t \sim p(t)}\left[ \int_{\Omega_{\boldsymbol{x}_0}} \int_{\Omega_{\boldsymbol{x}_t}} \left \lVert \boldsymbol{x}_0 - \hat{\boldsymbol{x}}_0(\boldsymbol{x}_t, t, \xi) \right \rVert_2^2 \ p(\boldsymbol{x}_t | \boldsymbol{x}_0)d\boldsymbol{x}_t \ \left( \frac{ \sum_{i=1}^N \delta(\boldsymbol{x}_0 - \boldsymbol{x}_0^{(i)}) \mathds{1}_{\{ y^{(i)} \in \xi \}}}{\sum_{i=1}^N \mathds{1}_{\{ y^{(i)} \in \xi \}}} \right) d\boldsymbol{x}_0 \right] \\
        &= {\E}_{t \sim p(t)}\left[ \frac{1}{\sum_{i=1}^N \mathds{1}_{\{ y^{(i)} \in \xi \}}} \sum_{i=1}^N \int_{\Omega_{\boldsymbol{x}_t}} \left \lVert \boldsymbol{x}_0^{(i)} - \hat{\boldsymbol{x}}_0(\boldsymbol{x}_t, t, \xi) \right \rVert_2^2 \ p(\boldsymbol{x}_t | \boldsymbol{x}_0^{(i)})d\boldsymbol{x}_t \ \mathds{1}_{\{ y^{(i)} \in \xi \}} \right] \\
        &= {\E}_{t \sim p(t)}\left[ \frac{1}{\sum_{i=1}^N \mathds{1}_{\{ y^{(i)} \in \xi \}}} \int_{\Omega_{\boldsymbol{x}_t}} \sum_{i=1}^N p(\boldsymbol{x}_t | \boldsymbol{x}_0^{(i)}) \mathds{1}_{\{ y^{(i)} \in \xi \}} \left \lVert \boldsymbol{x}_0^{(i)} - \hat{\boldsymbol{x}}_0(\boldsymbol{x}_t, t, \xi) \right \rVert_2^2 d\boldsymbol{x}_t \right] \\
        &= \frac{1}{T} \frac{1}{\sum_{i=1}^N \mathds{1}_{\{ y^{(i)} \in \xi \}}} \int_0^T \int_{\Omega_{\boldsymbol{x}_t}} \sum_{i=1}^N p(\boldsymbol{x}_t | \boldsymbol{x}_0^{(i)}) \mathds{1}_{\{ y^{(i)} \in \xi \}} \left \lVert \boldsymbol{x}_0^{(i)} - \hat{\boldsymbol{x}}_0(\boldsymbol{x}_t, t, \xi) \right \rVert_2^2 d\boldsymbol{x}_t dt \\
        &= \frac{1}{T} \frac{1}{\sum_{i=1}^N \mathds{1}_{\{ y^{(i)} \in \xi \}}} \int_0^T \int_{\Omega_{\boldsymbol{x}_t}} \underbrace{\sum_{i=1}^N p(\boldsymbol{x}_t | \boldsymbol{x}_0^{(i)}) \mathds{1}_{\{ y^{(i)} \in \xi \}} \left \lVert \boldsymbol{x}_0^{(i)} - \frac{\boldsymbol{x}_t - \beta(t) \boldsymbol{\epsilon}_{\theta}(\boldsymbol{x}_t, t, \xi)}{\alpha(t)} \right \rVert_2^2}_{\ell(\boldsymbol{x}_t, t, \xi; \theta)} d\boldsymbol{x}_t dt \numberthis \label{eq:score-model loss function}
    \end{align*}
    To minimise (\ref{eq:score-model loss function}) with respect to $\theta$, it suffices to find $\theta$ such that $\ell(\boldsymbol{x}_t, t, \xi; \theta)$ is minimised for each combination of $\boldsymbol{x}_t$ and $t$. That is to say, we find the optimal value of $\boldsymbol{\epsilon}_{\theta}(\boldsymbol{x}_t, t, \xi)$ for each combination of $\boldsymbol{x}_t$ and $t$. Furthermore, $\ell(\boldsymbol{x}_t, t, \xi; \theta)$ constitutes a convex optimisation problem with respect to $\boldsymbol{\epsilon}_{\theta}(\boldsymbol{x}_t, t, \xi)$. Thus,
    \begin{align*}
        \frac{\partial \ell}{\partial \boldsymbol{\epsilon}_{\theta}} = 0 &= \sum_{i=1}^N \frac{2\beta(t)}{\alpha(t)} p(\boldsymbol{x}_t | \boldsymbol{x}_0^{(i)}) \mathds{1}_{\{ y^{(i)} \in \xi \}} \left ( \boldsymbol{x}_0^{(i)} - \frac{\boldsymbol{x}_t - \beta(t) \boldsymbol{\epsilon}_{\theta}(\boldsymbol{x}_t, t, \xi)}{\alpha(t)} \right ) \\
        &= \sum_{i=1}^N p(\boldsymbol{x}_t | \boldsymbol{x}_0^{(i)}) \mathds{1}_{\{ y^{(i)} \in \xi \}} \left ( \alpha(t) \boldsymbol{x}_0^{(i)} - \boldsymbol{x}_t + \beta(t) \boldsymbol{\epsilon}_{\theta}(\boldsymbol{x}_t, t, \xi) \right ) \\
        \implies \sum_{i=1}^N (\boldsymbol{x}_t - \alpha(t) \boldsymbol{x}_0^{(i)}) &p(\boldsymbol{x}_t | \boldsymbol{x}_0^{(i)}) \mathds{1}_{\{ y^{(i)} \in \xi \}} = \sum_{i=1}^N \beta(t) \boldsymbol{\epsilon}_{\theta}(\boldsymbol{x}_t, t, \xi) p(\boldsymbol{x}_t | \boldsymbol{x}_0^{(i)}) \mathds{1}_{\{ y^{(i)} \in \xi \}} \\
        \implies \boldsymbol{\epsilon}_{\theta}^{\star}(\boldsymbol{x}_t, t, \xi) &= \frac{1}{\beta(t)} \frac{\sum_{i=1}^N (\boldsymbol{x}_t - \alpha(t) \boldsymbol{x}_0^{(i)}) p(\boldsymbol{x}_t | \boldsymbol{x}_0^{(i)}) \mathds{1}_{\{ y^{(i)} \in \xi \}}}{\sum_{i=1}^N p(\boldsymbol{x}_t | \boldsymbol{x}_0^{(i)}) \mathds{1}_{\{ y^{(i)} \in \xi \}}} \numberthis \label{eq:proof optimal epsilon} \\
        &= \frac{1}{\beta(t)} \frac{ \left( \frac{\sum_{i=1}^N (\boldsymbol{x}_t - \alpha(t) \boldsymbol{x}_0^{(i)}) p(\boldsymbol{x}_t | \boldsymbol{x}_0^{(i)}) \mathds{1}_{\{ y^{(i)} \in \xi \}}}{\sum_{i=1}^N \mathds{1}_{\{ y^{(i)} \in \xi \}}} \right) }{ \left( \frac{\sum_{i=1}^N p(\boldsymbol{x}_t | \boldsymbol{x}_0^{(i)}) \mathds{1}_{\{ y^{(i)} \in \xi \}}}{\sum_{i=1}^N \mathds{1}_{\{ y^{(i)} \in \xi \}}} \right) }\\
        &= \frac{{\E}_{\boldsymbol{x}_0 \sim p_{\text{data}}(\boldsymbol{x}_0 | y \in \xi)} \left[ \frac{1}{\beta(t)} \bigl(\boldsymbol{x}_t - \alpha(t) \boldsymbol{x}_0 \bigr) p(\boldsymbol{x}_t | \boldsymbol{x}_0) \right]}{p(\boldsymbol{x}_t | y \in \xi)}, \numberthis \label{eq:proof expected epsilon}
    \end{align*}
    $\forall \ \boldsymbol{x}_t \in \mathbb{R}^m, \ t \in (0, T], \ \xi \in \mathcal{Y}$. As $f(t)$ and $g(t)$ are both continuous functions then $\alpha(t)$ and $\beta(t)$ are also continuous. It follows that (\ref{eq:proof optimal epsilon}) is continuous with respect to $\boldsymbol{x}_t$ and $t$, as it is the sum of continuous functions and $\beta(t)^2 > 0 \ \forall \ t \in (0, T]$. Thus, the Universal Approximation Theorem \citep{Cybenko1989, Hornik1991} holds and there exists a $\theta^{\star} \in \Theta$ such that $\boldsymbol{\epsilon}_{\theta^{\star}}(\boldsymbol{x}_t, t, \xi) = \boldsymbol{\epsilon}_{\theta}^{\star}(\boldsymbol{x}_t, t, \xi) \ \forall \ \boldsymbol{x}_t \in \mathbb{R}^m, \ t \in (0, T], \ \xi \in \mathcal{Y}$. Finally, by comparing (\ref{eq:proof log statement}) and (\ref{eq:proof optimal epsilon}) we observe that,
    \[
        \nabla_{\boldsymbol{x}_t} \log(p(\boldsymbol{x}_t | y \in \xi)) = -\frac{1}{\beta(t)} \boldsymbol{\epsilon}_{\theta^{\star}}(\boldsymbol{x}_t, t, \xi),
    \]
    which proves (\ref{eq:score-model equation 1}), and (\ref{eq:score-model equation 2}) follows from (\ref{eq:proof expected epsilon}).
\end{proof}
It is worth noting that Theorem \ref{Thm:Score-Model Link} unifies the score-model link between conditional and unconditional models. That is to say,
\begin{align*}
    \nabla_{\boldsymbol{x}_t} \log(p(\boldsymbol{x}_t)) &= \nabla_{\boldsymbol{x}_t} \log(p(\boldsymbol{x}_t | y \in \Omega_y)) \\
    &= -\frac{1}{\beta(t)} \boldsymbol{\epsilon}_{\theta^{\star}}(\boldsymbol{x}_t, t, \Omega_y).
\end{align*}

\end{document}